\documentclass[11pt]{article}
\usepackage{latexsym}
\usepackage{amsmath}
\usepackage{amsmath}
\usepackage{amssymb}
\usepackage{amsthm}
\usepackage{epsfig}
\usepackage{xcolor}
\usepackage{graphicx}
\usepackage{bm}
\usepackage{enumitem}
\usepackage{mathtools}
\mathtoolsset{showonlyrefs}
\usepackage{graphicx}
\usepackage{tikz}
\usepackage{mathrsfs}
\usepackage[toc,page]{appendix}
\usepackage{graphicx}
\usepackage{bbm}
\usepackage{ytableau}
\usepackage{tkz-berge}
\usepackage[linesnumbered, ruled, vlined]{algorithm2e}
\usepackage[top=1in, bottom=1in, left=1in, right=1in]{geometry}

\usepackage{array}
\usepackage{multirow}

\usepackage{hyperref}
\hypersetup{
    colorlinks,
    linkcolor={blue!80!black},
    citecolor={green!50!black},
}
\colorlet{linkequation}{blue}
\usepackage{authblk}

\renewcommand{\P}{\mathbb{P}}
\newcommand{\E}{\mathbb{E}}
\newcommand{\Var}{{\rm Var}}
\newcommand{\Cov}{\text{Cov}}
\newcommand{\cN}{\mathcal{N}}

\newcommand{\R}{\mathbb{R}}
\newcommand{\C}{\mathbb{C}}
\newcommand{\N}{\mathbb{N}}
\renewcommand{\S}{\mathbb{S}}

\newcommand{\eps}{\varepsilon} 

\DeclareMathOperator{\id}{id}

\renewcommand{\d}{\textup{d}}

\newcommand{\<}{\langle}
\renewcommand{\>}{\rangle}

\newcommand{\diag}{\text{diag}}

\newcommand{\op}{{\rm op}}

\def\sT{{\mathsf T}}

\DeclareMathOperator*{\argmin}{arg\,min}

\newtheoremstyle{customplain}
  {\topsep}        
  {\topsep}        
  {\itshape}       
  {}               
  {\bfseries}      
  {.}              
  {.5em}           
  {}   

\theoremstyle{customplain}
\newtheorem{theorem}{Theorem}
\newtheorem*{theorem*}{Theorem}
\newtheorem{lemma}{Lemma}

\newtheorem{assumption}{Assumption}

\newtheorem{proposition}{Proposition}

\newtheoremstyle{uprightremark}      
  {\topsep}                          
  {\topsep}                          
  {\normalfont}                      
  {}                                 
  {\bfseries}                        
  {.}                                
  {.5em}                             
  {}                                 

\theoremstyle{uprightremark}
\newtheorem{remark}{Remark}[section]

\DeclareSymbolFont{rsfs}{U}{rsfs}{m}{n}
\DeclareSymbolFontAlphabet{\mathscrsfs}{rsfs}

\def\bA{{\boldsymbol A}}
\def\bB{{\boldsymbol B}}
\def\bC{{\boldsymbol C}}
\def\bD{{\boldsymbol D}}
\def\bE{{\boldsymbol E}}

\def\bG{{\boldsymbol G}}

\def\bH{{\boldsymbol H}}

\def\bI{\mathbf{I}}
\def\bJ{{\boldsymbol J}}
\def\bK{{\boldsymbol K}}

\def\bM{{\boldsymbol M}}

\def\bQ{{\boldsymbol Q}}
\def\bR{{\boldsymbol R}}
\def\bS{{\boldsymbol S}}
\def\bT{{\boldsymbol T}}

\def\bW{{\boldsymbol W}}
\def\bX{{\boldsymbol X}}

\def\ba{{\boldsymbol a}}
\def\bb{{\boldsymbol b}}
\def\be{{\boldsymbol e}}
\def\boldf{{\boldsymbol f}}
\def\bg{{\boldsymbol g}}

\def\bk{{\boldsymbol k}}

\def\bu{{\boldsymbol u}}
\def\bv{{\boldsymbol v}}
\def\bw{{\boldsymbol w}}
\def\bx{{\boldsymbol x}}
\def\by{{\boldsymbol y}}
\def\bz{{\boldsymbol z}}

\def\bbeta{{\boldsymbol \beta}}

\def\btheta{{\boldsymbol \theta}}

\def\bPsi{{\boldsymbol \Psi}}
\def\bPhi{{\boldsymbol \Phi}}
\def\bSigma{{\boldsymbol \Sigma}}

\def\cR{\mathcal{R}}

\def\sfs{{\sf s}}

\def\de{{\rm d}}
\def\Tr{{\rm Tr}}

\def\de{{\rm d}}
\def\Unif{{\rm Unif}}

\def\cV{{\mathcal V}}
\def\cG{{\mathcal G}}

\def\cP{{\mathcal P}}

\def\cT{{\mathcal T}}

\def\cL{{\mathcal L}}
\def\cF{{\mathcal F}}
\def\cE{{\mathcal E}}
\def\cS{{\mathcal S}}

\def\cV{{\mathcal V}}
\def\cG{{\mathcal G}}

\def\cP{{\mathcal P}}

\def\cT{{\mathcal T}}
\def\cH{{\mathcal H}}
\def\cA{{\mathcal A}}

\def\Unif{{\sf Unif}}

\def\proj{{\mathsf P}}

\def\proj{{\mathsf P}}

\def\Unif{{\sf Unif}}

\def\proj{{\mathsf P}}

\def\proj{{\mathsf P}}

\def\cE{{\mathcal E}}
\def\cD{{\mathcal D}}

\def\cF{{\mathcal F}}
\def\cS{{\mathcal S}}

\def\He{{\rm He}}

\def\de{{\rm d}}
\def\Unif{{\rm Unif}}

\def\cE{{\mathcal E}}

\def\bA{{\boldsymbol A}}
\def\btheta{{\boldsymbol \theta}}

\def\cT{{\mathcal T}}
\def\cV{{\mathcal V}}

\def\diag{{\rm diag}}
\def\bS{{\boldsymbol S}}

\def\bD{{\boldsymbol D}}
\def\bPsi{{\boldsymbol \Psi}}

\def\bb{{\boldsymbol b}}

\def\bR{{\boldsymbol R}}

\def\bC{{\boldsymbol C}}

\def\ind{\mathbbm{1}}

\def\boldf{\boldsymbol{f}}

\def\cB{\mathcal{B}}

\def\Im{{\rm Im}}

\def\cZ{\mathcal{Z}}

\def\cJ{\mathcal{J}}

\def\boldf{\boldsymbol{f}}

\def\cB{\mathcal{B}}

\def\Im{{\rm Im}}

\def\Cov{{\sf Cov}}

\def\cK{\mathcal{K}}

\def\dist{{\rm dist}}

\def\frob{\mathrm{F}}

\newcommand{\mmid}{\,\|\,}
\newcommand{\comp}{\mathsf c}

\newcommand{\deq}{:=}

\newcommand{\sft}{\mathsf{t}}
\DeclareMathOperator{\KL}{KL}

\def\bOmega{\bm{\Omega}}

\newcommand{\sfz}{\mathsf{z}}

\title{Generalization, memorization, and overfitting \\for diffusion models trained in the lazy high-dimensional regime} 

\author[1]{Hugo Latourelle-Vigeant}
\author[1]{Sinho Chewi}
\author[2]{Aram-Alexandre Pooladian}
\author[3]{John Sous}
\author[1]{Theodor Misiakiewicz}

\affil[1]{Department of Statistics and Data Science, Yale University}
\affil[2]{Institute for Foundations of Data Science, Yale University}
\affil[3]{Department of Applied Physics, Yale University}

\date{\today}

\allowdisplaybreaks

\begin{document}

\maketitle 

\begin{abstract}
Modern score-based generative models have achieved remarkable empirical success in high-dimensional tasks such as image, audio, and video synthesis. These models reduce distribution learning to a sequence of regression problems that, if solved exactly on finite data, would ultimately reproduce the training samples. Their ability to generalize must therefore arise from the implicit or explicit regularization during training. In this work, we develop a generative counterpart to the theory of benign overfitting and algorithmic regularization for overparameterized neural networks in the supervised lazy-training regime. We study denoising score matching in a vector-valued reproducing kernel Hilbert space with an inner-product kernel. In the proportional high-dimensional regime $n\asymp d$, we derive exact risk trajectories under gradient flow training. These trajectories exhibit three phases governed by qualitatively distinct estimators: a spectral estimator that generalizes, a pure-noise score with localized peaks that interpolate the training objective, and an empirical Bayes estimator that memorizes the data. We then analyze how these estimators combine along the reverse-time SDE and characterize the distribution of the resulting samples. The analysis reveals familiar mechanisms from supervised learning, including kernel linearization and self-induced regularization from the nonlinear part of the kernel, but also reveals a distinct phenomenology specific to generative modeling. 

\end{abstract}

\tableofcontents

\clearpage

\section{Introduction}

The goal of generative modeling is to learn, from $n$ independent samples drawn from an unknown distribution $\pi_0$, a procedure that generates new samples distributed approximately as $\pi_0$. Modern generative models, most notably diffusion models, have achieved striking success across a wide range of domains, including image, audio, video, and scientific data generation \cite{ho2020denoising,rombach2022high,kong2020diffwave,watson2023novo,abramson2024accurate}. Remarkably, these models routinely operate in high-dimensional regimes, where the ambient dimension $d$ ranges from thousands to millions and the number of training samples $n$ is comparable to, or even smaller than, $d$. In such regimes, classical nonparametric density estimation is hopeless, as minimax rates deteriorate exponentially with dimension. Their success must therefore rely on implicit or explicit regularization induced by the model architecture and training algorithm. A central challenge for a statistical theory of generative modeling is therefore to characterize what these models learn under practical training algorithms, through which mechanisms, and over what training and data scales.

Diffusion models~\cite{sohl2015deep,song2019generative,ho2020denoising,song2021score} are particularly attractive for such a theory, because they reduce generative modeling to a family of familiar regression tasks. We briefly review this reduction below\footnote{Throughout this paper, we consider the variance-preserving formulation of diffusion models~\cite{song2021score}.}. Starting from the data distribution \(\pi_0\), consider the Ornstein--Uhlenbeck process
\begin{equation}\label{eq:OU_flow}
\de \bx_t = - \bx_t \,\de t + \sqrt{2}\, \de \bB_t, \qquad \bx_0 \sim \pi_0,
\end{equation}
where $(\bB_t)_{t\ge 0}$ is a standard Brownian motion in $\R^d$ independent of $\bx_0$. This process progressively transforms the data distribution into Gaussian noise. Its marginal distribution $\bx_t \sim \pi_t$ admits the explicit representation 
\begin{equation}\label{eq:OU_flow-marginal}
\bx_t \mid \bx_0 \overset{\de}{=} \rho_t \bx_0 + \sigma_t \bz, \qquad \rho_t \deq  e^{-t}, \quad \sigma_t \deq  \sqrt{1-e^{-2t}}, \quad \bz \sim \cN(0,\bI_d), 
\end{equation}
where $\bz$ is independent of $\bx_0$.  As $t \to \infty$, $\pi_t$ converges exponentially fast to the standard Gaussian distribution. Consequently, for a sufficiently large terminal time $T$ such that $\pi_T\approx\cN(0,\bI_d)$, approximate samples from $\pi_0$ can be generated by initializing $\bx_T\sim\cN(0,\bI_d)$ and simulating from $T$ to $0$ the reverse-time SDE~\cite{anderson1982reverse,haussmann1986time}
\begin{equation}\label{eq:intro-reverse_SDE}
    \d\bx_t = \bigl(-\bx_t - 2\nabla \log \pi_t(\bx_t)\bigr)\,\d t + \sqrt{2}\,\d\bar{\bB}_t,\qquad \bx_T \sim \cN(0,\bI_d),
\end{equation}
where $\bar{\bB}_t$ is a standard reverse-time Brownian motion. 

Simulating \eqref{eq:intro-reverse_SDE} requires estimating the score $\nabla\log\pi_t$ at every noise level $t$ from the training samples ${(\bx_{0,i})}_{i\leq n}$. By Tweedie's formula, this can be accomplished by writing the score estimator as $\widehat \bS_t \deq  \widehat \boldf_t/\sigma_t$ and fitting the denoiser $\widehat \boldf_t$ over a model class $\cF_d$ by minimizing the empirical denoising score matching (DSM) objective~\cite{hyvarinen2005estimation,vincent2011connection}:
\begin{equation}\label{eq:intro-DSM-empirical-risk}
\widehat \boldf_t = \argmin_{\boldf \in \cF_d} \frac{1}{d n} \sum_{i=1}^n \E_{\bz \sim \cN(0,\bI_d)}\left[\bigl\|\boldf(\rho_t \bx_{0,i} + \sigma_t \bz) + \bz\bigr\|_2^2\right].
\end{equation}
Diffusion models thus reduce generative modeling to a sequence of supervised learning problems indexed by the noise level $t$, whose solutions are subsequently composed through the stochastic dynamics \eqref{eq:intro-reverse_SDE} to produce the generated distribution $\widehat{\pi}_{\mathrm{gen}}$.

Despite its algorithmic simplicity, the statistical properties of denoising score matching are far from straightforward and, at first sight, appear incompatible with generalization. Indeed, when the model class is sufficiently rich---as is typically the case for overparameterized neural networks---and the empirical objective \eqref{eq:intro-DSM-empirical-risk} is minimized exactly, the resulting estimator coincides with the empirical Bayes denoiser for the Gaussian-smoothed empirical distribution
\begin{equation}\label{eq:intro-empirical-Bayes-denoiser}
    \widehat \boldf^\star_t = \sigma_t \nabla \log \widehat\mu_{d,n}, \qquad   \widehat\mu_{d,n}  \deq  \frac{1}{n} \sum_{i=1}^n \cN(\rho_t \bx_{0,i},\sigma_t^2 \bI_d).
\end{equation}
As $t\to 0^+$, and hence $\sigma_t \to 0$, $\widehat\mu_{d,n}$ concentrates around the training samples, and the reverse-time dynamics reduce to nearest-neighbor retrieval.  Exact minimization of \eqref{eq:intro-DSM-empirical-risk} thus leads to memorization and not generalization~\cite{pidstrigach2022score,biroli2024dynamical,baptista2025memorization}.

In practice, however, trained diffusion models often generalize: they produce samples that are genuinely new while sharing global and local statistics of the data. Empirically, memorization recedes as the training set grows~\cite{kadkhodaie2024generalization,zhang2023emergence,gu2023memorization}, sets in only after training for long times~\cite{bonnaire2026diffusion,favero2025bigger}, and, most strikingly, models can overfit their training objective while continuing to generate novel samples~\cite{kaiser2026diffusion}. What prevents the learned score from fitting the empirical Bayes denoiser? Which components of the model class and of the training algorithm act as implicit regularizers? And what distribution does the sampler generate? A growing body of work has investigated these questions from complementary perspectives, reviewed in Section~\ref{sec:related}. Our aim in this paper is to develop a high-dimensional theory that makes the underlying mechanisms explicit and delineates several concepts that are often conflated in the literature: overfitting the objective, memorizing the samples, generalizing to the population objective, and generalizing to the target distribution (generated distribution $\widehat{\pi}_{\mathrm{gen}}$ close to $\pi_0$).

In this paper, we focus on the canonical setting of kernel methods with inner-product kernels, which describes wide neural networks trained in the lazy regime~\cite{jacot2018neural,chizat2019lazy} and for which the supervised learning phenomenology is understood in fine detail. The comparison to the supervised setting is instructive in both directions: several mechanisms carry over to generative modeling, while others lead to vastly different conclusions. We first review the main insights from supervised learning and then summarize our results.

\subsection{The lessons from supervised learning}
A central puzzle in deep learning is why highly expressive models trained by empirical risk minimization (ERM), often without explicit regularization, nevertheless generalize. Consider a regression problem with $n$ samples $(\bx_i,y_i)_{i \leq n}$ where $\bx_i \in \R^d$ and $y_i = f_\star (\bx_i) +\eps_i$ with mean-zero, independent label noise $\eps_i$. Given a class $\cF$ of functions $f :\R^d \to \R$, the empirical risk minimizer is 
\begin{equation}\label{eq:intro-ERM-regression}
    \widehat f = \argmin_{f \in \cF}  \frac{1}{n} \sum_{i = 1}^n (f (\bx_i) - y_i)^2.
\end{equation}
Without capacity control, minimizing the empirical risk fits not only the signal $f_\star$ but also the noise in the observed labels, and classical theory predicts poor out-of-sample behavior. Yet overparameterized models trained in practice often generalize well. A decade of work has isolated two key mechanisms to resolve this apparent paradox: \emph{benign overfitting}~\cite{bartlett2020benign,liang2020just,ghorbani2021linearized,tsigler2023benign,hastie2022surprises} and \emph{algorithmic regularization}~\cite{soudry2018implicit,ali2019continuous}.

These mechanisms are understood particularly sharply for neural networks trained in the so-called \emph{lazy regime}~\cite{jacot2018neural,chizat2019lazy}, in which the dynamics of a wide, fully connected network are effectively those of a kernel method with an inner-product kernel\footnote{More precisely, the neural tangent kernel under Gaussian initialization takes the form $h( \< \bx,\bx'\>/d, \| \bx\|_2/\sqrt d, \| \bx'\|_2/ \sqrt d)$. For simplicity, we neglect its dependence on the norms.}
\begin{equation}\label{eq:intro-inner-product-kernel}
    K (\bx,\bx') = h (\< \bx,\bx'\> / d).
\end{equation}
The corresponding kernel ridge estimator admits the explicit representation
\[
\widehat f = \argmin_{f \in \cH}  \Big\{ \frac{1}{n} \sum_{i = 1}^n (f (\bx_i) - y_i)^2 + \frac{\kappa}{n}\, \| f \|_{\cH}^2 \Big\} = \bk_n^\sT ( \bK + \kappa \bI_n)^{-1} \by,
\]
where $\cH$ is the reproducing kernel Hilbert space (RKHS) associated with $K$, $\bK = (K(\bx_i,\bx_j))_{i,j \in [n]}$ is the kernel matrix, and $\bk_n = (K(\cdot, \bx_i))_{i \in [n]}$ is the vector of kernel evaluations at a test point. The empirical risk minimizer \eqref{eq:intro-ERM-regression} over $\cH$ corresponds to the interpolation limit $\kappa \to 0^+$, in which the estimator fits the training labels exactly whenever the kernel matrix is invertible.

Consider now the high-dimensional proportional regime $n \asymp d$. Suppose that the covariates have mean zero, covariance $\E[\bx\bx^\sT] = \bSigma$, and satisfy appropriate concentration assumptions, such as Assumption~\ref{ass:input_dist}. For simplicity of presentation, suppose also that $h(0) = h''(0) = 0$ and $h'(0)=1$.  Several important phenomena emerge in this setting.
\begin{description}
    \item[Kernel linearization.] The kernel matrix is approximated in operator norm by a linear function of the sample covariance matrix, together with an additive diagonal correction \cite{el2010spectrum}:
    \[
    \bK = \bX^\sT\bX/d + \gamma_d \bI_n  + o_{\| \cdot \|_\op,\P} (1) , \qquad \gamma_d = h ( \Tr(\bSigma)/d) -  \frac{\Tr(\bSigma)}{d},
    \]
    where $\bX = [\bx_{1}, \ldots, \bx_n ] \in \R^{d \times n}$. At a fresh test point $\bx$, the kernel vector similarly satisfies $\bk_n (\bx) \approx \bX^\sT \bx/d$. The nonlinear part of the kernel survives on the training set only through the diagonal correction $\gamma_d\bI_n$, whereas the kernel behaves linearly at a typical test point. This follows from high-dimensional near-orthogonality, $\< \bx, \bx '\>/d = O_{\P}(d^{-1/2})$, for independent samples.

    \item[Benign overfitting.] At a test point, the fitted model is asymptotically equivalent to the linear ridge predictor 
    \begin{equation}\label{eq:supervised-additive-ridge}
    \widehat f (\bx) \approx  \bx^\sT \bX^\sT \bigl(  \bX^\sT \bX + d(\kappa + \gamma_d ) \bI_n \bigr)^{-1} \by,
    \end{equation}
     where the nonlinear part of the kernel acts as an additive \emph{self-induced ridge} $\gamma_d$~\cite{liang2020just,bartlett2021deep,misiakiewicz2024six}. Thus, even in the interpolation limit $\kappa=0^+$, the predictor retains a nonvanishing effective regularization and can overfit benignly. The interpolating model decomposes into a regular, linear component that captures the signal and generalizes, and a localized, spiky component that fits the training residuals while remaining invisible at test time.

    \item[Limitations of the kernel regime.] Although the RKHS of $K$ may be universal at any fixed $d$, its effective expressive power in high dimension is sharply limited. In the proportional regime $n\asymp d$, an isotropic inner-product kernel can learn only the linear component of the target; more generally, with \(n\asymp d^{\ell}\) samples it recovers at most a degree-\(\ell\) polynomial approximation~\cite{ghorbani2021linearized,mei2022hypercontractivity}. Escaping this polynomial barrier requires feature learning or architectural structure such as locality and weight sharing~\cite{ghorbani2020neural,misiakiewicz2022learning}.
\end{description}

Given the explanatory success of these results, and the reduction of diffusion models to regression through the objective~\eqref{eq:intro-DSM-empirical-risk}, it is natural to ask for an analogous theory in generative modeling. Three differences, however, make denoising score matching a genuinely distinct problem.

First, the regression problem \eqref{eq:intro-ERM-regression} is replaced by the denoising problem \eqref{eq:intro-DSM-empirical-risk} whose unrestricted empirical minimizer is the empirical Bayes denoiser $\widehat{\boldf}^{\star}_t$ of \eqref{eq:intro-empirical-Bayes-denoiser}, not the population score of $\pi_t$. Around this empirical target, the regression problem is effectively \emph{noiseless}: we show that the residual denoising noise is exponentially small in $d$ (Proposition~\ref{prop:bayes_opt_denoiser_linearization}), and the classical rationale for early stopping---avoiding overfitting the noise---does not apply. Second, interpolation---driving the training objective to zero---is not the benign endpoint it can be in supervised learning: the exact minimizer is the memorizing score. Whatever generalization occurs must arise from the \emph{gap} between the trained model $\widehat{\boldf}_t$ and the empirical minimizer $\widehat{\boldf}^{\star}_t$. Third, a diffusion model is not one regression problem but a sequence of them, indexed by the noise level $t$, and the fitted denoisers are composed through the nonlinear, random dynamics \eqref{eq:intro-reverse_SDE}. Small score risk at each level transfers to the generated distribution by now-standard sampling theory~\cite{chen2023sampling,benton2024nearly}; but in the regimes of interest here the score risk is \emph{not} small, and the generated distribution must be understood directly from the estimator geometry along the reverse trajectories.

These differences make the theory of denoising score matching both conceptually and technically distinct from its supervised counterpart, and, as we shall see, they overturn several of its conclusions.

\subsection{Summary of main results: lazy training for generative modeling}

We study denoising score matching over the vector-valued RKHS induced by the inner-product kernel \eqref{eq:intro-inner-product-kernel}, with an independent model at each noise level. We work in the proportional high-dimensional regime $n \asymp d$ and assume that the data distribution is centered with covariance $\bSigma$ and satisfies a log-Sobolev inequality. Our primary estimator, denoted by $\widehat{\boldf}_{\sft}$, is obtained by running gradient flow on the empirical objective from zero initialization for training time \(\sft\) (reserving \(t\) for diffusion time). This gradient-flow formulation idealizes the practical training of diffusion models by stochastic gradient descent with fresh noise resampled at each step. A parallel theory for the ridge estimator, which exhibits the same phenomenology under the correspondence $\kappa = d/\sft$, is developed in Appendix~\ref{app:ridge}.

We establish sharp characterizations for the score-estimator trajectories, their training and test DSM errors, and the sampling distribution obtained by combining these estimators through the reverse diffusion process. In particular, our theory provides the following analogues of the phenomena described above in supervised learning.

\begin{description}
    \item[Kernel linearization.] With $\cF_d$ the vector-valued RKHS of an inner-product kernel, the objective~\eqref{eq:intro-DSM-empirical-risk} is a kernel regression problem in which the $n \times n$ kernel matrix is replaced by an infinite-dimensional kernel operator $\widehat{\cK}$ on the noisy empirical space, with the $n$ data points replaced by $n$ Gaussian \emph{tubes} $\{\rho_t\bx_{0,i}+\sigma_t\bz\}$. We prove an operator-level analogue of El Karoui's linearization (Proposition~\ref{prop:linearization_kernel_op}): $\widehat\cK$ is approximated by an effective operator $\widehat\cK_{\mathrm{eff}}$ that is linear except for a per-sample correction of strength 
    \[
    \delta_n \deq  \frac{h(\rho_t^2\tau_\bSigma)-\rho_t^2\tau_\bSigma}{n}, \qquad \tau_\bSigma \deq  \Tr(\bSigma)/d,
    \]
    a \emph{self-induced regularization} induced by the nonlinear part of the kernel. Unlike in supervised learning, this correction is \emph{not} a ridge penalty: it acts by averaging the estimator over each tube. Thus, the learned score effectively decomposes into a linear map and localized nonlinear bumps around each training sample. These bumps allow the estimator to interpolate the residuals in the DSM objective while being invisible at test time and generation.

    \item[Benign overfitting?] We delineate three related but distinct notions: ($O$) \emph{overfitting}, in which the empirical denoising objective is driven to zero; ($M$) \emph{memorization}, in which the generated distribution collapses onto the training samples; and ($G^\comp$) \emph{failing to generalize} to the population distribution. By propagating the score estimators through the reverse dynamics~\eqref{eq:intro-reverse_SDE}, we show that $(O) \not\Rightarrow (M)$: even when the training loss vanishes, the generative model need not reproduce training samples. At the same time, the onsets of $(O)$ and $(G^\comp)$ nearly coincide: the training loss vanishes by \emph{forgetting the covariance of the data} and the test-time score degrades to the score of pure noise. This supports the assertion that overfitting in diffusion models is generally not benign \cite{farghly2026benign}.
    
    \item[Limitations of the kernel regime.] For lazily trained denoisers, the generated distribution is asymptotically Gaussian, with a covariance we characterize exactly as a function of the covariance $\bSigma$, the sample ratio $n/d$, the self-induced regularization $\delta_n$, and the training and sampling schedules. This is the generative analogue of the polynomial barrier: at proportional scales, lazy diffusion models generate at most a Gaussian approximation to the data. Escaping it requires feature learning, architectural structure, or genuinely low-dimensional data, which suggests natural next steps for future research \cite{kadkhodaie2024generalization,kamb2024analytic,SclFavWya25PhaseTransition,cui2026solvable,bardone2026theory}.
\end{description}

Although prior work has also studied denoising score matching asymptotics in the proportional regime \cite{CuiZde23Denoising,GeoVeiMac25Manifold,bonnaire2026diffusion,wang2026analytical,wang2026random,MerGol26LinDiff}, we emphasize that the conclusions of the second and third bullet points above require going beyond studying the DSM objectives in isolation and ask about the reverse dynamics~\eqref{eq:intro-reverse_SDE} directly. This is a key novelty of our analysis. In particular, our results allow us to study the interplay between  \textit{generalization} to the population objective and \textit{generalization} to the target distribution. In Figure \ref{fig:covariance_evolution}, we compare several early-stopping schedules for gradient-flow-trained score estimators and evaluate $\KL(\pi_0\mmid \widehat{\pi}_{\mathrm{gen}})$ at different stages of the reverse SDE. We find that, when the reverse SDE is stopped early, a training schedule with suboptimal test error can yield a slightly smaller KL divergence than the optimal early-stopping schedule, which minimizes the population DSM objective along the gradient-flow trajectory at every noise level. We believe that clarifying the relationship between score-matching performance and generation quality is an important direction for future work, and that future theory should study the generated distribution directly rather than focus exclusively on the DSM loss.

\subsubsection{Algorithmic regularization via early stopping}

At the center of our investigation is a precise characterization of the gradient-flow trajectory and the implicit regularization induced by early stopping. This analysis identifies three distinct optimization regimes:
\begin{description}
    \item[Linear regime $\sft = O(d)$.] On this timescale, the kernel operator linearizes and the learned denoiser decomposes into a global linear component and localized bumps around the training samples, corresponding to two spectral channels. The first arises from the linear part of the kernel and relaxes on the timescale $d$; through this channel, the global component learns the spectrum of the sample covariance, top directions first. The second arises from the nonlinear part of the kernel and relaxes on the timescale $1/\delta_n\asymp n$; through this channel, the model begins to fit the per-sample bumps. We derive exact deterministic equivalents for the training and test errors along the entire trajectory (Theorem~\ref{thm:gf_train_test_equivalents_moderate_time}).  The test risk is minimized at $\sft^\star \asymp d$, where the estimator implements a linear shrinkage of the noisy sample covariance. Past $\sft^\star$, the fit overshoots: interpolation forces the linear component toward the pure-noise score, and the model progressively forgets the covariance it had learned.
    
    \item[Polynomial regime $\sft = d^{1+\Theta(1)}$.] On polynomially longer timescales, the kernel's high-degree components fit increasingly sharp localized bumps around the training samples, and the empirical denoising loss vanishes. Note that interpolation here is far more demanding than in supervised learning, since the model must interpolate an entire function on $n$ Gaussian tubes, not $n$ scalar labels. At the same time, at a typical test point, the estimator is simply the pure-noise linear score and the population risk plateaus at $\tfrac{\rho_t^2}{\sigma_t^2}\tau_\bSigma$ (Theorem~\ref{thm:gf_train_test_equivalents_extended_time}). The model has overfitted its objective; it has \emph{not} memorized its samples: the reverse process, as we discuss below, still generates a Gaussian law.

    \item[Super-polynomial regime $\sft=d^{\omega(1)}$.] Only on super-polynomial timescales does gradient flow converge to the empirical Bayes denoiser  (Theorem~\ref{thm:ridgeless_main}). The population risk doubles to $2\frac{\rho_t^2}{\sigma_t^2}\tau_{\bSigma}$ and the reverse process collapses onto the training set. This agrees with the empirical observation that memorization in diffusion models emerges only at very long training times~\cite{bonnaire2026diffusion,favero2025bigger,kaiser2026diffusion}.  
\end{description}
The qualitative behavior of the train and test error trajectories at fixed diffusion time $t$ is summarized in the stylized diagram of Figure~\ref{fig:risk_three_regimes}.

This hierarchy mirrors the observation that diffusion models learn coarse, global statistics before fitting increasingly fine, and eventually sample-specific, details~\cite{wang2026analytical,bardone2026theory,bonnaire2026diffusion,favero2025bigger}. Its distinctive feature is the separation of the onset of interpolation (polynomial time) from the onset of memorization (super-polynomial time): between the two lies an entire regime in which the training loss is vanishing, the score risk is large, and the model generates neither the data distribution nor the training set.

\begin{figure}[t]
    \centering
    \definecolor{ProBlue}{RGB}{31, 119, 180}
    \definecolor{ProRed}{RGB}{214, 39, 40}

    \begin{tikzpicture}[x=1cm, y=1.1cm, >=stealth]

        \fill[gray!8] (4.8, 0) rectangle (8.6, 5.0);  
        \fill[gray!16] (8.6, 0) rectangle (12.5, 5.0); 
        
        \draw[dotted, thick, gray] (4.8, 0) -- (4.8, 5.0);
        \draw[dotted, thick, gray] (8.6, 0) -- (8.6, 5.0);

        \node[text=gray!80!black, font=\footnotesize\bfseries, anchor=base] at (2.4, 4.58) {Generalization regime};
        \node[text=gray!80!black, font=\footnotesize\bfseries, anchor=base] at (6.7, 4.58) {Overfitting regime};
        \node[text=gray!80!black, font=\footnotesize\bfseries, anchor=base] at (10.55, 4.58) {Memorization regime};
        \node[text=gray!80!black, font=\footnotesize, anchor=base] at (2.4, 4.22) {$\sft = O(d)$};
        \node[text=gray!80!black, font=\footnotesize, anchor=base] at (6.7, 4.22) {$\sft = d^{1+\Theta(1)}$};
        \node[text=gray!80!black, font=\footnotesize, anchor=base] at (10.55, 4.22) {$\sft = d^{\omega(1)}$};

        \draw[->, line width=1pt] (-0.2, 0) -- (12.9, 0) node[below left] {$\sft$};
        \draw[->, line width=1pt] (0, -0.2) -- (0, 5.0);

        \draw[dashed, gray, line width=1pt] (0, 1.8) node[left, black] {$\frac{\rho^2}{\sigma^2}\tau_{\bSigma}$} -- (12.5, 1.8);
        \draw[dashed, gray, line width=1pt] (0, 3.6) node[left, black] {$2\frac{\rho^2}{\sigma^2}\tau_{\bSigma}$} -- (12.5, 3.6);
        \draw[dashed, gray, line width=1pt] (0, 1.0) node[left, black] {$1$} -- (12.5, 1.0);

        \draw[line width=1.5pt, ProRed, densely dashed] (0, 1.0) 
            .. controls (0.8, 1.0) and (1.65, 0.5) .. (2.2, 0.5)
            .. controls (2.45, 0.5) and (2.60, 0.5) .. (2.70, 0.5)
            .. controls (2.95, 0.5) and (4.20, 0.0) .. (5.10, 0.0)
            -- (12.5, 0.0)
            node[right, align=left, yshift=0.4cm] {$\cL(\widehat{\boldf}_{\sft};0)$};

        \draw[line width=1.5pt, ProBlue] (0, 1.0) 
            .. controls (0.8, 1.0) and (1.65, 0.5) .. (2.2, 0.5)
            .. controls (2.45, 0.5) and (2.75, 0.5) .. (3.0, 0.5)
            .. controls (3.45, 0.5) and (4.20, 1.8) .. (5.10, 1.8)
            -- (8.6, 1.8)
            .. controls (9.25, 1.8) and (9.5, 3.6) .. (10.4, 3.6)
            -- (12.5, 3.6)
            node[right, align=left] {$\cR(\widehat{\boldf}_{\sft})$};

        \draw[dashed, line width=1pt] (2.5, 0.5) -- (2.5, 0) node[below] {$\sft^\star \asymp d$};

    \end{tikzpicture}
    \caption{Stylized evolution of the empirical denoising loss $\cL(\widehat{\boldf}_{\sft};0)$ and population risk $\cR(\widehat{\boldf}_{\sft})$ along gradient flow at fixed noise level $(\rho,\sigma)$, with $\tau_\bSigma = \Tr(\bSigma)/d$. As training time increases, the score estimator transitions from a spectral shrinkage estimator at $\sft = O(d)$ (Theorem~\ref{thm:gf_train_test_equivalents_moderate_time}), to fitting the pure-noise score at polynomial times $\sft = d^{1 +\Theta(1)}$, with localized bumps that drive the training loss to zero while remaining invisible on typical test inputs (Theorem~\ref{thm:gf_train_test_equivalents_extended_time}), and finally to the empirical Bayes denoiser which memorizes the training samples at super-polynomial times (Theorem~\ref{thm:ridgeless_main}). In this figure, generalization means that the train and test objectives are close.}
\label{fig:risk_three_regimes}
\end{figure}

\subsubsection{Generation: delocalization and the Gaussian barrier}

We next analyze how the learned denoisers compose along the reverse diffusion process~\eqref{eq:intro-reverse_SDE}.

\begin{description}
    \item[Linearization of the reverse SDE.] Our main result (Theorem~\ref{thm:kl_bound_empirical_linearized_reverse}) is a pathwise linearization: for any training schedule $\sft_t \le d^{1+c}$ with $c<1/6$ (covering the generalization phase and the onset of overfitting), the Kullback--Leibler divergence between the path measures induced by the learned scores and their linearized surrogates vanishes asymptotically. The mechanism is high-dimensional delocalization: with high probability, a reverse trajectory has normalized overlap $O(d^{-1/2})$ with \emph{every} training sample, uniformly over the trajectory, while the fitted nonlinear bumps activate only within narrow neighborhoods of those samples. Thus, along a typical generative trajectory, the learned kernel scores are indistinguishable from their linearized counterparts.

    \item[Gaussian inductive bias.] As a consequence of this pathwise linearization, the generated distribution is asymptotically Gaussian, with covariance given by an explicit deterministic equivalent (Theorem~\ref{thm:effective_covariance_bound}): a Marchenko--Pastur-type map applied to $\bSigma$, composed with an ODE that tracks the training schedule and sampler early stopping. The generated law depends on $\pi_0$ only through its first two moments, a \emph{universality principle} for lazy diffusion models. Sample-specific nonlinear corrections can interpolate the training objective, but they are invisible to typical reverse trajectories and therefore do not affect the limiting generated law. This provides a mechanism for the effectiveness of linear and Gaussian approximations to trained diffusion models reported in~\cite{wang2023hidden,wang2024unreasonable,LiDaiQu24Gaussian}. All information in $\pi_0$ beyond second order is lost in the lazy-training model, even though the underlying function class is universal and the trained model may interpolate the empirical denoising objective.
\end{description}

Our results delimit what isotropic lazy architectures can achieve in proportional high-dimensional regimes. Non-Gaussian generation requires leaving this regime through feature learning, architectural anisotropy---such as convolution, attention, or locality---or genuinely low-dimensional data structure. This is the generative analogue of the polynomial barrier for kernel methods in supervised learning: when \(n\asymp d\), lazy supervised models learn at most linear functions of their inputs, while the lazy diffusion models studied here generate at most Gaussian distributions.

\subsection{Related work}
\label{sec:related}

Score matching was introduced by Hyv\"arinen~\cite{hyvarinen2005estimation} as a way to fit unnormalized models by matching gradients of log-densities rather than normalizing constants, and Vincent~\cite{vincent2011connection} established its denoising formulation as the regression~\eqref{eq:intro-DSM-empirical-risk}. Diffusion-based generative modeling originates with Sohl-Dickstein et al.~\cite{sohl2015deep}; its modern score-based formulations were developed by Song and Ermon~\cite{song2019generative}, Ho, Jain, and Abbeel~\cite{ho2020denoising}, and Song et al.~\cite{song2021score}, with design refinements surveyed in~\cite{karras2022elucidating}. 

A growing body of research develops sampling and statistical guarantees for diffusion models.
A first line takes a score estimate of prescribed $L^2$ accuracy as given and controls the resulting sampler: the generated law is close to $\pi_0$ whenever the score risk is small, with polynomial dependence on the dimension, with recent improvements to linear dependence on the intrinsic dimensionality of the data~\cite{chen2023sampling,lee2023convergence,benton2024nearly,conforti2025kl,de2022convergence, Chen+26HighAccDiffusion}. A second line of work studies the task of score \emph{estimation} itself: neural network estimators achieve minimax rates for densities with smoothness or low-dimensional structure~\cite{oko2023diffusion,chen2023score,zhang2024minimax}, and optimal score estimation is closely tied to empirical Bayes smoothing~\cite{wibisono2024optimal,dou2024optimal}. These frameworks presuppose that training succeeds at controlling the population score risk, typically as $n \to \infty$ at fixed (or effectively low) dimension, with statistical rates subject to the curse of dimensionality. By contrast, in the proportional regime $n \asymp d$, consistent score estimation is impossible and our paper instead characterizes what the training algorithm actually learns in this regime.  
This requires analyzing the reverse process pathwise rather than a reduction to risk bounds.

That diffusion models can replicate training data was established by~\cite{carlini2023extracting,somepalli2023diffusion,somepalli2023understanding}. The transition between memorization and generalization has since been mapped along several axes, such as dataset size~\cite{yoon2023diffusion,gu2023memorization,zhang2023emergence,kadkhodaie2024generalization}, training time~\cite{bonnaire2026diffusion,favero2025bigger,kaiser2026diffusion}, model capacity~\cite{buchanan2026edge,ye2026provable}, and local properties of the underlying data density~\cite{merger2026local,ross2025geometric}. On the theoretical side, a line of work in statistical physics analyzes the reverse process driven by the \emph{exact} empirical score: it identifies speciation and collapse transitions along the reverse dynamics and shows that avoiding collapse onto training points requires $n$ to be exponential in the dimensions $d$~\cite{biroli2023generative,biroli2024dynamical,RayAmb23SymBreaking,achilli2025memorization,achilli2024losing,li2024critical}. 

Closest to this paper is a rapidly developing line of work on exactly solvable models for diffusion. For \emph{linear} denoisers, prior work has characterized spectral bias laws in the training dynamics~\cite{wang2026analytical}, derived random matrix deterministic equivalents for the learned denoiser \cite{wang2026random}, and analyzed generalization dynamics~\cite{MerGol26LinDiff,Hal25OneStep,GhaAkhHas26Denoisers,PieGal25Gaussian}. For \emph{random features} and shallow architectures, prior work obtained exact DSM learning curves with memorization phase diagrams~\cite{GeoVeiMac26RandomFeatures,GeoMac26Manifold}, and sharp analyses of trained shallow generative models on structured targets~\cite{Cui+24LearningGenerative,CuiPehLu25Solvable,CuiZde23Denoising}. Most relevant to our three-phase picture, \cite{bonnaire2026diffusion} identified two training timescales---a generalization time and a memorization time growing linearly in $n$---analytically in a random feature model and empirically in different models; see also~\cite{Vas25GenVar,ye2026provable,buchanan2026edge,wu2025taking} for complementary mechanisms such as approximation-capacity separations and the effects of learning rates.

 Relative to this literature, our advances are: (i) a universal kernel class---an infinite-dimensional model that genuinely can interpolate and memorize---handled through a new operator-level linearization with a non-ridge self-induced regularization, for general anisotropic data under a log-Sobolev condition rather than Gaussian data; (ii) exact deterministic equivalents along the gradient flow trajectory that includes the overshoot by which the trained model unlearns the covariance; (iii) the resulting separation between interpolation (polynomial time) and memorization (super-polynomial time), a distinction that cannot arise in models not rich enough to interpolate; and (iv) a pathwise analysis of the reverse SDE driven by the trained \emph{nonlinear} score, where prior generated-law characterizations concern models that are linear by construction. Furthermore, our results sharpen and provide a more transparent interpretation of the two-timescale phenomenology in~\cite{bonnaire2026diffusion,favero2025bigger}. Finally, \cite{farghly2026benign} proves in a general framework that small train and population DSM losses are incompatible unless $n$ is exponential in $d$---an impossibility statement consistent with our exact plateau; our results sharpen the picture in the kernel regime by showing that this score-level harm coexists with unharmed generation.

Finally, the supervised-learning phenomena that we take as a starting point are developed in~\cite{zhang2016understanding,belkin2019reconciling,bartlett2020benign,tsigler2023benign,hastie2022surprises,ghorbani2021linearized,muthukumar2020harmless,bartlett2021deep} and references therein, such as interpolation with benign test error, its spectral characterizations, and the decomposition of interpolants into regular and spiky parts. For kernels specifically, the high-dimensional linearization of inner-product kernel random matrices goes back to~\cite{el2010spectrum,cheng2013spectrum} and the self-induced regularization of kernel interpolation and the polynomial approximation barrier to~\cite{liang2020just,ghorbani2021linearized,mei2022hypercontractivity}.  
On the technical side we rely on deterministic equivalents for sample covariance resolvents~\cite{couillet2011random,hachem2007deterministic,latourelle2026dyson,misiakiewicz2024non,fan2026anisotropic}. Our operator-level linearization extends the kernel matrix theory to the infinite-dimensional designs generated by denoising objectives.

\subsection{Notation and organization}
\label{sec:notation}

We write $\|\cdot\|_2$ for the Euclidean norm, $\|\cdot\|_\op$, $\|\cdot\|_\frob$, $\|\cdot\|_\ast$ for the operator, Frobenius, and nuclear norms, and $\<\cdot,\cdot\>$ for the Euclidean inner product. For a positive semidefinite matrix $\bSigma$ we set $\tau_{\bSigma} \deq  \Tr(\bSigma)/d$.  We use $f(d) \lesssim g(d)$ to denote that there exists a constant $C > 0$, depending only on the problem parameters and the constants in the assumptions, such that $f(d) \leq C g(d)$ for all sufficiently large $d$; $f \asymp g$ means $f \lesssim g \lesssim f$. An event $\cE$ holds \emph{with very high probability} (w.v.h.p.) if its failure probability is at most $\P (\cE^c) \leq e^{-d^{c}}$ for some constant $c > 0$ and all large $d$. We denote $o_{d,\P} (\cdot), O_{d,\P}(\cdot)$ for the standard small-o and big-O in probability.  For notational convenience, we introduce a notion of \emph{stochastic domination} with very high probability. For sequences of non-negative random variables $X = \{X_d\}$ and $Y = \{Y_d\}$, we say that $X$ is stochastically dominated by $Y$, denoted by $X \prec Y$, if for every $\epsilon > 0$ there exist constants $c > 0$ and $d_0$ such that 
\begin{equation}\label{eq:stochastic-domination-def}
\P(X_d > d^{\epsilon} Y_d) \leq e^{-d^{c}} , \qquad \text{for all $d \geq d_0$}.
\end{equation}

Throughout, $C, c, c', \ldots$ denote positive constants that may change from line to line. When the diffusion time \(t\) is fixed, we allow constants to depend on \(t\), notably through \(\rho_t,\sigma_t\). For a probability measure $\mu$, $L^2(\mu)$ is the space of square-integrable functions with norm $\|\cdot\|_{L^2(\mu)}$; $\KL(\cdot \mmid \cdot)$ denotes the Kullback--Leibler divergence. We reserve $t$ for the diffusion time and $\sft, \sfs$ for the training (optimization) time.

\paragraph*{Organization.} Section~\ref{sec:setting} sets up kernel denoising score matching: the empirical objective and its memorizing optimum, the RKHS model class, and the gradient flow and ridge estimators. Section~\ref{sec:gf} analyzes the training dynamics at a fixed noise level: the linearization and localized-bump structure (Section~\ref{sec:linearization}), and the three phases (Sections~\ref{sec:gf-phase-I}--\ref{sec:gf-endpoint}). Section~\ref{sec:reverse} analyzes generation: the delocalization of reverse trajectories, the pathwise linearization of the trained score, and the Gaussian characterization of the generated distribution. Finally, we conclude with future research directions in Section~\ref{sec:discussion}. Appendix~\ref{app:ridge} develops the parallel ridge theory; the remaining appendices contain the proofs.

\section{Kernel denoising score matching}
\label{sec:setting}

This section introduces the setting that we consider throughout this paper. Section \ref{sec:DSM} describes the denoising score matching objective, Section \ref{sec:empirical-objective} introduces the empirical objective and its memorizing optimum, Section \ref{sec:inner-product-kernels} describes the RKHS model class with inner-product kernels, and Section \ref{sec:operators-GF} introduces the kernel operators and the gradient flow and ridge estimators.

\subsection{Denoising score matching}
\label{sec:DSM}


Let $\pi_0 \in \mathscr{P}(\R^d)$ be the data distribution we wish to sample from. We progressively corrupt samples from $\pi_0$ with Gaussian noise using the Ornstein--Uhlenbeck forward process~\eqref{eq:OU_flow}. Let $\pi_t \deq  \mathrm{Law}(\bx_t)$ denote the resulting distribution at time $t$, whose explicit form is given in~\eqref{eq:OU_flow-marginal}. To generate new samples, we simulate the reverse-time SDE~\eqref{eq:intro-reverse_SDE} from a terminal time $T>0$, chosen sufficiently large so that $\pi_T \approx \cN(0,\bI_d)$, back to time zero. This requires an estimate of the marginal score $\nabla\log\pi_t$ at every noise level $t \in [0,T]$. One therefore trains a time-dependent score estimator $\bS (\cdot;t) : \R^d \to \R^d$. Exploiting the identity
\begin{equation}
    \nabla_{\bx_t}\log \pi_{t\mid0}(\bx_t\mid\bx_0) = -(\bx_t-\rho_t\bx_0)/\sigma_t^2 = -\bz/\sigma_t,
\end{equation}
the score can be learned by regression on the injected noise. After rescaling the score estimator $\boldf_t = \sigma_t \bS (\cdot ; t)$ and with the standard weighting $\omega(t) = \sigma_t^2$, the population DSM objective is
\begin{equation}\label{eq:intro-pop-risk}
    \cR(\boldf) \deq  \frac{1}{d}\int_0^T
    \E_{\bx_0,\bz} \left[
        \bigl\|\boldf_t(\rho_t\bx_0+\sigma_t\bz)+\bz\bigr\|_2^2
    \right]\de t ,
\end{equation}
where the $1/d$ normalization keeps the risk of order one as $d \to \infty$. By Tweedie's formula~\cite{robbins1992empirical,efron2011tweedie}, its minimizer is the (rescaled) Bayes denoiser
\[
    \boldf_t^\star(\bx)=-\E[\bz\mid \bx_t=\bx]=\sigma_t\nabla\log\pi_t(\bx).
\]

\paragraph*{Decoupling across diffusion times.}  In this paper, we model the family of scores $\{\boldf_t\}_{t \in [0,T]}$ by a separate RKHS function at each noise level, so that the objective~\eqref{eq:intro-pop-risk} decouples across time and each regression problem can be trained and analyzed separately. Abusing notation slightly, we fix $t \in (0,T]$, write $\rho=\rho_t$ and $\sigma=\sigma_t$ with $\rho^2 + \sigma^2 = 1$, and define the fixed-time population risk of a denoiser $\boldf:\R^d\to\R^d$ as
\begin{equation}\label{eq:pop-risk-fixed-t}
    \cR(\boldf) = \frac{1}{d}\E_{(\bx_0, \bz)\sim \pi_0\otimes \cN(0,\bI_d)}\left[\bigl\|\boldf(\rho\bx_0 + \sigma\bz) + \bz\bigr\|_2^2\right].
\end{equation}
Section~\ref{sec:gf} studies this fixed-time problem, treating $t$ (and hence $(\rho,\sigma)$) as a constant as $n,d \to \infty$; Section~\ref{sec:reverse} reassembles the estimators trained at the different noise levels into the reverse SDE. Note that, in practice, the score is often modeled by a single neural network with time as an additional input, which couples the regression problems across $t$. We comment further on this simplifying assumption in Section~\ref{sec:discussion}.

\subsection{The empirical objective and its memorizing optimum}
\label{sec:empirical-objective}

In practice, we observe $n$ i.i.d.\ samples $\bx_{0,1},\ldots,\bx_{0,n} \sim \pi_0$ from the target distribution. The empirical DSM objective replaces $\pi_0$ by the empirical measure and the score estimator $\boldf$ is obtained by minimizing
\begin{equation}\label{eq:empirical_risk}
    \cL(\boldf) \deq  \frac{1}{dn}\sum_{i=1}^n \E_{\bz}\left[\bigl\|\boldf(\rho\bx_{0,i} + \sigma\bz) + \bz\bigr\|_2^2\right],
\end{equation}
over a function class $\cF \subseteq \{ \boldf: \R^d \to \R^d\}$.  Define the noisy empirical and population measures
\begin{equation}
    \widehat{\pi}_{d,n} \deq  \Unif \left(\{\bx_{0,i}\}_{i=1}^{n}\right)\otimes \cN(0,\bI_d),\qquad\qquad \pi_d \deq  \pi_0 \otimes \cN(0,\bI_d),
\end{equation}
and let $\widehat{\mu}_{d,n}$ and $\mu_d$ be their pushforwards under $(\bx,\bz)\mapsto \rho\bx+\sigma\bz$; thus $\widehat\mu_{d,n}$ is the Gaussian mixture 
\begin{equation}
\widehat\mu_{d,n} \deq  \frac1n\sum_{i=1}^n\cN(\rho\bx_{0,i},\sigma^2\bI_d),
\end{equation} 
a \emph{random} measure, while $\mu_d$ is the deterministic noisy marginal. Over an unrestricted function class, the minimizer of $\cL(\cdot)$ is the Bayes denoiser of $\widehat\mu_{d,n}$
\begin{equation}\label{eq:emp_bayes_opt_denoiser}
    \widehat{\boldf}^\star (\bu) \deq  -\E_{(\bx,\bz)\sim \widehat{\pi}_{d,n}}\left[\bz\mid \rho\bx + \sigma\bz = \bu\right] = \frac{1}{\sigma}\sum_{k=1}^{n}\omega_k(\bu) (\rho\bx_{0,k} - \bu),
\end{equation}
with softmax weights
\begin{equation}\label{eq:emp_bayes_opt_denoiser_weights}
    \omega_k(\bu) \deq  \frac{\exp\left(-\frac{1}{2\sigma^2}\|\rho\bx_{0,k} - \bu\|^2\right)}{\sum_{j=1}^{n}\exp\left(-\frac{1}{2\sigma^2}\|\rho\bx_{0,j} - \bu\|^2\right)}.
\end{equation}
This is the \emph{empirical Bayes denoiser}: the exact optimum of the training objective \eqref{eq:empirical_risk}  over any sufficiently rich function class, at every noise level. Once inserted into the reverse SDE~\eqref{eq:intro-reverse_SDE}, this score drives the sampler back to the training set as $t \rightarrow 0^+$ (that is, $\sigma \to 0^+$), essentially performing nearest-neighbor retrieval from the training data.  This paper will be interested in understanding the extent to which the RKHS model class and the training algorithm can avoid this memorization and generalize to the population Bayes denoiser $\boldf^\star$.

\subsection{Inner-product kernels and the model class}
\label{sec:inner-product-kernels}

We estimate each coordinate of the denoiser in the RKHS $\cH$ of a positive definite \emph{inner-product kernel}
\begin{equation}\label{eq:def-inner-product-kernel}
    K(\bx,\bx') = h(\<\bx,\bx'\>/d), \qquad h:\R\to\R .
\end{equation}
More precisely, $\cH$ is the completion of the space $\mathrm{span}\{K(\cdot,\bx):\bx\in\R^d\}$ under the inner product $\<K(\cdot,\bx),K(\cdot,\by)\>_\cH = K(\bx,\by)$, and satisfies the reproducing property $f(\bx) = \<f, K(\cdot,\bx)\>_{\cH}$~\cite{steinwart2008support}. The vector-valued class is the product space $\cH^d$ with norm 
\begin{equation}
    \|\boldf\|_{\cH^d}^2 = \sum_{i=1}^d\|f_i\|_{\cH}^2.
\end{equation}

This class is especially interesting in the context of diffusion models for two reasons. First, when $h$ has strictly positive Taylor coefficients, the RKHS is \emph{universal}: it is dense in $L^2(\widehat\mu_{d,n})$ (and $L^2 (\mu_d)$ under mild assumptions on $\pi_0$; see Appendix \ref{app:interpolation-limit}) at any fixed $d$, hence expressive enough to represent the empirical Bayes denoiser~\eqref{eq:emp_bayes_opt_denoiser} itself; nothing in the model class forbids memorization. Second, inner-product kernels are naturally connected to wide neural networks trained by gradient descent in the lazy regime~\cite{jacot2018neural,chizat2019lazy}, and their high-dimensional behavior in supervised regression is understood in fine detail~\cite{el2010spectrum,ghorbani2021linearized,mei2022hypercontractivity,misiakiewicz2024six}; the comparison between that theory and ours is one of the main goals of the present paper.

We assume throughout that $K$ is continuous and diagonally integrable with respect to $\mu_d$ and $\widehat{\mu}_{d,n}$ (almost surely), meaning that $\E_{\bu\sim \nu}[K(\bu,\bu)]<\infty$ for $\nu\in \{\mu_d,\widehat{\mu}_{d,n}\}$, so that all operators and risks below are well-defined.
It will be useful to introduce a \textit{ridge-regularized} version of the empirical risk \eqref{eq:empirical_risk}:
\begin{equation}\label{eq:empirical_risk_RKHS}
    \cL(\boldf;\kappa) \deq  \frac{1}{dn}\sum_{i=1}^n \E_{\bz}\left[\bigl\|\boldf(\rho\bx_{0,i} + \sigma\bz) + \bz\bigr\|_2^2\right] + \frac{\kappa}{d^2}\|\boldf\|_{\cH^d}^2,
\end{equation}
where $\kappa \geq 0$ is the ridge parameter. Here, the $1/d^2$ scaling is added to match the ridge term to the empirical risk in the high-dimensional regime $n \asymp d$. When $\cH$ is universal, the minimizer of \eqref{eq:empirical_risk_RKHS} converges as $\kappa \to 0^+$ to the empirical Bayes denoiser \eqref{eq:emp_bayes_opt_denoiser}. 

\paragraph*{Decoupling across coordinates.}
Fix an orthonormal basis $\{\bv_i\}_{i=1}^d$ of $\R^d$ and decompose $\boldf(\bx) = \sum_{i=1}^d f_{\bv_i}(\bx)\bv_i$ with $f_{\bv_i}\deq \<\bv_i,\boldf\>$. A direct computation shows $\sum_{i=1}^d \|f_{\bv_i}\|_{\cH}^2 = \|\boldf\|_{\cH^d}^2$ for any orthonormal basis, and since the losses are separable across coordinates,
\begin{equation}\label{eq:emp_risk_decoupled}
    \cL(\boldf;\kappa) = \frac{1}{d}\sum_{i=1}^d \cL_{\bv_i}(f_{\bv_i};\kappa),\qquad \cL_{\bv}(f;\kappa) \deq  \frac{1}{n}\sum_{j=1}^n \E_{\bz}\left[\bigl|f(\rho\bx_{0,j} + \sigma\bz) + \<\bv,\bz\>\bigr|^2\right] + \frac{\kappa}{d}\|f\|_{\cH}^2,
\end{equation}
and similarly $\cR(\boldf) = \frac{1}{d}\sum_{i=1}^d \cR_{\bv_i}(f_{\bv_i})$ with $\cR_{\bv}(f) \deq  \E_{(\bx_0,\bz)}[|f(\rho\bx_0 + \sigma\bz) + \<\bv,\bz\>|^2]$. In the study of the gradient flow and ridge estimators, it will be convenient to fix an arbitrary unit vector $\bv$ and analyze the scalar regression problem for the coordinate function $f_{\bv}$; overall risks are recovered by averaging over an orthonormal basis and the union bound.

\subsection{Kernel operators, gradient flow, and ridge regression}
\label{sec:operators-GF}

To define our estimators, we introduce the following evaluation operators between $\cH$ and the noisy $L^2$ spaces. Define $\cS:\cH \to L^2(\widehat{\pi}_{d,n})$ and $\cT:\cH \to L^2(\pi_d)$ by
\begin{equation}\label{eq:forward_operators}
    (\cS f)(\bx,\bz) \deq  f(\rho\bx + \sigma\bz), \qquad\quad (\cT f)(\bx,\bz) \deq  f(\rho\bx + \sigma\bz),
\end{equation}
acting on the empirical and population spaces respectively. By the reproducing property, their adjoints are the integral operators
\begin{equation}\label{eq:adjoint_operators}
    \cS^\ast g = \E_{(\bx,\bz)\sim \widehat{\pi}_{d,n}}\left[K(\rho\bx + \sigma\bz,\cdot)\,g(\bx,\bz)\right], \qquad\cT^\ast g = \E_{(\bx,\bz)\sim \pi_d}\left[K(\rho\bx + \sigma\bz,\cdot)\,g(\bx,\bz)\right].
\end{equation}
It is occasionally useful to factor these operators through the noisy marginal spaces: $\cS = \widehat{\cP} \circ \cS_0$, where $\cS_0: \cH \hookrightarrow L^2(\widehat{\mu}_{d,n})$ is the natural inclusion and $\widehat{\cP}: L^2(\widehat{\mu}_{d,n}) \to L^2(\widehat{\pi}_{d,n})$, $(\widehat{\cP} g)(\bx,\bz) \deq  g(\rho\bx+\sigma\bz)$, is the (isometric) pullback; similarly $\cT = \cP \circ \cT_0$ with $\cT_0: \cH \hookrightarrow L^2(\mu_d)$ and the population pullback $\cP$. We write 
\begin{equation}
    \widehat{\cK} \deq  \cS \cS^\ast : L^2(\widehat{\pi}_{d,n}) \to L^2(\widehat{\pi}_{d,n}), \qquad 
    \cK \deq  \cT \cS^\ast : L^2(\widehat{\pi}_{d,n}) \to L^2(\pi_d),
\end{equation}
 for the empirical kernel operator and the train-to-population transfer operator respectively. The operator $\widehat\cK$ is self-adjoint and positive semidefinite on $L^2(\widehat\pi_{d,n})$; it plays the role of the kernel matrix $\bK/n$ of supervised learning, with the crucial difference that it acts on the infinite-dimensional space $L^2 (\widehat \pi_{d,n})$. The operator $\cK$ maps functions on the empirical space to their extensions on the population space and is the operator through which train-time quantities are evaluated at test time.
 
 Decomposing the coordinate-wise empirical risk around its unrestricted minimizer (the empirical Bayes denoiser \eqref{eq:emp_bayes_opt_denoiser}) gives $\cL_{\bv}(f;\kappa) = \cL_{\bv}^{\mathrm{red}}(f;\kappa) + \cL_{\bv}^{\mathrm{irr}}$, where
\begin{equation}\label{eq:train_loss_decomposition}
    \cL_{\bv}^{\mathrm{red}}(f;\kappa) \deq  \|\cS f - \widehat{f}^\star_{\bv}\|^2_{L^2(\widehat{\pi}_{d,n})} + \frac{\kappa}{d}\|f\|_{\cH}^2,\qquad \cL_{\bv}^{\mathrm{irr}} \deq  \E_{(\bx,\bz)\sim \widehat{\pi}_{d,n}}\left[\bigl|\<\bv,\bz\> + \widehat{f}^\star_{\bv}(\rho\bx + \sigma\bz)\bigr|^2\right],
\end{equation}
and we identify $\widehat f^\star_\bv$ with its pullback to $L^2(\widehat\pi_{d,n})$. Let us emphasize that the decomposition \eqref{eq:train_loss_decomposition} is different from the standard bias-variance decomposition of supervised learning, which is taken around the population Bayes predictor. Here the decomposition is around the empirical Bayes denoiser $\widehat{f}^\star_{\bv}$, the unrestricted minimizer of the empirical risk. The reducible part $ \cL_{\bv}^{\mathrm{red}}$ is the error that optimization over the function class can remove. The irreducible part $\cL^{\mathrm{irr}}_\bv$ is the intrinsic noise of denoising under the empirical mixture; as we will see (Proposition~\ref{prop:bayes_opt_denoiser_linearization}), in high dimensions it is exponentially small---the empirical DSM problem is effectively noiseless, in contrast with noisy supervised learning.

\paragraph*{Gradient flow.}
Our primary estimator is the gradient flow of the (unregularized) empirical risk in the RKHS geometry, started from zero: with $\sft \geq 0$ as the training time,
\begin{equation}\label{eq:gradient_flow}
    \frac{\partial \widehat{f}_{\sft, \bv}}{\partial \sft} = -\cS^\ast (\cS \widehat{f}_{\sft,\bv} - \widehat{f}^\star_{\bv}), \qquad \widehat f_{0,\bv} = 0,
\end{equation}
which is the functional counterpart of training a lazily parametrized network by gradient descent on~\eqref{eq:empirical_risk_RKHS} with $\kappa=0$. Since $\|\cS\|_\op^2 = \|\widehat{\cK}\|_\op \leq \E_{\widehat\pi_{d,n}}[K(\rho\bx+\sigma\bz,\rho\bx+\sigma\bz)] < \infty$, the dynamics~\eqref{eq:gradient_flow} are globally well-posed, and applying $\cS$ yields the closed-form training-space solution
\begin{equation}\label{eq:gf_solution}
    \widehat{f}_{\sft, \bv} = \cS^\ast \int_0^{\sft} \exp(-(\sft-\sfs)\widehat{\cK})\widehat{f}^\star_{\bv}\, \de \sfs,
    \qquad
    \cS\widehat{f}_{\sft,\bv} = \bigl(\id_{L^2(\widehat{\pi}_{d,n})} - \exp(-\sft \widehat{\cK})\bigr)\widehat{f}^\star_{\bv}.
\end{equation}
Thus, gradient flow applies to the empirical Bayes target the spectral filter $\lambda \mapsto 1 - e^{-\sft\lambda}$ of the empirical kernel operator: components of $\widehat f^\star_\bv$ on eigenfunctions of $\widehat\cK$ with eigenvalue $\lambda$ are fitted on the timescale $1/\lambda$. The entire phenomenology of Section~\ref{sec:gf} unfolds from the spectrum of $\widehat\cK$ and the correlation of $\widehat f^\star_\bv$ with its eigenvectors. The reducible train error and the test error along the trajectory are
\begin{equation}\label{eq:gf_train_red_loss}
    \cL_{\bv}^{\mathrm{red}}(\widehat{f}_{\sft,\bv};0) = \big\|\exp(-\sft \widehat{\cK})\widehat{f}^\star_{\bv}\big\|_{L^2(\widehat{\pi}_{d,n})}^2,
\end{equation}
\begin{equation}\label{eq:gf_test_error}
    \cR_{\bv}(\widehat{f}_{\sft,\bv}) =  \big\|\cK\widehat{\cK}^{\dagger}\bigl(\id_{L^2(\widehat{\pi}_{d,n})} - \exp(-\sft \widehat{\cK})\bigr)\widehat{f}^\star_{\bv} - \sfz_\bv\big\|_{L^2(\pi_d)}^2,
    \qquad \sfz_\bv(\bx,\bz) \deq  -\<\bv,\bz\>,
\end{equation}
where $\widehat\cK^\dagger$ is the Moore--Penrose pseudoinverse. Throughout, coordinate flows are assembled into the vector-valued estimator $\widehat \boldf_{\sft} \deq  \sum_{i=1}^d \widehat f_{\sft,\bv_i}\bv_i$ (independent of the chosen basis, by linearity of the flow), and analogously $\widehat \boldf_{\kappa}$ for the ridge estimator~\eqref{eq:opt_denoiser} and $\widehat{\boldsymbol f}^{\star}$ for the vector-valued empirical Bayes denoiser~\eqref{eq:emp_bayes_opt_denoiser}.

\paragraph*{Ridge regression.}
In addition to gradient flow, we also study the \emph{kernel ridge regression} estimator. Minimizing $\cL_\bv(\cdot;\kappa)$ over $\cH$ for $\kappa>0$ gives the kernel ridge estimator, characterized by the normal equation $(\cS^\ast \cS + \frac{\kappa}{d} \id_{\cH})\widehat{f}_{\kappa,\bv} = \cS^\ast \widehat{f}^\star_{\bv}$, i.e.,
\begin{equation}\label{eq:opt_denoiser}
    \widehat{f}_{\kappa,\bv} = \cS^\ast\left(\widehat{\cK} + \frac{\kappa}{d}\id_{L^2(\widehat{\pi}_{d,n})}\right)^{-1}\widehat{f}^\star_{\bv},
\end{equation}
with train error
\begin{equation}\label{eq:opt_train_red_loss}
    \cL^{\mathrm{red}}_{\bv}(\widehat{f}_{\kappa,\bv};0) = \frac{\kappa^2}{d^2}\big\|\big(\widehat{\cK} + \frac{\kappa}{d}\id_{L^2(\widehat{\pi}_{d,n})}\big)^{-1}\widehat{f}^\star_{\bv}\big\|_{L^2(\widehat{\pi}_{d,n})}^2
\end{equation}
and test error
\begin{equation}\label{eq:opt_test_red_loss}
    \cR_{\bv}(\widehat{f}_{\kappa,\bv}) = \big\|\cK\big(\widehat{\cK} + \frac{\kappa}{d}\id_{L^2(\widehat{\pi}_{d,n})}\big)^{-1}\widehat{f}^\star_{\bv} - \sfz_\bv\big\|_{L^2(\pi_d)}^2 .
\end{equation}
Comparing the spectral filters $1-e^{-\sft\lambda}$ and $\lambda/(\lambda + \kappa/d)$ shows that gradient flow at time $\sft$ and ridge at penalty \(\kappa_{\mathrm{eff}} = d/\sft\) apply comparable shrinkage to eigencomponents. This is the classical early-stopping--ridge correspondence~\cite{yao2007early,raskutti2014early,ali2019continuous}, and it can be made quantitative along the whole trajectory (Lemma~\ref{lem:gf_ridge_comparison}). We use ridge in two ways: as a source of intuition and direct comparison with the supervised learning literature---each phase of gradient flow has a ridge analogue at the corresponding $\kappa_{\mathrm{eff}}$---and as a technical device in the proofs. The full ridge theory (exact risk asymptotics for constant, vanishing, and zero penalty) is developed in Appendix~\ref{app:ridge} and mirrors, statement by statement, the gradient flow results presented next.

\section{Learning the score along the gradient flow trajectory}
\label{sec:gf}

Fix a diffusion time $t \in (0,T]$ and write $(\rho,\sigma) = (\rho_t,\sigma_t)$. This section characterizes the gradient flow estimator $\widehat\boldf_\sft$ across all training times $\sft$, and identifies three phases with distinct statistical behavior: a \emph{generalization} phase at times $\sft \lesssim d^{1+c}$, in which the estimator behaves as a regularized linear model (Section~\ref{sec:gf-phase-I}); an \emph{overfitting} phase at polynomial times, in which the training loss vanishes while the test risk is pinned at the risk of the pure-noise score (Section~\ref{sec:gf-phase-II}); and a \emph{memorization} phase at super-polynomial times, at which the estimator finally reaches the empirical Bayes denoiser (Section~\ref{sec:gf-endpoint}). The mechanism behind the first two phases is a linearization of the kernel operator, developed in Section~\ref{sec:linearization}, which splits the estimator into a global linear component and localized nonlinear corrections around the training data.

Throughout, we work in the proportional high-dimensional regime and under a standard concentration condition on the data distribution.

\begin{assumption}[High-dimensional proportional limit]\label{ass:high_dim}
    The sample size $n$ and the data dimension $d$ grow to infinity, $n, d \to \infty$, such that their ratio converges to a strictly positive constant $n/d \to r \in (0, \infty)$.
\end{assumption}

\begin{assumption}[Regularity of the data distribution]\label{ass:input_dist}
    The data distribution $\pi_0 \in \mathscr{P}(\R^d)$ satisfies the following conditions:

    \begin{enumerate}[label={\normalfont(\alph*)}]
        \item \textbf{Mean and covariance:} The first and second moments of $\pi_0$ satisfy
        \[
            \E_{\bx\sim \pi_0}[\bx]=0,\qquad \E_{\bx\sim \pi_0}[\bx\bx^\sT]=\bSigma,
        \]
        and $\liminf_{d\to\infty}\frac{\Tr(\bSigma)}{d} \geq c>0$
        for some constant $c > 0$. \label{ass:moments_input_dist}

        \item \textbf{Logarithmic Sobolev inequality:} There exists a constant $C_{\mathrm{LSI}}\geq 1$ such that $\pi_0$ satisfies the log-Sobolev inequality
        \[
        \mathrm{Ent}_{\pi_0}(f^2) \leq 2 C_{\mathrm{LSI}} \E_{\pi_0}[\|\nabla f\|_2^2]
        \]
        for every $f\in W^{1,2}(\pi_0)$,\footnote{Here, $W^{1,2}(\pi_0)$ denotes the Sobolev space with respect to $L^2(\pi_0)$, i.e.\ the set of functions $f:\R^d\to\R$ satisfying $\|f\|_{L^2(\pi_0)}<\infty$ and $\|\|\nabla f\|_2\|_{L^2(\pi_0)}<\infty$.} where $\mathrm{Ent}_{\pi_0}(f^2) \deq  \E_{\pi_0}[f^2 \log f^2] - \E_{\pi_0}[f^2] \log \E_{\pi_0}[f^2]$.
    \end{enumerate}
\end{assumption}

Assumption~\ref{ass:input_dist}.(b)  should be viewed as a high-dimensional concentration assumption and
it is imposed mainly for convenience: we expect the results to extend to other standard high-dimensional models, for instance $\bx=\bSigma^{1/2}\bu$ with $\bu$ having independent sub-Gaussian entries. Apart from this concentration requirement, Assumption~\ref{ass:input_dist} constrains only the first two moments of $\pi_0$ and the data distribution may be multimodal and have intricate higher-order dependencies. Nevertheless, we will see that the RKHS estimators studied here depend only on the covariance $\bSigma$, and therefore do not learn  higher-order structure in the data distribution. Throughout, we write 
\begin{equation}
\tau_\bSigma \deq  \Tr(\bSigma)/d \geq c > 0,
\end{equation} 
for the average eigenvalue of $\bSigma$. In particular, under Assumption \ref{ass:input_dist}.(b),  \(\tau_\bSigma\le \|\bSigma\|_{\op}\le C_{\mathrm{LSI}}\). We arrange the \(n\) training inputs as the columns of the data matrix \(\bX\deq [\bx_{0,1},\ldots,\bx_{0,n}]\in\R^{d\times n}\), and denote their empirical covariance matrix by \(\widehat{\bSigma} \deq  \bX\bX^\sT/n\).

\subsection{Linearization of the kernel and localized bumps}
\label{sec:linearization}

In the moderate-time regime $\sft \lesssim d^{1 +c}$ with $c>0$ small enough, we assume the following\footnote{The different results in this section impose different additional conditions on the kernel. These conditions are introduced primarily to simplify the analysis and are not meant to be minimal or necessary conditions. We emphasize that all three assumptions are compatible and are satisfied for example by the kernels in Remark \ref{rmk:admissible-kernels}.}.

\begin{assumption}[Kernel regularity]\label{ass:kernel_func_constant_kappa}
    The kernel is a positive definite inner-product kernel of the form $K(\bx,\bx') = h(\<\bx,\bx'\>/d)$. The kernel function $h: \R \to \R$ is three times continuously differentiable, with $h(0) = h''(0) = 0$ and $h'(0) = 1$. Furthermore, there exists a constant $C\geq 1$ such that $|h'''(x)|\leq Ce^{C|x|}$ for all $x\in \R$. 
\end{assumption}

The condition $h'(0) = 1$ is a normalization convention and is without loss of generality. For simplicity, we also assume that $h(0)=h''(0)=0$, which eliminates the constant function spike from the kernel linearization. Remark~\ref{rmk:constant-contribution} explains how this assumption can be removed.

The key phenomenon we use in the first two phases is the \emph{linearization} of the kernel on \emph{delocalized} pairs of inputs. On pairs of inputs whose overlap $\<\bu,\bu'\>/d$ is $o_d(1)$, as is typical for independent noisy samples, the nonlinear kernel is indistinguishable from its linearization $h(x) \approx x$; in particular, this is always the case for new independent data at test time. The nonlinearity of $h$ survives only on \emph{coincident} inputs $\bu = \rho \bx + \sigma \bz$ and $\bu' = \rho \bx + \sigma \bz'$ that share the same data point $\bx$: there the overlap concentrates around $\rho^2 \tau_{\bSigma}$ and $h(\<\bu,\bu'\>/d) = h(\rho^2\tau_{\bSigma}) + o_d(1)$ retains the full nonlinearity of the kernel. Thus, in the high-dimensional linear regime  $n \asymp d$, the kernel operators behave as the \emph{effective} operators $\widehat{\cK}_{\mathrm{eff}}:L^2(\widehat{\pi}_{d,n}) \to L^2(\widehat{\pi}_{d,n})$ at train time, and $\cK_{\mathrm{eff}}:L^2(\widehat{\pi}_{d,n}) \to L^2(\pi_d)$ at test time, defined by
\begin{equation}\label{eq:K_hat_op}
    \widehat{\cK}_{\mathrm{eff}}f(\bx,\bz) \deq  \E_{(\bx',\bz')\sim \widehat{\pi}_{d,n}}\left[\frac{\<\rho\bx+\sigma\bz,\rho\bx'+\sigma\bz'\>}{d}f(\bx',\bz')\right] + \delta_n\,\E_{\bz'\sim \cN(0,\bI_d)}[f(\bx,\bz')],
\end{equation}
and
\begin{equation}
    \cK_{\mathrm{eff}}f(\bx,\bz) \deq  \E_{(\bx',\bz')\sim \widehat{\pi}_{d,n}}\left[\frac{\<\rho\bx+\sigma\bz, \rho\bx'+\sigma\bz'\>}{d}f(\bx',\bz')\right],
\end{equation}
where we denoted
\begin{equation}\label{eq:delta_n}
    \delta_n \deq  \frac{h(\rho^2\tau_\bSigma) - \rho^2\tau_\bSigma}{n}.
\end{equation}
The first term in~\eqref{eq:K_hat_op} is the operator of the \emph{linear} kernel $\<\bx,\bx'\>/d$; the second, present only on the empirical space, averages over the noise at a fixed training sample and is weighted by the kernel excess curvature $h(\rho^2\tau_\bSigma)-\rho^2\tau_\bSigma$ at that scale. Following \cite{ghorbani2021linearized}, we refer to $\delta_n$ as the denoising \emph{self-induced regularization} from the nonlinear part of the kernel. Positive definiteness of \(K\), together with
\(h(0)=0\) and \(h'(0)=1\), ensures that \(\delta_n\geq 0\) (see Remark~\ref{rmk:delta_n_nonnegativity}).

\begin{proposition}[Kernel operator linearization]\label{prop:linearization_kernel_op}
    Suppose that Assumptions~\ref{ass:high_dim},~\ref{ass:input_dist}, and~\ref{ass:kernel_func_constant_kappa} hold. Then, $\|\widehat{\cK} - \widehat{\cK}_{\mathrm{eff}}\|_{\op}\vee \|\cK - \cK_{\mathrm{eff}}\|_{\op}\prec d^{-\frac{3}{2}}$.
\end{proposition}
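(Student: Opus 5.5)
The plan is to bound both operator norms by Hilbert--Schmidt norms of the difference kernels. For $\bu_{ij}=\rho\bx_{0,i}+\sigma\bz$ and $\bu'=\rho\bx_{0,j}+\sigma\bz'$, write $s\deq\<\bu,\bu'\>/d$. Then
\[
(\widehat\cK-\widehat\cK_{\mathrm{eff}})f(\bx_{0,i},\bz)=\frac1n\sum_{j=1}^n\E_{\bz'}\bigl[k_{ij}(\bz,\bz')f(\bx_{0,j},\bz')\bigr],
\]
where
\[
k_{ij}=h(s)-s \quad (j\neq i),\qquad k_{ii}=h(s)-s-\bigl(h(\rho^2\tau_\bSigma)-\rho^2\tau_\bSigma\bigr).
\]
Consequently
\[
\|\widehat\cK-\widehat\cK_{\mathrm{eff}}\|_{\mathrm{HS}}^2=\frac1{n^2}\sum_{i,j}\E_{\bz,\bz'}[k_{ij}^2].
\]
The test operator has the same off-diagonal kernel, with a fresh $\bx\sim\pi_0$ in place of $\bx_{0,i}$:
\[
\|\cK-\cK_{\mathrm{eff}}\|_{\mathrm{HS}}^2=\frac1n\sum_j\E_{\bx,\bz,\bz'}\bigl[(h(s)-s)^2\bigr].
\]
It therefore suffices to show $\E[k_{ij}^2]\prec d^{-3}$ for $i\ne j$, uniformly over pairs, and similarly for the test kernel. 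For the diagonal blocks we need $\E_{\bz,\bz'}[k_{ii}^2]\prec d^{-1}$, which is enough because those $n$ blocks carry weight $1/n^2$ each. Equivalently, the block-diagonal part has operator norm at most $\max_i n^{-1}\|k_{ii}\|_{\mathrm{HS}}\prec d^{-3/2}$.

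\textbf{Step 1: data-level concentration.} Using the LSI in Assumption~\ref{ass:input_dist} together with a union bound over the $O(n^2)$ pairs, I will establish the following with very high probability. First, $|\<\bx_{0,i},\bx_{0,j}\>|/d\prec d^{-1/2}$ for $i\ne j$. This is Lipschitz concentration of $\bx\mapsto\<\bx,\bx_{0,j}\>$ conditionally on $\bx_{0,j}$, with Lipschitz constant $\|\bx_{0,j}\|_2\lesssim\sqrt d$. Second, $|\|\bx_{0,i}\|_2^2/d-\tau_\bSigma|\prec d^{-1/2}$, which follows from concentration of the norm at scale $O(1)$ around $\sqrt{\Tr\bSigma}$. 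On this event, $s$ for $i\neq j$ is the sum of a deterministic part $\rho^2\<\bx_{0,i},\bx_{0,j}\>/d=O(d^{-1/2+\epsilon})$ and the Gaussian chaos terms $\rho\sigma\<\bx_{0,i},\bz'\>/d$, $\rho\sigma\<\bz,\bx_{0,j}\>/d$ and $\sigma^2\<\bz,\bz'\>/d$. Each of these is $d^{-1/2}$ times a variable with sub-exponential tails. For the test operator, the same decomposition holds in expectation over $\bx\sim\pi_0$, and the LSI gives moment bounds of the same order.

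\textbf{Step 2: off-diagonal Taylor bound.} Because $h(0)=h''(0)=0$ and $h'(0)=1$, Taylor's theorem gives
\[
|h(s)-s|\le\tfrac C6|s|^3e^{C|s|}.
\]
Cauchy--Schwarz then yields
\[
\E[(h(s)-s)^2]\le C\,\E[s^{12}]^{1/2}\,\E[e^{4C|s|}]^{1/2}.
\]
By Step 1, $\E[s^{12}]^{1/2}\prec d^{-3}$. The exponential moment is $O(1)$ because $s$ is $d^{-1/2}$ times a sub-exponential variable, so the Gaussian chaos $\<\bz,\bz'\>/d$ has bounded exponential moments at any fixed scale once $d$ is large. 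This gives the off-diagonal bound $\E[k_{ij}^2]\prec d^{-3}$.

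\textbf{Step 3: diagonal blocks.} Here $s=\rho^2\|\bx_{0,i}\|_2^2/d+\rho\sigma\<\bx_{0,i},\bz+\bz'\>/d+\sigma^2\<\bz,\bz'\>/d$, so $s-\rho^2\tau_\bSigma$ has size $d^{-1/2+\epsilon}$ in every $L^p$ norm by Step 1. The function $x\mapsto h(x)-x$ is Lipschitz on a neighbourhood of $\rho^2\tau_\bSigma\le C_{\mathrm{LSI}}$, with exponential growth controlled by the bound on $h'''$. Hence $|k_{ii}|\lesssim |s-\rho^2\tau_\bSigma|\,e^{C|s|}$, which gives $\E_{\bz,\bz'}[k_{ii}^2]\prec d^{-1}$. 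Dividing by $n$ produces the required $d^{-3/2}$.

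\textbf{Main obstacle.} I expect the delicate part to be the interplay between the exponential growth permitted for $h'''$ and the very-high-probability requirement in the definition of $\prec$. The kernel bounds are expectations over the noise $\bz,\bz'$, so they do not require pointwise control. However, they are conditional on the data, so all the data-dependent inputs (the overlaps, the norms, and the exponential moments of $\<\bx_{0,i},\bz\>/d$) must hold uniformly over all $i,j$ on a single event of probability at least $1-e^{-d^c}$. I would handle this by intersecting the Step 1 events. On that event, $\<\bx_{0,i},\bz\>/d$ is exactly Gaussian with variance $\|\bx_{0,i}\|_2^2/d^2=O(1/d)$, so the exponential moments in $\bz$ are explicit and bounded.
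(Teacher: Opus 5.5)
Your proposal is correct and follows essentially the same route as the paper. You split the difference into an off-diagonal part, bounded by Cauchy--Schwarz over the joint measure (equivalently its Hilbert--Schmidt norm) using the third-order Taylor remainder with exponential growth and $\E[U_{ij}^{12}]^{1/2}\prec d^{-3}$, and a block-diagonal part, bounded blockwise by $n^{-1}\max_i\E[D_{ii}^2]^{1/2}\prec d^{-3/2}$ via the mean-value theorem around $\rho^2\tau_\bSigma$; the test operator gets the same off-diagonal argument.

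One small correction: the diagonal part is not controlled by the global Hilbert--Schmidt norm, so your word ``equivalently'' is wrong. Its contribution there is $n^{-2}\sum_i\E[k_{ii}^2]\prec (nd)^{-1}$, which gives only $d^{-1}$ and is generically sharp. The blockwise operator-norm bound you state is therefore the step actually needed, not an equivalent reformulation, and it is exactly how the paper handles it.
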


Proposition~\ref{prop:linearization_kernel_op} is the analogue, for denoising score matching, of the classical linearization of $n \times n$ inner-product kernel random matrices in the regime $n\asymp d$~\cite{el2010spectrum}. The key distinction is that our result holds at the operator level on the noisy, \emph{infinite-dimensional} empirical space $L^2(\widehat\pi_{d,n})$, where $\widehat\pi_{d,n}$ is a mixture of $n$ Gaussians, rather than $n$ points. In particular, the linearization must hold uniformly over functions of the Gaussian noise directions, requiring careful control of the kernel's power-series expansion; see the proof in Appendix~\ref{app:lin}.

Thus, in the proportional regime and at moderate training times, the sole effect of the kernel's nonlinearity is to add a per-sample regularization $\delta_n\E_{\bz'}[f(\bx,\bz')]$ to the effective linear operator. This is the DSM analogue of the additive self-induced ridge regularization identified for kernel interpolation in supervised learning~\cite{liang2020just,ghorbani2021linearized,misiakiewicz2024six}. However, the self-induced regularization here is not a ridge penalty (a multiple of the identity) on \(L^2(\widehat\pi_{d,n})\): it acts through the conditional noise-averaging operator \(f(\bx_{0,i},\bz)\mapsto \E_{\bz'}[f(\bx_{0,i},\bz')]\). 

To understand what the effective operator implies for the estimator, it is instructive to write down the RKHS and empirical risk minimization problem associated with $\widehat{\cK}_{\mathrm{eff}}$: the effective kernel function is
\begin{equation}\label{eq:def_K_eff_kernel_func}
    K_{\mathrm{eff}}(\bx_{0,i},\bz;\bx_{0,j},\bz') = \frac{1}{d}\<\rho\bx_{0,i}+\sigma\bz,\rho\bx_{0,j}+\sigma\bz'\> + n\delta_n \ind_{i = j}, 
\end{equation}
with associated RKHS (with $\bb=0$ when $\delta_n=0$)
\begin{equation}\label{eq:def_H_eff}
    \begin{aligned}
    &~\cH_{\mathrm{eff}} \deq   \Big\{f_{\bw,\bb} \in L^2(\widehat{\pi}_{d,n}) \; : \;  f_{\bw,\bb} (\bx_{0,i},\bz) = \<\bw, \rho \bx_{0,i} + \sigma \bz \> + b_i,  \; \bw \in \R^d, \bb \in \R^n\Big\}, \\
    &~\| f_{\bw,\bb} \|_{\cH_{\mathrm{eff}}}^2 \deq   d \| \bw \|_2^2 + \frac{1}{n \delta_n} \| \bb \|_2^2 . 
    \end{aligned}
\end{equation}
For simplicity, consider the ridge regression estimator $\widehat f_{\kappa,\bv}$, which is related to the gradient flow estimator at $\sft = d/\kappa$. For moderate regularization $\kappa \gtrsim d^{-c}$ (that is, $\sft \lesssim d^{1 + c}$), the ridge empirical minimization problem \eqref{eq:emp_risk_decoupled} is well approximated by
\begin{equation}\label{eq:effective-train-risk-linear}
    \begin{aligned}
        \cL_{\bv} (\widehat f_{\kappa,\bv};\kappa) = &~ \min_{f \in \cH} \cL_{\bv}(f;\kappa) 
        \approx  \min_{f \in \cH_{\mathrm{eff}} } \frac{1}{n}\sum_{i=1}^n \E_{\bz}\left[\bigl|f(\rho\bx_{0,i} + \sigma\bz) + \<\bv,\bz\>\bigr|^2\right]  + \frac{\kappa}{d} \|f\|_{\cH_{\mathrm{eff}}}^2\\
        =&~ \min_{\bw \in \R^d} \frac{1}{n}\sum_{i=1}^n \min_{b_i \in \R} \Big\{\E_{\bz}\left[\bigl|\< \bw, \rho\bx_{0,i} + \sigma\bz\> + b_i + \<\bv,\bz\>\bigr|^2\right] + \frac{\kappa}{d \delta_n} b_i^2\Big\} + \kappa \|\bw\|_2^2,
    \end{aligned}
\end{equation}
and the test error in this regime (with $(\widehat \bw,\widehat \bb)$ the minimizer of the ERM over $\cH_{\mathrm{eff}}$) is
\begin{equation}\label{eq:effective-test-risk-linear}
    \cR_{\bv} ( \widehat f_{\kappa,\bv}) \approx \E_{(\bx,\bz) \sim \pi_d} \left[\bigl|\< \widehat \bw,\rho\bx + \sigma\bz\> + \<\bv,\bz\>\bigr|^2\right],
\end{equation}
where 
\begin{equation}\label{eq:solution-effective-ridge}
\widehat \bb = - \frac{d\delta_n}{\kappa + d\delta_n}\rho \bX^\sT \widehat \bw,\qquad   \widehat \bw = - \sigma \Big[ \kappa \bI_d + \sigma^2 \bI_d+ \frac{\rho^2 \kappa}{\kappa + d \delta_n} \widehat \bSigma\Big]^{-1}  \bv.
\end{equation}

Three features of the effective problem \eqref{eq:effective-train-risk-linear}--\eqref{eq:effective-test-risk-linear} and its minimizer \eqref{eq:solution-effective-ridge} drive the phenomenology in the next two subsections:
\begin{itemize}
    \item[(a)] The kernel nonlinearity effectively behaves as a per-sample offset $b_i$. In the original RKHS, $b_i$ corresponds to a `localized bump' on the tube around $\bx_{0,i}$ that allows the model to fit the conditional denoising target $-\E[\<\bv,\bz\>| \bu =\rho \bx_{0,i} + \sigma \bz]$.  The fitted model $\widehat f_{\kappa,\bv}$ is thus a linear combination of a global linear predictor $\<\widehat \bw, \rho \bx + \sigma \bz\>$ and $n$ localized nonlinear corrections $b_i$.
    The inverse curvature $(h(\rho^2 \tau_\bSigma) - \rho^2 \tau_\bSigma)^{-1}$ plays the role of a ridge penalty on these offsets.

    \item[(b)] For $\kappa = o_d(1)$ (but still large enough for the linearization approximation to hold) and $\delta_n >0$, the minimizer \eqref{eq:solution-effective-ridge} becomes $\widehat \bw = - \bv / \sigma + o_{d,\P}(1)$ and $b_i = - \rho \< \widehat \bw, \bx_{0,i}\>+ o_{d,\P}(1)$, and achieves vanishing train error $\cL_\bv (\widehat f_{\kappa,\bv}) = o_{d,\P}(1)$ and test error $\cR_{\bv}(\widehat f_{\kappa,\bv}) = \tfrac{\rho^2}{\sigma^2} \tau_{\bSigma}+ o_{d,\P}(1)$. The linear part corresponds to the score of the pure noise component $-\<\bv,\bz\>$ and the bumps cancel the resulting bias $-\rho \<\bv,\bx_{0,i} \>/\sigma$ on each tube. In fact, this solution (and test error) persists for all polynomially vanishing $\kappa= d^{-C}$ (equivalently, polynomial time $\sft = d^{1+C}$), and memorization happens only for superpolynomially vanishing $\kappa$ (superpolynomial $\sft$).
    
    \item[(c)] At test time, the RKHS score model reduces effectively to a linear score, even though $\widehat f_{\kappa,\bv}$ is nonlinear. Moreover, unlike in the supervised setting \eqref{eq:supervised-additive-ridge}, the kernel nonlinear contribution $\delta_n$ does not act as an additive ridge regularizer. Instead, it shrinks the empirical covariance matrix $\widehat \bSigma$ by the factor $\kappa/(\kappa + d \delta_n)$. Thus, whenever $\delta_n >0$, the solution $\widehat \bw$ in \eqref{eq:solution-effective-ridge} loses its dependence on $\widehat \bSigma$ as $\kappa \to 0$. This shrinkage also explains the observation following Theorem \ref{thm:gf_train_test_equivalents_moderate_time} that larger $\delta_n$ lead to an earlier onset of overfitting and covariance forgetting; see Figure \ref{fig:delta_n_and_c_effective_flow} for a numerical illustration.
\end{itemize}

We further discuss the consequences of the kernel linearization for estimation in Section \ref{sec:gf-phase-I}.

\begin{proposition}[Vanishing irreducible error]\label{prop:bayes_opt_denoiser_linearization}
    Suppose that Assumptions~\ref{ass:high_dim} and~\ref{ass:input_dist} hold. Let $\bv\in\R^d$ with $\|\bv\|_2=1$. Then there exists a constant $c>0$ such that
    \[
        \|\widehat{f}^\star_\bv - \sfz_\bv\|_{L^2(\widehat{\pi}_{d,n})} \leq e^{-d^{c}}
    \]
    with very high probability, where $\sfz_\bv(\bx,\bz) = -\<\bv,\bz\>$.
\end{proposition}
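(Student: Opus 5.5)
Using the explicit formula \eqref{eq:emp_bayes_opt_denoiser}, at a point $\bu=\rho\bx_{0,i}+\sigma\bz$ on the $i$-th tube we have $\rho\bx_{0,k}-\bu = \rho(\bx_{0,k}-\bx_{0,i})-\sigma\bz$. Since $\sum_k\omega_k=1$, the $v$-coordinate satisfies
\[
\widehat f^\star_\bv(\bu)-\sfz_\bv(\bx_{0,i},\bz) = \frac{\rho}{\sigma}\sum_{k\neq i}\omega_k(\bu)\<\bv,\bx_{0,k}-\bx_{0,i}\>.
\]
Squaring and applying Cauchy--Schwarz (or Jensen, using $\sum_{k\ne i}\omega_k\le1$) gives
\[
\bigl|\widehat f^\star_\bv-\sfz_\bv\bigr|^2 \le \frac{\rho^2}{\sigma^2}\Bigl(\sum_{k\neq i}\omega_k(\bu)\Bigr)\max_{k\neq i}\<\bv,\bx_{0,k}-\bx_{0,i}\>^2 .
\]
The whole problem therefore reduces to showing that, on tube $i$, the total softmax mass on the wrong samples is exponentially small, uniformly enough to survive integration over $\bz$ and averaging over $i$.

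For the weight ratio, note that $\omega_k/\omega_i=\exp(-\Delta_{ik})$ with
\[
\Delta_{ik} = \frac{1}{2\sigma^2}\bigl(\|\rho(\bx_{0,i}-\bx_{0,k})+\sigma\bz\|^2-\sigma^2\|\bz\|^2\bigr) = \frac{\rho^2}{2\sigma^2}\|\bx_{0,i}-\bx_{0,k}\|^2 + \frac{\rho}{\sigma}\<\bx_{0,i}-\bx_{0,k},\bz\>.
\]
I will bound the pieces using data events, each holding w.v.h.p. by LSI concentration (Lipschitz concentration of $\|\bx_{0,i}-\bx_{0,k}\|$ around $\sqrt{2\Tr\bSigma}$, Hanson--Wright-type bounds, and a union bound over $n^2$ pairs).
\begin{itemize}
\item The pairwise distances satisfy $\|\bx_{0,i}-\bx_{0,k}\|^2\ge \tau_\bSigma d$ for all $i\ne k$. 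This follows since $\E\|\bx_{0,i}-\bx_{0,k}\|^2=2\Tr\bSigma\ge 2cd$ and the fluctuations are $O(\sqrt d\,\mathrm{polylog})$.
\item The projections satisfy $\max_k|\<\bv,\bx_{0,k}\>|\le d^{\epsilon}$, because $\<\bv,\bx\>$ is sub-Gaussian under LSI.
\end{itemize}
Conditionally on the data, $\<\bx_{0,i}-\bx_{0,k},\bz\>$ is Gaussian with variance $\|\bx_{0,i}-\bx_{0,k}\|^2=:D_{ik}^2$. Hence $\Delta_{ik}\ge \frac{\rho^2}{4\sigma^2}D_{ik}^2$ except on a $\bz$-event of probability $\le \exp(-\rho^2D_{ik}^2/(32\sigma^2))\le e^{-c'd}$. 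On the good $\bz$-event we get $\sum_{k\ne i}\omega_k\le n e^{-c''d}$. On the bad event we use the trivial bound $\sum\omega_k\le 1$. Integrating over $\bz$, averaging over $i$, and using the $d^{2\epsilon}$ bound on the projections then gives
\[
\|\widehat f^\star_\bv-\sfz_\bv\|^2_{L^2(\widehat\pi_{d,n})}\lesssim \tfrac{\rho^2}{\sigma^2}d^{2\epsilon}\,n^2 e^{-c'd},
\]
which is at most $e^{-d^{c}}$ (indeed $e^{-cd}$) for fixed $t$.

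The main obstacle is the bookkeeping that makes the bounds uniform: the data events must hold simultaneously for all pairs w.v.h.p., which requires the LSI concentration constants to be dimension-free. One must also keep the $\bz$-integration separate from the data randomness, since the expectation in the $L^2(\widehat\pi_{d,n})$ norm is over $\bz$ only, conditional on the data. The constants depend on $\rho_t,\sigma_t$, which is allowed at fixed $t$. As $\sigma\to0$ the bound only improves, while $\rho\to0$ degrades it; both are excluded by treating $t$ as fixed.
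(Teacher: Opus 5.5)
Your proposal is correct and follows the paper's proof essentially step for step. Both arguments use the same identity for $\widehat f^\star_\bv-\sfz_\bv$ on the $i$-th tube, the Jensen/Cauchy--Schwarz reduction to the off-sample weights, the bound $\omega_k\le\omega_k/\omega_i$ (the paper isolates the $j=i$ term in the denominator), a typical-noise event on which $\frac{\rho}{\sigma}|\<\bx_{0,i}-\bx_{0,k},\bz\>|\le\frac{\rho^2}{4\sigma^2}\|\bx_{0,i}-\bx_{0,k}\|^2$, and the trivial bound $\omega_k\le 1$ off that event. The only difference is quantitative: your data events (pairwise distances $\ge\tau_\bSigma d$, projections $\le d^{\epsilon}$) are sharper than the paper's cruder $d^{1-c}$ and $d^{1+c}$, which is why you obtain $e^{-cd}$ rather than $e^{-d^{c}}$.
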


Two consequences follow immediately from this proposition (proved in Appendix~\ref{app:lin}). First, the irreducible part of the training risk $\cL^{\mathrm{irr}}_\bv$ in~\eqref{eq:train_loss_decomposition} is exponentially small: the empirical regression problem is effectively noiseless around the empirical Bayes denoiser. Second, the informative part of $\widehat f^\star_\bv$ (where it deviates from $-\<\bv,\bz\>$, encoding the locations $\bx_{0,i}$) lives at exponentially small $L^2(\widehat\pi_{d,n})$-scale. The entire training signal about the data distribution is carried not by the target $\widehat f^\star_\bv$ but by the design geometry $\{\rho\bx_{0,i}+\sigma\bz\}$ on which the regression is performed. This is in sharp contrast with supervised learning and underlies the different phenomenology of DSM.

\subsection{Phase I: linearized dynamics and generalization}
\label{sec:gf-phase-I}

We now describe the gradient flow trajectory on timescales $\sft \lesssim d^{1+c}$ for a small constant $c > 0$. On these timescales the dynamics are governed by the linearized operator $\widehat\cK_{\mathrm{eff}}$, and the risks concentrate on deterministic equivalents expressed through the spectrum of the sample covariance. 

The deterministic equivalents are built from standard self-consistent resolvent approximations. Let $\C_+ \deq  \{z \in \C : \Im(z) > 0\}$, and define $\bM: \C_+ \to \{\bA\in \C^{d \times d}:\Im[\bA]\succ 0\}$ and $\mu: \C_+ \to \C_+$ as the unique analytic solutions of
\begin{equation}\label{eq:resolvent_fixed_point}
    \bM(z) = \left( \frac{\bSigma}{1 + \mu(z)} - z\bI_d \right)^{-1}, \qquad \mu(z) = \frac{1}{n}\Tr\left(\bSigma \bM(z)\right).
\end{equation}
Since $\bM$ is a matrix-valued Herglotz function~\cite{gesztesy2000matrix}, there exists a compactly supported matrix-valued probability measure $\bOmega$ on Borel subsets of $\R$ such that
\begin{equation}\label{eq:resolvent_fixed_point_measure}
    \bM(z) = \int_\R \frac{1}{\lambda - z}\, \bOmega(\de \lambda);
\end{equation}
$\bOmega$ is the deterministic equivalent of the spectral measure of the sample covariance $\widehat{\bSigma}$, resolved along the eigenspaces of $\bSigma$~\cite{couillet2011random,latourelle2026dyson}.

In the linearized regime, the gradient flow dynamics decouple along this spectral decomposition, and in each spectral direction $\lambda$, they reduce to a two-dimensional linear system coupling the signal component of the estimator (the coefficient of $\<\bx_0,\cdot\>$) with a `noise component' (associated to $\<\bz,\cdot\>$); the nonlinearity of the kernel enters only through the self-induced regularization $\delta_n$. For every \(x \geq 0\), define
\begin{equation}\label{eq:gf_lin_profiles}
        \psi(x,\sft)
        \deq -\frac{1}{\sigma}\be_2^\sT
        \left(\bI_2-e^{-\sft\bQ(x)}\right)\be_2, \qquad \bQ(x)\deq 
        \begin{bmatrix}
            \dfrac{\rho^2x}{d}+\delta_n
            & \dfrac{\rho\sigma\sqrt{x}}{d}
            \\[0.75em]
            \dfrac{\rho\sigma\sqrt{x}}{d}
            & \dfrac{\sigma^2}{d}
        \end{bmatrix},
\end{equation}
and the resulting deterministic equivalents for the train and test errors
\begin{equation}\label{eq:train_test_error_equivalents_gf}
    \begin{aligned}
     \cL_{\mathrm{det}}^{(\sft)} \deq &~  \int_\R \ell_{\mathrm{det}}^{(\sft)}(\lambda)\,\frac{1}{d}\Tr(\bOmega(\de \lambda)), \qquad\qquad \ell_{\mathrm{det}}^{(\sft)}(x)
        \deq \big\|e^{-\sft\bQ(x)}\be_2\big\|_2^2,\\
    \cR_{\mathrm{det}}^{(\sft)} \deq &~ \int_\R \psi(\lambda,\sft)^2\,\frac{1}{d}\Tr(\bSigma_\sigma \bOmega(\de \lambda)) + 2\sigma\int_\R \psi(\lambda, \sft)\, \frac{1}{d}\Tr(\bOmega(\de \lambda)) + 1,
    \end{aligned}
\end{equation}
where $\bSigma_\sigma \deq  \rho^2\bSigma + \sigma^2\bI_d$ is the covariance of the noisy data and $\be_2 = (0,1)^\sT$.
We denote by \(\gamma_-(x)\leq \gamma_+(x)\) the eigenvalues of \(\bQ(x)\), which are the relaxation rates of the two-dimensional linear system in spectral direction \(x\), and by \(\ell_{\mathrm{det}}^{(\sft)}(x)\) and \(\psi(x,\sft)\) the corresponding profiles for the training and test errors, respectively. Remark~\ref{rmk:gf_explicit_profiles} gives explicit expressions for these quantities.

On a new test sample, the estimator at time $\sft$ acts as the \emph{linear} shrinkage $\widehat{\boldf}_{\sft}(\bu) \approx \psi (\widehat \bSigma, \sft)\, \bu$ where $\psi(\cdot,\sft)$ acts spectrally. The expression $\cR_{\mathrm{det}}^{(\sft)} $ is then obtained by replacing $\widehat \bSigma$ by its deterministic equivalent. 


\begin{figure}[htb]
    \centering
    \includegraphics[width=1\textwidth]{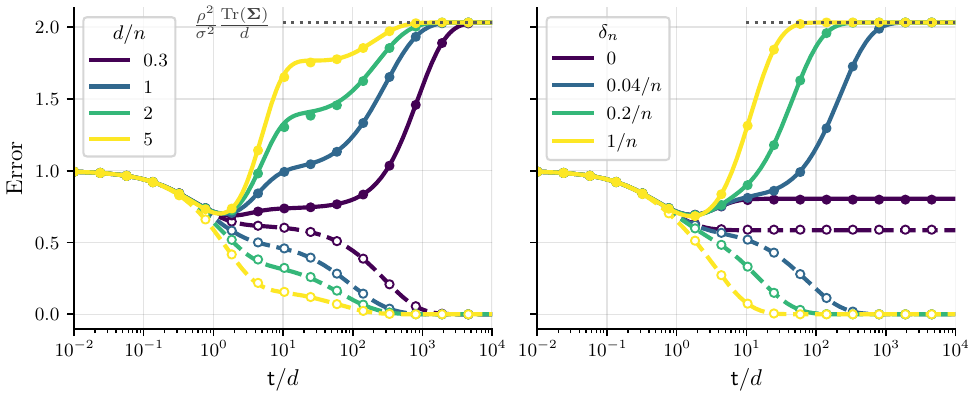}
    \caption{Training and test errors as functions of the normalized gradient-flow time $\sft/d$ for $n=240$, diffusion time $t=0.2$, and Gaussian data $\pi_0=\cN(0,\bI_d)$. Dashed and solid lines correspond, respectively, to the training and test errors of the finite-dimensional linearized estimator obtained by solving~\eqref{eq:effective-train-risk-linear} with gradient flow. Open and filled circles show the corresponding deterministic equivalents from Theorem~\ref{thm:gf_train_test_equivalents_moderate_time}. The dotted horizontal line indicates the asymptotic test-error plateau $\tfrac{\rho^2}{\sigma^2}\frac{\Tr(\bSigma)}{d}$. 
    \textbf{Left:} We use $h(x)=x+x^4/12$ and vary $d/n$ while holding $n$, and hence $\delta_n$, fixed. This changes the relaxation rates $\gamma_-$, shifting the onset of overfitting.
    \textbf{Right:} We fix $d=120$ and vary the effective linearization parameter $\delta_n$. When $\delta_n=0$, the estimator is linear and cannot interpolate the training loss. By contrast, whenever $\delta_n>0$, the training error converges to zero and the test error approaches the dotted interpolation plateau. Because $\gamma_-$ increases with $\delta_n$, larger $\delta_n$ leads to an earlier transition to overfitting.}
    \label{fig:delta_n_and_c_effective_flow}
\end{figure}

\begin{theorem}[Phase I: deterministic equivalents along gradient flow]\label{thm:gf_train_test_equivalents_moderate_time}
    Suppose that Assumptions~\ref{ass:high_dim}, \ref{ass:input_dist}, and \ref{ass:kernel_func_constant_kappa} hold.  Then, for every $\epsilon, D>0$, there exists a constant $C > 0$ such that for every $\sft \geq 0$, 
    \[
         |\cL(\widehat{\boldf}_{\sft};0) - \cL_{\mathrm{det}}^{(\sft)}| \lesssim  \left(1 + \frac{\sft}{d}\right) d^{\epsilon-\frac{1}{2}},\qquad  |\cR(\widehat{\boldf}_{\sft}) - \cR_{\mathrm{det}}^{(\sft)}| \lesssim  \left(1 + \frac{\sft}{d}\right)^3 d^{\epsilon-\frac{1}{2}}
    \]
    with probability at least $1-Cd^{-D}$. In particular, if $\sft \leq d^{1 + \alpha}$ for a constant $\alpha <1/6$ and $\eps < 1/2 - 3 \alpha$, then the right hand sides are $o_d(1)$.
\end{theorem}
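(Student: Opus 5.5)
The proof has three steps. First, replace the gradient flow driven by $\widehat\cK$ with the one driven by $\widehat\cK_{\mathrm{eff}}$. Second, reduce the effective dynamics to a finite-dimensional linear problem. Third, evaluate the resulting matrix functionals with deterministic equivalents for the sample covariance.

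\emph{Step 1 (operator replacement).} Fix a unit vector $\bv$. By Proposition~\ref{prop:bayes_opt_denoiser_linearization}, we may replace the target $\widehat f^\star_\bv$ by $\sfz_\bv$ at exponentially small cost, because all spectral filters involved have norm at most one. In \eqref{eq:gf_train_red_loss} we compare $e^{-\sft\widehat\cK}\sfz_\bv$ with $e^{-\sft\widehat\cK_{\mathrm{eff}}}\sfz_\bv$ through Duhamel's formula:
\[
\|e^{-\sft A}-e^{-\sft B}\|_\op\le \sft\|A-B\|_\op\qquad\text{for }A,B\succeq 0 .
\]
Proposition~\ref{prop:linearization_kernel_op} then gives an error $\prec \sft d^{-3/2}=(\sft/d)\,d^{-1/2}$, which is the stated factor for the training error.

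For the test error we avoid $\widehat\cK^\dagger$ by using \eqref{eq:gf_solution}:
\[
\cT\widehat f_{\sft,\bv}=\cK\int_0^\sft e^{-(\sft-\sfs)\widehat\cK}\widehat f^\star_\bv\,\de\sfs .
\]
Replacing both $\cK$ and $\widehat\cK$ by their effective versions costs
\[
\sft\|\cK-\cK_{\mathrm{eff}}\|_\op+\|\cK_{\mathrm{eff}}\|_\op\,\sft^2\|\widehat\cK-\widehat\cK_{\mathrm{eff}}\|_\op .
\]
Since $\|\cK_{\mathrm{eff}}\|_\op\lesssim 1/d$ with very high probability, this is at most $(1+\sft/d)^2d^{-1/2+\epsilon}$. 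Squaring the risk against an $O(1+\sft/d)$ bound on the approximating function yields the cubic factor. Averaging over an orthonormal basis and taking a union bound transfers the estimate to $\cL(\widehat\boldf_\sft;0)$ and $\cR(\widehat\boldf_\sft)$.

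\emph{Step 2 (finite-dimensional reduction).} The range of $\widehat\cK_{\mathrm{eff}}$ lies in $\cH_{\mathrm{eff}}$, the span of functions $\<\bw,\rho\bx_{0,i}+\sigma\bz\>+b_i$. Moreover, $\sfz_\bv=-\<\bv,\bz\>$ belongs to this span. Hence the effective flow stays of the form $f_{\bw_\sft,\bb_\sft}$, and its coefficients solve a linear ODE in $\R^{d}\times\R^n$. Writing out $\widehat\cK_{\mathrm{eff}}$ on these coordinates, with $\E_\bz[\bz\bz^\sT]=\bI$ and $\bz$ independent of the sample, the $\bz$-coordinate and the data coordinate $\rho\bX^\sT\bw+\bb$ couple only through $\bX$. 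Take the SVD of $\bX/\sqrt n$ with singular values $\sqrt{\lambda_k}$. The system then block-diagonalizes into $2\times2$ blocks in each singular pair, and each block has generator exactly $\bQ(\lambda_k)$ from \eqref{eq:gf_lin_profiles}, up to the $n$ versus $d$ normalization.

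Carrying this through shows that the effective training error is the exact finite-$d$ quantity
\[
\frac1d\Tr\,\ell^{(\sft)}_{\mathrm{det}}(\widehat\bSigma),
\]
with the spectral measure of $\widehat\bSigma$ in place of $\bOmega$; the $d-n$ null directions are included as $\lambda=0$ blocks. Likewise, the effective test function is $\<\psi(\widehat\bSigma,\sft)\bv,\rho\bx+\sigma\bz\>$. Its population risk, averaged over $\bv$, equals
\[
\frac1d\Tr\bigl(\psi^2\bSigma_\sigma\bigr)+\frac{2\sigma}{d}\Tr\psi+1,
\]
with $\psi=\psi(\widehat\bSigma,\sft)$.

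\emph{Step 3 (deterministic equivalents).} It remains to replace $\frac1d\Tr\,g(\widehat\bSigma)$ and $\frac1d\Tr\,\bSigma_\sigma g(\widehat\bSigma)$ by their $\bOmega$-integrals, for $g\in\{\ell^{(\sft)}_{\mathrm{det}},\psi,\psi^2\}$. Anisotropic local laws for the resolvent $(\widehat\bSigma-z)^{-1}$ under the log-Sobolev concentration assumption give errors $\prec d^{-1/2}/\Im z$ for $\Im z$ of constant order. We transfer these to $g$ by the Helffer–Sjöstrand formula, or by contour integration, since $g$ is entire in $x$. The $\sft$-dependence of the derivatives of $g$ (each derivative brings a factor of $\sft/d$) produces the polynomial factors in $(1+\sft/d)$.

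\emph{Main obstacle.} Step 1 is a direct consequence of Proposition~\ref{prop:linearization_kernel_op}. The delicate part is Step 3: we need uniform control with polynomial, not exponential, dependence on $\sft/d$. The matrix $\bQ$ has a small eigenvalue $\gamma_-$, and $e^{-\sft\bQ}$ could amplify errors. Here we would use that $e^{-\sft\bQ}$ is a contraction, that the profiles are bounded, and that $\|\partial_x^k g\|_\infty\lesssim(1+\sft/d)^k$ on a neighborhood of the support. With these bounds, a contour at distance $O(1)$ suffices.
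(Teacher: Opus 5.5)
Your Steps 1 and 2 are sound and match the paper in substance. In Step 2 you work on the $(n+d)$-dimensional invariant subspace with the SVD of $\bX/\sqrt n$, whereas the paper uses the $2d$-dimensional space of linear functions and the intertwining $\widehat\bJ\widehat\bK=\widehat\bQ\widehat\bJ$; both produce exactly the blocks $\bQ(\lambda_k)$, with no normalization correction. (There is a small slip in Step 1: the test-side filter $\cK\int_0^\sft e^{-\sfs\widehat\cK}\,\de\sfs$ has norm $\prec\sft/d$, not at most one, but the swap cost $(\sft/d)(1+\sft/d)e^{-d^c}$ is still negligible.)

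The genuine gap is in Step 3, which is where the difficulty lies. A contour argument needs bounds on the analytic continuations $\ell^{(\sft)}_{\mathrm{det}}(z)$ and $\psi(z,\sft)$ \emph{on the contour}, and neither ingredient you invoke supplies them. Contractivity of $e^{-\sft\bQ(x)}$ holds only for real $x\ge0$, where $\bQ(x)\succeq0$. Off the axis the relevant generator is the non-normal matrix
\[
    \bB(z)=\begin{bmatrix}\rho^2z+q&\rho\sigma\\ \rho\sigma z&\sigma^2\end{bmatrix},\qquad q\deq d\delta_n,
\]
which satisfies $\diag(\sqrt x,1)\,\bB(x)=d\,\bQ(x)\diag(\sqrt x,1)$. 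Turning real-axis bounds $|\partial_x^kg|\lesssim(1+\sft/d)^k$ into bounds at distance $r$ from the axis by Taylor expansion gives at best $e^{Cr(1+\sft/d)}$. This is not an artifact, and a contour at an arbitrary distance $O(1)$ does not work. For $q=0$ one computes
\[
    \ell^{(\sft)}_{\mathrm{det}}(z)=1-\sigma^2\,\frac{1-e^{-2(\sft/d)(\rho^2z+\sigma^2)}}{\rho^2z+\sigma^2}.
\]
This is exponentially large in $\sft/d$ at every point with $\Re z<-\sigma^2/\rho^2$, and the contour must enclose $0$ whenever $d>n$.

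The missing idea is the paper's $\sft$-uniform bound
\[
    \sup_{\sft\ge0}\sup_{z\in\Gamma}\bigl(|\ell^{(\sft)}_{\mathrm{det}}(z)|+|\psi(z,\sft)|\bigr)\lesssim1 ,
\]
on a rectangle $\Gamma$ whose left edge $\Re z=-\eta$ is chosen so that $\Re(\rho^2z+\sigma^2)\ge c>0$ on $\Gamma$. The proof goes in four steps.
\begin{itemize}
\item The eigenvalues of $\bB(z)$ satisfy $\lambda_+\lambda_-=\sigma^2q\ge0$ and $\Re(\lambda_++\lambda_-)>0$, so both have positive real part when $q>0$.
\item The dichotomy ``$q=0$ or $q\ge c_q$'' (from the equality case in Remark~\ref{rmk:delta_n_nonnegativity}) makes this lower bound uniform. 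The $2\times2$ divided-difference formula then gives $\sup_{u\ge0}\sup_{z\in\Gamma}\|e^{-u\bB(z)}\|_\op\lesssim1$.
\item For $\psi$, the time integral $\int_0^{\sft/d}e^{-u\bB(z)}\,\de u$ grows linearly when $q=0$. This is neutralized by the factorization $\bB(z)-q\bE_{11}=(\rho,\sigma)^\sT(\rho z,\sigma)$ with $\bE_{11}=\diag(1,0)$.
\item The outside-support local law then gives a Step 3 error $d^{\epsilon-1/2}$ uniformly in $\sft$, so every power of $(1+\sft/d)$ comes from Step 1.
\end{itemize}

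This uniformity is what the statement requires. The training bound carries only the single factor $(1+\sft/d)$, which comes from $\sft\|\widehat\cK-\widehat\cK_{\mathrm{eff}}\|_\op$. With the derivative bounds you state, a Helffer--Sj\"ostrand transfer (which involves $g''$) gives $(1+\sft/d)^2d^{-1/2}$. It would also need local laws inside the bulk down to small $\Im z$, not just outside the support. So the extra powers of $(1+\sft/d)$ you attribute to Step 3 would not recover the claimed rate.
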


The proof is given in Appendix~\ref{app:gf_train_test_equivalents_moderate_time}. We next highlight the main phenomenology of gradient flow at moderate times:
\begin{itemize}
    \item[(a)]     The relaxation rates satisfy $\gamma_+(x) \approx \tfrac{\rho^2 x + \sigma^2}{d} + \delta_n$ and $\gamma_-(x) \approx \tfrac{\sigma^2\delta_n}{d\,\gamma_+(x)}$. Each spectral direction of the sample covariance is learned on the timescale $1/\gamma_+(\lambda) \asymp d/(\rho^2\lambda + \sigma^2 + d\delta_n)$: top principal directions first, bulk directions later, and the timescales are modulated by the noise level---at high noise ($\rho \to 0$) all directions relax together, while at low noise the training schedule follows the spectral hierarchy of $\widehat\bSigma$. This was already described for purely linear models in~\cite{wang2026analytical,wang2026random}. Here we show that, to leading order, the kernel nonlinearity simply shifts the timescales by the self-induced regularization $\delta_n$.

    \item[(b)]  For the linear kernel ($\delta_n = 0$ and $\gamma_- \equiv 0$), gradient flow converges to the linear ridgeless fit without ever interpolating. On the other hand, any kernel curvature $h(\rho^2\tau_\bSigma) - \rho^2\tau_\bSigma > 0$ opens a second channel with relaxation rate $\gamma_-$, proportional to the self-induced regularization $\delta_n$, through which the training error continues to decrease toward zero. The data eigendirections are learned at timescale $1/\gamma_+ \approx d$, while the localized bumps are fitted at timescale $1/\gamma_- \approx \delta_n^{-1} \approx n$. The separation between the two families of timescales is the mechanism behind the two-stage training curves that was observed and analytically described for random-feature models in \cite{bonnaire2026diffusion}. Here, we sharpen this picture by characterizing how this timescale separation depends on the data covariance $\bSigma$ and the kernel curvature $\delta_n$.

    \item[(c)]     At any fixed $\lambda$ in the bulk, the shrinkage symbol $\psi(\lambda,\sft)$ is monotonically decreasing in $\sft$, from $\psi(\lambda,0)=0$ to
    \[
        \lim_{\sft \to \infty}\psi(\lambda,\sft) = -\frac{1}{\sigma} \qquad (\delta_n > 0).
    \]
    Along the way, it crosses the pointwise-optimal shrinkage $\psi^{\mathrm{opt}}(\lambda) = -\frac{\sigma\,\Tr(\bOmega(\de\lambda))}{\Tr(\bSigma_\sigma \bOmega(\de\lambda))}$, which minimizes the integrand of $\cR_{\mathrm{det}}^{(\sft)}$ in~\eqref{eq:train_test_error_equivalents_gf}. The contribution to the test risk of each eigendirection therefore descends toward the risk of the best linear denoiser and then \emph{climbs} as the fit overshoots toward the constant $-1/\sigma$. The limit $-\bu/\sigma^2$ of $\widehat{\boldf}_{\sft}/\sigma$ is the rescaled score of the pure-noise distribution $\cN(0, \sigma^2\bI_d)$. Trained to convergence within the linearized description, the model \emph{forgets the covariance it had learned} and treats all of its input as noise. Substituting $\psi \equiv -1/\sigma$ into~\eqref{eq:train_test_error_equivalents_gf} gives $\cR =  \frac{\rho^2}{\sigma^2}\tau_{\bSigma} (1 + o(1))$, which is the interpolation plateau of Phase II described below. The optimal early-stopping window is the interval between the covariance-learning descent and this overshoot; its optimum is located at $\sft^\star \asymp d$, with a constant determined by $\bSigma$, $\sigma$, and $\delta_n$ through~\eqref{eq:train_test_error_equivalents_gf}. In particular, since $\gamma_-$ is nondecreasing in $\delta_n$, larger curvature shortens the good generalization window: more strongly nonlinear kernels overfit earlier.
\end{itemize}

We illustrate the train and test error dynamics during Phase I in Figure~\ref{fig:delta_n_and_c_effective_flow}. 

\subsection{Phase II: vanishing error and overfitting}
\label{sec:gf-phase-II}

Beyond the timescale $d^{1+c}$, the operator-level kernel linearization used in Theorem~\ref{thm:gf_train_test_equivalents_moderate_time} no longer applies. Nevertheless, its approximation remains valid for the explicit gradient flow solution~\eqref{eq:gf_solution}. We establish this directly by showing that the kernel nonlinearity fits \emph{localized bumps} around the individual training samples. Our analysis of this regime requires the following additional condition on the kernel.

\begin{assumption}[Kernel high-degree coefficients]\label{ass:kernel_func_vanishing_kappa}
    The kernel is an inner-product kernel of the form $K(\bx,\bx') = h(\<\bx,\bx'\>/d)$. The base function $h: \R \to \R$ admits a power series expansion $h(x)=\sum_{k=0}^{\infty}\alpha_k x^k$ with non-negative coefficients $\alpha_k\geq 0$ for all $k \geq 0$, with $\alpha_0=\alpha_2=0$ and $\alpha_1=1$. Furthermore, there exists an even integer $k_0 \geq 6$ and a constant $c>0$ such that $\alpha_k > c$ for all $k \in \{k_0, \dots, 2k_0\}$.

    Finally, there exists a constant $\zeta\in (0,1)$ such that the higher-order coefficients satisfy the summability condition
    \begin{equation}
        \sum_{k=3}^{\infty}k^2\alpha_k C_{\zeta,k}^2 <\infty,\qquad C_{\zeta,k} \deq  (2C_{\mathrm{LSI}}^2((4k - \ln \zeta + 2) k)^2)^{\frac{k}{2}} (4k - \ln \zeta).
    \end{equation}
\end{assumption}

Under this assumption, we show that using the kernel's high-degree components, one can fit a family of \emph{localized bump functions} (Appendix~\ref{app:localized_bumps}): elements $f^{\mathrm{loc}}_{m,\bv} \in \cH$ which approximate the target $-\<\bv,\bz\>$ \emph{on the union of the training tubes} while remaining bounded in RKHS norm.

\begin{proposition}[Localized interpolants]\label{prop:localized_bumps_properties}
    Suppose that Assumptions~\ref{ass:high_dim},~\ref{ass:input_dist}, and~\ref{ass:kernel_func_vanishing_kappa} hold. Let $\bv\in\R^d$ be such that $\|\bv\|_2=1$. Let $k_0 \geq 6$ be the even integer guaranteed by Assumption~\ref{ass:kernel_func_vanishing_kappa} and denote $k_0 = 2m+6$. Then, there exists $f^{\mathrm{loc}}_{m,\bv} : \R^d \to \R$ such that
    \[
         \cL_{\bv}(f^{\mathrm{loc}}_{m,\bv};0) \prec  d^{-k_0+2},\qquad  \|f^{\mathrm{loc}}_{m,\bv}\|_\cH^2 \prec d.
    \]
\end{proposition}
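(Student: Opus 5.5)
The plan is to build $f^{\mathrm{loc}}_{m,\bv}$ explicitly as a global linear part plus $n$ localized corrections, following heuristic (b) after Proposition~\ref{prop:linearization_kernel_op}. On the tube of sample $i$, write $\bu=\rho\bx_{0,i}+\sigma\bz$. The target is
\[
-\<\bv,\bz\> = -\tfrac{1}{\sigma}\<\bv,\bu\> + \tfrac{\rho}{\sigma}\<\bv,\bx_{0,i}\>.
\]
I would therefore set
\[
f^{\mathrm{loc}}_{m,\bv}(\bu) = -\tfrac{1}{\sigma}\<\bv,\bu\> + \tfrac{\rho}{\sigma}\sum_{i=1}^n \<\bv,\bx_{0,i}\>\,\varphi_i(\bu),
\qquad
\varphi_i(\bu)=\sum_{k=k_0}^{2k_0} c_{i,k}\Big(\frac{\<\bu,\bx_{0,i}\>}{d}\Big)^{k}.
\]
Each $\varphi_i$ is a polynomial bump in the overlap with $\bx_{0,i}$, and it uses only the degrees where Assumption~\ref{ass:kernel_func_vanishing_kappa} guarantees $\alpha_k>c$.

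\textbf{RKHS norm.} Since $\alpha_1=1$, the linear part has $\|\<\bv,\cdot\>/\sigma\|_\cH^2=d/\sigma^2$. For the bumps, the monomial feature $(\<\cdot,\ba\>/d)^k$ lies in the degree-$k$ component of $\cH$ with squared norm $\|\ba\|_2^{2k}/(d^k\alpha_k)=O(1)$ when $\ba=\bx_{0,i}$, because $\|\bx_{0,i}\|_2^2/d=\tau_i$ concentrates by LSI. Cross terms between different $i,j$ carry factors $(\<\bx_{0,i},\bx_{0,j}\>/d)^k\prec d^{-k/2}$. Hence the Gram matrix of the bumps is diagonal-dominant, and the bump part has squared norm $\prec\sum_i \<\bv,\bx_{0,i}\>^2\max_k|c_{i,k}|^2\prec n\asymp d$.

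\textbf{Choice of coefficients.} On tube $i$, write $s=\<\bu,\bx_{0,i}\>/d=\rho\tau_i+\sigma\<\bz,\bx_{0,i}\>/d$. The fluctuation $s-\rho\tau_i$ is $O_\prec(d^{-1/2})$. I would choose the $k_0+1=2m+7$ coefficients $(c_{i,k})$ so that $q_i(s)=\sum_k c_{i,k}s^k$ satisfies $q_i(\rho\tau_i)=1$ and $q_i^{(j)}(\rho\tau_i)=0$ for $1\le j\le 2m+5$. This is a confluent Vandermonde system in the monomials $s^{k_0},\dots,s^{2k_0}$ at the point $\rho\tau_i\asymp 1$. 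It is solvable with coefficients bounded uniformly in $d$, because $\tau_i\in[c,C]$ with very high probability.

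\textbf{Loss.} With this choice, the on-tube error is $\frac{\rho}{\sigma}\<\bv,\bx_{0,i}\>\,O(|s-\rho\tau_i|^{2m+6})$. Its squared $L^2$ contribution is $\prec d^{-(2m+6)}$, controlled through Gaussian hypercontractivity in $\bz$. The off-tube error on tube $j$ is $\sum_{i\ne j}\<\bv,\bx_{0,i}\>q_i(\<\bu,\bx_{0,i}\>/d)$. Here $|\<\bu,\bx_{0,i}\>/d|\prec d^{-1/2}$ and every monomial has degree $\ge k_0$, so each term is $\prec d^{-k_0/2}$. A crude triangle-inequality sum over $n$ terms gives squared error $\prec n^2 d^{-k_0}=d^{-k_0+2}$, which matches the claim. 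I would not attempt a sharper square-root cancellation, since it is unnecessary.

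Averaging over $j$ gives $\cL_\bv(f^{\mathrm{loc}}_{m,\bv};0)\prec d^{-k_0+2}$. The irreducible part is negligible by Proposition~\ref{prop:bayes_opt_denoiser_linearization}, though it does not even need to be invoked, since the loss is computed directly.

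\textbf{Main obstacle.} The delicate step is making the stochastic-domination bounds uniform. All $n^2$ overlaps $\<\bx_{0,i},\bx_{0,j}\>/d$ and the tube overlaps $\<\bz,\bx_{0,i}\>/d$ must be controlled simultaneously with very high probability, raised to powers up to $2k_0$. This uses LSI concentration, which gives sub-Gaussian tails for these Lipschitz statistics, together with a union bound. If one instead wants the summability condition on $\alpha_k C_{\zeta,k}$ to enter, it would be to bound the full power series for the RKHS norm identity. With the purely polynomial construction above, I expect only the finitely many degrees $k_0,\dots,2k_0$ to matter.
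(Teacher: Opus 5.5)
Your construction is correct and follows the same blueprint as the paper. Both use the pure-noise linear score $-\<\bv,\cdot\>/\sigma$ plus polynomial bumps in the overlap $\<\cdot,\bx_{0,i}\>/d$ built only from degrees $k_0,\dots,2k_0$. The bumps are made flat at the tube center so the on-tube error is of high order. The RKHS norm is then controlled through a diagonally dominant Gram matrix of the tensors $\bx_{0,i}^{\otimes k}$.

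The differences lie in the choice of coefficients. The paper uses a single polynomial $p_m$ centered at the global value $\rho\tau_\bSigma$. It is flat to order $k_0+1$, so the deterministic offset $\|\bx_{0,i}\|_2^2/d-\tau_\bSigma$ is absorbed. The paper then takes $\bbeta_{m,\bv}=\frac{\rho}{\sigma}\E_\bz[\bPhi_m(\bz)]^{-1}\bX^\sT\bv$. This requires Lemma~\ref{lem:mean_interpolation_matrix}, but it makes the tube-mean residual vanish exactly, leaving only the fluctuation term of Lemma~\ref{lem:fluctuation_bound}.

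You instead center each bump at its own $\rho\tau_i$ and take $\beta_i=\frac{\rho}{\sigma}\<\bv,\bx_{0,i}\>$ directly. The off-diagonal means and the fluctuations then go together into the crude $n^2d^{-k_0}$ bound. This is legitimate: the paper's fluctuation estimate also loses a factor $n$ through Cauchy--Schwarz (row sums of order $n d^{-k_0}$ times $\|\bbeta\|_2^2\prec d$). It therefore lands at the same $d^{-k_0+2}$. Inverting the mean interpolation matrix buys nothing for this rate, and your version is slightly more elementary.

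One small point to tidy: your Hermite conditions give $k_0$ equations for $k_0+1$ unknowns, so you should fix a particular solution. A natural choice is $q_i(s)=s^{k_0}r_i(s)$, with $r_i$ the degree-$(k_0-1)$ Taylor polynomial of $s^{-k_0}$ at $\rho\tau_i$. Its coefficients are polynomials in $\rho\tau_i$ and $(\rho\tau_i)^{-1}$, so the uniform bound on them is immediate once $\tau_i\in[c,C]$ with very high probability.
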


It is worth emphasizing that this result is much stronger than the interpolation of $n$ labels in supervised learning. The empirical DSM objective integrates over the noise: driving it to zero requires matching the target \emph{as a function} of a mixture of $n$ $d$-dimensional Gaussian vectors, i.e., interpolating in $L^2(\widehat\mu_{d,n})$ rather than at finitely many points. 

Combining Proposition~\ref{prop:localized_bumps_properties} with a comparison between gradient flow and ridge along the trajectory (Lemma~\ref{lem:gf_ridge_comparison}) yields the description of the second phase.

\begin{theorem}[Phase II: overfitting]\label{thm:gf_train_test_equivalents_extended_time}
    Suppose that Assumptions~\ref{ass:high_dim},~\ref{ass:input_dist}, and~\ref{ass:kernel_func_vanishing_kappa} hold. Suppose further that there exists an $\epsilon>0$ such that $d^{1+\epsilon}\lesssim \sft\lesssim d^{k_0-1}$, where $k_0$ is given in Assumption~\ref{ass:kernel_func_vanishing_kappa}. Then, for every constant $c \in (0, (\epsilon \wedge 1)/2)$, there exist constants $C, c' > 0$ such that
    \[
        \cL(\widehat{\boldf}_{\sft};0)\prec \frac{d}{\sft}=o_d(1),
    \]
    and
    \[
        \left| \cR(\widehat{\boldf}_{\sft}) - \frac{\rho^2}{\sigma^2 } \frac{\Tr(\bSigma)}{d} \right| \lesssim (C_\zeta \vee 1) d^{c-\frac{\epsilon \wedge 1}{2}}  + \sqrt{1+(C_\zeta \vee 1) d^{c-\frac{\epsilon \wedge 1}{2}}}\,d^{\frac{c}{2}-\frac{\epsilon \wedge 1}{4}}
    \]
    with probability at least $1 - \zeta - e^{-d^{c'}}$ for all $d \geq C$, where $C_\zeta$ is defined in~\eqref{eq:C_delta_def}.
\end{theorem}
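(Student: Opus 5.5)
The plan is to handle the training loss and the test risk separately, both times reducing to coordinatewise statements for a fixed unit vector $\bv$. Overall bounds then follow by averaging over an orthonormal basis $\{\bv_i\}$ with a union bound. The very-high-probability events survive this union bound, while the $\zeta$-probability event comes from $C_\zeta$ and is shared across coordinates.

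\textbf{Training loss.} I would use the variational characterization of gradient flow from zero initialization. The flow \eqref{eq:gradient_flow} is the $\kappa\to0$ gradient flow of a convex quadratic. For any comparator $g\in\cH$, the standard Lyapunov argument (differentiate $\tfrac12\|\widehat f_{\sft,\bv}-g\|_\cH^2$ and use monotonicity of $\cL^{\mathrm{red}}_\bv$ along the flow) gives
\[
\cL^{\mathrm{red}}_\bv(\widehat f_{\sft,\bv};0)\le \cL^{\mathrm{red}}_\bv(g;0)+\frac{\|g\|_\cH^2}{2\sft}.
\]
Equivalently, one can invoke Lemma~\ref{lem:gf_ridge_comparison} with ridge at $\kappa=d/\sft$. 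I take $g=f^{\mathrm{loc}}_{m,\bv}$ from Proposition~\ref{prop:localized_bumps_properties}. The reducible loss of $g$ is at most $2\cL_\bv(g;0)+2\|\widehat f^\star_\bv-\sfz_\bv\|^2$. The first term is $\prec d^{-k_0+2}$, and the second is $e^{-d^c}$ by Proposition~\ref{prop:bayes_opt_denoiser_linearization}. The penalty term is $\prec d/\sft$. Since $\sft\lesssim d^{k_0-1}$, we have $d^{-k_0+2}\lesssim d/\sft$, so $\cL^{\mathrm{red}}_\bv\prec d/\sft$. Adding the exponentially small irreducible part then gives $\cL(\widehat\boldf_\sft;0)\prec d/\sft$.

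\textbf{Test risk.} The estimator should behave at test points like the pure-noise score $-\langle\bv,\bu\rangle/\sigma$, whose population risk equals $\tfrac{\rho^2}{\sigma^2}\langle\bv,\bSigma\bv\rangle$; averaging over the basis gives $\tfrac{\rho^2}{\sigma^2}\tau_\bSigma$. I would proceed in three steps.

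1. Decompose $\widehat f_{\sft,\bv}=g_{\mathrm{lin}}+g_{\mathrm{rest}}$, where $g_{\mathrm{lin}}(\bu)=-\langle\bv,\bu\rangle/\sigma$. This function lies in $\cH$ with $\|g_{\mathrm{lin}}\|_\cH^2=d/\sigma^2$, because the degree-one coefficient of $h$ is $1$.

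2. Bound $\|g_{\mathrm{rest}}\|_{L^2(\mu_d)}$. The remainder $g_{\mathrm{rest}}$ must be small on typical test points even though it fits the bumps $b_i\approx\rho\langle\bv,\bx_{0,i}\rangle/\sigma$ on the training tubes. For this I would use the same Lyapunov inequality with the comparator $g_{\mathrm{lin}}+$bumps, which controls $\|\widehat f_{\sft,\bv}\|_\cH^2\lesssim d$. I would also track $\widehat f_{\sft,\bv}$ against $f^{\mathrm{loc}}_{m,\bv}$, whose structure is explicit: it is linear plus high-degree localized terms.

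3. Show that the high-degree part of any $\cH$-function with norm $\lesssim d$ that is concentrated on the tubes has small $L^2(\mu_d)$ mass. This holds because fresh test points have overlap $O(d^{-1/2})$ with every $\bx_{0,i}$, so degree-$k$ kernel components contribute $O(d^{-k/2})$. This is where the summability constants $C_{\zeta,k}$ and $C_\zeta$ enter, via LSI hypercontractive moment bounds on $\langle\bu,\bx_{0,i}\rangle^k/d^k$ together with a Markov bound holding with probability $1-\zeta$.

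The rates $d^{c-(\epsilon\wedge1)/2}$ would come from choosing a ridge proxy at $\kappa=d/\sft\le d^{-\epsilon}$, truncated at $d^{-1}$. Phase-I linearization (Proposition~\ref{prop:linearization_kernel_op}) shows that the linear part of $\widehat\bw$ is $-\bv/\sigma$ up to error $\kappa^{1/2}$-type terms.

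\textbf{Combining.} Write $\cR_\bv(\widehat f)=\|g_{\mathrm{lin}}-\sfz_\bv+\Delta\|^2$ and expand as $\cR_\bv(g_{\mathrm{lin}})+\|\Delta\|^2+2\langle\cdot,\Delta\rangle$. Applying Cauchy–Schwarz to the cross term produces exactly the stated form $A+\sqrt{1+A}\,\sqrt{A}$-type.

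The main obstacle is step 2: proving that the gradient-flow iterate, not merely the explicit comparator, has linear component $-\bv/\sigma$ with vanishing test-visible remainder. I would attempt this via Lemma~\ref{lem:gf_ridge_comparison}, transferring to the ridge solution at $\kappa=d/\sft$. For ridge, strong convexity lets me compare the minimizer with the explicit near-optimal interpolant $g_{\mathrm{lin}}+$bumps in $\cH$-norm, since the ridge objective gap controls the $\cH$-distance scaled by $\kappa/d$. The resulting $\cH$-distance is then converted to an $L^2(\mu_d)$ bound using $\|\cT\|_{\op}^2=\|\cT^\ast\cT\|_{\op}\lesssim 1/d$ on the nonlinear part, at the cost of the $d^{c}$ losses.
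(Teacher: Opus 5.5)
You have a genuine gap, in exactly the step you flag. Three parts of your proposal are fine and match the paper: the training-loss bound (Lyapunov inequality, or Lemma~\ref{lem:gf_ridge_comparison}, compared against $f^{\mathrm{loc}}_{m,\bv}$), the norm bound $\|\widehat f_{\sft,\bv}\|_\cH^2\prec d$, and the $L^2(\mu_d)$ control of the degree-$\geq 3$ part.

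The gap is showing that the linear coefficient $\widehat\btheta$ of the trained model satisfies $\sigma\widehat\btheta+\bv=o(1)$, and strong convexity does not give this. Set $J_\kappa(f)\deq\cL_\bv(f;0)+\frac{\kappa}{d}\|f\|_\cH^2$. You get $\frac{\kappa}{d}\|f^{\mathrm{loc}}_{m,\bv}-\widehat f_{\kappa,\bv}\|_\cH^2\le J_\kappa(f^{\mathrm{loc}}_{m,\bv})-J_\kappa(\widehat f_{\kappa,\bv})$, but the right-hand side is only known to be $O(\kappa)$. The bump part of $f^{\mathrm{loc}}_{m,\bv}$ has RKHS norm squared of order $\|\bbeta_{m,\bv}\|_2^2$, typically the same order $d$ as the linear part. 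Nothing makes the particular $p_m$-bumps norm-optimal to first order, so the gap is not $o(\kappa)$ in general. You therefore only get $\|f^{\mathrm{loc}}_{m,\bv}-\widehat f_{\kappa,\bv}\|_\cH^2=O(d)$. Since the linear part enters the norm as $d\|\btheta\|_2^2$, this means $\|\widehat\btheta+\bv/\sigma\|_2=O(1)$, an $O(1)$ error in the risk.

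Two further problems block the transfer. Lemma~\ref{lem:gf_ridge_comparison} transfers only the reduced training loss and the RKHS norm, not proximity to the ridge minimizer. Comparing the two spectral filters gives only $\|\widehat f_{\sft,\bv}-\widehat f_{\kappa,\bv}\|_\cH^2\lesssim\|\widehat f_{\kappa,\bv}\|_\cH^2\prec d$, again $O(1)$ in $L^2(\mu_d)$. Proposition~\ref{prop:linearization_kernel_op} cannot help either: the Phase~I error bounds are not small once $\kappa\lesssim d^{-1/6}$, while the theorem reaches $\kappa=d/\sft$ down to $d^{-k_0+2}$.

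The missing idea is to read off the linear coefficient from the degree-one Hermite coefficient, in the noise variable, of the training residual on each tube.
\begin{itemize}
\item Write $\widehat f_{\sft,\bv}(\bu)=\langle\widehat\btheta,\bu\rangle+\widehat g(\bu)$ with $\widehat g\in\cH_{\geq 3}$, using Lemma~\ref{lem:rkhs_decomposition}.
\item The first Hermite coefficient of $\bz\mapsto\widehat f_{\sft,\bv}(\rho\bx_{0,j}+\sigma\bz)+\langle\bv,\bz\rangle$ is $\sigma\widehat\btheta+\bv+\Gamma_1\widehat g(\bx_{0,j})$. Parseval and Jensen then give $\|\sigma\widehat\btheta+\bv+\frac1n\sum_j\Gamma_1\widehat g(\bx_{0,j})\|_2^2\le\cL_\bv(\widehat f_{\sft,\bv};0)\prec d/\sft$.
\item Averaging over the $n$ tubes suppresses the nonlinear contribution. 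The second estimate of Lemma~\ref{lem:high_order_term_bound}, built on the averaged tensor bound of Lemma~\ref{lem:moment_bound_empirical}, gives $\|\frac1n\sum_j\Gamma_1\widehat g(\bx_{0,j})\|_2\lesssim C_\zeta d^{-1}\|\widehat g\|_\cH$ with probability $1-\zeta$.
\item Hence $\|\sigma\widehat\btheta+\bv\|_2\lesssim d^{c}\sqrt{d/\sft}+C_\zeta d^{c-1/2}$, which is exactly the rate in the statement.
\end{itemize}
This is where $C_\zeta$ and the $1-\zeta$ probability enter. They do not enter the $L^2(\mu_d)$ bound for $\widehat g$, which is deterministic and holds for every element of $\cH_{\geq 3}$, namely $\|\widehat g\|_{L^2(\mu_d)}\lesssim d^{-3/4}\|\widehat g\|_\cH$, with no tube localization needed. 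This argument uses only the training-loss bound and $\|\widehat f_{\sft,\bv}\|_\cH^2\prec d$, so it applies to gradient flow directly, and your final Cauchy--Schwarz expansion of the risk then goes through.
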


The proof is given in Appendix~\ref{app:gf_train_test_equivalents_extended_time}. In the polynomial-time regime of Phase II, the training error continues to vanish at rate $d/\sft$, while the population risk plateaus at $\frac{\rho^2}{\sigma^2}\tau_{\bSigma}$, which is the risk of the rescaled pure-noise score $-\bu/\sigma^2$. The gap between the two is carried by the localized bumps which are invisible to the population risk. The model has overfitted to the training data, but the overfitting is confined to an exponentially thin neighborhood of the training tubes, and the population risk is determined by the pure-noise score, not by the covariance fit.  Whether the overfitting is benign is set by the noise level $\rho_t^2/\sigma_t^2 = 1/\sigma_t^2 - 1$: at high noise the plateau is close to the best predictor and overfitting is benign, while at low noise the plateau exceeds the risk of the zero estimator and overfitting is detrimental. Note that in either case, the plateau is flat: throughout the polynomial window, further training changes the estimator only inside the tubes and leaves the population risk unchanged.

\subsection{Phase III: the memorization endpoint}\label{sec:gf-endpoint}

The plateau of Phase II describes a model that has interpolated the training objective while behaving, at test points, as the pure-noise score: overfitted, but far from the empirical Bayes denoiser. The final phase of training, which occurs at super-polynomial times, is the memorization phase. We characterize the gradient flow endpoint in fixed dimension, and we give a closed-form expression for its population risk in the high-dimensional limit. In this phase, we assume that the kernel has all higher-degree coefficients positive.

\begin{assumption}[Kernel high-degree tail]\label{ass:universal-kernel}
    There exists an even integer $k_0 \geq 0$ such that \(K(\bx,\by)=h(\< \bx,\by\>/d)\) with \(h(x)=\sum_{k=0}^\infty \alpha_k x^k\), \(\alpha_k\geq 0\) for every \(k\geq 0\), and \(\alpha_k>0\) for all \(k \geq k_0\).
\end{assumption}

Under this assumption, the gradient flow estimator $\widehat \boldf_\sft$ converges to the empirical Bayes denoiser $\widehat \boldf^\star$ in $L^2(\widehat\pi_{d,n})$ as $\sft \to \infty$. This convergence, however, does not automatically imply convergence of the test error (in $L^2(\pi_d)$): we show in Appendix \ref{sec:ridgeless-instability} that, without further assumptions, the gradient flow estimator can develop a tail instability as $\sft \to \infty$, with $\cR (\widehat \boldf_\sft)$ diverging along the trajectory.

Instead, we add a mild post-processing step that prevents such instabilities for any kernel satisfying Assumption \ref{ass:universal-kernel} and any $\pi_0$ with bounded second moment. Let $A$ be a compact convex set containing all scaled samples $\{\rho \bx_{0,i}\}_{i \in [n]}$, and let $\proj_A$ denote the Euclidean projection onto $A$. We define the post-processed estimator by
\begin{equation}\label{eq:gf-endpoint-postprocessing}
\widetilde \boldf_\sft (\bu) \deq  \frac{\proj_A D_{\widehat \boldf_\sft} (\bu) - \bu}{\sigma},
\end{equation}
where $D_{\widehat \boldf_\sft} (\bu) \deq  \bu + \sigma \widehat \boldf_\sft (\bu)$. This is a mild post-processing step; its form is chosen to preserve the empirical Bayes denoiser interpolation limit, that is, $\widehat \boldf^\star (\bu) = (\proj_A D_{\widehat \boldf^\star} (\bu) - \bu)/\sigma$.

\begin{proposition}[Gradient flow converges to the empirical Bayes denoiser]\label{prop:gf_fixed_d_endpoint}
Fix $d,n$, the training samples \(\{\bx_{0,i}\}_{i=1}^{n}\), and $\sigma >0$. Let
$\widehat \boldf_{\sft}$ solve the unregularized gradient flow~\eqref{eq:gradient_flow}
from initialization $0$. Under Assumptions~\ref{ass:input_dist} and~\ref{ass:universal-kernel}, $\cH$ is dense in $L^2(\widehat{\mu}_{d,n})$ and $L^2 (\mu_d)$, and the gradient flow converges to the empirical Bayes denoiser:
\[
    \cS_0\widehat \boldf_{\sft}
    \longrightarrow
    \widehat \boldf^\star
    \qquad\text{in }L^2(\widehat\mu_{d,n};\R^d)
    \qquad\text{as }\sft\to\infty.
\]
Furthermore,
\[
    \liminf_{\sft\to\infty}\cR(\widehat \boldf_{\sft})
    \geq
    \cR(\widehat \boldf^\star), \qquad  \lim_{\sft\to\infty}\cR (\widetilde\boldf_{\sft})
    = \cR(\widehat \boldf^\star).
\]
\end{proposition}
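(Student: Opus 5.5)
The plan has three parts: density of $\cH$, convergence of the flow on the training space via the spectral filter~\eqref{eq:gf_solution}, and the two risk statements via almost-everywhere convergence combined with Fatou and dominated convergence respectively.

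\emph{Density.} Expanding $h(\<\bx,\by\>/d)=\sum_k \alpha_k d^{-k}\sum_{|\beta|=k}\binom{k}{\beta}\bx^\beta\by^\beta$ with $\alpha_k\ge 0$ shows that every monomial $\bx^\beta$ with $|\beta|=k$ and $\alpha_k>0$ belongs to $\cH$. By Assumption~\ref{ass:universal-kernel}, this covers every monomial of degree $\ge k_0$. Let $\nu\in\{\widehat\mu_{d,n},\mu_d\}$. Both measures are convolutions with $\cN(0,\sigma^2\bI_d)$ of a measure with exponential moments: for $\widehat\mu_{d,n}$ this measure has finite support, and for $\mu_d$ it is $\pi_0$ pushed forward, which is sub-Gaussian by the LSI. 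Hence $\nu$ has a strictly positive Lebesgue density and finite exponential moments $\int e^{a\|\bu\|}\,\nu(\de\bu)<\infty$.

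Suppose $g\in L^2(\nu)$ is orthogonal to all monomials of degree $\ge k_0$. Fix $\beta$ with $|\beta|=k_0$. The signed measure $g(\bu)\bu^\beta\,\nu(\de\bu)$ is then orthogonal to every polynomial, and it has exponential moments by Cauchy--Schwarz. Its Fourier transform is therefore analytic near the origin and all its Taylor coefficients vanish, so the measure is zero. Thus $g\,\bu^\beta=0$ $\nu$-a.e., and since the zero set of $\bu^\beta$ is Lebesgue-null, $g=0$ $\nu$-a.e. This gives density. I expect this to be the main obstacle: one must check carefully that the high-degree-only RKHS still separates, and the moment-determinacy argument above is the one I would try first.

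\emph{Convergence of the flow.} By~\eqref{eq:gf_solution}, $\cS\widehat f_{\sft,\bv}=(\id-e^{-\sft\widehat\cK})\widehat f^\star_\bv$. By the spectral theorem and dominated convergence, this converges to the orthogonal projection of $\widehat f^\star_\bv$ onto $\overline{\mathrm{ran}\,\widehat\cK}=\overline{\mathrm{ran}\,\cS}$. Since $\widehat\cP$ is an isometry and $\mathrm{ran}\,\cS_0$ is dense in $L^2(\widehat\mu_{d,n})$, the closure of $\mathrm{ran}\,\cS$ is $\widehat\cP\bigl(L^2(\widehat\mu_{d,n})\bigr)$. This subspace contains $\widehat f^\star_\bv$, which depends only on $\bu=\rho\bx+\sigma\bz$. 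Summing over the coordinates $\bv_i$ gives $\cS_0\widehat\boldf_\sft\to\widehat\boldf^\star$ in $L^2(\widehat\mu_{d,n};\R^d)$.

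\emph{Risk lower bound.} Take a sequence $\sft_j\to\infty$ along which $\cR(\widehat\boldf_{\sft_j})$ tends to the $\liminf$. Pass to a further subsequence with $\widehat\boldf_{\sft_j}\to\widehat\boldf^\star$ $\widehat\mu_{d,n}$-a.e. Because $\widehat\mu_{d,n}$ has a positive density, this convergence holds Lebesgue-a.e. Since $\mu_d\ll$ Lebesgue, it also holds $\mu_d$-a.e., and hence $\widehat\boldf_{\sft_j}(\rho\bx+\sigma\bz)+\bz\to\widehat\boldf^\star(\rho\bx+\sigma\bz)+\bz$ for $\pi_d$-a.e.\ $(\bx,\bz)$. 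Fatou's lemma then yields $\liminf_{\sft\to\infty}\cR(\widehat\boldf_\sft)\ge\cR(\widehat\boldf^\star)$.

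\emph{Post-processed risk.} For the post-processed estimator, $\proj_A$ is $1$-Lipschitz, so $\widetilde\boldf_\sft\to(\proj_A D_{\widehat\boldf^\star}-\bu)/\sigma=\widehat\boldf^\star$ wherever $\widehat\boldf_\sft\to\widehat\boldf^\star$. The estimator is also uniformly dominated: $\|\widetilde\boldf_\sft(\bu)\|_2\le(\sup_{\ba\in A}\|\ba\|_2+\|\bu\|_2)/\sigma$. This bound, together with $\|\bz\|_2$, is square integrable under $\pi_d$ because $\pi_0$ has finite second moment. Dominated convergence therefore gives $\cR(\widetilde\boldf_{\sft_j})\to\cR(\widehat\boldf^\star)$ along any a.e.-convergent subsequence. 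Since every sequence $\sft_j\to\infty$ has such a sub-subsequence, the full limit $\lim_{\sft\to\infty}\cR(\widetilde\boldf_\sft)=\cR(\widehat\boldf^\star)$ follows.
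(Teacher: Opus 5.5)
Your proof is correct and follows essentially the same route as the paper. The three steps match: density of $\cH$ by multiplying with a degree-$k_0$ polynomial that is nonzero a.e.\ together with exponential-moment determinacy, convergence of $\cS\widehat f_{\sft,\bv}$ to the projection onto $\overline{\mathrm{Ran}\,\cS}$ via the spectral theorem, Fatou for the $\liminf$, and boundedness of $\proj_A$ for the post-processed risk. The differences are only technical: you use a monomial $\bu^\beta$ where the paper uses $\|\bu\|^{k_0}$, you transfer a.e.\ convergence from $\widehat\mu_{d,n}$ to $\mu_d$ by absolute continuity where the paper uses a density-ratio bound on balls, and you use dominated convergence along sub-subsequences where the paper splits into a ball and its complement.
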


This proposition holds for any $d,n$. Next, we characterize the risk of the empirical Bayes denoiser in the linear high-dimensional scaling regime. 

\begin{theorem}[The memorization infinite-time limit]\label{thm:ridgeless_main}
    Assume Assumptions \ref{ass:high_dim} and~\ref{ass:input_dist}, and $\sigma >0$ hold.  Then, the risk of the empirical Bayes denoiser $\widehat \boldf^\star$ is given by
    \begin{equation}
    \left|\cR(\widehat \boldf^\star)
    - 2 \frac{\rho^2}{\sigma^2} \frac{\Tr(\bSigma)}{d}\right| \prec \frac{1}{\sqrt{d}}.
    \end{equation}
    If we further assume Assumption \ref{ass:universal-kernel}, the post-processed gradient flow estimator \eqref{eq:gf-endpoint-postprocessing} satisfies
    \begin{equation}
        \left|\lim_{\sft \to \infty} \cR(\widetilde \boldf_{\sft}) - 2 \frac{\rho^2}{\sigma^2} \frac{\Tr(\bSigma)}{d}\right| \prec \frac{1}{\sqrt{d}}.
    \end{equation}
\end{theorem}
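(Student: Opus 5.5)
The plan is to compute $\cR(\widehat\boldf^\star)$ directly, using the fact that at a fresh test point the softmax weights \eqref{eq:emp_bayes_opt_denoiser_weights} collapse onto a single training sample. Write the test input as $\bu=\rho\bx_0+\sigma\bz$ with $(\bx_0,\bz)\sim\pi_d$ independent of the data. For each $k$,
\[
\|\rho\bx_{0,k}-\bu\|_2^2 = \rho^2\|\bx_{0,k}\|^2 - 2\rho\<\bx_{0,k},\bu\> + \|\bu\|^2 .
\]
Under the LSI assumption, with very high probability and uniformly in $k$, we have $\|\bx_{0,k}\|^2 = d\tau_\bSigma + O(d^{1/2+\epsilon})$ and $|\<\bx_{0,k},\bu\>| \prec \sqrt d$. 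Hence the log-weights $-\|\rho\bx_{0,k}-\bu\|^2/(2\sigma^2)$ differ across $k$ only by $O(d^{1/2+\epsilon})$ fluctuations. There is no planted sample, so the weights need not be uniform. The estimator itself, however, is an average:
\[
\widehat\boldf^\star(\bu) = \frac{1}{\sigma}\Big(\rho\sum_k\omega_k(\bu)\bx_{0,k} - \bu\Big).
\]
So the key quantity is $\bm m(\bu) \deq \sum_k \omega_k(\bu)\bx_{0,k}$, a convex combination of the training points.

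Expanding the risk,
\[
\cR(\widehat\boldf^\star) = \frac1d\E\Big\|\frac{\rho}{\sigma}\bm m - \frac{\bu}{\sigma} + \bz\Big\|^2 = \frac1d\E\Big\|\frac{\rho}{\sigma}\bm m - \frac{\rho}{\sigma}\bx_0\Big\|^2
= \frac{\rho^2}{\sigma^2 d}\Big(\E\|\bm m\|^2 - 2\E\<\bm m,\bx_0\> + \E\|\bx_0\|^2\Big),
\]
using $\bu/\sigma - \bz = \rho\bx_0/\sigma$. I then control the three terms in turn.
\begin{enumerate}
\item $\E\|\bx_0\|^2/d = \tau_\bSigma$ exactly.
\item The cross term $\<\bm m,\bx_0\> = \sum_k\omega_k\<\bx_{0,k},\bx_0\>$ is bounded by $\max_k|\<\bx_{0,k},\bx_0\>| \prec \sqrt d$. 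Dividing by $d$ makes it $\prec d^{-1/2}$.
\item $\|\bm m\|^2 = \sum_{k,l}\omega_k\omega_l\<\bx_{0,k},\bx_{0,l}\>$. The diagonal part is $\sum_k\omega_k^2\|\bx_{0,k}\|^2$; the off-diagonal part is bounded by $\max_{k\ne l}|\<\bx_{0,k},\bx_{0,l}\>| \prec \sqrt d$. Therefore $\|\bm m\|^2/d = \tau_\bSigma\sum_k\omega_k^2 + O_\prec(d^{-1/2})$.
\end{enumerate}

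The claimed value $2\tau_\bSigma$ therefore requires $\sum_k\omega_k(\bu)^2 = 1 - O_\prec(d^{-1/2})$, i.e. the weights must concentrate on one index. This is the main step. Gaps of order $\sqrt d$ in the log-weights between the largest and second-largest entries give exponential dominance, provided the maximum is separated from the runner-up. Among $n\asymp d$ log-weights whose fluctuations are of size $\sqrt d$, the top gap is typically of order $\sqrt d/\sqrt{\log n}$, which tends to infinity.

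To make this rigorous, condition on $\bu$. The variables $\<\bx_{0,k},\bu\>$ are i.i.d. in $k$ with variance $\bu^\sT\bSigma\bu \asymp d$. The norms $\|\bx_{0,k}\|^2$ also fluctuate, but jointly they form i.i.d. scores $s_k$ of scale $\sqrt d$. I would need an anti-concentration bound: $\P(|s_k-s_l|\le d^{1/2-\epsilon'})$ is small for the top two. LSI alone does not give densities, so I would instead require
\[
1-\sum_k\omega_k^2 \le 2\sum_{k\ne k^*}\omega_k \le 2\sum_{k\neq k^*} e^{-(s_{k^*}-s_k)} .
\]
Bounding this in expectation calls for a small-ball estimate. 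I expect this to be the main obstacle. The fallback is to bound it on average over $\bx_0$, using the smoothing supplied by the Gaussian component $\sigma\bz$ in $\bu$: conditionally on the data, $s_k$ depends on $\bz$ through $\<\bx_{0,k},\bz\>$, which is Gaussian, so differences $s_k-s_l$ have Gaussian components with variance $\asymp d$. That yields the small-ball bound $O(d^{-\epsilon'})$ by Gaussian anti-concentration. Since the target rate is $\prec d^{-1/2}$, this may require care or may only give a weaker rate. Finally, everything is bounded by $O(\max_k\|\bx_{0,k}\|^2/d)$, so the contribution of the bad events is negligible.

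The second claim then follows directly from Proposition~\ref{prop:gf_fixed_d_endpoint}, since $\lim_{\sft}\cR(\widetilde\boldf_\sft) = \cR(\widehat\boldf^\star)$.
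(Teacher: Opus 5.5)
Your reduction is the paper's own: the identity $\widehat\boldf^\star(\bu)+\bz=\frac{\rho}{\sigma}(\bm m(\bu)-\bx_0)$, the three-term expansion, the overlap bounds for the off-diagonal and cross terms, and Proposition~\ref{prop:gf_fixed_d_endpoint} for the second claim all match. The gap is the one nontrivial step, $\E[1-\sum_k\omega_k(\bu)^2\mid\bX]\prec d^{-1/2}$. You do not prove it, and your sketch would not prove it.

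Gaussian smoothing in $\bz$ gives $\P(|s_k-s_l|\le r)\lesssim r/\sqrt d$ only for a \emph{fixed} pair $(k,l)$. The two leading indices depend on $\bz$, and a union bound over the $\asymp d^2$ pairs is useless. So pairwise anticoncentration alone does not give even your weaker $O(d^{-\epsilon'})$ rate.

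Your threshold $d^{1/2-\epsilon'}$ is also far larger than needed. With $\Delta=S_{(1)}-S_{(2)}$ you have $1-\sum_k\omega_k^2\le 2ne^{-\Delta}$, so $r=L\log d$ with $L>2$ suffices. It is enough to show $\P(\Delta\le r\mid\bx_0,\bX)\lesssim\sqrt{\log n/d}\;r$, which gives $(\log d)^{3/2}/\sqrt d\prec d^{-1/2}$.

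The missing ingredient is anticoncentration for \emph{maxima} of a Gaussian vector, which loses only a $\sqrt{\log n}$ factor rather than $n^2$. In Lemma~\ref{lem:wta-under-lsi} the paper proceeds in three steps.
\begin{enumerate}
\item It conditions on $(\bx_0,\bX)$, so that $(S_k)_{k\le n}$ is jointly Gaussian in $\bz$ with covariances $\frac{\rho^2}{\sigma^2}\<\bx_{0,k},\bx_{0,j}\>$.
\item It draws a balanced partition $[n]=\cA\sqcup\cB$ independent of $\bz$. The top two indices then lie in different blocks with probability at least $1/2$, so $\P(\Delta\le r)\le 2\E_{\cA,\cB}\P(|\max_{\cA}S-\max_{\cB}S|\le r)$.
\item It bounds the latter by the anticoncentration inequality for differences of Gaussian maxima \cite{belloni2024anti}. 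Its hypotheses hold on the very-high-probability overlap event: pairwise correlations are $\lesssim d^{\epsilon-1/2}$, and $\|\bx_{0,k}\|_2-\<\bx_{0,k},\bx_{0,j}\>/\|\bx_{0,k}\|_2\gtrsim\sqrt d$.
\end{enumerate}

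Your first idea, using the i.i.d.\ training points at fixed $\bu$, cannot give a $\prec$ bound. $\cR(\widehat\boldf^\star)$ is a conditional expectation given $\bX$, and averaging over $\bX$ followed by Markov's inequality yields only polynomially small failure probabilities. The anticoncentration has to come from the test pair, as in your fallback.
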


The proofs of the above results are given in Appendix~\ref{app:interpolation-limit}. The test risk of the empirical Bayes denoiser is \emph{twice} the interpolation plateau of Phase II, and it is worth explaining where this doubling comes from. Off-sample, at a fresh noisy point $\bu = \rho\bx + \sigma\bz$, the softmax weights of the empirical score undergo a winner-take-all collapse: $\sum_i \omega_i(\bu)^2 \to 1$ (Lemma~\ref{lem:wta-under-lsi}), so the empirical Bayes denoiser output is $(\rho\bx_{0,i^\star(\bu)}-\bu)/\sigma$  with $\bx_{0,i^\star(\bu)}$ being the single training point  nearest to $\bu/\rho$. Its error thus has two components of identical size, $\|\bx\|^2/d \approx \tau_{\bSigma}$ and $\|\bx_{0,i^\star}\|^2/d \approx \tau_{\bSigma}$, which are near orthogonal in high dimensions, while Phase II's plateau only has the first component.

\section{Generation: the implicit bias of the reverse process}
\label{sec:reverse}

Section~\ref{sec:gf} characterizes the score model learned by the DSM regression problem at each noise level, with sharp asymptotic train and test errors. In generative modeling, however, we are interested in the distribution produced by composing the learned denoisers $\{\widehat \boldf_t\}_{t \in (0,T]}$ through the reverse dynamics~\eqref{eq:intro-reverse_SDE}. The standard reductions from score accuracy to sampling accuracy \cite{chen2023sampling,benton2024nearly} require the $L^2$ score error to be small, whereas in our setting, the score error is of constant order and these bounds are not very informative. This section instead analyzes the reverse SDE driven by the trained score directly, focusing on the early-stopped (non-memorized) regime of gradient flow. We show that, \emph{with high probability along the reverse process, only the linearized part of the learned score is ever evaluated}. The generated distribution is effectively Gaussian, with covariance only depending on the data through the empirical covariance $\widehat \bSigma$. The nonlinear part of the score---the bumps that overfit the samples---is invisible to the sampler.

\subsection{The learned reverse process and its linearized approximation}
\label{sec:reverse-empirical-linearized}

Reintroducing the diffusion time, suppose that for each $t \in (0,T]$ we train by gradient flow at the noise level $(\rho_t,\sigma_t)$ for time $\sft_t$, and write $\widehat \boldf_t\deq \widehat \boldf_{\sft_t,t}$ and $\widehat \bS_t\deq \widehat \boldf_t/\sigma_t$. The \emph{learned reverse process} is
\begin{equation}\label{eq:emp_reverse_SDE}
    \d\widehat{\bx}_t = -\bigl(\widehat{\bx}_t + 2\widehat{\bS}_t(\widehat{\bx}_t)\bigr)\,\d t + \sqrt{2}\,\d\bar{\bB}_t, \qquad \widehat{\bx}_T \sim \cN(0, \bI_d),
\end{equation}
run backward on $[t_0, T]$, where $t_0 \in (0,T)$ is the usual early-stopping time of the sampler and $\bar\bB_t$ is a Brownian motion independent of the training data. We stress that the drift of~\eqref{eq:emp_reverse_SDE} is a \emph{nonlinear} random field: it is built from the trained gradient flow estimator, including the bumps. We assume the training-time schedule \(t\mapsto\sft_t\) is deterministic and piecewise continuous on \([t_0,T]\).



For the pathwise comparison, we replace the trained gradient-flow score by a linearized gradient-flow solution. For every \(t>0\), let \(\widetilde{\cK}_{t}:L^2(\widehat\pi_{d,n})\to L^2(\widehat\pi_{d,n})\) be a modification of the effective empirical kernel operator \(\widehat\cK_{\mathrm{eff}}\) at noise level \(t\) that replaces the normalized trace \(\tau_\bSigma\) in the self-induced regularization \(\delta_n\) by the sample norm of the input:
\begin{equation}\label{eq:sample_corrected_effective_operator}
    \widetilde{\cK}_t f(\bx_{0,i},\bz)
    \deq \frac{1}{n}\sum_{j=1}^n \E_{\bz'}\left[
    \frac{\<\rho_t\bx_{0,i}+\sigma_t\bz,
    \rho_t\bx_{0,j}+\sigma_t\bz'\>}{d}
    f(\bx_{0,j},\bz')\right]+\widetilde\delta_{i,t}\E_{\bz'}[f(\bx_{0,i},\bz')],
\end{equation}
where
\begin{equation}
    \widetilde\delta_{i,t} \deq  \frac{h(\rho_t^2\|\bx_{0,i}\|_2^2/d)-\rho_t^2\|\bx_{0,i}\|_2^2/d}{n}
\end{equation}
for \(i\in[n]\). By Remark~\ref{rmk:delta_n_nonnegativity}, applied at
\(\rho^2_t\|\bx\|^2/d\geq0\), one has \(\widetilde\delta_{i,t}\geq0\) for all \(i\in[n]\) and \(t>0\).
Then, for a unit vector $\bv$, define the \emph{linearized score estimator} coordinate-wise as
\begin{equation}\label{eq:linearized_score_estimator}
    \widetilde{S}_{t,\bv}(\bx)
    \deq \frac1{\sigma_t}\left[\cS^\ast_{\mathrm{eff},t}
    \int_0^{\sft_t}\exp(-\sfs\widetilde{\cK}_{t})\sfz_{\bv}\,\de\sfs\right](\bx),
\end{equation}
where $\cS^\ast_{\mathrm{eff},t}:L^2(\widehat\pi_{d,n})\to\cH$ is the effective adjoint,
\begin{equation}
    \cS^\ast_{\mathrm{eff},t}(g) \deq  \E_{(\bx,\bz)\sim \widehat{\pi}_{d,n}}\left[\frac{\< \cdot\,, \rho_t \bx + \sigma_t \bz\>}{d}\,g(\bx,\bz)\right].
\end{equation}
Unwinding the definitions, we show in Lemma~\ref{lem:reverse_linearized_check_score_representation} that \(\widetilde\bS_t(\bx)=\widetilde\bPsi_t\bx/\sigma_t\) for some bounded negative semidefinite matrix \(\widetilde\bPsi_t\in \R^{d\times d}\) depending on \(t\) and the training data \(\{\bx_{0,i}\}_{i=1}^n\).

The \emph{linearized reverse process} is defined as
\begin{equation}\label{eq:linearized_reverse_SDE}
    \d\widetilde{\bx}_t = -\bigl(\widetilde{\bx}_t + 2\widetilde{\bS}_t(\widetilde{\bx}_t)\bigr)\,\d t + \sqrt{2}\,\d\bar{\bB}_t, \qquad \widetilde{\bx}_T \sim \cN(0, \bI_d),
\end{equation}
coupled to~\eqref{eq:emp_reverse_SDE} through the same Brownian motion. Since its drift is linear and its initialization is Gaussian, $\widetilde\bx_t$ is, conditionally on the training data, a Gaussian process.

\subsection{Delocalization and pathwise linearization}
\label{sec:pathwise-delocalization}

The fitted score $\widehat \bS_t$ and its linearization $\widetilde \bS_t$ differ substantially only where the kernel's nonlinearity is active: within the thin tubes around the training points, where the overlaps $\<\bx, \bx_{0,i}\>/d$ are of order one. The linearized process, however, is a Gaussian process in dimension $d$ with bounded covariance; against any \emph{fixed} direction, its overlaps are of order $d^{-1/2}$. We prove (Lemma~\ref{lem:linearized_SDE_bounds}) that this delocalization holds uniformly along the trajectory and over all $n$ training directions:
\[
    \E\left[\sup_{t \in [t_0,T]} \max_{i \in [n]} |\<\widetilde{\bx}_t, \bx_{0,i}\>|^k \;\Big|\; \bX\right]^{1/k} \prec \sqrt{d} \qquad \text{for every } k \in \N .
\]
Thus, with high probability, the sampling trajectory remains nearly orthogonal to every training sample, with overlap of order $d^{-1/2}$. The sampler never visits the `bumps' where the model memorized, so the process driven by the nonlinear score is indistinguishable from the linear surrogate. The following theorem makes this heuristic precise on path measures. To simplify the proofs, we assume a stronger condition on the kernel than Assumption \ref{ass:kernel_func_constant_kappa}.

\begin{assumption}[Kernel regularity for reverse process]\label{ass:kernel_reverse}
      The kernel is a positive definite inner-product kernel of the form $K(\bx,\bx') = h(\<\bx,\bx'\>/d)$. The kernel function $h: \R \to \R$ is five times continuously differentiable, with $h(0) = 0$, $h'(0)= 1$, and $h^{(k)} (0) = 0$ for $k\in\{2,3,4\}$. Furthermore, there exists a constant $C\geq 1$ such that $|h^{(5)}(x)|\leq Ce^{C|x|}$ for all $x\in \R$. 
\end{assumption}

Let \(\R_\infty^d\deq\R^d\cup\{\infty\}\) be the one-point compactification of \(\R^d\). Conditionally on \(\bX\), let \(\widehat{\bx}\) denote the unique maximal solution of the empirical reverse SDE~\eqref{eq:emp_reverse_SDE}, extended beyond its explosion time by making $\infty$ absorbing. Let \(\widetilde{\bx}\) be the unique global solution of the linearized reverse SDE~\eqref{eq:linearized_reverse_SDE}, viewed as an \(\R_\infty^d\)-valued process. We denote their conditional path laws on \([t_0,T]\) by \(\P_{\widehat{\bx}}(\cdot\mid\bX)\) and \(\P_{\widetilde{\bx}}(\cdot\mid\bX)\), respectively. Both are probability measures on \(C([t_0,T];\R_\infty^d)\), as the linearized process~\eqref{eq:linearized_reverse_SDE} is nonexplosive and any explosive empirical trajectory of~\eqref{eq:emp_reverse_SDE} is continuous in the topology of the one-point compactification after being absorbed at $\infty$.

\begin{theorem}[Pathwise linearization of the reverse process]
\label{thm:kl_bound_empirical_linearized_reverse}
    Let \(T>0\) and \(t_0\in(0,T)\) be constants, and let
    \(t\mapsto\sft_t\) be a deterministic piecewise-continuous schedule.
    Suppose that Assumptions~\ref{ass:high_dim},
    \ref{ass:input_dist}, and~\ref{ass:kernel_reverse} hold.
    Then
    \[
        \KL\left(\P_{\widetilde{\bx}}(\cdot\mid\bX)\mmid\P_{\widehat{\bx}}(\cdot\mid\bX)\right)
        \prec
        \sup_{t\in[t_0,T]}\sft_t^2d^{-5}\left(1+\sft_t^2+\sft_t^4d^{-2}\right).
    \]
\end{theorem}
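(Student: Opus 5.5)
Both processes share the initial law $\cN(0,\bI_d)$ and the diffusion coefficient $\sqrt2$, so I would compare them through Girsanov's theorem with the reference measure being the law of the nonexplosive linearized process $\widetilde\bx$. Conditionally on $\bX$, the chain rule for relative entropy together with Girsanov (in the localized form that handles the possible explosion of $\widehat\bx$, via stopping at exit times of large balls and monotone convergence) gives
\[
\KL\bigl(\P_{\widetilde\bx}(\cdot\mid\bX)\mmid\P_{\widehat\bx}(\cdot\mid\bX)\bigr)\le \E\Bigl[\int_{t_0}^T \bigl\|\widehat\bS_t(\widetilde\bx_t)-\widetilde\bS_t(\widetilde\bx_t)\bigr\|_2^2\,\de t\;\Big|\;\bX\Bigr].
\]
The drift difference is $2(\widehat\bS_t-\widetilde\bS_t)$ and the noise is $\sqrt2$, so the Girsanov constant is $1$. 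The point of placing $\widetilde\bx$ in the first argument is that the score discrepancy only needs to be evaluated along the Gaussian, delocalized trajectory.

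\emph{Step 1: pointwise representation of the discrepancy.} By \eqref{eq:gf_solution}, $\widehat f_{t,\bv}=\cS^\ast\int_0^{\sft_t}e^{-\sfs\widehat\cK}\widehat f^\star_\bv\,\de\sfs$. Its linearized counterpart is \eqref{eq:linearized_score_estimator} with $\cS^\ast_{\mathrm{eff}}$, $\widetilde\cK_t$ and $\sfz_\bv$ in place of $\cS^\ast$, $\widehat\cK$ and $\widehat f^\star_\bv$. I would telescope the difference into three pieces.
\begin{enumerate}
\item Replace $\widehat f^\star_\bv$ by $\sfz_\bv$. This costs $e^{-d^c}$ by Proposition~\ref{prop:bayes_opt_denoiser_linearization}, times $\sft_t$ and the kernel norms.
\item Replace $e^{-\sfs\widehat\cK}$ by $e^{-\sfs\widetilde\cK_t}$. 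By Duhamel this costs at most $\sfs\|\widehat\cK-\widetilde\cK_t\|_\op$. Here I need a strengthening of Proposition~\ref{prop:linearization_kernel_op} under Assumption~\ref{ass:kernel_reverse}: using the sample norm correction $\widetilde\delta_{i,t}$ and the vanishing of $h''(0),h'''(0),h^{(4)}(0)$, I would aim for an operator error of order $d^{-5/2}$ rather than $d^{-3/2}$.
\item Replace the evaluation $\cS^\ast g(\bx)=\E[h(\<\bx,\bu'\>/d)g]$ by $\cS^\ast_{\mathrm{eff}}g(\bx)=\E[\<\bx,\bu'\>/d\,g]$ at the point $\bx=\widetilde\bx_t$.
\end{enumerate}
Since $h(s)-s=O(s^5e^{C|s|})$, the third error at a point $\bx$ is bounded by
\[
\max_i \sup_{\bz'}\Bigl|\frac{\<\bx,\rho\bx_{0,i}+\sigma\bz'\>}{d}\Bigr|^5\cdot\|g\|_{L^2(\widehat\pi_{d,n})},
\]
with the Gaussian $\bz'$ handled in $L^2$ rather than by a supremum. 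The time integral contributes a factor $\sft_t$ through $\|g\|$ bounds such as $\|\int_0^{\sft}e^{-\sfs\widetilde\cK}\sfz_\bv\|\le \sft$. The second error has size $\sft_t^2 d^{-5/2}\cdot\|\text{linear evaluation}\|$. Squaring, these pieces should give terms like $\sft_t^2 d^{-5}(\sft_t^2+\sft_t^4d^{-2})$ plus a leading $\sft_t^2d^{-5}$ from the overlap term, which matches the shape of the claimed bound. The sum over $d$ coordinates $\bv$ is offset by the $1/\sigma_t$ normalization and the $\|\cdot\|_2^2$ structure.

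\emph{Step 2: delocalization along $\widetilde\bx$.} I would invoke Lemma~\ref{lem:linearized_SDE_bounds}: $\E[\sup_t\max_i|\<\widetilde\bx_t,\bx_{0,i}\>|^k\mid\bX]^{1/k}\prec\sqrt d$. The noise overlaps $\<\widetilde\bx_t,\bz'\>$ are similarly $O(\sqrt d)$, after averaging over $\bz'$ with Gaussian moments. Hence every overlap $\<\widetilde\bx_t,\bu'\>/d$ is $\prec d^{-1/2}$, and its fifth power is $\prec d^{-5/2}$. Plugging these moment bounds into Step 1 and integrating over $[t_0,T]$ yields the stated rate. The exponential growth factor $e^{C|s|}$ in $h^{(5)}$ is controlled by the all-moments statement, since $|s|\prec d^{-1/2}$ with very high probability.

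\emph{Main obstacle.} I expect Step 1 to be the hard part, in particular obtaining pointwise rather than $L^2$ control. The nonlinear evaluation error must hold uniformly over the functions $g=\int e^{-\sfs\widehat\cK}\sfz_\bv$, which live on the infinite-dimensional tube space. This requires a power-series expansion of $h$ in $\<\bx,\rho\bx_{0,i}+\sigma\bz'\>/d$ in which the dependence on $\bz'$ is integrated against $g(\bx_{0,i},\bz')$ via Cauchy--Schwarz over the tube. A further subtlety is the operator approximation $\widehat\cK\approx\widetilde\cK_t$ at order $d^{-5/2}$. If that is not attainable, I would instead keep $\widehat\cK$ in the exponent and linearize only the outer evaluation. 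Either way, I would control the remaining cross terms through the data-dependent but bounded matrix $\widetilde\bPsi_t$ of Lemma~\ref{lem:reverse_linearized_check_score_representation}.
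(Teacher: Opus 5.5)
You have the right overall architecture, and it matches the paper: the Girsanov reduction with the nonexplosive $\widetilde\bx$ as reference (constant $1$), the replacement of $\widehat f^\star_\bv$ by $\sfz_\bv$, and the bound on the outer evaluation error $\phi_{t,\bx}-\phi_{\mathrm{eff},t,\bx}$ via delocalization, giving the leading term $\sft_t^2d^{-5}$.

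The gap is in your second piece. The strengthening $\|\widehat\cK_t-\widetilde\cK_t\|_\op\prec d^{-5/2}$ is false, not merely unproven. The vanishing of $h''(0),h'''(0),h^{(4)}(0)$ only improves the cross-tube part, giving $\|\Delta_{\mathrm{od},t}\|_\op\prec d^{-5/2}$. Within a single tube, $U_{ii,t}$ concentrates at $\eta_{i,t}=\rho_t^2\|\bx_{0,i}\|_2^2/d$, where $h'(\eta_{i,t})\neq1$ for a generic nonlinear $h$. Hence $D_{ii,t}\approx(h'(\eta_{i,t})-1)(U_{ii,t}-\eta_{i,t})$ fluctuates at scale $d^{-1/2}$, and the tube-constant correction $\widetilde\delta_{i,t}$ cannot remove this. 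Concretely, pairing $\Delta_{\mathrm d,t}$ against the unit vectors $\sqrt n\,\ind_{\{i\}}$ and $\sqrt n\,\ind_{\{i\}}\langle\bx_{0,i},\bz\rangle/\|\bx_{0,i}\|_2$ gives a value $\asymp d^{-3/2}$, so the bound in Lemma~\ref{lem:reverse_operator_linearization_high_order} is sharp. With it, first-order Duhamel gives at best a squared score error of order $\sft_t^4d^{-4}$, which is of order one already at $\sft_t\asymp d$. Your fallback of keeping $\widehat\cK$ in the exponent does not escape this. The surrogate $\widetilde\bS_t$ is built from $\widetilde\cK_t$, so you must still bound $\langle\phi_{\mathrm{eff},t,\bx},(\widehat\cA_t-\widetilde\cA_t)\sfz_\bv\rangle$, where $\widehat\cA_t,\widetilde\cA_t$ are the time integrals up to $\sft_t$ of the two semigroups. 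Boundedness of $\widetilde\bPsi_t$ gives no smallness for this difference.

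The missing idea is to work on the $(n+d)$-dimensional subspace $\widetilde\cV$ of functions $(\bx_{0,i},\bz)\mapsto\sqrt n\,a_i+\langle\bb,\bz\rangle$. Three facts make this work:
\begin{itemize}
\item both $\sfz_\bv$ and $\phi_{\mathrm{eff},t,\bx}(\bx',\bz)=\langle\bx,\rho_t\bx'+\sigma_t\bz\rangle/d$ lie in $\widetilde\cV$;
\item $\widetilde\cV$ reduces $\widetilde\cK_t$;
\item the compressed diagonal error satisfies $\|\proj_{\widetilde\cV}\Delta_{\mathrm d,t}\proj_{\widetilde\cV}\|_\op\prec d^{-2}$ (Lemma~\ref{lem:reverse_diagonal_subspace_kernel}).
\end{itemize}
The third point is exactly where the sample-dependent $\widetilde\delta_{i,t}$ is needed; with the common $\delta_{n,t}$, a $d^{-3/2}$ term survives even on $\widetilde\cV$.

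Since $e^{-r\widehat\cK_t}$ does not commute with $\proj_{\widetilde\cV}$, the paper expands Duhamel to second order. The first-order term has $e^{-r\widetilde\cK_t}$ on both sides and compresses to $\prec\sft_t^2(d^{-2}+d^{-5/2})$. The second-order remainder is bounded crudely by $\sft_t^3\|\widehat\cK_t-\widetilde\cK_t\|_\op^2\prec\sft_t^3d^{-3}$. This is also why the order of your telescoping matters. Linearize the outer evaluation first, against $\widehat\cA_t\widehat f^\star_\bv$, so that the operator comparison becomes a bilinear form between two elements of $\widetilde\cV$.

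The sum over the $d$ coordinates must also be handled carefully. For the operator term it goes through Bessel's inequality for the orthonormal family $\{\sfz_{\bv_\ell}\}$. For the evaluation term it goes through $\|\cF_t\|_\op\le1$, where $\cF_t:\bv\mapsto\widehat f^\star_{t,\bv}$. It is not ``offset by the $1/\sigma_t$ normalization''; summing per-coordinate bounds loses a factor $d$. Combined with $\|\phi_{\mathrm{eff},t,\widetilde\bx_t}\|^2\lesssim d^{-1}$, this yields $\sft_t^4d^{-5}+\sft_t^6d^{-7}$, which are exactly the remaining terms of the theorem.

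A minor point: $|U|\prec d^{-1/2}$ with very high probability does not control $\E[e^{C|U|}]$ on the exceptional event. You need the exponential-moment bound on $\sup_t\|\widetilde\bx_t\|_2^2/d$ obtained from Borell--TIS in Lemma~\ref{lem:linearized_SDE_bounds}.
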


The proof is given in Appendix~\ref{app:reverse_process}. In particular, if \(\sup_{t\in[t_0,T]}\sft_t\lesssim d^{1+c}\) for some \(0\le c<1/6\), then\footnote{This restriction on $\sft_t$ is an artifact of our proof technique, and we expect the path linearization to hold more broadly for score estimators trained throughout Phase II. Moreover, the range $\sft_t\lesssim d^{1+c}$ already includes a regime in which the training loss vanishes polynomially in $d$.}
\[
    \KL\left(
    \P_{\widetilde{\bx}}(\cdot\mid\bX)
    \mmid
    \P_{\widehat{\bx}}(\cdot\mid\bX)
    \right)
    \prec d^{-1+6c}=o_{d,\P}(1).
\]
Thus, the conditional KL divergence converges to zero in probability. Nonlinear overfitting is invisible to the sampler, and the generated distribution is effectively Gaussian with covariance determined by the linearized score. 

\subsection{The Gaussian generated distribution}
\label{sec:generated-dist}

\begin{figure}[t]
    \centering
    \includegraphics[width=1\textwidth]{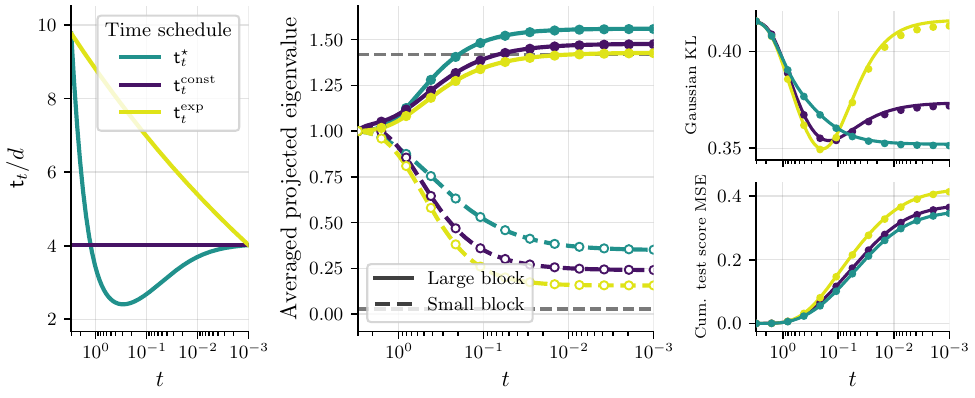}
    \caption{Recovery of two covariance scales along the linearized reverse process for $d=720$, $n=800$, $T=3$, $t_0=10^{-3}$, $h(x)=x+0.005x^5$ and $\pi_0=\cN(0,\bSigma)$ where \(\bSigma = \diag(\frac{17}{12}\bI_{504},\frac{1}{36}\bI_{216})\) has condition number \(51\) and \(\tau_\bSigma=1\).
    \textbf{Left:}  A schedule $t\mapsto\sft_t$ specifies the gradient-flow training time in~\eqref{eq:gradient_flow} at every diffusion time $t$. We consider the pointwise DSM-optimal schedule $\sft_t^\star\in\arg\min_{\sft/d\in[10^{-1},10^2]} \cR_{\mathrm{det},t}^{(\sft)}$. We compare it with the constant schedule $\sft_t^{\mathrm{const}}=\sft_{t_0}^\star$ and an exponential schedule $\sft_t^{\mathrm{exp}}$ interpolating between $\sft_T^\star$ and $\sft_{t_0}^\star$ on a logarithmic diffusion-time scale.
    \textbf{Middle:} Solid and dashed lines show, respectively, the average projected covariances \(\Tr(\proj_{\pm}\widetilde{\bSigma}_{t})/\operatorname{rank}(\proj_{\pm})\) along the dynamics in~\eqref{eq:covariance_ODE_linearized_reverse}, where \(\proj_+\) and \(\proj_-\) denote the projections onto the large- and small-eigenvalue eigenspaces of \(\bSigma\), respectively. Filled and open circles show the corresponding deterministic predictions from Theorem~\ref{thm:effective_covariance_bound}, while gray dashed lines mark the population eigenvalues.
    \textbf{Upper right:}
    Solid curves show the conditional normalized Gaussian divergence $\KL(\cN(0,\bSigma)\mmid\cN(0,\widetilde{\bSigma}_t))/d$. Markers show a natural spectral plug-in prediction using~\eqref{eq:s_t_x_ode_def}.
    \textbf{Lower right:} Solid curves show the cumulative population score MSE \(\int_t^T(\cR_{s}(\widehat{\boldf}^{\mathrm{lin}}_{\sft_s})-\cR_s^\star)/\sigma_s^2\d s\), where \(\widehat{\boldf}^{\mathrm{lin}}_{\sft_s}\) is the linearized estimator from Lemma~\ref{lem:reverse_linearized_check_score_representation} and \(\cR^\star_s\) is the Bayes population DSM risk at diffusion time \(s\). Markers show the deterministic prediction from Theorem~\ref{thm:gf_train_test_equivalents_moderate_time}.
    }
    \label{fig:covariance_evolution}
\end{figure}

It remains to identify the linear process~\eqref{eq:linearized_reverse_SDE}, which is Gaussian with zero mean; its conditional covariance $\widetilde{\bSigma}_t \deq  \E[\widetilde{\bx}_t\widetilde{\bx}_t^\sT\mid \bX]$ satisfies, by It\^o's formula, the matrix ODE
\begin{equation}\label{eq:covariance_ODE_linearized_reverse}
    \frac{\d}{\d t}\widetilde{\bSigma}_t = -2\widetilde{\bSigma}_t - 2\E[\widetilde{\bS}_t(\widetilde{\bx}_t)\widetilde{\bx}_t^\sT\mid \bX] - 2\E[\widetilde{\bx}_t\widetilde{\bS}_t(\widetilde{\bx}_t)^\sT\mid \bX] - 2\bI_d ,\qquad \widetilde{\bSigma}_T = \bI_d.
\end{equation}

The covariance \(\widetilde\bSigma_t\) is well-approximated by an auxiliary covariance flow with coefficient matrices that are spectral functions of the sample covariance \(\widehat\bSigma=\bX\bX^\sT/n\) (see Lemma~\ref{lem:linearized_covariance_approximation}). Consequently, the
covariance ODE of this auxiliary process diagonalizes in an eigenbasis of \(\widehat\bSigma\) and reduces to a scalar ODE along its eigenvalues. Lemma~\ref{lem:spectral_linearized_covariance_deterministic_equivalent} then identifies its deterministic equivalent in terms of the matrix-valued measure \(\bOmega\) from \eqref{eq:resolvent_fixed_point_measure}. Let \(\psi_t(x,\sft_t)\) be the time-dependent analogue of the spectral function \(\psi(x,\sft)\) defined in~\eqref{eq:gf_lin_profiles}.
We define the \emph{effective covariance}
\begin{equation}
    \bar{\bSigma}_t = \int_\R  \bar{\Lambda}_t(\lambda)\, \bOmega(\d \lambda),
\end{equation}
where the scalar profile $\bar\Lambda_t(x)$ solves the backward ODE
\begin{equation}\label{eq:s_t_x_ode_def}
    \frac{\d}{\d t}\bar{\Lambda}_t(x)
    =
    -2\left(1+\frac{2}{\sigma_t}\psi_t(x,\sft_t)\right)\bar{\Lambda}_t(x)-2,
    \qquad \bar{\Lambda}_T(x)=1.
\end{equation}

\begin{theorem}\label{thm:effective_covariance_bound}
    Let $T>0$ and $t_0\in (0,T)$ be constants, and let $t\mapsto\sft_t$ be a deterministic piecewise-continuous training-time schedule. Suppose that Assumptions~\ref{ass:high_dim},~\ref{ass:input_dist}, and~\ref{ass:kernel_reverse} hold. Then, for every deterministic $0\preceq \bA\in \R^{d\times d}$ such that $\|\bA\|_\ast\leq 1$ and constants $\epsilon,D>0$, there exists a constant $C>0$ such that
    \[
        \sup_{t\in [t_0,T]}\left|\Tr\bigl(\bA(\widetilde{\bSigma}_t-\bar{\bSigma}_t)\bigr)\right| \lesssim d^{\epsilon - \frac{1}{2}}\left( 1 +  d^{-1}\sup_{t\in[t_0,T]}\sft_t \right)
    \]
    with probability at least $1-Cd^{-D}$.
\end{theorem}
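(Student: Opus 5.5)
The argument will go in three steps, following the structure announced before the statement. First, write $\widetilde\bS_t(\bx)=\widetilde\bPsi_t\bx/\sigma_t$ (Lemma~\ref{lem:reverse_linearized_check_score_representation}). Then the covariance ODE~\eqref{eq:covariance_ODE_linearized_reverse} becomes the linear matrix ODE
\[
\tfrac{\d}{\d t}\widetilde\bSigma_t=-2\widetilde\bSigma_t-\tfrac{2}{\sigma_t}(\widetilde\bPsi_t\widetilde\bSigma_t+\widetilde\bSigma_t\widetilde\bPsi_t)-2\bI_d,\qquad \widetilde\bSigma_T=\bI_d .
\]
Step 1 compares $\widetilde\bPsi_t$ with the spectral matrix $\psi_t(\widehat\bSigma,\sft_t)$. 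The operator $\widetilde\cK_t$ differs from $\widehat\cK_{\mathrm{eff}}$ only through $\widetilde\delta_{i,t}-\delta_n=O_\prec(d^{-1/2}/n)$, by concentration of $\|\bx_{0,i}\|_2^2/d$ under the LSI. On the span of $\{\<\bw,\rho\bx+\sigma\bz\>+b_i\}$, which is invariant under $\widehat\cK_{\mathrm{eff}}$ and contains $\sfz_\bv$, the flow $\exp(-\sfs\widehat\cK_{\mathrm{eff}})$ reduces to a $(d+n)$-dimensional linear system. Eliminating the offsets $\bb$ gives exactly the $2\times 2$ block $\bQ(\lambda)$ in each eigendirection of $\widehat\bSigma$. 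Duhamel's formula then gives $\|\widetilde\bPsi_t-\psi_t(\widehat\bSigma,\sft_t)\|_\op\prec d^{-1/2}(1+\sft_t/d)$, because the perturbation has size $d^{-1/2}\delta_n$ and is integrated over a time at most $\sft_t$. I would take this as the content of Lemma~\ref{lem:linearized_covariance_approximation}.

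Step 2 is a Grönwall stability argument. Let $\check\bSigma_t$ solve the same ODE with $\widetilde\bPsi_t$ replaced by $\psi_t(\widehat\bSigma,\sft_t)$. Both solutions stay bounded in operator norm on $[t_0,T]$ because $\widetilde\bPsi_t\preceq 0$ and $\sigma_t\ge\sigma_{t_0}>0$. Subtracting the two ODEs, the error $\bE_t=\widetilde\bSigma_t-\check\bSigma_t$ solves a linear ODE with bounded coefficients and a forcing of size $\|\widetilde\bPsi_t-\psi_t\|_\op$. Hence $\sup_t\|\bE_t\|_\op\prec d^{-1/2}(1+\sup_t\sft_t/d)$, and since $\|\bA\|_\ast\le1$, the same bound holds for $|\Tr(\bA\bE_t)|$.

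Step 3 identifies the deterministic equivalent. Since $\check\bSigma_t$ commutes with $\widehat\bSigma$, it equals $\Lambda_t(\widehat\bSigma)$, where $\Lambda_t(x)$ solves the scalar ODE~\eqref{eq:s_t_x_ode_def} with $\psi_t$ evaluated at $x$; that is, $\check\bSigma_t=\bar\Lambda_t(\widehat\bSigma)$. It remains to show
\[
\Tr\bigl(\bA\,\bar\Lambda_t(\widehat\bSigma)\bigr)\approx \int\bar\Lambda_t(\lambda)\Tr(\bA\,\bOmega(\d\lambda)),
\]
which is an anisotropic deterministic equivalent for a smooth spectral function. I would represent $\bar\Lambda_t$ through a Helffer--Sjöstrand formula, or an analytic contour integral, since $x\mapsto\bar\Lambda_t(x)$ extends analytically near the spectrum. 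This reduces the claim to the anisotropic local law $|\Tr(\bA((\widehat\bSigma-z)^{-1}-\bM(z)))|\prec d^{-1/2}$, uniformly over $z$ at distance $\gtrsim1$ from the real line, which holds under the LSI by the resolvent results cited (Lemma~\ref{lem:spectral_linearized_covariance_deterministic_equivalent}).

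The main obstacle is Step 3. Its difficulty is controlling the analyticity and the derivatives of $\bar\Lambda_t(\cdot)$ uniformly in $t$ when $\sft_t$ may reach $d^{1+c}$. Because $\psi_t$ involves $e^{-\sft\bQ(x)}$ with $\sft\|\bQ\|\sim\sft/d$, complex extensions may grow like $e^{C\sft_t/d}$. I would first try to bound them using the fact that $\bQ(x)$ is positive semidefinite with a small imaginary perturbation. If that fails, I would instead use a bounded-variation or Lipschitz bound in $x$ with constant $O(1+\sft_t/d)$ together with a smoothed-indicator approximation; this also accounts for the factor $(1+d^{-1}\sup_t\sft_t)$ in the statement. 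A union bound over a grid in $t$ plus Lipschitz continuity in $t$ then gives the supremum over $[t_0,T]$.
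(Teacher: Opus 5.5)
Your Steps 1 and 2 match the paper's Lemma~\ref{lem:linearized_covariance_approximation}. On the invariant $(n+d)$-dimensional subspace $\widetilde\cV$, the matrices $\widetilde\bK_t$ and the effective matrix differ only by the diagonal $\widetilde\bD_t-\delta_{n,t}\bI_n$, of norm $\prec d^{-3/2}$. Duhamel then gives $\|\widetilde\bPsi_t-\psi_t(\widehat\bSigma,\sft_t)\|_{\op}\prec d^{-3/2}\sft_t$, and Gr\"onwall transfers this to the covariances. Note that this step already produces the whole term $d^{-1/2}\cdot d^{-1}\sup_t\sft_t$ in the statement. Step 3 must therefore give $d^{\epsilon-1/2}$ \emph{uniformly in} $\sft_t$; it does not ``also account for'' that factor.

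The gap is in Step 3. The contour-integral route is right, and it is what the paper does in Lemma~\ref{lem:spectral_linearized_covariance_deterministic_equivalent}. But it depends entirely on a bound for the analytic continuation of $\psi_t(\cdot,\sft)$, and hence of $\bar\Lambda_t(\cdot)$, on a fixed contour $\Gamma$, uniform over $\sft\ge0$. You do not supply this bound, and you suspect growth like $e^{C\sft_t/d}$, which for $\sft_t\sim d^{1+c}$ would destroy the estimate.

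``$\bQ(x)\succeq0$ plus a small imaginary perturbation'' does not by itself give the bound. The entries of $\bQ$ involve $\sqrt{z}$, which has a branch point inside any contour enclosing $[0,\Lambda]$. Also, for complex $z$ the matrix is not Hermitian, so you must control both the real parts of its eigenvalues and the non-normal transient.

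The missing idea is in the proof of Lemma~\ref{lem:gf_train_test_error_deterministic_equivalent}:
\begin{itemize}
\item Conjugate by $\operatorname{diag}(\sqrt{x},1)$. This turns $d\bQ(x)$ into $\bB(z)$, which is affine in $z$ and hence entire.
\item The eigenvalues of $\bB(z)$ satisfy $\lambda_++\lambda_-=\rho^2z+\sigma^2+q$ and $\lambda_+\lambda_-=\sigma^2q$, with $q=d\delta_n\ge0$ real.
\item If $\Re(\rho^2z+\sigma^2)>0$, both eigenvalues have positive real part, because $\lambda_-=\sigma^2q\,\overline{\lambda_+}/|\lambda_+|^2$.
\item This positivity is bounded below uniformly, by the dichotomy $q=0$ or $q\gtrsim1$ from Remark~\ref{rmk:delta_n_nonnegativity}. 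When $q=0$, $\bB$ is rank one with $\bB^2=(\rho^2z+\sigma^2)\bB$.
\item Sylvester's formula then controls the transient, giving $\sup_{u\ge0}\sup_{z\in\Gamma}\|e^{-u\bB(z)}\|_{\op}\lesssim1$. Hence $\sup_{\sft\ge0}\sup_{z\in\Gamma}|\psi_t(z,\sft)|\lesssim1$, with no $e^{C\sft/d}$ growth.
\end{itemize}

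This fixes the contour: a rectangle with $\Im z=\pm1$ and vertical edges at $\Re z=-\eta$ and $\Re z=\Lambda+\eta$, with $\eta<\sigma_{t_0}^2/4$. The contour therefore crosses the real line. The local law you need is Proposition~\ref{prop:outside_local_law}, for $z$ at fixed distance from the limiting support with $|z|\ge\delta$, not for $z$ at distance $\gtrsim1$ from the real line.

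Backward Gr\"onwall then bounds $\bar\Lambda_t(z)$ on $\Gamma$ uniformly in $t$, and Cauchy's formula plus the outside-support law finish the argument. The supremum over $t$ comes for free, since the local-law event does not depend on $t$; no grid argument is needed.

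Your fallback (a Lipschitz or bounded-variation bound in $x$ plus smoothed indicators) does not close the gap with the tools available. Handling a merely Lipschitz function requires resolvent control near the real axis inside the bulk, which the outside-support law does not provide. It is also unnecessary once the analytic bound above is in hand.
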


The proof of this theorem can be found in Appendix~\ref{sec:proof-effective-covariance}.

Theorems~\ref{thm:kl_bound_empirical_linearized_reverse} and~\ref{thm:effective_covariance_bound} together describe the output distribution for early-stopped gradient flow score estimators. Conditionally on the training data, the generated samples are approximately Gaussian, whose limiting covariance has the deterministic equivalent given by $\bar\bSigma_{t_0}$. Equivalently, every fixed deterministic projection of the generated samples is asymptotically Gaussian with covariance $\bar\bSigma_{t_0}$, where $\bar\bSigma_{t_0}$ is the composition of the Marchenko--Pastur-type map $\bSigma\mapsto\bOmega$ with the ODE flow~\eqref{eq:s_t_x_ode_def}.

As discussed in the introduction, combining the results of Sections \ref{sec:gf} and \ref{sec:reverse} allows us to examine the relationship between generalization to the population DSM objective and generalization to the target distribution. In Figure \ref{fig:covariance_evolution}, we consider a Gaussian target distribution $\pi_0=\cN(0,\bSigma)$, where $\bSigma$ has two eigenspaces and condition number $51$, under three early-stopping schedules (left panel). The middle panel shows the evolution, throughout the reverse process, of the learned covariance $\widetilde{\bSigma}_t$ projected onto each eigenspace for each schedule. The right panel shows the cumulative test score MSE and the KL divergence $\KL\bigl(\pi_0\mmid\cN(0,\widetilde{\bSigma}_t)\bigr)$ along the reverse diffusion process. Notably, an early-stopping schedule that is suboptimal in terms of the DSM objective can nevertheless yield a (slightly) smaller KL divergence when the reverse process is stopped early. We consider exploring further the relationship between score-matching performance and generation quality to be an important future research direction.

\section{Conclusion and future directions}
\label{sec:discussion}

Diffusion models present a statistical puzzle: their training objective is minimized by a memorizing solution, and available sample sizes are far too small for consistent estimation of an unrestricted score function. Yet models trained in practice generalize, producing genuinely new samples that reproduce both global and local statistics of the target distribution. Resolving this puzzle requires characterizing what practical training algorithms learn: which score functions they fit and which distributions the resulting reverse diffusion dynamics generate.

Motivated by analogous successes in the theory of supervised learning, we studied lazy denoising score matching in the proportional high-dimensional regime. At each noise level, we fit a kernel model corresponding to a neural network trained in the lazy regime. We characterize the estimator learned by gradient flow and derive precise asymptotics for its training and test errors throughout the optimization trajectory. The estimator passes through three distinct phases (Figure \ref{fig:risk_three_regimes}):
\[
    \underbrace{\text{$\sft = \Theta(d)$: generalization}}_{\substack{  \text{Phase I: } \widehat \boldf \approx \psi^{\mathrm{opt}}(\bX\bX^{\sT}/n)\,\bu \\ \text{covariance learning}}
    }
    \quad\longrightarrow\quad
    \underbrace{\text{$\sft = d^{1+\Theta(1)}$: overfitting}}_{\substack{ \text{Phase II: } \widehat \boldf \approx -\bu/\sigma \\ \text{noise score}} }
    \quad\longrightarrow\quad
    \underbrace{\text{$\sft = d^{\omega(1)}$: memorization} }_{\substack{\text{Phase III: } \widehat \boldf \approx (\rho \bx_{0,i^\star(\bu)} - \bu)/\sigma \\ \text{empirical Bayes denoiser}} }.
\]
The population risk initially decreases and attains an optimum at stopping time $\sft^\star=\Theta(d)$, corresponding to a spectral estimator applied to the empirical data covariance. It then rises to $\tfrac{\rho^2}{\sigma^2}\tau_{\bSigma}$ as the estimator loses its dependence on the data and approaches the pure-noise score, before ultimately reaching $2\tfrac{\rho^2}{\sigma^2}\tau_{\bSigma}$ upon memorizing the training set.

Our analysis separates three phenomena that are often conflated: generalization in sampling, overfitting (vanishing training objective), and memorization (reproducing training samples). In particular, overfitting does not imply memorization: in the lazy high-dimensional regime, these phenomena are separated by an exponential gap in training time and a model can interpolate its training objective while continuing to generate genuinely new samples. At the level of score estimation, however, interpolation requires the lazy model to discard some of the statistical structure learned from the data. Overfitting therefore degrades population-level generalization, and the separation between the generalization and overfitting timescales is governed by the kernel nonlinearity and the data covariance. At the level of generation, by contrast, a form of benign overfitting remains possible. The learned score decomposes into a global linear component and localized corrections around the training samples. Because the reverse diffusion trajectory is delocalized, it typically avoids these corrections, so the SDE stays effectively linear and produces a Gaussian distribution.

Our results suggest several natural directions for future research: 
\begin{enumerate}
    \item[(1)] \emph{Beyond the proportional regime.}
    Our analysis focuses on $n\asymp d$, where the learned score is asymptotically at most linear. In supervised learning, the polynomial scaling $n\asymp d^\ell$ gives rise to a precise hierarchy: inner-product kernel methods learn exactly the degree-$\leq\ell$ polynomial component of the target~\cite{ghorbani2021linearized,mei2022hypercontractivity,misiakiewicz2024six}. We expect an analogous hierarchy for lazy generative models, in which the generated distribution progressively acquires non-Gaussian corrections and captures higher-order moments of $\pi_0$ as $\ell$ increases. Establishing this generative hierarchy---and, in particular, extending the delocalization argument to scores with higher-degree components---is a natural next step. Although these polynomial sample-size regimes may be less directly relevant to practice, they would help delineate the capabilities of lazy training from those of feature learning.
    
    \item[(2)] \emph{Feature learning.}
At proportional sample sizes, any non-Gaussian structure captured by the generated distribution must arise from feature learning or architectural inductive bias. Solvable analyses of feature learning in shallow autoencoders~\cite{CuiZde23Denoising,Cui+24LearningGenerative,CuiPehLu25Solvable} and the analysis of convolutional score models~\cite{kamb2024analytic} provide important first steps. A theory coupling feature learning to the reverse process analysis developed here could determine which learned features are actually encountered by the sampler and, consequently, which non-Gaussian statistics are acquired first and at what cost in samples and optimization time.

    \item[(3)] \emph{Low-dimensional and structured data.}
Our assumptions exclude data supported on, or concentrated near, low-dimensional manifolds, where memorization and generalization interact with the geometry of the support~\cite{pidstrigach2022score,achilli2025memorization,GeoMac26Manifold,wang2025diffusion,ross2025geometric}. A natural next step is to extend our analysis to these structured data settings.


    \item[(4)] \emph{Coupled noise levels and practical samplers.} We train an independent RKHS model at each noise level, whereas practical diffusion models use a single time-conditioned network. Parameter sharing couples the learning problems across noise levels and understanding what changes in this case is an important direction.
\end{enumerate}

\section*{Acknowledgments and use of AI}

J.S.~was supported in part by the Yale Office of the Provost. We used AI tools to assist in constructing the example of an unstable ridgeless limit in Appendix~\ref{sec:ridgeless-instability} and in writing code to generate the figures. Their use was otherwise limited to light editing and proofreading. We take full responsibility for the content of the paper.








\bibliographystyle{amsalpha}
\bibliography{biblio}

\clearpage

\appendix

\section{Ridge-regularized denoising score matching}
\label{app:ridge}

This appendix presents the exact asymptotics for the train and test risks of the kernel ridge estimator $\widehat f_{\kappa,\bv}$ of~\eqref{eq:opt_denoiser} at a fixed noise level $\rho,\sigma>0$, across the three regimes of the (deterministic) penalty $\kappa$. Under the calibration $\kappa = d/\sft$, the three regimes correspond to the three phases of Section~\ref{sec:gf}: moderate $\kappa$ (equivalently $\sft \lesssim d^{1+c}$), polynomially vanishing $\kappa$ (polynomial $\sft$; training loss interpolation), and the ridgeless limit $\kappa \to 0^+$ (the memorization endpoint in Theorem~\ref{thm:ridgeless_main}, proved in Appendix~\ref{app:interpolation-limit}). 

\subsection{Moderate regularization}
\label{app:ridge-moderate}

For $\kappa = \Omega_d(d^{-c})$ with a small constant $c > 0$, the ridge estimator is stable and the kernel linearization (Proposition~\ref{prop:linearization_kernel_op}) of Section~\ref{sec:linearization} applies directly.
 We work under Assumptions~\ref{ass:high_dim},~\ref{ass:input_dist}, and~\ref{ass:kernel_func_constant_kappa}, and recall $\tau_\bSigma = \Tr(\bSigma)/d$ and the effective operators~\eqref{eq:K_hat_op}. By Proposition~\ref{prop:bayes_opt_denoiser_linearization}, the irreducible train error is exponentially small.

We first define the deterministic equivalents. Let $\bM_\kappa\in \R^{d\times d}$ and $\mu_\kappa > 0$ solve the system
\begin{equation}\label{eq:fixed_point_equation}
    \bM_\kappa \deq  \left(\frac{\bSigma}{1+\mu_\kappa} + \kappa^\star_{\kappa}\bI_d\right)^{-1},\qquad \mu_\kappa = \frac{1}{n}\Tr(\bSigma \bM_\kappa),
\end{equation}
where the auxiliary parameters $\kappa^\star_{\kappa}$ and $\chi_\kappa$ are
\begin{equation}\label{eq:auxiliary_parameters}
    \chi_\kappa \deq  \frac{\kappa}{d}+\frac{h(\rho^2\tau_\bSigma)-\rho^2\tau_\bSigma}{n} = \frac{\kappa}{d} + \delta_n,\qquad \kappa^\star_{\kappa} \deq  \frac{d(\sigma^2+\kappa)\chi_\kappa}{\rho^2\kappa} = \frac{\sigma^2+\kappa}{\rho^2}\left(1 + \frac{d}{n}\frac{h(\rho^2\tau_\bSigma)-\rho^2\tau_\bSigma}{\kappa}\right).
\end{equation}
Note that \(\kappa_\kappa^\star \geq \sigma^2/\rho^2>0\) is bounded away from \(0\) whenever \(\sigma\) is. Existence and uniqueness of the solution to~\eqref{eq:fixed_point_equation} follow from standard interference-function arguments~\cite{yates1995framework,couillet2011random}. Define further the second-order equivalent
\begin{equation}
    \bM^{(2)}_\kappa \deq  \bM^2_\kappa + \frac{\frac{1}{n}\Tr(\bSigma\bM^2_\kappa)}{(1+\mu_\kappa)^2-\frac{1}{n}\Tr(\bSigma^2\bM^2_\kappa)}\,\bM_\kappa\bSigma\bM_\kappa .
\end{equation}

The deterministic equivalent train and test errors are given by
\begin{align}
    \cL_{\mathrm{det}}^{(\kappa)} &\deq  1 - \frac{\sigma^2}{\rho^2\kappa}\left(2\chi_\kappa - \frac{\kappa}{d}\right)\Tr(\bM_\kappa) + \frac{\sigma^2\chi_\kappa(\sigma^2\chi_\kappa d - \kappa(\sigma^2+\kappa))}{\rho^4\kappa^2} \Tr(\bM^{(2)}_\kappa), \label{eq:train_error_equivalent}\\
    \cR_{\mathrm{det}}^{(\kappa)} &\deq  1 - \frac{2\sigma^2\chi_\kappa}{\rho^2\kappa }\Tr(\bM_\kappa) + \frac{\sigma^2\chi_\kappa^2d}{\rho^4\kappa^2 }\Tr(\bSigma_\sigma \bM^{(2)}_\kappa), \label{eq:test_error_equivalent}
\end{align}
where $\bSigma_\sigma = \rho^2\bSigma + \sigma^2\bI_d$. Both quantities are explicit functions of the spectrum of $\bSigma$, computable by solving the scalar fixed point for $\mu_\kappa$. These asymptotics are equivalent to the asymptotics in the equivalent linearized risk \eqref{eq:effective-train-risk-linear} and \eqref{eq:effective-test-risk-linear} with linearized kernel $\widehat\cK_{\mathrm{eff}}$ and linearized RKHS $\cH_{\mathrm{eff}}$; the nonlinear part of the kernel only enters through the shift $\delta_n$ in $\chi_\kappa$. 

\begin{theorem}[Ridge DSM with moderate regularization]\label{thm:train_test_equivalents_constant_kappa}
    Suppose that Assumptions~\ref{ass:high_dim},~\ref{ass:input_dist} and~\ref{ass:kernel_func_constant_kappa} hold. Then,
    \[
        |\cL(\widehat{\boldf}_{\kappa};0)- \cL_{\mathrm{det}}^{(\kappa)}| \prec \frac{\kappa+1}{\kappa \sqrt{d}},\qquad  |\cR(\widehat{\boldf}_{\kappa})- \cR_{\mathrm{det}}^{(\kappa)}| \prec \frac{\kappa^3+1}{\kappa^3 \sqrt{d}}.
    \]
    Furthermore, if $h(x) = x$ for all $x\in \R$, there exists a constant $c>0$ such that
    \[
        |\cL(\widehat{\boldf}_{\kappa};0) - \cL^{(\kappa)}_{\mathrm{det}}| \prec \frac{1}{\sqrt{d}}, \qquad |\cR(\widehat{\boldf}_{\kappa}) - \cR_{\mathrm{det}}^{(\kappa)}| \prec \frac{e^{-d^c}}{\kappa^2} + \frac{1}{\sqrt{d}}.
    \]
\end{theorem}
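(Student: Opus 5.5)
The proof reduces the ridge problem to a finite-dimensional linear ridge problem and then applies standard anisotropic deterministic equivalents, in three steps.

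\textbf{Step 1: replace the target and the operators.} Fix a unit vector $\bv$ and work coordinate-wise. By Proposition~\ref{prop:bayes_opt_denoiser_linearization}, we may replace the target $\widehat f^\star_\bv$ by $\sfz_\bv$ at cost $e^{-d^c}$. For the train error this uses $\|\frac{\kappa}{d}(\widehat\cK+\frac{\kappa}{d})^{-1}\|_\op\le 1$. For the test error it uses $\|\cK(\widehat\cK+\frac{\kappa}{d})^{-1}\|_\op\lesssim d/\kappa$, or a sharper bound via the factorization $\cT\cS^\ast$. This step produces the $e^{-d^c}/\kappa^2$ term in the linear case.

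Next, Proposition~\ref{prop:linearization_kernel_op} replaces $\widehat\cK,\cK$ by $\widehat\cK_{\mathrm{eff}},\cK_{\mathrm{eff}}$ with operator error $d^{-3/2}$. The resolvent identity gives
\[
\|(\widehat\cK+\tfrac\kappa d)^{-1}-(\widehat\cK_{\mathrm{eff}}+\tfrac\kappa d)^{-1}\|_\op\le (d/\kappa)^2 d^{-3/2}=\frac{\sqrt d}{\kappa^2}.
\]
This bound is too lossy on its own. So I will bound the perturbation only after applying it to $\sfz_\bv$ and composing with $\cK_{\mathrm{eff}}$. I will use that $\cK_{\mathrm{eff}}$ and $\widehat\cK_{\mathrm{eff}}$ have rank at most $d+n$ apart from the averaging part, so $\cK_{\mathrm{eff}}(\widehat\cK_{\mathrm{eff}}+\kappa/d)^{-1}$ is $O(1/\kappa)$ in operator norm. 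Combining this with $\|\cK\|_\op=O(1)$ yields errors polynomial in $1/\kappa$ times $d^{-1/2}$. This explains the powers $\kappa^{-1}$ (train) and $\kappa^{-3}$ (test). When $h(x)=x$, this linearization step is exact.

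\textbf{Step 2: solve the effective problem in closed form.} The effective ridge problem is exactly \eqref{eq:effective-train-risk-linear} over $\cH_{\mathrm{eff}}$. I minimize over each $b_i$ explicitly: the optimal $b_i$ is a shrunk copy of $-\rho\<\bw,\bx_{0,i}\>$. The residual quadratic in $\bw$ then has Hessian
\[
\sigma^2\bI+\kappa\bI+\frac{\rho^2\kappa}{\kappa+d\delta_n}\widehat\bSigma,
\]
which yields \eqref{eq:solution-effective-ridge}. Substituting $\widehat\bw$ expresses the train and test errors as rational functions of $\widehat\bSigma$:
\begin{itemize}
\item the test error equals $1+2\sigma\<\bv,\widehat\bw\>+\widehat\bw^\sT\bSigma_\sigma\widehat\bw$;
\item the train error involves $\widehat\bw^\sT(\sigma^2\bI+\rho^2\frac{\kappa}{\kappa+d\delta_n}\widehat\bSigma)\widehat\bw$ and the linear term.
\end{itemize}
Averaging over an orthonormal basis $\bv_i$ turns these into normalized traces of $R=(\widehat\bSigma+\kappa^\star_\kappa\bI)^{-1}$, of $R^2$, and of $R\bSigma_\sigma R$. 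Here $\kappa^\star_\kappa$ is obtained after factoring out $\rho^2\kappa/(\kappa+d\delta_n)$, consistent with \eqref{eq:auxiliary_parameters}.

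\textbf{Step 3: deterministic equivalents and the main obstacle.} I apply anisotropic deterministic equivalents for $\widehat\bSigma$ under LSI concentration \cite{misiakiewicz2024non,latourelle2026dyson}:
\[
\Tr(\bA R)\approx \Tr(\bA\bM_\kappa),\qquad \Tr(\bA R\bB R)\approx \Tr(\bA\bM^{(2)}_\kappa)\ \text{(the standard second-order formula)},
\]
each with error $\prec d^{-1/2}$ uniformly in $\kappa$, since $\kappa^\star_\kappa\ge\sigma^2/\rho^2$ is bounded away from zero. Matching the coefficients then recovers \eqref{eq:train_error_equivalent}--\eqref{eq:test_error_equivalent}.

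Finally, a union bound over the $d$ basis vectors is harmless under $\prec$.

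The main obstacle is Step 1 for the test error: obtaining only polynomial-in-$1/\kappa$ losses, not $\sqrt d/\kappa^2$. I plan to write $\cK=\cK_{\mathrm{eff}}+E$ and $\widehat\cK=\widehat\cK_{\mathrm{eff}}+\widehat E$ and expand the difference of the two test predictors into three terms. Each term is controlled using the operator bound $\|\cS^\ast(\widehat\cK+\kappa/d)^{-1}\|\le\sqrt{d/\kappa}$ and the fact that $E,\widehat E$ are $d^{-3/2}$, after rescaling by $d$ consistent with $\kappa/d$ ridge units.
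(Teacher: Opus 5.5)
Your architecture is the paper's: replace $\widehat f^\star_\bv$ by $\sfz_\bv$ via Proposition~\ref{prop:bayes_opt_denoiser_linearization}, replace $\widehat\cK,\cK$ by $\widehat\cK_{\mathrm{eff}},\cK_{\mathrm{eff}}$ via Proposition~\ref{prop:linearization_kernel_op}, solve the effective problem in closed form, and finish with resolvent deterministic equivalents. Eliminating the $b_i$ is equivalent to the paper's inversion of the $2d\times 2d$ block matrix by which $\widehat\cK_{\mathrm{eff}}+\frac{\kappa}{d}\id$ acts on linear functions.

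\textbf{The gap is in Step~1.} You assume $\|\cK\|_\op=O(1)$ and justify $\|\cK_{\mathrm{eff}}(\widehat\cK_{\mathrm{eff}}+\frac{\kappa}{d}\id)^{-1}\|_\op=O(1/\kappa)$ by a rank count. Rank gives no control of operator norms, and at the scale $\|\cK\|_\op=O(1)$ each estimate you list loses a positive power of $d$; for example, $\|\cK\|_\op\|(\widehat\cK+\frac{\kappa}{d}\id)^{-1}\|_\op\le d/\kappa$.

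The missing ingredient is the scale of the transfer operator, $\|\cK\|_\op\vee\|\cK_{\mathrm{eff}}\|_\op\prec d^{-1}$ (Lemma~\ref{lem:kernel_op_norm_bound}). Indeed $\cK_{\mathrm{eff}}f(\bx,\bz)=\<\rho\bx+\sigma\bz,\ba_f\>$ with $\ba_f\deq\frac1d\E_{\widehat\pi_{d,n}}[f(\bx',\bz')(\rho\bx'+\sigma\bz')]$. Hence $\|\cK_{\mathrm{eff}}f\|_{L^2(\pi_d)}\le d^{-1}\|\bSigma_\sigma\|_\op^{1/2}\|\rho^2\widehat\bSigma+\sigma^2\bI_d\|_\op^{1/2}\|f\|_{L^2(\widehat\pi_{d,n})}$, and Proposition~\ref{prop:linearization_kernel_op} transfers this to $\cK$. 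In your factorization route the analogous input is $\|\cT\|_\op\lesssim d^{-1/2}$, obtained by linearizing the population kernel operator $\cT\cT^\ast$; it would even give $\|\cK(\widehat\cK+\frac{\kappa}{d}\id)^{-1}\|_\op\lesssim\kappa^{-1/2}$.

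With this scale, the resolvent bound $\sqrt d/\kappa^2$ you dismissed is all that is needed. Write $\widehat\cG\deq(\widehat\cK+\frac\kappa d\id)^{-1}$ and $\widehat\cG_{\mathrm{eff}}$ for its effective analogue. Then $\|\cK\widehat\cG-\cK_{\mathrm{eff}}\widehat\cG_{\mathrm{eff}}\|_\op\le\|\cK\|_\op\frac{\sqrt d}{\kappa^2}+\|\cK-\cK_{\mathrm{eff}}\|_\op\frac{d}{\kappa}\prec\frac{1}{\kappa^2\sqrt d}+\frac{1}{\kappa\sqrt d}$. Multiplying by the factor $2+O(1/\kappa)$ from $\bigl|\|a\|^2-\|b\|^2\bigr|\le\|a-b\|(\|a\|+\|b\|)$ gives the $\frac{\kappa^3+1}{\kappa^3\sqrt d}$ rate. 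For the training error, $\frac{\kappa^2}{d^2}\cdot\frac{\sqrt d}{\kappa^2}\cdot\frac{2d}{\kappa}=\frac{2}{\kappa\sqrt d}$ directly, so that part was never an obstacle.

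\textbf{Two smaller corrections.} First, in Step~3 an error $\prec d^{-1/2}$ that is merely uniform in $\kappa$ does not suffice, because the trace coefficients are unbounded in $\kappa$. With $\widehat\bR=(\widehat\bSigma+\kappa^\star_\kappa\bI_d)^{-1}$, the training error contains $-\frac{\sigma^2\kappa^\star_\kappa}{\rho^2}\cdot\frac1d\Tr(\widehat\bR^2)$, and $\kappa^\star_\kappa\asymp\kappa$ for large $\kappa$, which would leave an error of order $\kappa d^{-1/2}$. You need errors that scale with the resolvent: $\prec\frac{1}{\kappa^\star_\kappa\sqrt d}$ for $\Tr(\bA\widehat\bR)$ and $\prec\frac{1}{(\kappa^\star_\kappa)^2\sqrt d}$ for $\Tr(\bA\widehat\bR^2)$. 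These are Lemmas~\ref{lem:first_order_deterministic_equivalent} and~\ref{lem:second_order_deterministic_equivalent}; the second is obtained by differentiating the first along a spectral shift via finite differences. Only $\widehat\bR^2$ is needed, since $\Tr(\widehat\bR\bSigma_\sigma\widehat\bR)=\Tr(\bSigma_\sigma\widehat\bR^2)$ and $\bM^{(2)}_\kappa$ is the equivalent of $\widehat\bR^2$. Your formula for $\Tr(\bA R\bB R)$ is only correct with $\bB=\bI_d$.

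Second, after eliminating the $b_i$, the residual on the $i$-th tube is $\frac{\kappa}{\kappa+d\delta_n}\rho\<\widehat\bw,\bx_{0,i}\>+\<\sigma\widehat\bw+\bv,\bz\>$. The unregularized training error therefore carries $\frac{\rho^2\kappa^2}{(\kappa+d\delta_n)^2}\widehat\bw^\sT\widehat\bSigma\widehat\bw$, not the Hessian coefficient $\frac{\rho^2\kappa}{\kappa+d\delta_n}$. With the squared factor and $\widehat\bSigma\widehat\bR=\bI_d-\kappa^\star_\kappa\widehat\bR$ you recover $\cL^{(\kappa)}_{\mathrm{det}}$.
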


The proof is given in Appendix~\ref{app:train-test}; it proceeds by linearizing the empirical risk via propositions~\ref{prop:linearization_kernel_op} and~\ref{prop:bayes_opt_denoiser_linearization} and applying deterministic-equivalent estimates to the associated resolvents. The bounds are non-vacuous provided $\kappa \geq d^{c}$ for some $c > -1/6$; below this scale, the empirical kernel resolvent is ill-conditioned and one cannot substitute $\widehat\cK$ by $\widehat\cK_{\mathrm{eff}}$.

\subsection{Vanishing regularization: overfitting and memorization}
\label{app:ridge-vanishing}

For $\kappa = O_d(d^{-1/6})$ and non-linear kernel function \(h\), the direct comparison to the linearized kernel breaks down. Instead we use Proposition~\ref{prop:localized_bumps_properties} which shows that, under Assumption \ref{ass:kernel_func_vanishing_kappa}, we can construct localized bumps that approximately interpolate the empirical Bayes denoiser, with training error $O_d(d^{-k_0+2})$, while having RKHS norm bounded by $O_d(d)$. The optimality of the ridge estimator directly implies $\cL(\widehat{\boldf}_{\kappa};\kappa) \prec d^{-k_0 + 2} + \kappa$ and the training error vanishes $\cL(\widehat{\boldf}_{\kappa};0)=o_d(1)$. We show that this forces $\widehat{\boldf}_\kappa (\bu)\approx - \bu/\sigma$ on test inputs, and the population risk stabilizes at the pure-noise plateau uniformly over the polynomial range of $\kappa$, similarly to Phase II of the gradient flow estimator (Theorem \ref{thm:gf_train_test_equivalents_extended_time}).

\begin{theorem}[Ridge DSM with polynomially vanishing regularization]\label{thm:train_test_equivalents_vanishing_kappa}
    Suppose that Assumptions~\ref{ass:high_dim},~\ref{ass:input_dist}, and~\ref{ass:kernel_func_vanishing_kappa} hold. Suppose further that there exists an $\epsilon\in(0,k_0-2)$ such that $d^{-k_0+2} \lesssim \kappa \lesssim d^{-\epsilon}$, where $k_0$ is given in Assumption~\ref{ass:kernel_func_vanishing_kappa}. Then, \(|\cL(\widehat{\boldf}_{\kappa};0)| \prec  \kappa\). Furthermore, for every constant $c \in (0, (\epsilon \wedge 1)/2)$ and $\zeta >0$ given in Assumption~\ref{ass:kernel_func_vanishing_kappa},
    \[
         \left| \cR(\widehat{\boldf}_{\kappa}) - \frac{\rho^2}{\sigma^2} \frac{\Tr(\bSigma)}{d} \right| \lesssim (C_{\zeta} \vee 1) d^{c-\frac{\epsilon \wedge 1}{2}}  + \sqrt{1+(C_{\zeta} \vee 1) d^{c-\frac{\epsilon \wedge 1}{2}}}\,d^{\frac{c}{2}-\frac{\epsilon \wedge 1}{4}}
    \]
    with probability at least $1-\zeta-e^{-d^{c'}}$ for all $d \geq C$, where $C_\zeta$ is defined in \eqref{eq:C_delta_def}.
\end{theorem}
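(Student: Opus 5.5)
The plan has two parts: a variational bound on the training error, and a transfer of that bound to test points via a linear-plus-bumps decomposition. Work coordinatewise with a fixed unit $\bv$; averaging over an orthonormal basis and taking a union bound (the failure probabilities are $e^{-d^{c'}}$, plus the single event of probability $\zeta$ that controls the summability constant $C_\zeta$) then gives the vector statement.

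\emph{Training error.} Compare the ridge objective at its minimizer with its value at the localized interpolant of Proposition~\ref{prop:localized_bumps_properties}:
\[
\cL_\bv(\widehat f_{\kappa,\bv};0)+\frac{\kappa}{d}\|\widehat f_{\kappa,\bv}\|_\cH^2\le \cL_\bv(f^{\mathrm{loc}}_{m,\bv};0)+\frac{\kappa}{d}\|f^{\mathrm{loc}}_{m,\bv}\|_\cH^2\prec d^{-k_0+2}+\kappa\prec\kappa .
\]
The last step uses $\kappa\gtrsim d^{-k_0+2}$. This gives $\cL\prec\kappa$ at once. It also yields the a priori norm bound $\|\widehat f_{\kappa,\bv}\|_\cH^2\prec d$, uniformly over the polynomial window, and this norm bound is what drives the test-point analysis.

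\emph{Test error.} I would decompose $\cH$ along the power series of $h$: $\cH=\cH_1\oplus\cH_{\ge3}$, where $\cH_1$ consists of the linear functions $\bx\mapsto\<\bw,\bx\>$ with $\|\cdot\|_\cH^2=d\|\bw\|^2$, and $\cH_{\ge3}$ is the RKHS of $\sum_{k\ge3}\alpha_k(\<\bx,\bx'\>/d)^k$. Write $\widehat f_{\kappa,\bv}=\<\widehat\bw,\cdot\>+g$. The argument then has three steps.

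\textbf{Step 1: the nonlinear part is negligible at test points.} For $g\in\cH_{\ge3}$ with $\|g\|_\cH^2\prec d$, I would bound $\|g\|_{L^2(\mu_d)}$ via $\|g\|^2_{L^2(\mu_d)}\le\|g\|_\cH^2\,\|\cT_0\cT_0^\ast|_{\cH_{\ge3}}\|_{\op}$. The population kernel operator of degree $k\ge3$ on $\mu_d$ has operator norm $\lesssim d^{-k+1}$: the eigenvalue of $(\<\bu,\bu'\>/d)^k$ is $d^{-k}$ times the largest eigenvalue of the $k$-th moment tensor, which is $O(d)$ after centering the constant-overlap part. Controlling the infinite sum over $k$ is precisely where the LSI moment bounds encoded in $C_{\zeta,k}$, and the summability assumption on $\alpha_k$, enter. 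The result is that $g$ contributes $O(C_\zeta d^{-1/2+c})$ to the test risk.

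\textbf{Step 2: the linear part is the pure-noise score.} On the training tubes, write $\rho\bx_{0,i}+\sigma\bz$, project the residual onto the noise directions, and average over $\bz$. This isolates $\E_\bz[\<\widehat\bw,\sigma\bz\>\<\bv,\bz\>]=\sigma\<\widehat\bw,\bv\>$, and more generally the function $\bz\mapsto\sigma\<\widehat\bw,\bz\>+\<\bv,\bz\>$. The obstacle is that $g$ restricted to a tube also carries degree-one components in $\bz$: the term $(\<\rho\bx_{0,i}+\sigma\bz,\cdot\>/d)^k$ expands to give $k(\rho^2\|\bx_{0,i}\|^2/d)^{k-1}$ times a linear function of $\bz$. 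These are exactly the bumps. However, a bump linear in $\bz$ has to be supported on a specific tube, and fitting $n\asymp d$ different linear-in-$\bz$ functions costs RKHS norm.

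The cleanest route is a variational comparison using the effective problem~\eqref{eq:effective-train-risk-linear}. Training error at most $\kappa$ forces $\|\sigma\widehat\bw+\bv\|^2+(\text{tube offsets})\lesssim\kappa+(\text{nonlinear leakage})$, and the norm bound limits how much of the $\bz$-linear target the $\cH_{\ge3}$ part can absorb. I would prove this leakage bound by a Hermite expansion in $\bz$ on each tube: the degree-one Hermite coefficient of the nonlinear part has squared $L^2$ mass at most $\|g\|_\cH^2$ times the largest eigenvalue of the restricted operator, which is $\lesssim d^{-1}$ by a linearization estimate analogous to Proposition~\ref{prop:linearization_kernel_op}. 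Together this gives $\|\sigma\widehat\bw+\bv\|^2\lesssim d^{c-(\epsilon\wedge1)/2}$ up to the $C_\zeta$ factors. The $\sqrt{\cdot}$ term in the statement comes from a Cauchy--Schwarz cross term between this error and the $O(1)$ main term.

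\textbf{Step 3: compute the risk.} With $\widehat\bw=-\bv/\sigma+\boldsymbol{\eta}$, we get
\[
\cR_\bv=\E\bigl[\bigl|-\rho\<\bv,\bx\>/\sigma+\<\boldsymbol{\eta},\rho\bx+\sigma\bz\>+g\bigr|^2\bigr].
\]
Averaging over $\bv$ gives $\frac{\rho^2}{\sigma^2}\tau_\bSigma$ plus the stated errors.

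I expect Step 2, the leakage bound, to be the main obstacle: showing that the high-degree part cannot mimic the global $\bz$-linear target without paying $\gtrsim d$ in norm, uniformly down to $\kappa=d^{-k_0+2}$.
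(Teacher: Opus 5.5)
Your overall architecture matches the paper's. You compare with $f^{\mathrm{loc}}_{m,\bv}$ to get $\cL_\bv(\widehat f_{\kappa,\bv};0)\prec\kappa$ and $\|\widehat f_{\kappa,\bv}\|_\cH^2\prec d$, split $\widehat f_{\kappa,\bv}=\<\widehat\bw,\cdot\>+g$ with $g\in\cH_{\ge3}$, read off $\widehat\bw$ from the degree-one Hermite coefficient in $\bz$ on the tubes, bound $g$ in $L^2(\mu_d)$, and expand the risk. However, the step you flag as the obstacle does not close as stated. Parseval on each tube gives $\frac1n\sum_j\|\sigma\widehat\bw+\bv+\Gamma_1 g(\bx_{0,j})\|_2^2\le\cL_\bv(\widehat f_{\kappa,\bv};0)\prec\kappa$, so you need $\frac1n\sum_j\|\Gamma_1 g(\bx_{0,j})\|_2^2=o(1)$. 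Your eigenvalue bound $\lesssim d^{-1}$ for the restricted operator, combined with the budget $\|g\|_\cH^2\prec d$, only gives $\prec 1$. That neither forces $\sigma\widehat\bw\approx-\bv$ nor yields the $d^{c-(\epsilon\wedge1)/2}$ you then assert. Equivalently, an $O(1)$ degree-one component must cost at least $d^{1+c_0}$ in \emph{squared} RKHS norm for some $c_0>0$. A threshold of order $d$ cannot separate it from the degree-zero per-tube offsets, which cost $\asymp d$ and are affordable. Those offsets are the bumps of Proposition~\ref{prop:localized_bumps_properties}; the degree-one pieces you call bumps are precisely what the norm budget forbids.

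The paper averages over tubes first. Jensen gives $\|\sigma\widehat\bw+\bv+\frac1n\sum_j\Gamma_1 g(\bx_{0,j})\|_2^2\prec\kappa$. Gaussian integration by parts writes the degree-$k$ part of the averaged coefficient as $\sigma k\sqrt{\alpha_k/d^k}\,\bW_k\otimes_{k-1}\frac1n\sum_i\E_\bz[(\rho\bx_{0,i}+\sigma\bz)^{\otimes(k-1)}]$. Lemma~\ref{lem:moment_bound_empirical} then bounds that empirical tensor by $C_{\zeta,k-1}d^{(k-2)/2}$ in Frobenius norm, simultaneously in $k$, with probability $1-\zeta$. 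This yields $\|\frac1n\sum_j\Gamma_1 g(\bx_{0,j})\|_2\lesssim C_\zeta\|g\|_\cH/d\prec C_\zeta d^{-1/2}$ (Lemma~\ref{lem:high_order_term_bound}), i.e.\ a squared-norm cost of order $d^2/C_\zeta^2$; this is the only place $\zeta$ and $C_\zeta$ enter.

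Your per-tube route can also be repaired with the tool you cite, applied correctly. The empirical operator of $h(\<\bu,\bu'\>/d)-\<\bu,\bu'\>/d$ equals $\widehat\cK-\widehat\cK_{\mathrm{eff}}$ plus $\delta_n$ times the noise average $f(\bx_{0,i},\bz)\mapsto\E_{\bz'}[f(\bx_{0,i},\bz')]$, and the noise average kills every $\<\bb_i,\bz\>$. Proposition~\ref{prop:linearization_kernel_op} applies, since Assumption~\ref{ass:kernel_func_vanishing_kappa} forces super-factorial decay of $\alpha_k$. It makes the restricted operator $\prec d^{-3/2}$, so the leakage is $\prec d^{-1/2}$ and $\|\sigma\widehat\bw+\bv\|_2\prec d^{-\epsilon/2}+d^{-1/4}$. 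This rate is worse than the paper's, but it lies within the stated error and holds with very high probability.

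Two smaller points about Step 1. First, the degree-$k$ population operator is not $\lesssim d^{-k+1}$ for $k\ge4$: the direction $\|\bu\|_2^4$ gives order $d^{-2}$ at $k=4$. Second, the available bound $\|g\|_{L^2(\mu_d)}^2\lesssim d^{-3/2}\|g\|_\cH^2$ makes $g$ contribute at order $d^{-1/4}$, not $d^{-1/2}$. The Cauchy--Schwarz cross term between $g$ and the linear part is what produces the square-root term in the statement. The coefficient error, by contrast, enters linearly and gives the first term.
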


The proof is given in Appendix~\ref{app:train-test}. 

Finally, the ridgeless limit estimator $\kappa \to 0^+$ with \(n,d\) fixed coincides with infinite-time limit of the Gradient Flow estimator $\sft \to \infty$, and share the same limiting test error with doubled risk of Theorem~\ref{thm:ridgeless_main} (after the mild post-processing step). This is studied in Appendix \ref{app:interpolation-limit}.

\clearpage

\section{Technical preliminaries}
\label{app:tech-back}

In this appendix, we collect some technical preliminaries that will be used in the proofs of the main results.

\subsection{Consequences of the Logarithmic Sobolev Inequality}

We review some standard consequences of the Logarithmic Sobolev inequality (LSI), which is satisfied by the target data distribution $\pi_0$ (Assumption \ref{ass:input_dist}.(b)). The standard Gaussian measure \(\cN(0,\bI_d)\) satisfies an LSI with constant \(1\) \cite{gross1975logarithmic}. By the tensorization property of LSI \cite[Proposition 5.2.7]{bakry2014analysis}, the product measure \(\pi_d = \pi_0 \otimes \cN(0,\bI_d)\) satisfies an LSI with constant given by the maximum of the marginal constants, which is \(1 \vee C_{\mathrm{LSI}} = C_{\mathrm{LSI}}\). Since the pushforward \((\bx,\bz)\to \rho\bx+\sigma\bz\), viewed as a function from \(\R^{2d}\) to \(\R^d\), has operator norm bounded by \(1\), the Lipschitz contraction property of LSI implies that \(\mu_d\) also satisfies an LSI with constant \(C_{\mathrm{LSI}}\). The LSI inequality in Assumption~\ref{ass:input_dist}.(b) implies a Poincaré inequality
\[
    \Var_{\pi_0}(f)\leq C_{\mathrm{LSI}} \E_{\pi_0}[\|\nabla f\|_2^2],
\]
for every \(f\in W^{1,2}(\pi_0)\). Applying this to the linear function \(\bx\mapsto \<\bx,\bv\>\), 
\begin{equation}\label{eq:cov_op_bound}
    \|\bSigma\|_\op \leq C_{\mathrm{LSI}}.
\end{equation}

Suppose that \(\pi\) on \(\R^d\) satisfies LSI with constant \(C>0\). By~\cite[(5.4.2)]{bakry2014analysis},
\begin{equation}\label{eq:Gaussian_Lipschitz_concentration}
    \P_{\bx\sim \pi}(|f(\bx) - \E_{\pi}[f]| > t) \leq 2\exp\left(-\frac{t^2}{2C\|f\|_{\mathrm{Lip}}^2}\right)
\end{equation}
for any Lipschitz function \(f:\R^d\to\R\) and any \(t > 0\). By~\cite[Theorem 2.3]{adamczak2015note}, for any \(\bA\in \R^{d\times d}\) and \(t\geq 0\),
\begin{equation}
    \P_{\bx\sim \pi_0}\left(|\bx^\sT \bA \bx - \Tr(\bA\bSigma)| \geq t\right) \leq 2\exp\left(-\frac{1}{C}\min\left\{\frac{t^2}{2\|\bA\|_\frob^2},\frac{t}{\|\bA\|_\op}\right\}\right)
\end{equation}
where \(C>0\) is some constant depending only on the LSI constant of \(\pi_0\) and thus
\begin{equation}\label{eq:norm_concentration_overlaps}
    \max_{i\in [n]}|\|\bx_{0,i}\|^2 - \Tr(\bSigma)| \prec \sqrt{d},\qquad  \max_{i\neq j\in [n]}|\<\bx_{0,i},\bx_{0,j}\>| \prec \sqrt{d}.
\end{equation}
The analogous \(L^p\) bounds also hold for any \(p \geq 1\):
\begin{equation}\label{eq:norm_concentration_overlaps_lp}
    \begin{gathered}
        \E_{\bx\sim \pi_0}[|\|\bx\|^2-\Tr(\bSigma)|^p] \leq C_p d^{p/2},\qquad \E_{\bx'\sim \pi_0}[|\<\bx,\bx'\>|^p]^{1/p} \prec \sqrt{d},
     \\
     \E_{\bx,\bx'\sim \pi_0}[|\<\bx,\bx'\>|^p]^{1/p} \leq C_{p}' \sqrt{d},
    \end{gathered}
\end{equation}
for some constants \(C_p,C_p'>0\) depending only on \(p\) and \(C_{\mathrm{LSI}}\).
Since \(\mu_d\) satisfies the same LSI inequality as \(\pi_0\), similar bounds also hold for \(\mu_d\). The mixed moments bounds also hold for the overlap between \(\cN(0,\bI_d)\) and \(\pi_0\). We will use the following lemma to control the exponential moments of the overlaps between independent random vectors from \(\pi_0\) and \(\cN(0,\bI_d)\).

\begin{lemma}
\label{lem:exp_noisy_overlap_moments}
Let \(\pi\) be a distribution on \(\R^d\), and suppose that \(\pi\) satisfies LSI with constant \(L\) and that \( \|\E_{\bx\sim\pi}\bx\|_2^2\le Md\) for some constant \(M>0\). Then, for every \(s>0\),
\begin{equation}
\label{eq:conditional_lsi_overlap}
    \E_{\bx\sim\pi}\exp\left(\frac{s}{2d}\|\bx\|_2^2\right)\leq 3e^{s(L+M)}
\end{equation}
whenever \(d \ge 4Ls\).
\end{lemma}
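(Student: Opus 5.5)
The plan is to go through the Herbst argument for the Lipschitz function $\phi(\bx)=\|\bx\|_2$ and then integrate the resulting sub-Gaussian tail against $\exp(\frac{s}{2d}r^2)$. Since $\phi$ is $1$-Lipschitz and $\pi$ satisfies LSI with constant $L$, the concentration bound \eqref{eq:Gaussian_Lipschitz_concentration} gives
\[
\P(|\phi-m|>u)\le 2e^{-u^2/(2L)},\qquad m\deq\E_\pi\phi .
\]

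The first step is to bound $m$. By Cauchy--Schwarz, $m^2\le \E\|\bx\|_2^2=\|\E\bx\|_2^2+\Tr\Cov(\bx)$. The Poincaré inequality implied by LSI gives $\Cov(\bx)\preceq L\bI_d$, so $\Tr\Cov(\bx)\le Ld$, and together with the hypothesis on the mean this yields $m^2\le (L+M)d$.

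Next, write $\phi\le m+Y$ with $Y\deq(\phi-m)_+$, which satisfies $\P(Y>u)\le 2e^{-u^2/(2L)}$. Using $(a+b)^2\le (1+\eta)a^2+(1+\eta^{-1})b^2$, we get
\[
\frac{s}{2d}\phi^2\le \frac{s(1+\eta)}{2d}m^2+\frac{s(1+\eta^{-1})}{2d}Y^2 .
\]
The first term is at most $\frac{s(1+\eta)}{2}(L+M)$.

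For the second term, set $\beta\deq s(1+\eta^{-1})/(2d)$ and bound $\E e^{\beta Y^2}$ through the tail:
\[
\E e^{\beta Y^2}=1+\int_0^\infty 2\beta u\,e^{\beta u^2}\,\P(Y>u)\,\de u\le 1+\frac{2\beta}{1/(2L)-\beta}.
\]
This requires $\beta<1/(2L)$.

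The choice $\eta=1$ gives $\beta=s/d\le 1/(4L)$ under the assumption $d\ge 4Ls$. Then $\E e^{\beta Y^2}\le 1+\frac{2\cdot\frac{1}{4L}}{\frac{1}{4L}}=3$, and the first factor is at most $e^{s(L+M)}$. Combining the two terms gives $3e^{s(L+M)}$ exactly.

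The only delicate point is matching the constants. The factor $(1+\eta)/2=1$ at $\eta=1$ is precisely what gives the exponent $s(L+M)$ with no spare room, so the bound on $m^2$ via Poincaré has to be sharp. If the constant convention for the concentration bound is off by a factor of $2$, the fallback is to use Herbst directly, $\E e^{\lambda(\phi-m)}\le e^{L\lambda^2/2}$. The Gaussian-integral linearization $e^{\beta Y^2}=\E_g e^{\sqrt{2\beta}gY}$ then gives $(1-2L\beta)^{-1/2}\le\sqrt2$, which is still below $3$.
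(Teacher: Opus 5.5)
Your proposal is correct and is essentially the paper's own argument. The paper likewise applies LSI concentration to $\|\bx\|_2$, uses Poincar\'e to get $\E\|\bx\|_2^2\le (L+M)d$, splits $\|\bx\|_2^2\le 2(L+M)d+2R^2$ by Young's inequality (your $\eta=1$), and integrates the Gaussian tail of $R$. You carry out that last integral explicitly, and your constants correctly give exactly $3e^{s(L+M)}$ under $d\ge 4Ls$, so the fallback is not needed.
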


\begin{proof}
    The LSI concentration inequality~\eqref{eq:Gaussian_Lipschitz_concentration} applied to \(\bx\mapsto\|\bx\|_2\) gives
    \[
        \P_{\bx\sim\pi}(|\|\bx\|_2-\E\|\bx\|_2|>t)\le 2\exp\left(-\frac{t^2}{2L}\right),
    \]
    while Poincaré inequality gives \(0\preceq \Cov_\pi(\bx) \preceq L \bI_d\) and thus \(\E_{ \pi}[\|\bx\|_2^2]\leq (L + M) d\). Because \(\|\bx\|_2\leq \E[\|\bx\|_2] + R\) with \(R=(\|\bx\|_2-\E[\|\bx\|_2])_+\), we have \(\|\bx\|_2^2\le 2(L+M) d+2R^2\) by Young's inequality. Integrating the Gaussian tail of \(R\) concludes the proof.
\end{proof}

Finally, a standard covering-net and Bernstein argument for independent sub-Gaussian vectors, in the form of the usual sample-covariance bound in~\cite[Theorem 6.5]{wainwright2019high}, gives
\begin{equation}\label{eq:cov_concentration}
\left\|\frac{1}{n}\sum_{i=1}^{n} \bx_{0,i}\bx_{0,i}^\sT - \bSigma\right\|_\op \lesssim \sqrt{\frac{d}{n}} + \frac{d}{n}
\end{equation}
with very high probability.

We apply the Poincaré inequality to bound the second moment of the order-\(k\) tensor polynomial. For every integer \(k\geq 0\), denote by \((\R^d)^{\otimes k}\) the set of tensors of order \(k\) on \(\R^d\), and by \(\mathrm{Sym}^k(\R^d)\) the subset of (totally) symmetric tensors. That is, \(\bA\in \mathrm{Sym}^k(\R^d)\) if \(\bA \in (\R^d)^{\otimes k}\) and \(\bA_{i_1,\ldots, i_k} = \bA_{i_{\mathfrak{s}(1)},\ldots, i_{\mathfrak{s}(k)}}\) for every permutation \(\mathfrak{s}\) of \(\{1,\ldots, k\}\).

\begin{lemma}\label{lem:moment_bound}
    Suppose that Assumption~\ref{ass:input_dist} holds. For every integer \(k\geq 1\) and symmetric tensor \(\bA\in \mathrm{Sym}^{k}(\R^d)\),
    \[
        \E[\<\bA,\bx^{\otimes k}\>^2] \leq (2C_{\mathrm{LSI}}^2k^2)^{k} d^{\frac{k}{2}} \|\bA\|_\frob^2,
    \]
    where the expectation is taken over \(\bx\sim \pi_0\) or \(\bx\sim\mu_d\).
\end{lemma}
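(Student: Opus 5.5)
We will prove the bound by induction on \(k\), iterating the Poincaré inequality \(\Var(f)\le C_{\mathrm{LSI}}\E[\|\nabla f\|_2^2]\), which holds for both \(\pi_0\) and \(\mu_d\) with the same constant. Set \(f_{\bA}(\bx)\deq \<\bA,\bx^{\otimes k}\>\). Because \(\bA\) is symmetric, \(\nabla f_{\bA}(\bx) = k\,\bA[\bx^{\otimes (k-1)}]\), where \(\bA[\bx^{\otimes(k-1)}]\in\R^d\) denotes contraction in the last \(k-1\) slots. Hence
\[
\|\nabla f_{\bA}(\bx)\|_2^2 = k^2\sum_{j=1}^d \<\bA_j,\bx^{\otimes(k-1)}\>^2,
\]
where \(\bA_j\in\mathrm{Sym}^{k-1}(\R^d)\) is the slice with first index fixed to \(j\). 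These slices satisfy \(\sum_j\|\bA_j\|_\frob^2=\|\bA\|_\frob^2\).

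Write \(\E[f_\bA^2]=\Var(f_\bA)+(\E f_\bA)^2\). Poincaré together with the induction hypothesis at order \(k-1\) gives
\[
\Var(f_\bA)\le C_{\mathrm{LSI}}k^2 (2C_{\mathrm{LSI}}^2(k-1)^2)^{k-1}d^{\frac{k-1}{2}}\|\bA\|_\frob^2 .
\]
This is of the required shape up to the leftover factor \(d^{-1/2}\), so the whole difficulty lies in the mean term \((\E f_\bA)^2=\<\bA,\E\bx^{\otimes k}\>^2\le\|\bA\|_\frob^2\|\E\bx^{\otimes k}\|_\frob^2\).

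The main obstacle is therefore bounding \(\|\E\bx^{\otimes k}\|_\frob^2\) by roughly \((2C_{\mathrm{LSI}}^2k^2)^k d^{k/2}\). The natural route is \(\|\E\bx^{\otimes k}\|_\frob^2=\E_{\bx,\bx'}[\<\bx,\bx'\>^k]\) for independent copies \(\bx,\bx'\). This reduces the problem to bounding the \(k\)-th moment of the overlap \(\<\bx,\bx'\>\) for independent copies, in the spirit of \eqref{eq:norm_concentration_overlaps_lp}.

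To do this, condition on \(\bx'\). The map \(\bx\mapsto\<\bx,\bx'\>\) is \(\|\bx'\|_2\)-Lipschitz and has zero mean: \(\E\bx=0\) under \(\pi_0\), and also under \(\mu_d\). The sub-Gaussian tail \eqref{eq:Gaussian_Lipschitz_concentration} then gives \(\E[\<\bx,\bx'\>^k\mid\bx']\le (C_{\mathrm{LSI}}k)^{k/2}\|\bx'\|_2^k\) up to the standard moment constant. For the remaining factor, \(\E\|\bx'\|_2^k\le (C' k\, d)^{k/2}\) by Lipschitz concentration of \(\|\bx'\|_2\) around a mean of at most \(\sqrt{C_{\mathrm{LSI}}d}\) (recall \(\|\bSigma\|_\op\le C_{\mathrm{LSI}}\)). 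Combining the two gives a contribution of order \((C k^2)^{k/2}\)-type constants times \(d^{k/2}\), and this is exactly the \(d^{k/2}\) scaling in the statement.

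One could instead bypass the induction entirely. Expanding \(\E[\<\bA,\bx^{\otimes k}\>^2]=\<\bA\otimes\bA,\E\bx^{\otimes 2k}\>\) and bounding it directly seems harder, so we keep the variance/mean split.

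Finally, we combine the two pieces and check that the constants close. The variance term carries \(C_{\mathrm{LSI}}k^2(2C_{\mathrm{LSI}}^2(k-1)^2)^{k-1}d^{(k-1)/2}\le \tfrac12(2C_{\mathrm{LSI}}^2k^2)^kd^{k/2}\), using \(C_{\mathrm{LSI}}\ge1\) and \(d\ge1\). The mean term must be arranged to be at most the other half. This is the delicate bookkeeping step: the overlap-moment constants have to be tracked against \((2C_{\mathrm{LSI}}^2k^2)^k/2\). If the crude constants do not fit, we will strengthen the induction hypothesis so that it applies to general (non-centered) polynomial tensors, which absorbs the mean term recursively. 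The base case \(k=1\) is \(\E\<\bA,\bx\>^2=\bA^\sT\bSigma\bA\le C_{\mathrm{LSI}}\|\bA\|_2^2\le 2C_{\mathrm{LSI}}^2d^{1/2}\|\bA\|_\frob^2\).
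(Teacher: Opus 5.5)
\textbf{Verdict.} Your proposal is correct, and it differs from the paper only in how the mean term is handled.

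\textbf{Shared part.} Your variance step coincides with the paper's.

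\textbf{The paper's mean-term argument.} The paper never estimates $\|\E\bx^{\otimes k}\|_\frob$ directly. It writes $\E f_\bA=\sum_i\E[x_i\<\bB_i,\bx^{\otimes(k-1)}\>]$ and uses $\E\bx=0$ to turn each summand into a covariance. Cauchy--Schwarz then gives $(\E f_\bA)^2\le\Tr(\bSigma)\sum_i\Var(\<\bB_i,\bx^{\otimes(k-1)}\>)$. These variances are bounded by Poincar\'e plus the hypothesis at order $k-2$, so the induction is two-step. That argument uses only the Poincar\'e inequality and centering, and the constant $(2C_{\mathrm{LSI}}^2k^2)^k$ closes immediately with no case analysis.

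\textbf{Your mean-term argument.} You bound the mean tensor non-recursively through $\|\E\bx^{\otimes k}\|_\frob^2=\E\<\bx,\bx'\>^k$ and the sub-Gaussian tails \eqref{eq:Gaussian_Lipschitz_concentration}.

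\textbf{What each buys.} Your route makes the $d^{k/2}$ scaling transparent: it is the $k$-th moment of an $O(\sqrt d)$ overlap of independent copies. It also gives a better $k$-dependence for the mean. The cost is that it needs LSI-strength concentration rather than just Poincar\'e, and it needs explicit tail-integration constants.

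\textbf{Closing the bookkeeping you flagged.} The conditional moment satisfies $\E[|\<\bx,\bx'\>|^k\mid\bx']\le k(kC_{\mathrm{LSI}})^{k/2}\|\bx'\|_2^k$. The norm satisfies $\E\|\bx'\|_2^k\le 2^{k-1}\bigl((C_{\mathrm{LSI}}d)^{k/2}+k(kC_{\mathrm{LSI}})^{k/2}\bigr)$. Together these give $\E|\<\bx,\bx'\>|^k\le 2^kk^{k+2}C_{\mathrm{LSI}}^kd^{k/2}$. This is at most $\tfrac12(2C_{\mathrm{LSI}}^2k^2)^kd^{k/2}$ for every $k\ge3$, since the ratio of the two is $C_{\mathrm{LSI}}^kk^{k-2}/2\ge 3/2$.

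At $k=2$ this crude bound exceeds the available half when $C_{\mathrm{LSI}}<\sqrt2$. There you should use the exact value $\E\<\bx,\bx'\>^2=\Tr(\bSigma^2)\le C_{\mathrm{LSI}}^2d$, or $\Tr(\bSigma_\sigma^2)\le C_{\mathrm{LSI}}^2d$ for $\mu_d$. With that case handled, your proof closes with the stated constant, and the fallback of strengthening the induction to non-centered polynomials is unnecessary.
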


\begin{proof}
    We prove the result for \(\pi_0\). The same proof applies to \(\mu_d\) since both measures satisfy the same Poincaré inequality.
    The proof proceeds by induction on \(k\). For \(k=1\), \(\bA \in \R^d\) is a vector and
    \[
        \E_{\bx\sim \pi_0}\left[\<\bA,\bx\>^2\right] = \bA^\sT \bSigma \bA \leq \|\bSigma\|_\op \|\bA\|_2^2.
    \]
    The bound holds with \(C_1 = C_{\mathrm{LSI}}\geq \|\bSigma\|_\op\).
    
    We now turn to the inductive step. Let \(k\geq 2\). We decompose the second moment of our order-\(k\) tensor polynomial into its variance and squared mean:
    \begin{equation}\label{eq:bias_var_decomp}
        \E_{\bx\sim \pi_0}[\<\bA,\bx^{\otimes k}\>^2] = \Var(\<\bA,\bx^{\otimes k}\>) + \E_{\bx\sim \pi_0}[\<\bA,\bx^{\otimes k}\>]^2.
    \end{equation}
    To bound the variance, we apply the Poincaré inequality (see Assumption~\ref{ass:input_dist}) to the function \(f(\bx) = \<\bA,\bx^{\otimes k}\>\) to obtain
    \begin{align*}
        \Var_{\bx\sim \pi_0}(\<\bA,\bx^{\otimes k}\>) &\leq C_{\mathrm{LSI}}\E_{\bx\sim \pi_0}[\|\nabla \<\bA,\bx^{\otimes k}\>\|_2^2]  = C_{\mathrm{LSI}}k^2 \sum_{i=1}^{d} \E_{\bx\sim \pi_0}[\<\bB_i,\bx^{\otimes (k-1)}\>^2],
    \end{align*}
    where \(\bB_i\in \mathrm{Sym}^{k-1}(\R^d)\) is the \((k-1)\)-order slice defined by \((\bB_i)_{j_1,\ldots,j_{k-1}} \deq  \bA_{i,j_1,\ldots,j_{k-1}}\). Applying the inductive hypothesis to each slice \(\bB_i\),
    \[
        \E_{\bx\sim \pi_0}\left[\<\bB_i,\bx^{\otimes (k-1)}\>^2\right] \leq C_{k-1} d^{ \frac{k-1}{2}} \|\bB_i\|_\frob^2
    \]
    where \(C_{k-1} = (2C_{\mathrm{LSI}}^2(k-1)^2)^{k-1}\). Summing over all coordinate slices using \(\sum_{i=1}^{d} \|\bB_i\|_\frob^2 = \|\bA\|_\frob^2\), we obtain
    \begin{equation}\label{eq:var_bound}
        \Var(\<\bA,\bx^{\otimes k}\>) \leq C_{\mathrm{LSI}} k^2 C_{k-1} d^{\frac{k-1}{2}} \|\bA\|_\frob^2.
    \end{equation}

    We now turn to the squared mean term in \eqref{eq:bias_var_decomp}. We decompose the order-\(k\) tensor \(\bA\) into its coordinate slices as \(\bA = \sum_{i=1}^{d} e_i \otimes \bB_i\), where \(e_i\) is the \(i\)-th standard basis vector in \(\R^d\). Since \(\E_{\bx\sim \pi_0}[\bx]=0\) by Assumption~\ref{ass:input_dist}, we apply Cauchy-Schwarz inequality to obtain
    \begin{align*}
       \E_{\bx\sim \pi_0}[\<\bA,\bx^{\otimes k}\>]^2 & = \left(\sum_{i=1}^{d}\E_{\bx\sim \pi_0}[x_i\<\bB_i,\bx^{\otimes (k-1)}\>]\right)^2 \leq \Tr(\bSigma)\sum_{i=1}^{d}\Var(\<\bB_i,\bx^{\otimes (k-1)}\>). 
    \end{align*}
    On one hand, \(\Tr(\bSigma)\leq d\|\bSigma\|_\op\leq dC_{\mathrm{LSI}}\) by~\eqref{eq:cov_op_bound}. On the other hand, we use~\eqref{eq:var_bound} to get
    \[
        \sum_{i=1}^{d}\Var(\<\bB_i,\bx^{\otimes (k-1)}\>) \leq C_{\mathrm{LSI}}(k-1)^2C_{k-2}d^{\frac{k-2}{2}}\sum_{i=1}^{d}\|\bB_i\|_\frob^2 = C_{\mathrm{LSI}}(k-1)^2C_{k-2}d^{\frac{k-2}{2}}\|\bA\|_\frob^2.
    \]
    For \(k=2\), we have \(C_0 = 1\) and the bound above is valid.
    Combining the bounds above,
    \[
        \E_{\bx\sim \pi_0}[\<\bA,\bx^{\otimes k}\>]^2 \leq C_{\mathrm{LSI}}^2(k-1)^2C_{k-2}d^{\frac{k}{2}}\|\bA\|_\frob^2
    \]
    and
    \begin{align*}
        \E_{\bx\sim \pi_0}[\<\bA,\bx^{\otimes k}\>^2] 
         \leq 2C_{\mathrm{LSI}}^2k^2 C_{k-1} d^{\frac{k}{2}}\|\bA\|_\frob^2.
    \end{align*}
    This completes the inductive step and the proof.
\end{proof}

We now apply the moment bound in Lemma~\ref{lem:moment_bound} and the Rosenthal bound in Lemma~\ref{lem:rosenthal} to obtain a high-probability bound on the Frobenius norm of the empirical average of the order-\(k\) tensor polynomials \(\E_{\bz\sim \cN(0,\bI_d)}[(\rho\bx_{0,i}+\sigma\bz)^{\otimes k}]\).

\begin{lemma}\label{lem:moment_bound_empirical}
    Suppose that Assumptions~\ref{ass:high_dim} and~\ref{ass:input_dist} hold. Then, for every \(\zeta\in (0,1)\),
    \[
        \left\|\frac{1}{n}\sum_{i=1}^{n}\E_{\bz\sim \cN(0,\bI_d)}[(\rho\bx_{0,i}+\sigma\bz)^{\otimes k}]\right\|_\frob \lesssim C_{\zeta,k} d^{\frac{k-1}{2}}
    \]
    for every \(k\in \N\) with probability at least \(1-\zeta\), where
    \begin{equation}\label{eq:C_delta_k_constant}
        C_{\zeta,k}\deq (2C_{\mathrm{LSI}}^2((4k-\ln\zeta+2)k)^2)^{k/2}(4k-\ln\zeta).
    \end{equation}
\end{lemma}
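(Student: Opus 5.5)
We write $\bT_k \deq \frac1n\sum_{i=1}^n \bY_i$ with $\bY_i \deq \E_{\bz}[(\rho\bx_{0,i}+\sigma\bz)^{\otimes k}]$. These are i.i.d.\ symmetric tensors, since $\bx_{0,i}$ are i.i.d.\ and the Gaussian average is deterministic given $\bx_{0,i}$. The plan is to bound $\E\|\bT_k\|_\frob^{p}$ for a suitably large moment $p$, use Markov to get a tail bound at level $\zeta 2^{-k}$, and take a union bound over $k\in\N$. Summing $\zeta 2^{-k}$ gives total failure probability $\le\zeta$. This explains the appearance of $4k-\ln\zeta$ in $C_{\zeta,k}$: the moment order $p\approx 4k-\ln\zeta$ is what makes Markov's bound match $\zeta 2^{-k}$ up to constants.

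\textbf{Mean and fluctuation.} First, I split $\bT_k = \E\bY + (\bT_k-\E\bY)$.

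For the mean, contract with any unit-Frobenius symmetric tensor $\bA$. Then $\<\bA,\E\bY\> = \E_{\bu\sim\mu_d}\<\bA,\bu^{\otimes k}\>$. The same Cauchy--Schwarz step as in the mean part of Lemma~\ref{lem:moment_bound} (using $\E\bu=0$ and Poincaré for $\mu_d$) gives $|\<\bA,\E\bY\>|^2 \lesssim C_k' d^{k/2}$.

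This is not yet $d^{(k-1)/2}$, and the gap has to be closed. Here I would use the Gaussian structure. Expanding $\bY_i=\sum_j \binom{k}{j}\rho^{j}\sigma^{k-j}\,\mathrm{Sym}(\bx^{\otimes j}\otimes \E\bz^{\otimes(k-j)})$, the pieces $\E\bz^{\otimes 2m}$ are sums of products of identity matrices. I would bound each component's mean via Lemma~\ref{lem:moment_bound} applied to contracted tensors, rather than through a crude dual bound.

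For the fluctuation, I use Rosenthal's inequality (Lemma~\ref{lem:rosenthal}) in the Hilbert space $(\R^d)^{\otimes k}$ with Frobenius norm:
\[
\E\|\bT_k-\E\bY\|_\frob^p \lesssim_p n^{-p}\Big( (n\E\|\bY\|_\frob^2)^{p/2} + n\E\|\bY\|_\frob^p\Big).
\]

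\textbf{Size of individual terms.} Next I need $\E\|\bY\|_\frob^p$. Here $\|\bY\|_\frob\le \E_\bz\|\rho\bx+\sigma\bz\|^k$. By Lemma~\ref{lem:exp_noisy_overlap_moments}, or by LSI concentration of the norm, $\|\rho\bx+\sigma\bz\|^2\lesssim d$ with hypercontractive moments $\E\|\cdot\|^{kp}\le (C kp)^{kp/2}d^{kp/2}$. This gives $(\E\|\bY\|_\frob^p)^{1/p}\lesssim (Ckp)^{k/2} d^{k/2}$.

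Then the second term is $n^{1/p-1}d^{k/2}\lesssim d^{(k-1)/2}$ once $p\ge 2$, using $n\asymp d$. The first term is $n^{-1/2}(\E\|\bY\|^2)^{1/2}\lesssim d^{(k-1)/2}$. The constants $(Ckp)^{k/2}$ with $p\sim 4k-\ln\zeta+2$ reproduce the factor $(2C_{\mathrm{LSI}}^2((4k-\ln\zeta+2)k)^2)^{k/2}$, where the extra $k$ and $p$ factors come from the Rosenthal constant. Markov with $p$-th moment then gives $\|\bT_k-\E\bY\|_\frob\lesssim C_{\zeta,k}d^{(k-1)/2}$ with failure probability $\le e^{-p}\le \zeta 2^{-k}e^{-2}$.

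\textbf{Main obstacle.} The main obstacle is the mean term $\|\E\bY\|_\frob$, which must be $O(d^{(k-1)/2})$ rather than the $d^{k/2}$ that the naive bound gives. My first attempt: write $\|\E\bY\|_\frob^2=\E\<\bu^{\otimes k},\bu'^{\otimes k}\>=\E\<\bu,\bu'\>^k$ for independent $\bu,\bu'\sim\mu_d$. Then use $|\<\bu,\bu'\>|\prec\sqrt d$ from \eqref{eq:norm_concentration_overlaps_lp}, which gives only $d^{k/2}$. To improve this, I need the mean-zero cancellation: conditioning on $\bu'$, $\<\bu,\bu'\>$ is centered, so for $k=1$ the mean is exactly zero. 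For general $k$, I would apply the variance/mean recursion of Lemma~\ref{lem:moment_bound} to $\bA=\E\bY/\|\E\bY\|_\frob$ and show the recursion gains a factor $d^{-1/2}$.

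If that fails, I would only claim the bound for the fluctuation part, or accept the $d^{k/2}$ mean bound where it is used downstream. I expect that sharpening this mean bound is where real care is needed.
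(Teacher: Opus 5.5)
Your architecture matches the paper's. You split off the mean $\E_{\bu\sim\mu_d}[\bu^{\otimes k}]$ by the triangle inequality, bound the centered average with a Rosenthal inequality plus $L^p$ moments of $\|\bu\|_2^k$, apply Markov at $p_k\approx 4k-\ln\zeta$, and take a union bound with failure probabilities $\zeta 2^{-k}$. The Rosenthal bound you write is the dimension-free Hilbert-space version. Lemma~\ref{lem:rosenthal} as stated carries an extra factor $((d^k+1)n)^{1/p}$, and the choice $p_k\ge 4k$ is what neutralizes it; the union bound alone would need much less. The fluctuation part is fine up to constant bookkeeping. The gap is the mean term: you call it the main obstacle, leave it unresolved, and propose repairs that either cannot work or change the statement.

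The obstacle comes from confusing a norm with its square. The mean step of Lemma~\ref{lem:moment_bound} gives $|\langle \bA,\E_{\mu_d}\bu^{\otimes k}\rangle|^2\le (2C_{\mathrm{LSI}}^2k^2)^k d^{k/2}$ for unit $\bA$. That is, $\|\E_{\mu_d}[\bu^{\otimes k}]\|_\frob\le (2C_{\mathrm{LSI}}^2k^2)^{k/2}d^{k/4}$. Your identity $\|\E\bu^{\otimes k}\|_\frob^2=\E\langle\bu,\bu'\rangle^k\lesssim d^{k/2}$ says the same thing, since it bounds the \emph{squared} norm. Since $k/4\le (k-1)/2$ for all $k\ge 2$, and for $k=1$ the mean is $\rho\,\E\bx=0$, this is already $\lesssim C_{\zeta,k}d^{(k-1)/2}$. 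That is exactly the paper's bias estimate, and nothing more is needed.

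Trying to push the mean below $d^{k/4}$ via Gaussian structure or a sharper recursion cannot succeed. At $k=2$, $\E_{\mu_d}[\bu\bu^\sT]=\rho^2\bSigma+\sigma^2\bI_d$ has Frobenius norm at least $\sigma^2\sqrt d$, so $d^{k/4}$ is attained there and equals the target $d^{1/2}$.

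Your fallbacks do not work either. Dropping the mean does not prove the lemma. Accepting a $d^{k/2}$ mean bound loses a factor $\sqrt d$ downstream: Lemma~\ref{lem:high_order_term_bound} applies this lemma to $(k-1)$-tensors and needs exactly $d^{(k-2)/2}$ to get its $C_\zeta/d$ factor. That factor produces the vanishing $C_\zeta d^{c-1/2}$ term in Theorems~\ref{thm:train_test_equivalents_vanishing_kappa} and~\ref{thm:gf_train_test_equivalents_extended_time}, which would become $C_\zeta d^{c}$.
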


\begin{proof}
    Let \(\bT_i = \E_{\bz\sim \cN(0,\bI_d)}[(\rho\bx_{0,i}+\sigma\bz)^{\otimes k}]\).
    By the triangle inequality,
    \begin{align*}
        \left\|\frac{1}{n}\sum_{i=1}^{n}\bT_i\right\|_\frob & \leq \left\|\frac{1}{n}\sum_{i=1}^{n}\left(\bT_i - \E_{\bu\sim\mu_d}[\bu^{\otimes k}]\right) \right\|_\frob + \left\|\E_{\bu\sim\mu_d}[\bu^{\otimes k}] \right\|_\frob.
    \end{align*}
    Using the variational representation of the Frobenius norm, the deterministic bias term is bounded by
    \[
      \left\|\E_{\bu\sim\mu_d}[\bu^{\otimes k}] \right\|_\frob = \sup_{\|\bA\|_\frob\leq 1}\E[\<\bA,\bu^{\otimes k}\>_\frob]\leq  (2C_{\mathrm{LSI}}^2k^2)^{\frac{k}{2}} d^{\frac{k}{4}}
    \]
    by taking the square root of the bound established in Lemma~\ref{lem:moment_bound}. Next, we apply the Rosenthal bound in Lemma~\ref{lem:rosenthal} to obtain
    \begin{align*}
        \left\|\left\|\frac{1}{n}\sum_{i=1}^{n}\left(\bT_i - \E_{\bu\sim\mu_d}[\bu^{\otimes k}]\right) \right\|_\frob\right\|_{L^p} & \leq \sqrt{\frac{p-1}{n}}\|\|\bT_i - \E_{\bu\sim\mu_d}[\bu^{\otimes k}]\|_\frob\|_{L^2}
        \\ &\qquad + \frac{\sqrt{(p-1)(p-2)}}{n}\left[\frac{(d^k+1)n}{2}\right]^{\frac{1}{p}} \|\|\bT_i - \E_{\bu\sim\mu_d}[\bu^{\otimes k}]\|_\frob\|_{L^p}
    \end{align*}
    for every even \(p\geq 2\). By Jensen's inequality and the triangle inequality,
    \begin{align*}
        \|\|\bT_i - \E_{\bu\sim\mu_d}[\bu^{\otimes k}]\|_\frob\|_{L^p} & \leq \|\|\bT_i\|_\frob\|_{L^p} + \|\|\E_{\bu\sim\mu_d}[\bu^{\otimes k}]\|_\frob\|_{L^p}
         \leq 2\|\|\bT_i\|_\frob\|_{L^p} \leq 2\|\|\bu\|_2^{k}\|_{L^p(\mu_d)}.
    \end{align*}
    By the monotonicity of \(L^p\) with respect to \(p\),
    \begin{align*}
         \left\|\left\|\frac{1}{n}\sum_{i=1}^{n}\left(\bT_i - \E_{\bu\sim\mu_d}[\bu^{\otimes k}]\right) \right\|_\frob\right\|_{L^p} & \leq 2\sqrt{\frac{p-1}{n}}\left(1 + \sqrt{\frac{(p-2)}{n}}\left[\frac{(d^k+1)n}{2}\right]^{\frac{1}{p}}\right)\|\|\bu\|_2^{k}\|_{L^{2p}(\mu_d)}.
    \end{align*}
    As
    \[
        \|\|\bu\|_2^{k}\|_{L^{2p}(\mu_d)} = \E_{\bu \sim \mu_d}[\|\bu\|_2^{2pk}]^{\frac{1}{2p}} = \E_{\bu \sim \mu_d}[\<\bI_d^{\otimes pk/2},\bu^{\otimes pk}\>^2]^{\frac{1}{2p}},
    \]
    where \(\bI_d^{\otimes pk/2}\) is the order-\(pk\) identity tensor defined such that \(\<\bI_d^{\otimes pk/2},\bA\otimes \bB\>_\frob = \<\bA,\bB\>_\frob\) for every \(\bA,\bB\in (\R^d)^{\otimes pk/2}\), we can apply Lemma~\ref{lem:moment_bound} with \(\bA = \proj_{\mathrm{Sym}^{pk}(\R^d)}(\bI_d^{\otimes pk/2})\) to obtain
    \begin{align*}
        \|\|\bu\|_2^{k}\|_{L^{2p}(\mu_d)}\leq (2C_{\mathrm{LSI}}^2(pk)^2)^{\frac{k}{2}} d^{\frac{k}{4}} \|\bA\|_\frob^{1/p}.
    \end{align*}
    Here, \(\proj_{\mathrm{Sym}^{pk}(\R^d)}\) denotes the orthogonal projection \((\R^d)^{\otimes pk}\mapsto \mathrm{Sym}^{pk}(\R^d)\) with respect to the Frobenius inner product. Since \(\|\bI_d^{\otimes pk/2}\|_\frob=d^{pk/4}\),
    \begin{align*}
         \left\|\left\|\frac{1}{n}\sum_{i=1}^{n}\left(\bT_i - \E_{\bu\sim\mu_d}[\bu^{\otimes k}]\right) \right\|_\frob\right\|_{L^p} & \leq 2(2C_{\mathrm{LSI}}^2(pk)^2)^{\frac{k}{2}} \sqrt{\frac{p-1}{n}}\left(1 + \sqrt{\frac{(p-2)}{n}}\left[\frac{(d^k+1)n}{2}\right]^{\frac{1}{p}}\right) d^{\frac{k}{2}}
    \end{align*}
    and
    \begin{align*}
         \left\|\left\|\frac{1}{n}\sum_{i=1}^{n}\bT_i \right\|_\frob\right\|_{L^p} \leq 3(2C_{\mathrm{LSI}}^2(pk)^2)^{\frac{k}{2}} \sqrt{\frac{p-1}{n}}\left(1 + \sqrt{\frac{(p-2)}{n}}\left[\frac{(d^k+1)n}{2}\right]^{\frac{1}{p}}\right) d^{\frac{k}{2}}
    \end{align*}
    for every even \(p\geq 2\).

    To obtain a tail bound, let \(\zeta\in (0,1)\) and, for every \(k\in \N\) let \(p_k\) be the smallest even integer such that \(p_k\geq 4k - \ln \zeta\). By Markov's inequality, it follows that
    \[
        \P\left(\left\|\frac{1}{n}\sum_{i=1}^{n}\bT_i \right\|_\frob\geq t_k\right) \leq \frac{\left\|\left\|\frac{1}{n}\sum_{i=1}^{n}\bT_i \right\|_\frob\right\|_{L^{p_k}}^{p_k}}{t_k^{p_k}} \leq e^{-4k + \ln \zeta} \leq \zeta 2^{-k}
    \]
    with
    \[
        t_k = 3e(2C_{\mathrm{LSI}}^2(p_k k)^2)^{\frac{k}{2}} \sqrt{\frac{p_k-1}{n}}\left(1 + \sqrt{\frac{(p_k-2)}{n}}\left[\frac{(d^k+1)n}{2}\right]^{\frac{1}{p_k}}\right) d^{\frac{k}{2}}.
    \]
    In particular, with probability at least \(1-\zeta\),
    \[
       \left\|\frac{1}{n}\sum_{i=1}^{n}\bT_i \right\|_\frob\leq 3e(2C_{\mathrm{LSI}}^2(p_k k)^2)^{\frac{k}{2}} \sqrt{\frac{p_k-1}{n}}\left(1 + \sqrt{\frac{(p_k-2)}{n}}\left[\frac{(d^k+1)n}{2}\right]^{\frac{1}{p_k}}\right) d^{\frac{k}{2}}
    \]
    for every \(k\in \N\). Observe that \(p_k \geq 4k\) and thus
    \[
        \sqrt{\frac{(p_k-2)}{n}}\left[\frac{(d^k+1)n}{2}\right]^{\frac{1}{p_k}} \lesssim \sqrt{p_k-2}\left(\frac{n}{d}\right)^{\frac{1}{4k}}
    \]
    for every \(k\in \N\). This completes the proof.
\end{proof}

We establish the Rosenthal-type inequality used above for sums of independent random tensors. The proof relies on the matrix Rosenthal inequality established by~\cite[Fact B.3]{tropp2026universality}.

\begin{lemma}[Rosenthal inequality for sums of random tensors]\label{lem:rosenthal}
    Suppose that \(\{\bT_i\}_{i=1}^{n}\) are independent random order-\(k\) tensors satisfying \(\E[\bT_i]=0\) for all \(i\in [n]\), and let \(\bT=\sum_{i=1}^{n}\bT_i\). Then, for every \(p=1/2\), \(p=1\) and \(p\geq 3/2\),
    \[
        \left\|\|\bT\|_\frob\right\|_{L^{4p}} \leq \sqrt{4p-1} \left(\sum_{i=1}^{n}\E\|\bT_i\|_\frob^2\right)^{1/2}  + \sqrt{(4p-1)(4p-2)} \left(\frac{d^k+1}{2}\right)^{\frac{1}{4p}} n^{\frac{1}{4p}} \max_{i\in [n]} \|\|\bT_i\|_\frob\|_{L^{4p}}.
    \]
\end{lemma}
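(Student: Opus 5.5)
We reduce the tensor statement to the matrix Rosenthal inequality of \cite[Fact B.3]{tropp2026universality} by a Hermitian dilation. Identify each order-$k$ tensor $\bT_i$ with a vector $\bt_i \in \R^{d^k}$, so that $\|\bT_i\|_\frob = \|\bt_i\|_2$. Form the $(d^k+1)\times(d^k+1)$ symmetric dilation
\[
\bH_i \deq \begin{bmatrix} 0 & \bt_i^\sT \\ \bt_i & \mathbf{0} \end{bmatrix}.
\]
This matrix has rank at most two, and its nonzero eigenvalues are $\pm\|\bt_i\|_2$. Consequently $\|\bH_i\|_\op = \|\bT_i\|_\frob$ and $\Tr(\bH_i^{2q}) = 2\|\bT_i\|_\frob^{2q}$. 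The dilation is linear, so $\bH \deq \sum_i \bH_i$ is the dilation of $\bT$. The $\bH_i$ are independent and centered, and they satisfy $\|\bH\|_\op = \|\bT\|_\frob$.

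Next we apply the matrix Rosenthal inequality in Schatten norm $S_{4p}$. Its admissible exponents are presumably the source of the restriction $p\in\{1/2,1\}\cup[3/2,\infty)$. The form we expect is
\[
\bigl(\E\Tr \bH^{4p}\bigr)^{1/4p} \le \sqrt{4p-1}\,\Bigl\|\bigl(\textstyle\sum_i \E\bH_i^2\bigr)^{1/2}\Bigr\|_{S_{4p}} + \sqrt{(4p-1)(4p-2)}\,\bigl(\textstyle\sum_i \E\Tr\bH_i^{4p}\bigr)^{1/4p}.
\]
We then convert each side into tensor quantities.

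\begin{itemize}
\item \emph{Left side.} Since $\Tr\bH^{4p}\ge \|\bH\|_\op^{4p}$, the left side dominates $\|\|\bT\|_\frob\|_{L^{4p}}$.
\item \emph{Variance term.} We have
\[
\sum_i\E\bH_i^2 = \diag\Bigl(\textstyle\sum_i\E\|\bt_i\|^2,\ \sum_i\E\,\bt_i\bt_i^\sT\Bigr).
\]
Its top-left entry is $s\deq\sum_i\E\|\bT_i\|_\frob^2$, and the lower block is positive semidefinite with trace $s$. A crude Schatten bound would add a dimension factor, so we argue more carefully. For a positive semidefinite matrix $\bA$ of trace at most $s$ we have $\Tr \bA^{2p} \le s^{2p}$, because $2p \ge 1$. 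Hence the $S_{4p}$ norm of the square root is at most $(2 s^{2p})^{1/4p}$. This produces a harmless factor $2^{1/4p}$, which we hope to absorb, or else sharpen by noting that the two blocks share the same trace. This is the step where the constant $\sqrt{4p-1}$ must come out exactly, and it is the main point needing care. If the factor cannot be absorbed, we would instead use the version of the Fact that bounds the variance term by its operator norm times a dimensional factor folded into the second term.
\item \emph{Last term.} Using $\E\Tr\bH_i^{4p} = 2\,\E\|\bT_i\|_\frob^{4p} \le 2\max_i\|\|\bT_i\|_\frob\|_{L^{4p}}^{4p}$ gives a bound of $(2n)^{1/4p}\max_i\|\|\bT_i\|_\frob\|_{L^{4p}}$.
\end{itemize}

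The stated factor $((d^k+1)/2)^{1/4p}n^{1/4p}$ is larger than $(2n)^{1/4p}$ whenever $d^k\ge3$. This suggests the Fact is stated with a dimension factor $(D/2)^{1/4p}$, where $D=d^k+1$, multiplying the max term, possibly arising from comparing trace and operator norm. We would simply instantiate the Fact with dimension $D$ and match terms.

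The main obstacle is matching the exact form and constants of the cited Fact: which quantities it places in the variance term versus the max term, and where the dimension factor appears. We will first write out Fact B.3 for the dilation and then verify the variance-term identification, since every other step is mechanical.
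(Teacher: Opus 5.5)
Your route is the paper's: dilate $\mathrm{vec}(\bT_i)$ into a $(d^k+1)\times(d^k+1)$ symmetric matrix and apply Fact B.3. However, there is a genuine gap, and it is exactly the constant you flagged. It originates on the left-hand side, not in the variance term.

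You bound $\Tr\bH^{4p}\ge\|\bH\|_\op^{4p}$. But $\bH$ is itself the dilation of $\bT$, so, as you already observed for each $\bH_i$, its only nonzero eigenvalues are $\pm\|\bT\|_\frob$. Hence $\E\Tr\bH^{4p}=2\,\E\|\bT\|_\frob^{4p}$ exactly, and the left side equals $2^{1/(4p)}\|\|\bT\|_\frob\|_{L^{4p}}$. Dividing the whole Rosenthal inequality by $2^{1/(4p)}$ cancels precisely the $2^{1/(4p)}$ from your (correct) variance estimate $\Tr\bigl((\sum_i\E\bH_i^2)^{2p}\bigr)\le 2s^{2p}$. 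This leaves $\sqrt{4p-1}\,s^{1/2}$ with no loss.

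The same division also explains the factor $1/2$ that you tried to attribute to the Fact. The paper uses the second term of Fact B.3 in the form $\sqrt{(4p-1)(4p-2)}\,(d^k+1)^{1/(4p)}\|\max_i\|\bH_i\|_\op\|_{L^{4p}}$, with the full dimension $D=d^k+1$. After dividing by $2^{1/(4p)}$, use $\|\bH_i\|_\op=\|\bT_i\|_\frob$ and $\E\max_i X_i\le\sum_i\E X_i$ for $X_i=\|\bH_i\|_\op^{4p}\ge0$. This gives $\|\max_i\|\bH_i\|_\op\|_{L^{4p}}\le n^{1/(4p)}\max_i\|\|\bT_i\|_\frob\|_{L^{4p}}$, and hence exactly $((d^k+1)/2)^{1/(4p)}n^{1/(4p)}$.

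Your fallback, trading the variance term for an operator norm with a dimensional factor, is therefore unnecessary. It would also not reproduce the stated inequality.

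If the Fact instead had the form you guessed, with max term $(\sum_i\E\Tr\bH_i^{4p})^{1/(4p)}=(2\sum_i\E\|\bT_i\|_\frob^{4p})^{1/(4p)}$, the same cancellation would give $n^{1/(4p)}\max_i\|\|\bT_i\|_\frob\|_{L^{4p}}$. That is even stronger than claimed, since $(d^k+1)/2\ge 1$. Either version closes the argument once the left side is computed exactly rather than bounded below.
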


\begin{proof}
    The proof relies on a simple application of the matrix Rosenthal inequality of~\cite[Fact B.3]{tropp2026universality} to the symmetric dilation of the vectorization of the random tensors \(\bT_i\).
    Define \(\bM_i\in \R^{(d^k+1)\times (d^k+1)}\) as the symmetric dilation of \(\mathrm{Vec}(\bT_i)\in \R^{d^k}\). In particular, \(\bM_i\) is a block matrix of the form
    \[
        \bM_i = \begin{bmatrix}
            0 & \mathrm{Vec}(\bT_i)^\sT \\
            \mathrm{Vec}(\bT_i) & 0
        \end{bmatrix}
    \]
    where \(\mathrm{Vec}(\bT_i)\) is the vectorization of the order-\(k\) tensor \(\bT_i\). Let \(\bM=\sum_{i=1}^{n}\bM_i\) and notice that
    \[
        \E[\Tr(|\bM|^{p})]^{\frac{1}{p}} = \left(2\E\|\mathrm{Vec}(\bT)\|_2^{p}\right)^{1/p} = 2^{1/p}\left\|\|\bT\|_\frob\right\|_{L^{p}}
    \]
    for every \(p\in \N\). Therefore, applying the matrix Rosenthal inequality of~\cite[Fact B.3]{tropp2026universality} to the sum of symmetric dilations \(\bM\) yields the following bound on the \(L^p\) moment of the Frobenius norm of \(\bT\):
    \begin{align*}
        \left\|\|\bT\|_\frob\right\|_{L^{4p}}  & = 2^{-\frac{1}{4p}}\E[\Tr(|\bM|^{4p})]^{\frac{1}{4p}} 
        \\ & \leq 2^{-\frac{1}{4p}}\bigg( \sqrt{4p-1} \Tr\left((\E[\bM^2])^{2p}\right)^{\frac{1}{4p}}  + \sqrt{(4p-1)(4p-2)} (d^k+1)^{\frac{1}{4p}} \left\| \max_{i\in[n]} \|\bM_i\|_\op \right\|_{L^{4p}} \bigg)
    \end{align*}
    for \(p=1/2\), \(p=1\) and \(p\geq 3/2\). Since the summands \(\bM_i\) are independent and centered, \(\E[\bM^2] = \sum_{i=1}^{n}\E[\bM_i^2]\) and \(\Tr\left((\E[\bM^2])^{2p}\right)^{\frac{1}{4p}} \leq 2^{\frac{1}{4p}} \left(\sum_{i=1}^{n}\E\|\bT_i\|_\frob^2\right)^{1/2}\). Furthermore, \(\|\bM_i\|_\op = \|\mathrm{Vec}(\bT_i)\|_2 = \|\bT_i\|_\frob\) for every \(i\in [n]\). Therefore, \(\left\| \max_{i\in[n]} \|\bM_i\|_\op \right\|_{L^{4p}}\leq n^{\frac{1}{4p}} \max_{i\in [n]} \|\|\bT_i\|_\frob\|_{L^{4p}}\). Combining the bounds above concludes the proof.
\end{proof}

\subsection{Deterministic equivalents for the resolvent matrix}

Define the \emph{empirical resolvent matrix} as
\begin{equation}\label{eq:emp_kernel_resolvent_matrices}
    \widehat{\bR}_\kappa \deq  \left(\frac{\bX\bX^\sT}{n} + \kappa^\star\bI_d\right)^{-1} \in \R^{d\times d},
\end{equation}
where we recall that \(\kappa^\star \equiv \kappa^\star_\kappa\) is defined explicitly in~\eqref{eq:auxiliary_parameters}. In the regime of Theorem~\ref{thm:train_test_equivalents_constant_kappa}, \(\kappa^\star\) is bounded away from zero by assumption. However, we do not have an upper bound on \(\kappa^\star_\kappa\), and thus we will need to allow for the possibility that \(\kappa^\star\) diverges with \(d\).

We momentarily write
\begin{gather*}
    \widetilde{\bR}(\vartheta) \deq  \frac{1}{\kappa^\star}\left(\frac{\bX\bX^\sT}{\kappa^\star n} + (1+\vartheta)\bI_d\right)^{-1}
    ,\quad \widetilde{\bM}(\vartheta) = \frac{1}{\kappa^\star}\left(\frac{\bSigma}{\kappa^\star (1+\widetilde{\mu}(\vartheta))} + (1+\vartheta)\bI_d\right)^{-1},
    \\ \widetilde{\mu}(\vartheta) = \frac{1}{n}\Tr(\bSigma \widetilde{\bM}(\vartheta))
\end{gather*}
such that \(\widehat{\bR}_\kappa = \widetilde{\bR}(0)\) and \(\bM_\kappa = \widetilde{\bM}(0)\). While we are ultimately interested in the case \(\vartheta=0\), we will show the following more general result, which will be used to obtain a second-order deterministic equivalent for \(\widehat{\bR}_\kappa^2\) in Lemma~\ref{lem:second_order_deterministic_equivalent}.

\begin{lemma}\label{lem:first_order_deterministic_equivalent}
    Suppose that Assumptions~\ref{ass:high_dim} and~\ref{ass:input_dist} hold, and that \(\kappa^\star \geq c\) for some constant \(c>0\). 
    Uniformly over \(\vartheta \in [0,1]\),
    \[
        \|\E[\widetilde{\bR}(\vartheta)] - \widetilde{\bM}(\vartheta)\|_\frob  \lesssim \frac{1}{\kappa^\star\sqrt{d}} 
    \]
    Furthermore, for every deterministic \(\bA\in \R^{d\times d}\) with \(\|\bA\|_\frob\leq 1\),
    \[
      \left|\Tr(\bA(\widetilde{\bR}(\vartheta)-\widetilde{\bM}(\vartheta)))\right| \prec \frac{1}{\kappa^\star\sqrt{d}}.
    \]
\end{lemma}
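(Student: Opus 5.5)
The plan is to reduce the statement to a standard anisotropic deterministic-equivalent argument after rescaling. Set $\bY \deq \bX/\sqrt{\kappa^\star}$, so the columns $\by_i = \bx_{0,i}/\sqrt{\kappa^\star}$ are i.i.d. with covariance $\bSigma/\kappa^\star$. Since $\kappa^\star\ge c$, $\|\bSigma/\kappa^\star\|_\op\le C_{\mathrm{LSI}}/c$, and the law of $\by_i$ satisfies LSI with constant $C_{\mathrm{LSI}}/\kappa^\star\lesssim 1$. Then $\kappa^\star\widetilde\bR(\vartheta)=(\bY\bY^\sT/n+(1+\vartheta)\bI_d)^{-1}$ is a sample-covariance resolvent at spectral parameter $-(1+\vartheta)$, which is bounded away from the spectrum uniformly in $\vartheta\in[0,1]$. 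The matrix $\kappa^\star\widetilde\bM(\vartheta)$ is exactly its Marchenko--Pastur deterministic equivalent (the fixed point~\eqref{eq:resolvent_fixed_point} evaluated at $z=-(1+\vartheta)$ with covariance $\bSigma/\kappa^\star$). It therefore suffices to prove both bounds for the rescaled resolvent with error $d^{-1/2}$ and then divide by $\kappa^\star$.

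For the expectation bound, I would follow the usual leave-one-out route. Write $\bR\deq\kappa^\star\widetilde\bR$ and $\bR_{-i}$ for the resolvent with column $i$ removed, and use Sherman--Morrison:
\[
\bR\by_i=\frac{\bR_{-i}\by_i}{1+\by_i^\sT\bR_{-i}\by_i/n}.
\]
Combining this with the identity $\bI=\bR(\bY\bY^\sT/n+(1+\vartheta)\bI)$ gives
\[
\E\bR=\Bigl(\frac{\bSigma/\kappa^\star}{1+\bar\mu}+(1+\vartheta)\bI\Bigr)^{-1}+\mathcal{E},\qquad \bar\mu\deq\frac{1}{n}\E\Tr\Bigl(\frac{\bSigma}{\kappa^\star}\bR\Bigr).
\]
Here $\mathcal{E}$ collects the fluctuations of $\by_i^\sT\bR_{-i}\by_i/n$ around $\bar\mu$. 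Hanson--Wright for LSI vectors (the Adamczak bound above) together with Poincaré concentration of $\Tr(\bSigma\bR_{-i})/n$ controls this error. The key point is the norm: tested against $\bA$ with $\|\bA\|_\frob\le 1$, the error term is $O(d^{-1/2})$. Indeed, each correction term has the form $\E[(\text{scalar fluctuation } O(d^{-1/2}))\cdot \bA$-quadratic form$]$, and the quadratic forms $\by_i^\sT\bR_{-i}\bA\bR_{-i}\by_i/n$ have size $\|\bA\|_\ast/n\le\sqrt d\|\bA\|_\frob/n$, so summing over $i$ gives $O(\sqrt{d}\cdot d^{-1/2}\cdot d^{-1/2})$. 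This is the step I expect to be the main obstacle: Frobenius rather than nuclear normalization costs a $\sqrt d$, which has to be recovered by a second-order expansion. Concretely, I would center $\by_i^\sT\bR_{-i}\by_i/n-\bar\mu$ and use independence of $\by_i$ from $\bR_{-i}$, so that the first-order fluctuation term has mean zero and only the product of two $O(d^{-1/2})$ fluctuations survives. Finally, stability of the scalar fixed point (the map $\mu\mapsto\frac1n\Tr(\bSigma\widetilde\bM)$ is a contraction because the spectral parameter is bounded away from zero) upgrades the approximate equation to $|\bar\mu-\widetilde\mu|\lesssim d^{-1}$. Perturbing the matrix inverse then gives the $\frob$-norm bound, with constants uniform in $\vartheta\in[0,1]$.

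For the concentration statement, I would show that $\bY\mapsto\Tr(\bA\bR)$ is Lipschitz in the Frobenius norm of $\bY$ with constant $\lesssim\|\bA\|_\frob\,\|\bR\|_\op^{3/2}/\sqrt{n}\lesssim d^{-1/2}$. The derivative is $-\frac{2}{n}\mathrm{sym}(\bR\bA\bR)\bY$, and $\|\bR\bY/\sqrt n\|_\op\le\|\bR\|_\op^{1/2}\le 1$. The product measure of the columns satisfies LSI with an $O(1)$ constant by tensorization, so~\eqref{eq:Gaussian_Lipschitz_concentration} gives $|\Tr(\bA(\bR-\E\bR))|\le d^{\epsilon-1/2}$ with probability $1-e^{-d^{2\epsilon}}$, i.e. stochastic domination $\prec d^{-1/2}$. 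Combining this with the expectation bound via $|\Tr(\bA(\E\bR-\kappa^\star\widetilde\bM))|\le\|\bA\|_\frob\|\E\bR-\kappa^\star\widetilde\bM\|_\frob$ and dividing by $\kappa^\star$ yields the claim.
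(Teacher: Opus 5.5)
Your proposal is correct. Its concentration half is the paper's argument. The paper likewise shows that $\bX\mapsto\Tr(\bA\widetilde\bR(\vartheta))$ is Lipschitz in Frobenius norm with constant $\lesssim\|\bA\|_\frob/(\kappa^\star\sqrt n)$. It does this via the resolvent identity and $\|\widetilde\bR\bX\|_\op\lesssim\sqrt n$, which is essentially your bound $\|\bR\boldsymbol{Y}/\sqrt n\|_\op\le\|\bR\|_\op^{1/2}$. It then tensorizes the LSI and adds the bias through $|\Tr(\bA\,\cdot\,)|\le\|\bA\|_\frob\|\cdot\|_\frob$.

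You differ on the expectation bound. The paper does not prove it. It imports it from the quantitative deterministic equivalent of \cite[Theorem 2.3]{chouard2022quantitative}, noting only that $\kappa^\star\ge c$ keeps the spectral parameter away from the spectrum. Your leave-one-out derivation makes the lemma self-contained, and your rescaling makes the uniformity in $\vartheta\in[0,1]$ and in large $\kappa^\star$ transparent, at the cost of length.

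If you write it out, fix two points of the accounting.

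First, centering $q_i=\by_i^\sT\bR_{-i}\by_i/n$ at the deterministic $\bar\mu$ does not by itself make the first-order term vanish. Split $q_i-\bar\mu$ into two pieces:
\begin{itemize}
\item $q_i-\frac1n\Tr(\bSigma'\bR_{-i})$, with $\bSigma'\deq\bSigma/\kappa^\star$. This is conditionally centered given $\bR_{-i}$, so it kills the product with the $\bR_{-i}$-measurable trace $\Tr(\bSigma'\bar\bM\bA\bR_{-i})$, which can be of size $\sqrt d$ when $\|\bA\|_\frob\le1$.
\item $\frac1n\Tr(\bSigma'\bR_{-i})-\bar\mu$, which is $O(1/n)$ by Poincar\'e.
\end{itemize}
The surviving first-order contributions are then a quadratic-form fluctuation of size $\|\bA\|_\frob$ times an $O(\sqrt d/n)$ fluctuation, and an $O(\sqrt d)$ trace times an $O(1/n)$ trace fluctuation. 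Both are $O(d^{-1/2})$. This is not a product of two $O(d^{-1/2})$ fluctuations, as your sketch says.

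Second, the map $\mu\mapsto\frac1n\Tr\bigl(\bSigma'(\bSigma'/(1+\mu)+(1+\vartheta)\bI_d)^{-1}\bigr)$ need not be a global contraction. Its derivative at $\mu=0$ can exceed $1$ when $d/n$ is large and $\bSigma'$ is not small. The stability you need comes instead from the derivative bound $\mu^\ast/(1+\mu^\ast)\le 1-c$ at the fixed point $\mu^\ast$, for instance via the usual Cauchy--Schwarz stability argument.
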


\begin{proof}
    The first part follows directly from \cite[Theorem 2.3]{chouard2022quantitative} and the fact that \(\kappa^\star\geq C\) for some constant \(C>0\). For the concentration, let \(\bX_1,\bX_2\in \R^{d\times n}\) and denote \(\widetilde{\bR}_i=(\bX_i\bX_i^\sT/n + \kappa^\star(1+\vartheta)\bI_d)^{-1}\). Using the resolvent identity \(\widetilde{\bR}_1-\widetilde{\bR}_2 = \frac{1}{n}\widetilde{\bR}_1(\bX_2\bX_2^\sT - \bX_1\bX_1^\sT)\widetilde{\bR}_2\), we have
    \begin{align*}
        \|\widetilde{\bR}_1-\widetilde{\bR}_2\|_\frob & \leq  \frac{1}{n}\|\widetilde{\bR}_1\|_\op \|\bX_2^\sT\widetilde{\bR}_2\|_\op \|\bX_1-\bX_2\|_\frob + \frac{1}{n}\|\widetilde{\bR}_1\bX_1\|_\op\|\widetilde{\bR}_2\|_\op\|\bX_1-\bX_2\|_\frob .
    \end{align*}
    Note that \(\|\widetilde{\bR}_i\|_\op\leq 1/\kappa^\star\) and, by applying the scalar bound
    \[
        \frac{s}{s^2/n+\kappa^\star(1+\vartheta)}
        \leq \frac{\sqrt n}{2\sqrt{\kappa^\star(1+\vartheta)}}
    \]
    to the singular values of \(\bX_i\), we have \(\|\widetilde{\bR}_i\bX_i\|_\op\lesssim\sqrt n\), using that \(\kappa^\star\) is bounded below.
    Substituting this back yields
    \[
        \|\widetilde{\bR}_1-\widetilde{\bR}_2\|_\frob \lesssim \frac{1}{\kappa^\star \sqrt{n}}\|\bX_1-\bX_2\|_\frob.
    \]
    In particular, for any \(\bA\in \R^{d\times d}\), the function \(\bX\mapsto \Tr(\bA\widetilde{\bR}(\vartheta))\) is Lipschitz with respect to the Frobenius norm with Lipschitz constant \(\|\bA\|_\frob / (\kappa^\star \sqrt{ n})\). By the tensorization property of the logarithmic Sobolev inequality, \(\pi_0^{\otimes n}\) admits a logarithmic Sobolev inequality with the same constant \(C_{\mathrm{LSI}}\) as \(\pi_0\). Therefore, by the Gaussian concentration of Lipschitz observables \eqref{eq:Gaussian_Lipschitz_concentration}, for every \(t>0\),
    \[
        \P\left(\left|\Tr(\bA\widetilde{\bR}(\vartheta)) - \E[\Tr(\bA\widetilde{\bR}(\vartheta))]\right|\geq t\right) \leq  2\exp\left(-\frac{t^2 n(\kappa^\star_\kappa)^2}{2C_{\mathrm{LSI}}\|\bA\|_\frob^2}\right).
    \]
    We conclude the proof using the bound on the bias established in the first part of the lemma, along with the definition of stochastic domination.
\end{proof}

Our analysis also requires a second-order deterministic equivalent for the squared resolvent, \(\widehat{\bR}_\kappa^2\). Observe that these squared matrices emerge naturally as the negative derivatives of their first-order counterparts evaluated at zero. Consequently, we can establish this second-order equivalent by applying a finite-difference approximation to the first-order bounds established in Lemma~\ref{lem:first_order_deterministic_equivalent}.

\begin{lemma}\label{lem:derivative_equiv_squared_resolvent}
    We have
    \[
        \partial_\vartheta \widetilde{\bR}(\vartheta) = -\kappa^\star\widetilde{\bR}(\vartheta)^2,\qquad \partial_\vartheta \widetilde{\bM}(\vartheta) = -\kappa^\star\widetilde{\bM}(\vartheta)\left(\frac{\frac{1}{n}\Tr(\bSigma\widetilde{\bM}(\vartheta)^2)}{(1+\widetilde{\mu}(\vartheta))^2 - \frac{1}{n}\Tr(\bSigma^2\widetilde{\bM}(\vartheta)^2)} \bSigma + \bI_d\right)\widetilde{\bM}(\vartheta)
    \]
    for every \(\vartheta>0\). In particular, \(\widehat{\bR}_\kappa^2 = -(\kappa^\star)^{-1}\partial_\vartheta\widetilde{\bR}(0)\) and \(\bM_\kappa^{(2)}=-(\kappa^\star)^{-1}\partial_\vartheta \widetilde{\bM}(0)\).
\end{lemma}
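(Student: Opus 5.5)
The first identity is a direct computation. Write $\widetilde{\bR}(\vartheta) = (\bX\bX^\sT/n + \kappa^\star(1+\vartheta)\bI_d)^{-1}$ and differentiate using $\partial_\vartheta \bB^{-1} = -\bB^{-1}(\partial_\vartheta \bB)\bB^{-1}$ with $\partial_\vartheta \bB = \kappa^\star\bI_d$. This gives $\partial_\vartheta\widetilde{\bR} = -\kappa^\star\widetilde{\bR}^2$. Evaluating at $\vartheta=0$, where $\widetilde{\bR}(0)=\widehat{\bR}_\kappa$, yields $\widehat{\bR}_\kappa^2 = -(\kappa^\star)^{-1}\partial_\vartheta\widetilde{\bR}(0)$.

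For $\widetilde{\bM}$, I would write $\widetilde{\bM}(\vartheta) = (\bSigma/(1+\widetilde\mu(\vartheta)) + \kappa^\star(1+\vartheta)\bI_d)^{-1}$. Differentiation then gives
\[
\partial_\vartheta\widetilde{\bM} = -\widetilde{\bM}\Bigl(\kappa^\star\bI_d - \frac{\widetilde\mu'}{(1+\widetilde\mu)^2}\bSigma\Bigr)\widetilde{\bM}.
\]
To find $\widetilde\mu'$, I would differentiate the fixed point $\widetilde\mu = \frac1n\Tr(\bSigma\widetilde{\bM})$ and insert the expression above:
\[
\widetilde\mu' = -\frac{\kappa^\star}{n}\Tr(\bSigma\widetilde{\bM}^2) + \frac{\widetilde\mu'}{(1+\widetilde\mu)^2}\frac1n\Tr(\bSigma^2\widetilde{\bM}^2).
\]
This is linear in $\widetilde\mu'$, and solving it gives
\[
\frac{\widetilde\mu'}{(1+\widetilde\mu)^2} = -\kappa^\star\,\frac{\frac1n\Tr(\bSigma\widetilde{\bM}^2)}{(1+\widetilde\mu)^2-\frac1n\Tr(\bSigma^2\widetilde{\bM}^2)}.
\]
Substituting back produces exactly the claimed formula. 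Setting $\vartheta=0$ and comparing with the definition of $\bM^{(2)}_\kappa$ then gives $\bM^{(2)}_\kappa = -(\kappa^\star)^{-1}\partial_\vartheta\widetilde{\bM}(0)$.

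The main obstacle is justifying these manipulations: that $\vartheta\mapsto\widetilde\mu(\vartheta)$ is differentiable, and that the denominator is nonzero. I would handle both with the implicit function theorem applied to $F(\vartheta,\mu)=\mu-\frac1n\Tr(\bSigma(\bSigma/(1+\mu)+\kappa^\star(1+\vartheta)\bI_d)^{-1})$. Since
\[
\partial_\mu F = 1-\frac{1}{(1+\mu)^2}\,\frac1n\Tr(\bSigma^2\widetilde{\bM}^2),
\]
it suffices to show $\frac1n\Tr(\bSigma^2\widetilde{\bM}^2)<(1+\widetilde\mu)^2$. Working in the eigenbasis of $\bSigma$ with eigenvalues $s_j$, write $a_j = s_j/(1+\widetilde\mu)$ and $b=\kappa^\star(1+\vartheta)$. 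Then
\[
\frac{s_j^2}{(a_j+b)^2} = (1+\widetilde\mu)\,\frac{s_j}{a_j+b}\cdot\frac{a_j}{a_j+b} < (1+\widetilde\mu)\,\frac{s_j}{a_j+b},
\]
because $b>0$. Summing over $j$ gives $\frac1n\Tr(\bSigma^2\widetilde{\bM}^2) < (1+\widetilde\mu)\widetilde\mu < (1+\widetilde\mu)^2$. Hence the denominator is strictly positive and the implicit function theorem applies. Smooth dependence of the unique positive solution on $\vartheta$ follows, which also covers the one-sided derivative at $\vartheta=0$.
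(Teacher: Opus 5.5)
Your proposal is correct and follows the paper's proof essentially step for step. Like the paper, you differentiate the matrix inverse, differentiate the fixed point for $\widetilde\mu$, and invert the resulting linear equation using the same bound $\frac1n\Tr(\bSigma^2\widetilde{\bM}^2)\le(1+\widetilde\mu)\widetilde\mu<(1+\widetilde\mu)^2$. Your implicit-function-theorem step is a small rigor addition the paper leaves implicit. It justifies that $\widetilde\mu$ is differentiable, including at $\vartheta=0$, a point the paper uses even though it states the lemma only for $\vartheta>0$.
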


\begin{proof}
    The expression for \(\partial_\vartheta \widetilde{\bR}(\vartheta)\) follows directly from the standard identity for the derivative of a matrix inverse. For \(\partial_\vartheta \widetilde{\bM}(\vartheta)\), we first compute \(\partial_\vartheta \widetilde{\mu}(\vartheta)\) by differentiating the fixed-point equation defining \(\widetilde{\mu}(\vartheta)\):
    \begin{align*}
        \partial_\vartheta \widetilde{\mu}(\vartheta) & = - \frac{\kappa^\star}{ n}\Tr\left(\bSigma\widetilde{\bM}(\vartheta)\left(-\frac{\bSigma}{\kappa^\star(1+\widetilde{\mu}(\vartheta))^2}\partial_\vartheta \widetilde{\mu}(\vartheta) + \bI_d\right)\widetilde{\bM}(\vartheta)\right) 
        \\ & = \frac{1}{ n}\Tr(\bSigma\widetilde{\bM}(\vartheta)\bSigma\widetilde{\bM}(\vartheta))\frac{\partial_\vartheta \widetilde{\mu}(\vartheta)}{(1+\widetilde{\mu}(\vartheta))^2} - \frac{\kappa^\star}{n}\Tr(\bSigma\widetilde{\bM}(\vartheta)^2).
    \end{align*}
    By the definition of \(\widetilde{\bM}(\vartheta)\), we have the matrix inequality \(0 \prec \frac{ \bSigma}{1+\widetilde{\mu}(\vartheta)}\widetilde{\bM}(\vartheta) \prec \bI_d\). Because \(\bSigma\) and \(\widetilde{\bM}(\vartheta)\) commute, it follows that
    \[
        0 \leq \frac{\Tr(\bSigma^2\widetilde{\bM}(\vartheta)^2)}{n(1+\widetilde{\mu}(\vartheta))^2} \leq \frac{1}{n(1+\widetilde{\mu}(\vartheta))}\Tr(\bSigma\widetilde{\bM}(\vartheta)) = \frac{\widetilde{\mu}(\vartheta)}{1+\widetilde{\mu}(\vartheta)} < 1.
    \]
    Thus, the coefficient of \(\partial_\vartheta \widetilde{\mu}(\vartheta)\) is strictly positive, and we can invert the expression to obtain
    \[
        \partial_\vartheta \widetilde{\mu}(\vartheta) = -\kappa^\star\frac{\frac{1}{n}\Tr(\bSigma\widetilde{\bM}(\vartheta)^2)}{1 - \frac{1}{n}\Tr(\bSigma^2\widetilde{\bM}(\vartheta)^2)/(1+\widetilde{\mu}(\vartheta))^2}.
    \]
    Substituting this back into the chain rule expansion for \(\partial_\vartheta \widetilde{\bM}(\vartheta)\) yields the desired expression.
\end{proof}

In order to establish the second-order deterministic equivalent, we need as a preliminary step to bound the higher-order derivatives of \(\E[\widetilde{\bR}(\vartheta)]\) and \(\widetilde{\bM}(\vartheta)\) with respect to \(\vartheta\).

\begin{lemma}\label{lem:higher_order_derivative_bounds}
    For every fixed \(\vartheta \in [0,C)\) and every \(k \in \N\),
    \[
        \|\partial_\vartheta^k \E[\widetilde{\bR}(\vartheta)]\|_\op \leq \frac{k!}{\kappa^\star_\kappa}, \qquad \|\partial_\vartheta^k \widetilde{\bM}(\vartheta)\|_\op \leq \frac{k! 2^{k+2}}{\kappa^\star_\kappa}.
    \]
\end{lemma}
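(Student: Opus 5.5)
The two bounds are handled separately. The first is a direct spectral computation. The second follows from the fixed-point structure via a Neumann-series or contraction argument.

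\emph{Empirical resolvent.} Since \(\widetilde{\bR}(\vartheta) = \bigl(\bX\bX^\sT/n + \kappa^\star(1+\vartheta)\bI_d\bigr)^{-1}\), the standard identity for derivatives of a resolvent in a scalar shift gives
\[
\partial_\vartheta^k \widetilde{\bR}(\vartheta) = (-1)^k k!\,(\kappa^\star)^k \widetilde{\bR}(\vartheta)^{k+1}.
\]
Diagonalizing \(\bX\bX^\sT/n\) with eigenvalues \(s\ge 0\), the eigenvalues of this matrix are \(k!(\kappa^\star)^k/(s+\kappa^\star(1+\vartheta))^{k+1}\). These are at most \(k!/(\kappa^\star(1+\vartheta)^{k+1}) \le k!/\kappa^\star\) for \(\vartheta\ge 0\). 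The bound is deterministic and uniform in \(\bX\). Differentiation under the expectation is justified by dominated convergence, since all derivatives are uniformly bounded. Jensen's inequality for the operator norm then yields \(\|\partial_\vartheta^k\E[\widetilde{\bR}(\vartheta)]\|_\op \le k!/\kappa^\star\).

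\emph{Deterministic equivalent.} Here \(\widetilde{\bM}(\vartheta)\) commutes with \(\bSigma\) and is diagonal in its eigenbasis, with entries \(m_j(\vartheta) = \bigl(\lambda_j/(1+\widetilde\mu) + \kappa^\star(1+\vartheta)\bigr)^{-1}\). The plan is to view everything as analytic in \(\vartheta\) and use Cauchy's estimate. Extend \(\vartheta\) to a complex disk of radius \(r = 1/2\) around the real point. First show that the fixed point \(\widetilde\mu\) admits an analytic continuation there with \(\Re(1+\widetilde\mu)\) bounded below and \(|m_j|\le 2/\kappa^\star\). Cauchy's formula then gives
\[
|\partial^k m_j| \le k!\,r^{-k}\sup|m_j| \le k!\,2^{k}\cdot 2/\kappa^\star,
\]
which is within the claimed \(k!2^{k+2}/\kappa^\star\).

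To get the continuation, apply the implicit function theorem to \(F(\mu,\vartheta) = \mu - \frac1n\Tr(\bSigma\widetilde{\bM})\). By the estimate in Lemma~\ref{lem:derivative_equiv_squared_resolvent}, \(\partial_\mu F = 1 - \frac{1}{n}\Tr(\bSigma^2\widetilde{\bM}^2)/(1+\widetilde\mu)^2 \ge 1/(1+\widetilde\mu) > 0\) on the real axis. This gives a local analytic solution.

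\emph{Main obstacle.} The hard part is obtaining a uniform radius and uniform bounds on \(m_j\) in the complex disk. A Cauchy argument alone risks losing control when \(\kappa^\star\) is large or \(\widetilde\mu\) is small.

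If the Cauchy argument proves too delicate, the fallback is an induction on \(k\), the alternative route. Differentiate the identity \(\widetilde{\bM}^{-1} = \bSigma/(1+\widetilde\mu) + \kappa^\star(1+\vartheta)\bI\) repeatedly, and use the Leibniz/Faà di Bruno formulas for \(\partial^k\widetilde{\bM}\) and \(\partial^k\widetilde\mu\). The inputs are:
\begin{itemize}
\item the bound \(\|\bSigma\widetilde{\bM}/(1+\widetilde\mu)\|_\op\le 1\);
\item \(\|\widetilde{\bM}\|_\op\le 1/\kappa^\star\);
\item the relation \(|\partial^k\widetilde\mu|/(1+\widetilde\mu) \lesssim \) the corresponding matrix bound, which follows from the positivity of the coefficient established in Lemma~\ref{lem:derivative_equiv_squared_resolvent} applied at each order.
\end{itemize}
Generating-function domination, comparing with \(1/(1-2x)\), then produces the factor \(k!2^{k}\), with the extra constant \(4\) absorbing the contribution of \(\widetilde\mu\).
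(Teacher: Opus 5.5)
\emph{The gap.} Your bound for $\E[\widetilde{\bR}(\vartheta)]$ is the paper's argument. The second bound, however, is not proved. Everything hinges on the claim that $\widetilde\mu$ continues analytically to the whole disk $|\vartheta-\vartheta_0|<1/2$ with $\Re(1+\widetilde\mu)>0$, which is what yields $|m_j|\le 2/\kappa^\star$. You only flag this as the main obstacle and never establish it.

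The implicit function theorem at a real point gives a local analytic solution on a disk of uncontrolled radius. The positivity $\partial_\mu F\ge 1/(1+\widetilde\mu)$ from Lemma~\ref{lem:derivative_equiv_squared_resolvent} holds only on the real axis and says nothing about $\partial_\mu F$ at complex $\vartheta$. The fallback induction does not fill the hole either. Its third input, a bound on $\partial^k\widetilde\mu$ ``at each order'', is asserted rather than derived. Nothing shows that the Fa\`a di Bruno expansion of $1/(1+\widetilde\mu)$ closes with the constant $2^{k+2}$.

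\emph{The missing idea.} The paper uses the Stieltjes representation \eqref{eq:resolvent_fixed_point_measure}. Both sides solve the same fixed-point equation, so $\widetilde{\bM}(\vartheta)=\bM(-\kappa^\star(1+\vartheta))=\int(\lambda+\kappa^\star(1+\vartheta))^{-1}\,\bOmega(\de\lambda)$, with $\bOmega\succeq0$, $\bOmega(\R)=\bI_d$ and $\operatorname{supp}\bOmega\subseteq[0,\infty)$. Differentiating under the integral gives $\|\partial_\vartheta^k\widetilde{\bM}(\vartheta)\|_\op\le k!/(\kappa^\star(1+\vartheta)^{k+1})\le k!/\kappa^\star$ at once. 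This is a better constant, and no continuation of $\widetilde\mu$ is needed.

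The same representation also rescues your Cauchy route, and it shows your worry about large $\kappa^\star$ is unfounded. The $\vartheta$-disk of radius $1/2$ is the $z$-disk of radius $\kappa^\star/2$ around $-\kappa^\star(1+\vartheta_0)$. That disk stays at distance at least $\kappa^\star/2$ from $[0,\infty)$, so $\|\bM(z)\|_\op\le 2/\kappa^\star$ there, and Cauchy gives $k!\,2^{k+1}/\kappa^\star$.

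\emph{Doing it by hand.} If you want to avoid the representation, you must build the continuation yourself. Put $z=-\kappa^\star(1+\vartheta)$, let $\lambda_j$ be the eigenvalues of $\bSigma$, and write $m_j^{-1}=\lambda_j/(1+\mu)-z$.
\begin{itemize}
\item As long as $\Re(1+\mu)>0$, you get $\Re m_j^{-1}\ge-\Re z\ge\kappa^\star/2$. Hence $|m_j|\le 2/\kappa^\star$, $\Re m_j>0$ and $\Re\mu\ge0$.
\item So $\Re(1+\mu)\ge1$ is self-improving along paths, and the continued solution can never reach $\{\Re(1+\mu)\le0\}$.
\item The identity
\[
\Re\mu=\frac{\Re(1+\mu)}{|1+\mu|^2}\,\frac1n\sum_j\lambda_j^2|m_j|^2+(-\Re z)\,\frac1n\sum_j\lambda_j|m_j|^2
\]
then gives $|\partial_\mu F-1|\le\Re\mu/(1+\Re\mu)<1$.
\item Since $|\mu|\lesssim 1/\kappa^\star$ a priori, $\partial_\mu F$ stays uniformly away from $0$. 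The local solutions therefore patch over the simply connected disk.
\end{itemize}
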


\begin{proof}
    We first establish a bound on the higher-order derivatives of \(\E[\widetilde{\bR}(\vartheta)]\). For every \(k\in \N\), we use the dominated convergence theorem to exchange the order of differentiation and expectation, and then apply the standard formula for the derivative of a matrix inverse to obtain
    \[
        \partial_\vartheta^k \E[\widetilde{\bR}(\vartheta)] = \frac{(-1)^k k!}{\kappa^\star} \E[(\kappa^\star\widetilde{\bR}(\vartheta))^{k+1}].
    \]
    The operator norm of the resolvent is bounded by the scalar shift \(\|\kappa^\star\widetilde{\bR}(\vartheta)\|_\op \leq 1\). By Jensen's inequality and the sub-multiplicativity of the operator norm, it immediately follows that
    \[
        \|\partial_\vartheta^k \E[\widetilde{\bR}(\vartheta)]\|_\op \leq\frac{ k! }{\kappa^\star}\E[\|\kappa^\star\widetilde{\bR}(\vartheta)\|_\op^{k+1}] \leq \frac{k!}{\kappa^\star}.
    \]

    For the higher-order derivatives of \(\widetilde{\bM}(\vartheta)\), let \(\bM(z)\) denote the solution of the fixed-point equation~\eqref{eq:resolvent_fixed_point}. By
\eqref{eq:resolvent_fixed_point_measure},
\[
    \bM(z)
    =
    \int_{\R}\frac{1}{\lambda-z}\,\bOmega(\de\lambda),
    \qquad
    z\in\C\setminus\operatorname{supp}(\bOmega),
\]
where \(\bOmega\) is a positive matrix-valued measure satisfying
\(\bOmega(\R)=\bI_d\) and
\(\operatorname{supp}(\bOmega)\subseteq[0,\infty)\).
This representation extends \(\bM\) analytically across the
negative real axis and satisfies the Schwarz symmetry
\(\bM(\overline z)=\bM(z)^\ast\).

For every real \(\vartheta\geq0\), the matrix \(\widetilde{\bM}(\vartheta)\) defined above is
the restriction of this resolvent to the negative real axis:
\[
    \widetilde{\bM}(\vartheta)
    =
    \bM\bigl(-\kappa^\star(1+\vartheta)\bigr)
    =
    \int_{\R}
    \frac{1}{\lambda+\kappa^\star(1+\vartheta)}
    \,\bOmega(\de\lambda).
\]
Indeed, both sides solve the same fixed-point equation, whose admissible
solution is unique. Differentiating under the integral therefore gives
\[
    \partial_\vartheta^k\widetilde{\bM}(\vartheta)
    =
    (-1)^k k!(\kappa^\star)^k
    \int_{\R}
    \frac{\bOmega(\de\lambda)}
    {(\lambda+\kappa^\star(1+\vartheta))^{k+1}}.
\]
Since \(\lambda\geq0\) on the support of \(\bOmega\) and
\(\bOmega(\R)=\bI_d\), it follows that
\[
    \|\partial_\vartheta^k\widetilde{\bM}(\vartheta)\|_{\op}
    \leq
    \frac{k!}{\kappa^\star(1+\vartheta)^{k+1}}
    \leq
    \frac{k!}{\kappa^\star},
\]
which proves the desired bound.
\end{proof}


\begin{lemma}\label{lem:second_order_deterministic_equivalent}
    Suppose that Assumptions~\ref{ass:high_dim} and~\ref{ass:input_dist} hold, and that \(\kappa^\star \geq c\) for some constant \(c>0\). Then,
    \[
        \|\E[\widehat{\bR}_\kappa^2] - \bM^{(2)}_\kappa\|_\frob \prec \frac{1}{(\kappa^\star_\kappa)^2\sqrt{d}}
    \]
    Furthermore, for any deterministic \(\bA \in \R^{d\times d}\) with \(\|\bA\|_\frob \leq 1\),
    \[
       \left|\Tr(\bA(\widehat{\bR}_\kappa^2-\bM_\kappa^{(2)}))\right| \prec \frac{1}{(\kappa^\star_\kappa)^2\sqrt{d}}.
    \]
\end{lemma}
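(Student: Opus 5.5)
The approach is a finite-difference argument in $\vartheta$. By Lemma~\ref{lem:derivative_equiv_squared_resolvent}, $\widehat{\bR}_\kappa^2 = -(\kappa^\star)^{-1}\partial_\vartheta\widetilde{\bR}(0)$ and $\bM^{(2)}_\kappa = -(\kappa^\star)^{-1}\partial_\vartheta\widetilde{\bM}(0)$. So it suffices to compare the derivatives at $0$ of $\widetilde{\bR}$ and $\widetilde{\bM}$, using only the first-order closeness in Lemma~\ref{lem:first_order_deterministic_equivalent}, which holds uniformly over $\vartheta\in[0,1]$.

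Fix a step $\eta\in(0,1]$. By Taylor's theorem with the second-derivative bounds of Lemma~\ref{lem:higher_order_derivative_bounds},
\[
\Big\|\partial_\vartheta \E[\widetilde{\bR}](0) - \tfrac{1}{\eta}\big(\E[\widetilde{\bR}(\eta)]-\E[\widetilde{\bR}(0)]\big)\Big\|_\op \lesssim \frac{\eta}{\kappa^\star},
\]
and the same bound holds for $\widetilde{\bM}$. Rather than the operator norm, I would work with the entries that matter. For the trace statement, apply this with $|\Tr(\bA\,\cdot)|\le \|\bA\|_\frob\|\cdot\|_\frob$. For the Frobenius statement, use the spectral representation: $\partial^2_\vartheta$ of either object is $\kappa^{\star 2}$ times an average of $(\lambda+\kappa^\star(1+\vartheta))^{-3}$ against a spectral measure. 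I would therefore redo the Taylor remainder bound directly in trace form, $|\Tr(\bA\,\partial_\vartheta^2(\cdot))|\lesssim \|\bA\|_\frob\sqrt d/\kappa^\star$. This loses $\sqrt d$ in Frobenius norm unless handled carefully.

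To avoid that loss, I would use a higher-order (centered or Richardson) scheme. Lemma~\ref{lem:higher_order_derivative_bounds} bounds all derivatives by $k!\,2^{k+2}/\kappa^\star$, so both functions are analytic in a disc of radius of order one around $\vartheta\in[0,1]$. The cleanest route is a Cauchy integral for $\E[\widetilde{\bR}]-\widetilde{\bM}$ on a small circle in the complex $\vartheta$-plane. This needs Lemma~\ref{lem:first_order_deterministic_equivalent} for complex $\vartheta$, which follows from the same Chouard-type bound at complex spectral parameter with $\Re(1+\vartheta)\ge 1/2$.

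Combining the pieces, the difference of difference quotients is controlled by $\frac{1}{\eta}\cdot\frac{1}{\kappa^\star\sqrt d}$. With an analytic (Cauchy) estimate, taking $\eta$ of order one on a fixed contour gives
\[
\|\partial_\vartheta\E[\widetilde{\bR}](0)-\partial_\vartheta\widetilde{\bM}(0)\|_\frob\lesssim \frac{1}{\kappa^\star\sqrt d}.
\]
Dividing by $\kappa^\star$ yields the $(\kappa^\star)^{-2}d^{-1/2}$ rate.

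For the concentration statement, I would run the Lipschitz argument of Lemma~\ref{lem:first_order_deterministic_equivalent} directly on $\bX\mapsto\Tr(\bA\widehat{\bR}_\kappa^2)$. The same resolvent identity, applied twice, gives Lipschitz constant $\lesssim \|\bA\|_\frob/((\kappa^\star)^2\sqrt n)$. LSI concentration plus the bias bound then gives the $\prec$ statement.

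The main obstacle is the complex-$\vartheta$ (or otherwise dimension-free) version of the first-order bound, needed so that the finite difference does not lose a factor. If that extension is problematic, my fallback is the real finite difference with $\eta=d^{-1/4}$. Its error terms are $\eta/\kappa^\star$ in operator norm (and the remainder is controlled in trace form) and $\eta^{-1}(\kappa^\star\sqrt d)^{-1}$. This gives only $d^{-1/4}$, which would then need to be upgraded via a higher-order stencil using the factorial derivative bounds.
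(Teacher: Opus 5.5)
Your primary route differs from the paper's. The paper stays on the real axis. It applies a $(k+1)$-point forward-difference formula to $f_\bA(\vartheta)=\Tr(\bA(\E[\widetilde\bR(\vartheta)]-\widetilde\bM(\vartheta)))$ with nodes $j\Delta/k$, and bounds the node values by the real-$\vartheta$ Lemma~\ref{lem:first_order_deterministic_equivalent}. It bounds the remainder via $|\Tr(\bA\bB)|\le\sqrt d\,\|\bA\|_\frob\|\bB\|_\op$ and Lemma~\ref{lem:higher_order_derivative_bounds}, accepting the $\sqrt d$ loss. This gives
\[
|f_\bA'(0)|\lesssim \frac{C_k}{\kappa^\star}\Bigl(\frac{1}{\Delta\sqrt d}+\Delta^k\sqrt d\Bigr),
\]
and the choice $\Delta=d^{-\epsilon}$, $k=\lceil\epsilon^{-1}\rceil$ beats the loss at the cost of a factor $d^{\epsilon}$, which $\prec$ absorbs.

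Your Cauchy-integral estimate on a circle of radius of order one in the complex $\vartheta$-plane avoids the $\sqrt d$ loss entirely. It would give a clean $\lesssim(\kappa^\star)^{-2}d^{-1/2}$ bias bound with no higher-derivative control. However, it requires $\|\E[\widetilde\bR(\vartheta)]-\widetilde\bM(\vartheta)\|_\frob\lesssim(\kappa^\star\sqrt d)^{-1}$ for \emph{complex} $\vartheta$, which the paper never establishes. You would have to import it from a quantitative equivalent at complex spectral parameter $z=-\kappa^\star(1+\vartheta)$, and check that the constants still scale like $1/\kappa^\star$ when $\kappa^\star$ may diverge. The paper's route buys self-containedness: only the real-axis lemma and the factorial derivative bounds are used.

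Your fallback contains an error. With $\|\bA\|_\frob\le1$, the two-point difference at $\eta=d^{-1/4}$ does not give $d^{-1/4}$. In trace form the Taylor remainder is $\eta\sqrt d/\kappa^\star$, not $\eta/\kappa^\star$; the operator-norm bound only suffices for nuclear-norm-normalized $\bA$. The two error terms $\eta^{-1}(\kappa^\star\sqrt d)^{-1}$ and $\eta\sqrt d/\kappa^\star$ therefore balance at order $1/\kappa^\star$, so a two-point scheme yields no decay in $d$ at all. The high-order stencil is not an ``upgrade'' of a $d^{-1/4}$ bound; it is the whole mechanism. Once you use it from the start, your fallback coincides with the paper's proof. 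Your concentration step, with Lipschitz constant $\lesssim\|\bA\|_\frob/((\kappa^\star)^2\sqrt n)$ via $\widehat\bR_1^2-\widehat\bR_2^2$ and LSI, matches the paper.
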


\begin{proof}
    Let \(\bA \in \R^{d\times d}\) with \(\|\bA\|_\frob\leq 1\) and define \(f_\bA(\vartheta) = \Tr(\bA(\E[\widetilde{\bR}(\vartheta)] - \widetilde{\bM}(\vartheta)))\). Let \(k\in \N\). Using the standard \((k+1)\)-point forward difference formula for the derivative at \(\vartheta=0\) with nodes \(\vartheta_j=j\Delta/k\) for \(j=0,\ldots, k\), we obtain
    \begin{equation}\label{eq:forward_diff_derivative}
        |f_\bA'(0)| \leq \frac{k 2^{k+1}}{\Delta}\max_{j\in \{0,\ldots, k\}}|f_\bA(\vartheta_j)| + \frac{\Delta^k}{(k+1)k^k}\max_{\vartheta\in [0,\Delta]}|f_\bA^{(k+1)}(\vartheta)|,
    \end{equation}
    where we bounded the sum of the absolute values of the Lagrange basis derivatives as \(\sum_{j=0}^{k}|L_j'(0)|\leq k 2^{k+1}/\Delta\) (e.g., see \cite[Lemma 6.2]{cheng2024dimension}).

    By Lemma~\ref{lem:first_order_deterministic_equivalent}, the function values are bounded by 
    \[
         \frac{k 2^{k+1}}{\Delta}\max_{j\in \{0,\ldots, k\}}|f_\bA(\vartheta_j)| \lesssim \frac{k 2^{k+1}}{\Delta \kappa^\star\sqrt{d}}.
    \]
    For the remainder term, we bridge the trace to the operator norm using the inequality \(|\Tr(\bA\bB)| \leq \|\bA\|_\frob \|\bB\|_\frob \leq \sqrt{d} \|\bA\|_\frob \|\bB\|_\op\). Applying Lemma~\ref{lem:higher_order_derivative_bounds} to the \((k+1)\)-th derivative yields
    \[
        |f_\bA^{(k+1)}(\vartheta)| \leq \sqrt{d} \left( \|\partial_\vartheta^{k+1}\E[\widetilde{\bR}(\vartheta)]\|_\op + \|\partial_\vartheta^{k+1}\widetilde{\bM}(\vartheta)\|_\op \right) \leq \frac{\sqrt{d} (k+1)! 2^{k+4}}{\kappa^\star}.
    \]
    Plugging this into the second term of~\eqref{eq:forward_diff_derivative}, the \((k+1)\) factor in the denominator cancels, giving
    \[
        \frac{\Delta^k}{(k+1)k^k}\max_{\vartheta\in [0,\Delta]}|f_\bA^{(k+1)}(\vartheta)| \leq \frac{\Delta^k k! 2^{k+4} \sqrt{d}}{k^k \kappa^\star}.
    \]
    Bringing both bounds together, we have shown that
    \[
         |f_\bA'(0)| \lesssim \frac{C_k}{\kappa^\star}\left(\frac{1}{\Delta \sqrt{d}} + \Delta^k \sqrt{d}\right)
    \]
    for some constant \(C_k > 0\) depending on \(k\). This proves the first part of the lemma by choosing \(\Delta = d^{-\epsilon}\) for some sufficiently small \(\epsilon > 0\) and \(k=\lceil \epsilon^{-1}\rceil\), and then using Lemma~\ref{lem:derivative_equiv_squared_resolvent} to divide \(f_\bA'(0)\) by \(-\kappa^\star\) and identify \(\Tr(\bA(\E[\widehat{\bR}_\kappa^2] - \bM_\kappa^{(2)}))\).

    We apply a similar argument as in the first-order case to establish the concentration of \(\Tr(\bA\widehat{\bR}_\kappa^2)\) around its expectation. Let \(\bX_1,\bX_2\in \R^{d\times n}\) and denote \(\widehat{\bR}_i=(\bX_i\bX_i^\sT/n + \kappa^\star\bI_d)^{-1}\). Then,
    \[
        \|\widehat{\bR}_1^2 - \widehat{\bR}_2^2\|_\frob  \leq (\|\widehat{\bR}_1\|_\op + \|\widehat{\bR}_2\|_\op) \|\widehat{\bR}_1 - \widehat{\bR}_2\|_\frob \leq \frac{2}{\kappa^\star_\kappa}\|\widehat{\bR}_1 - \widehat{\bR}_2\|_\frob.
    \]
    We established in the proof of Lemma~\ref{lem:first_order_deterministic_equivalent} that \(\|\widehat{\bR}_1 - \widehat{\bR}_2\|_\frob \leq \frac{1}{\sqrt{n}\kappa^\star_\kappa}\|\bX_1-\bX_2\|_\frob\). Therefore, the function \(\bX\mapsto \Tr(\bA\widehat{\bR}_\kappa^2)\) is Lipschitz with respect to the Frobenius norm with Lipschitz constant \(2\|\bA\|_\frob / (\sqrt{n}(\kappa^\star_\kappa)^2)\). We conclude by following the exact same concentration argument as in the first-order case.
\end{proof}

Finally, we need a quantitative local law that characterizes the distance between the empirical resolvent and its deterministic equivalent outside the spectral support. Recall the empirical resolvent
\begin{equation}
    \widehat{\bR}(z) \deq  \left(\frac{\bX\bX^\sT}{n} - z\bI_d\right)^{-1} \in \C^{d\times d},
\end{equation}
which is well-defined for all \(z \in \C\) outside the spectrum of \(\bX\bX^\sT/n\). The corresponding deterministic equivalent is given by \(\bM(z) = (\tfrac{\bSigma}{1+\mu(z)} - z\bI_d)^{-1}\), where \(\mu(z)\) is the unique solution to the fixed-point equation \(\mu(z) = \tfrac{1}{n}\Tr(\bSigma\bM(z))\) as defined in~\eqref{eq:resolvent_fixed_point}.

The following proposition, which follows directly from the local law for sample covariance matrices established in \cite[Theorem 2.10]{fan2026anisotropic}, provides a uniform bound on the distance between the empirical resolvent and its deterministic equivalent outside the spectral support. Note that by Assumption~\ref{ass:input_dist}.(b),  \(c \leq \tau_\bSigma \leq  C_{\mathrm{LSI}} \), and therefore there exists $c>0$ such that at most $(1-c)d$ eigenvalues of $\bSigma$ belong to the interval $[0,c]$.

\begin{proposition}[{Outside-support anisotropic local law \cite[Theorem 2.10]{fan2026anisotropic}}]\label{prop:outside_local_law}
Suppose that Assumptions~\ref{ass:high_dim} and~\ref{ass:input_dist} hold.
For every fixed \(\delta>0\), every deterministic \(0\preceq \bA\in\R^{d\times d}\) with
\(\|\bA\|_\ast\le1\), and every \(\epsilon,D>0\), there exists \(C>0\) such that, with probability at least \(1-Cd^{-D}\), uniformly over all \(z\in\C\) at distance at least \(\delta\) from the support of the limiting spectral distribution of $\bX \bX^\sT /n$ and satisfying \(|z|\ge\delta\), \(|\Re z|\le\delta^{-1}\), and \(|\Im z|\le1\), we have
\[
    \left|\Tr\!\left(\bA(\widehat{\bR}(z)-\bM(z))\right)\right|
    \le d^{\epsilon-\frac12}.
\]
\end{proposition}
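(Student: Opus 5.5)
The statement to prove is Proposition~\ref{prop:outside_local_law}, the outside-support anisotropic local law. The plan is to reduce it directly to \cite[Theorem 2.10]{fan2026anisotropic}: first check that our setting meets its hypotheses, then upgrade its pointwise-in-$z$ guarantee to a uniform one over the stated region by a net argument.

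\emph{Step 1: hypotheses.} Write $\bx_{0,i}=\bSigma^{1/2}\bu_i$ informally. Assumption~\ref{ass:input_dist} provides independence, centering and covariance $\bSigma$. The LSI gives Lipschitz concentration \eqref{eq:Gaussian_Lipschitz_concentration} and the Hanson--Wright-type bound for quadratic forms, which yields \eqref{eq:norm_concentration_overlaps}. These are the moment and concentration inputs used in anisotropic local laws of this type.

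Spectral regularity of $\bSigma$ is handled as follows.
\begin{itemize}
\item $\|\bSigma\|_\op\le C_{\mathrm{LSI}}$ by \eqref{eq:cov_op_bound}.
\item Since $\tau_\bSigma\ge c$, a positive fraction of the eigenvalues exceed a constant. This nondegeneracy condition is what the cited theorem needs to keep the limiting spectrum and the fixed point $\mu(z)$ of \eqref{eq:resolvent_fixed_point} nondegenerate.
\item Assumption~\ref{ass:high_dim} supplies the proportional ratio.
\end{itemize}
For each fixed $z$ in the domain, the theorem then gives $|\Tr(\bA(\widehat\bR(z)-\bM(z)))|\le d^{\epsilon-1/2}$ with probability $1-Cd^{-D}$, for $0\preceq\bA$ with $\|\bA\|_\ast\le1$.

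\emph{Step 2: uniformity in $z$.} The domain $\{\dist(z,\mathrm{supp})\ge\delta,\ |z|\ge\delta,\ |\Re z|\le\delta^{-1},\ |\Im z|\le1\}$ is bounded. Take a $d^{-2}$-net of it, of cardinality $O(d^4)$, and apply a union bound over the net at level $D+4$.

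On the event that $\|\bX\bX^\sT/n\|_\op$ is bounded and no eigenvalue lies within $\delta/2$ of the support, both $z\mapsto\widehat\bR(z)$ and $z\mapsto\bM(z)$ are $O(\delta^{-2})$-Lipschitz in operator norm. This is immediate for $\widehat\bR$. For $\bM$ it follows from the Stieltjes representation \eqref{eq:resolvent_fixed_point_measure}. Since $|\Tr(\bA\bB)|\le\|\bA\|_\ast\|\bB\|_\op$, moving from a net point to any $z$ in the domain costs only $O(d^{-2})$.

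The no-outlier event itself holds with probability $1-Cd^{-D}$. This is the standard ``no eigenvalues outside the support'' (rigidity) statement accompanying \cite[Theorem 2.10]{fan2026anisotropic}; the operator norm part is consistent with \eqref{eq:cov_concentration}.

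\emph{Main obstacle.} The delicate point is Step 1: confirming that the LSI data model, which is not of the form $\bSigma^{1/2}\times$(independent entries), is covered by the cited result. I would first check whether \cite{fan2026anisotropic} is stated for general concentrated vectors, i.e.\ with Lipschitz concentration plus quadratic-form concentration. If it instead requires independent entries, the fallback is to rerun its Schur-complement and fluctuation-averaging argument. That argument only uses the quadratic-form bounds $\bx^\sT\bB\bx-\Tr(\bB\bSigma)\prec\|\bB\|_\frob$, which we have from the LSI. A second point to verify is the no-outlier event; it can be derived from the same local law evaluated near the spectral edges.
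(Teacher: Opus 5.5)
Your proposal is correct and follows the paper's route. The paper gives no argument beyond invoking \cite[Theorem 2.10]{fan2026anisotropic}, after noting, as in your Step 1, that $\tau_\bSigma\ge c$ together with $\|\bSigma\|_\op\le C_{\mathrm{LSI}}$ forces at most $(1-c)d$ eigenvalues of $\bSigma$ to lie in $[0,c]$; your Step 2 net argument is harmless but redundant, since the paper treats the cited law as already uniform over the spectral domain, and the no-outlier event your net step needs is also taken from that same source.
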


\subsection{Power series decomposition of the RKHS}
\label{sec:rkhs_decomposition}

Under the regularity conditions of Assumption~\ref{ass:kernel_func_vanishing_kappa}, the base function admits a power series expansion \(h(x) = \sum_{k=0}^\infty\alpha_k x^k\) with non-negative coefficients. This analytic structure naturally lifts to the inner-product kernel \(K(\bx, \bx')\), inducing a direct sum decomposition of the associated RKHS into orthogonal subspaces of active homogeneous-polynomial degrees
\[
    \mathcal A_h\deq \{k\ge3:\alpha_k>0\}.
\]

\begin{lemma}[RKHS Functional Decomposition]\label{lem:rkhs_decomposition}
    Suppose that Assumption~\ref{ass:kernel_func_vanishing_kappa} holds. Then, every function $f \in \cH$ can be uniquely represented as
    \[
        f(\bx) = \<\btheta, \bx\> + \sum_{k\in\mathcal A_h} \sqrt{\frac{\alpha_k}{d^k}} \<\bW_k, \bx^{\otimes k}\>,
    \]
    where \(\btheta \in \R^d\) and \(\bW_k \in \mathrm{Sym}^k(\R^d)\) is a symmetric tensor for all \(k \geq 3\). Furthermore, the associated RKHS norm of \(f\) is given by
    \[
        \|f\|_{\cH}^2 = d\|\btheta\|_2^2 +  \sum_{k\in\mathcal A_h} \|\bW_k\|_{\frob}^2.
    \]
\end{lemma}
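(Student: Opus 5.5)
The plan is to build the feature map explicitly from the power series and then apply the standard characterization of an RKHS as the image of a feature space. Expanding $K(\bx,\bx') = h(\<\bx,\bx'\>/d) = \sum_{k}\alpha_k d^{-k}\<\bx,\bx'\>^k$, we use $\alpha_0=\alpha_2=0$ and $\alpha_1=1$ from Assumption~\ref{ass:kernel_func_vanishing_kappa}. We also use the identity $\<\bx,\bx'\>^k = \<\bx^{\otimes k},\bx'^{\otimes k}\>_\frob$. Together these give $K(\bx,\bx') = \<\Phi(\bx),\Phi(\bx')\>_{\cF}$ with feature space
\[
\cF \deq \R^d\oplus\bigoplus_{k\in\mathcal A_h}\mathrm{Sym}^k(\R^d)
\]
(Hilbert direct sum, with Euclidean/Frobenius inner products) and feature map
\[
\Phi(\bx)=\Big(\bx/\sqrt d,\ (\sqrt{\alpha_k/d^k}\,\bx^{\otimes k})_{k\in\mathcal A_h}\Big).
\]
Note that $\bx^{\otimes k}$ is already symmetric. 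The series $\sum_k\|\Phi_k(\bx)\|^2=\sum_k\alpha_k(\|\bx\|^2/d)^k$ converges because the summability condition in Assumption~\ref{ass:kernel_func_vanishing_kappa} makes $h$ entire; at minimum, absolute convergence at $\|\bx\|^2/d$ follows from the nonnegative coefficients and finiteness of $h$. So $\Phi(\bx)\in\cF$.

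Next I invoke the standard feature-space theorem (e.g.\ \cite{steinwart2008support}, Theorem 4.21). It says $\cH=\{f_w:=\<w,\Phi(\cdot)\>_\cF : w\in\cF\}$ with $\|f\|_\cH=\inf\{\|w\|_\cF: f=f_w\}$, and the infimum is attained on $(\ker A)^\perp$, where $A:w\mapsto f_w$. Writing $w=(\bu,(\bW_k)_k)$ and setting $\btheta=\bu/\sqrt d$ gives $f(\bx)=\<\btheta,\bx\>+\sum_k\sqrt{\alpha_k/d^k}\<\bW_k,\bx^{\otimes k}\>$. It also gives $\|w\|_\cF^2=d\|\btheta\|_2^2+\sum_k\|\bW_k\|_\frob^2$, which is exactly the claimed form.

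The remaining step, which I expect to be the only real content, is uniqueness: I must show $A$ is injective, so that the infimum is the stated norm. Suppose $f_w\equiv0$ on $\R^d$. For each fixed $\bx$, consider $s\mapsto f_w(s\bx)=s\<\btheta,\bx\>+\sum_k s^k\sqrt{\alpha_k/d^k}\<\bW_k,\bx^{\otimes k}\>$. This is a power series in $s$. It converges absolutely for all $s$, since by Cauchy--Schwarz its terms are bounded by $\|w\|_\cF\cdot\|\Phi(s\bx)\|_\cF$-type quantities, which are summable. A power series vanishing identically on $\R$ has all coefficients zero, so for every $k$, $\<\bW_k,\bx^{\otimes k}\>=0$ for all $\bx$ (and $\<\btheta,\bx\>=0$). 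A symmetric tensor is determined by its associated homogeneous polynomial, via polarization. Hence $\bW_k=0$ and $\btheta=0$, so $\ker A=\{0\}$. Uniqueness of the representation and the norm identity follow.
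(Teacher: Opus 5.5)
Your proposal is correct and follows essentially the same route as the paper: the explicit feature map $\Phi$ into $\R^d\oplus\bigoplus_{k\in\mathcal A_h}\mathrm{Sym}^k(\R^d)$, combined with the feature-space characterization of the RKHS in \cite[Theorem 4.21]{steinwart2008support}. Your power-series-in-$s$ plus polarization argument for injectivity of $w\mapsto\<w,\Phi(\cdot)\>_{\cF}$ is a useful addition, because the paper only asserts that the minimal-norm representer lies in $\overline{\mathrm{span}\,\Phi(\R^d)}$ and never checks that this closure is all of $\cF$, which is exactly what uniqueness and the norm identity require.
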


\begin{proof}
    By Assumption~\ref{ass:kernel_func_vanishing_kappa}, the base function $h$ admits a power series expansion \(h(x) = \sum_{k=0}^{\infty}\alpha_k x^k\) with \(\alpha_0 = \alpha_2 = 0\) and \(\alpha_1 = 1\). Evaluating the kernel \(K(\bx, \by) = h(\<\bx, \by\>/d)\) and using the standard tensor product identity \(\<\bx, \by\>^k = \<\bx^{\otimes k}, \by^{\otimes k}\>_\frob\), we can expand it as
    \[
        K(\bx, \by) = \frac{1}{d}\<\bx, \by\> +  \sum_{k\in\mathcal A_h} \left\< \sqrt{\frac{\alpha_k}{d^k}}\bx^{\otimes k}, \sqrt{\frac{\alpha_k}{d^k}}\by^{\otimes k} \right\>_\frob.
    \]
    This defines a natural feature map \(\phi: \R^d \to \cF\), where the feature space is the direct sum of active symmetric tensor spaces \(\cF = \R^d \oplus \bigoplus_{k\in \cA_h} \mathrm{Sym}^k(\R^d)\), explicitly given by
    \[
        \phi(\bx) = \left( \frac{1}{\sqrt{d}}\bx,\left(\sqrt{\frac{\alpha_k}{d^k}}\bx^{\otimes k}\right)_{k\in \cA_h} \right).
    \]
    By construction, the kernel is exactly the inner product in this feature space, \(K(\bx, \by) = \<\phi(\bx), \phi(\by)\>_{\cF}\). By the standard isometry between the RKHS and the feature space image (see, for instance,~\cite[Theorem 4.21]{steinwart2008support}), for any function \(f \in \cH\) we can write
    \[
        f(\bx) = \<\phi(\bx), \bw\>_{\cF} = \frac{1}{\sqrt{d}}\< \bx, \bw_1 \> + \sum_{k\in\mathcal A_h} \sqrt{\frac{\alpha_k}{d^k}}\< \bx^{\otimes k}, \bW_k \>_\frob
    \]
    for some \(\bw = (\bw_1, \bW_3, \bW_4, \dots)\in \cF\), where \(\bw_1 \in \R^d\) and \(\bW_k \in \mathrm{Sym}^k(\R^d)\). We match the linear term to the form in the lemma by defining \(\btheta = \bw_1 / \sqrt{d}\), which gives the desired functional form. Furthermore, we can write the RKHS norm of \(f\in \cH\) as
    \[
        \|f\|_\cH^2 \deq  \inf\{\|\bw\|^2_\cF : \bw\in \cF,\, f = \<\phi(\cdot), \bw\>_\cF\} = d\|\btheta\|_2^2 + \sum_{k\in\mathcal A_h} \|\bW_k\|_\frob^2.
    \]
    Here, we have used the fact that the RKHS norm of \(f\) is defined as the minimum \(\cF\)-norm of any weight sequence \(\bw\) that represents \(f\) through the feature map. This infimum is uniquely attained by the weight sequence $\bw$ that lies in the closure of the linear span of the feature map $\phi(\R^d)$.
\end{proof}

Next, we bound the contribution of the higher-order terms in the power series decomposition of the RKHS. We will define
\begin{equation}\label{eq:Hge3}
    \cH_{\geq 3} \deq  \{f\in \cH : f(\bx) = \sum_{k \in \cA_h} \sqrt{\frac{\alpha_k}{d^k}} \<\bW_k, \bx^{\otimes k}\>\}
\end{equation}
as the subspace of the RKHS consisting of functions with no linear component. Before stating the bound, we briefly detour to introduce the necessary machinery regarding multivariate Hermite tensors and the Wiener chaos expansion, which will provide the required isometries. Let $\gamma_d (\bx) = (2\pi)^{-\frac{d}{2}} e^{- \| \bx\|_2^2/2} $ denote the density of a standard Gaussian vector in $\R^d$. For any \(d\geq 1\) and $k \geq 0$, define $\He_{k} : \R^d \to \mathrm{Sym}^k (\R^d)$ to be the normalized degree-$k$ Hermite tensor in $\R^d$ given by
\begin{equation}\label{eq:Hermite-tensor}
\He_{k} (\bx) \deq  \frac{(-1)^k}{\sqrt{k!}} \frac{\nabla^k \gamma_d (\bx)}{\gamma_d (\bx)},
\end{equation}
where $\nabla^k \gamma_d(\bx)$ denotes the $k$-th derivative of $\gamma_d$, viewed as a symmetric $k$-tensor. In particular, \(\He_0(\bx) = 1\) and \(\He_1(\bx) = \bx\).

The Hermite tensors realize the classical isometry between $\mathrm{Sym}^k (\R^d)$ and the $k$-th Wiener chaos:  for all $\bA \in \mathrm{Sym}^k (\R^d)$ and $ \bB \in \mathrm{Sym}^j (\R^d)$,
\begin{equation}
    \E_{\gamma_d} [ \< \bA,\He_k (\bx) \>_\frob\< \bB,\He_j (\bx) \>_\frob ] = \ind_{j=k} \< \bA, \bB\>_\frob.
\end{equation}
Hence any $f\in L^2(\gamma_d)$ admits the Wiener chaos expansion
\begin{equation*}
    f(\bx)  = \sum_{k = 0}^\infty \< \bA_k, \He_k (\bx) \>_\frob , \qquad \bA_k \deq  \E_{\gamma_d} [ f(\bx) \He_k (\bx)] \in \mathrm{Sym}_k (\R^d),
\end{equation*}
with convergence in $L^2(\gamma_d)$. Of particular interest to us will be the Hermite coefficients
\begin{equation}\label{eq:Gamma_k_def}
    \Gamma_k f(\bx) \deq  \E_{\bz\sim\cN(0,\bI_d)} [ f(\rho\bx + \sigma\bz) \He_k (\bz) ].
\end{equation}
In light of the general Wiener chaos expansion, the tensor $\Gamma_k f(\bx)$ is exactly the $k$-th Hermite coefficient of the shifted and scaled function $\bz \mapsto f(\rho\bx + \sigma\bz)$ for a fixed $\bx$. Consequently, we can decompose this function over the Gaussian noise $\bz$ as
\begin{equation}
    f(\rho\bx + \sigma\bz) = \sum_{k=0}^\infty \< \Gamma_k f(\bx), \He_k (\bz) \>_\frob.
\end{equation}

\begin{lemma}\label{lem:high_order_term_bound}
    Suppose that Assumptions~\ref{ass:high_dim},~\ref{ass:input_dist}, and \ref{ass:kernel_func_vanishing_kappa} hold. Then,
    \begin{equation}\label{eq:C_delta_def}
        \|g\|_{L^2(\mu_d)}\lesssim d^{-3/4}\|g\|_\cH,\qquad
        \left\|\frac{1}{n}\sum_{i=1}^{n}\Gamma_1 g(\bx_{0,i})\right\|_2 \lesssim \frac{C_\zeta}{d}\|g\|_\cH, \qquad C_{\zeta}^2 \deq  \sum_{k=3}^{\infty} k^2 \alpha_k C_{\zeta,k}^2,
    \end{equation}
    for every \(g\in\cH_{\geq 3}\) with power series representation \eqref{eq:Hge3}, with probability at least \(1-\zeta\), where $\zeta$ and $C_{\zeta,k}$ are given in Assumption \ref{ass:kernel_func_vanishing_kappa}.
\end{lemma}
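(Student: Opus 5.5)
Both bounds follow from the tensor representation of Lemma~\ref{lem:rkhs_decomposition}: write $g=\sum_{k\in\cA_h}\sqrt{\alpha_k/d^k}\,\<\bW_k,\bx^{\otimes k}\>$ with $\|g\|_\cH^2=\sum_{k}\|\bW_k\|_\frob^2$. We bound each homogeneous piece separately, sum with Cauchy--Schwarz, and absorb the sum through the summability condition of Assumption~\ref{ass:kernel_func_vanishing_kappa}.

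\emph{First bound.} By the triangle inequality in $L^2(\mu_d)$ and Lemma~\ref{lem:moment_bound} (valid for $\mu_d$),
\[
\|g\|_{L^2(\mu_d)}\le \sum_{k\in\cA_h}\sqrt{\tfrac{\alpha_k}{d^k}}\,(2C_{\mathrm{LSI}}^2k^2)^{k/2}d^{k/4}\|\bW_k\|_\frob .
\]
Since $k\ge3$, we have $d^{-k/4}\le d^{-3/4}$. Cauchy--Schwarz then gives
\[
\|g\|_{L^2(\mu_d)}\le d^{-3/4}\Big(\sum_k\alpha_k(2C_{\mathrm{LSI}}^2k^2)^k\Big)^{1/2}\|g\|_\cH .
\]
The series is finite because $(2C_{\mathrm{LSI}}^2k^2)^{k}\le C_{\zeta,k}^2$ termwise, and $\sum_k k^2\alpha_kC_{\zeta,k}^2<\infty$. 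This bound is deterministic.

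\emph{Second bound.} Compute $\Gamma_1$ of a monomial by Gaussian integration by parts:
\[
\E_\bz[\<\bW_k,(\rho\bx+\sigma\bz)^{\otimes k}\>\,\bz]=\sigma\,\E_\bz[\nabla_\bu\<\bW_k,\bu^{\otimes k}\>|_{\bu=\rho\bx+\sigma\bz}]=k\sigma\,\bW_k\big[\E_\bz(\rho\bx+\sigma\bz)^{\otimes(k-1)}\big].
\]
Here $\bW_k[\bT]$ denotes contraction of $k-1$ slots of the symmetric tensor $\bW_k$ against the $(k-1)$-tensor $\bT$, which leaves a vector. Averaging over $i$ and using linearity,
\[
\frac1n\sum_i\Gamma_1 g(\bx_{0,i})=\sigma\sum_k k\sqrt{\tfrac{\alpha_k}{d^k}}\,\bW_k\Big[\tfrac1n\sum_i\bT_{i,k-1}\Big],\qquad \bT_{i,j}\deq\E_\bz[(\rho\bx_{0,i}+\sigma\bz)^{\otimes j}].
\]
The contraction satisfies $\|\bW_k[\bT]\|_2\le\|\bW_k\|_\frob\|\bT\|_\frob$. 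Lemma~\ref{lem:moment_bound_empirical} gives, on a single event of probability at least $1-\zeta$ holding simultaneously for all degrees, $\|\tfrac1n\sum_i\bT_{i,k-1}\|_\frob\lesssim C_{\zeta,k-1}d^{(k-2)/2}$. Each term is therefore bounded by $k\sqrt{\alpha_k}\,C_{\zeta,k-1}\,d^{-1}\|\bW_k\|_\frob$, so the degree-$k$ contributions all have exactly the target order $d^{-1}$. Using $C_{\zeta,k-1}\le C_{\zeta,k}$ and Cauchy--Schwarz over $k$,
\[
\Big\|\frac1n\sum_i\Gamma_1 g(\bx_{0,i})\Big\|_2\lesssim d^{-1}\Big(\sum_k k^2\alpha_kC_{\zeta,k}^2\Big)^{1/2}\|g\|_\cH=\frac{C_\zeta}{d}\|g\|_\cH .
\]

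\emph{Main obstacle.} The delicate point is uniformity over $g$ and the exchange of expectation with the infinite sum. Uniformity in $g$ is automatic: the only random inputs are the tensors $\tfrac1n\sum_i\bT_{i,k-1}$, and Lemma~\ref{lem:moment_bound_empirical} controls them for every $k$ on one event, so no union bound over $g$ is needed. Exchanging $\E_\bz$ with the sum over $k$ is justified by dominated convergence, using the finiteness of $C_\zeta$ just established. One must also check that the implicit constant in Lemma~\ref{lem:moment_bound_empirical} is uniform in $k$; inspecting its proof, the $n/d$ factor is bounded uniformly in $k$, so this holds.
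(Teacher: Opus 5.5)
Your proposal is correct and follows essentially the same route as the paper. The first bound uses Minkowski and Cauchy--Schwarz with Lemma~\ref{lem:moment_bound} on $\mu_d$. The second uses Gaussian integration by parts, the contraction bound, Lemma~\ref{lem:moment_bound_empirical} on its single event simultaneous in $k$, and $C_{\zeta,k-1}\le C_{\zeta,k}$.

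The only difference is how you pass to the infinite series. You use dominated convergence, whereas the paper truncates to $g_L$ and takes limits: Fatou for the first bound, and for the second the reproducing-property estimate $\|\Gamma_1(g_L-g)(\bx_{0,i})\|_2\le\|g_L-g\|_\cH\,(\E_\bz[K(\rho\bx_{0,i}+\sigma\bz,\rho\bx_{0,i}+\sigma\bz)\|\bz\|_2^2])^{1/2}$.
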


\begin{proof}  We first establish the result for finite truncations of the series. Let \(g_L\) be the polynomial of maximum degree \(L\geq 3\), defined as
    \[
        g_L(\bx) = \sum_{k=3}^{L} \sqrt{\frac{\alpha_k}{d^k}} \<\bW_k, \bx^{\otimes k}\>.
    \]
    By Lemma~\ref{lem:rkhs_decomposition},
    \[
        \|g_L\|_\cH^2 = \sum_{k=3}^{L} \|\bW_k\|_\frob^2.
    \]
    Let \(m,k\) be arbitrary positive integers. By Cauchy-Schwarz inequality and Lemma~\ref{lem:moment_bound},
    \begin{align*}
        \E_{\bx\sim \mu_d}[\<\bW_m,\bx^{\otimes m}\>\<\bW_k,\bx^{\otimes k}\>] & \leq \sqrt{\E_{\bx\sim \mu_d}[\<\bW_m,\bx^{\otimes m}\>^2]\E_{\bx\sim \mu_d}[\<\bW_k,\bx^{\otimes k}\>^2]}
        \\ & \leq B_{m}B_{k}d^{\frac{m+k}{4}}\|\bW_m\|_\frob\|\bW_k\|_\frob
    \end{align*}
    where we defined \( B_{k}^2\deq (2C_{\mathrm{LSI}}^2k^2)^{k}\). Thus,
    \begin{align*}
        \|g_L\|_{L^2(\mu_d)}^2 &  \leq  \sum_{m=3}^{L}\sum_{k=3}^{L} \sqrt{\alpha_m\alpha_k}  B_{m}B_{k}d^{-\frac{m+k}{4}}\|\bW_m\|_\frob\|\bW_k\|_\frob
        \\ & \leq d^{-\frac{3}{2}}\left(\sum_{k=3}^{L}\sqrt{\alpha_k}B_k\|\bW_k\|_\frob\right)^2
         \leq d^{-\frac{3}{2}}\|g_L\|_\cH^2\sum_{k=3}^{L}\alpha_k B^2_k,
    \end{align*}
    where the last inequality follows from Cauchy-Schwarz. Since \(B_k\leq C_{\zeta,k}\) for any \(\zeta\in (0,1)\), it follows from Assumption \ref{ass:kernel_func_vanishing_kappa} that \(\sum_{k=3}^{L}\alpha_k B^2_k\leq \sum_{k=3}^{\infty}\alpha_k B^2_k<\infty\) and hence  \( \|g_L\|_{L^2(\mu_d)}^2 \lesssim d^{-3/2}\|g_L\|_\cH^2\). Since \(\|g_L-g\|_\cH\to 0\), the reproducing property gives \(g_L(\bx)\to g(\bx)\) pointwise. Fatou's lemma and \(\|g_L\|_\cH\to\|g\|_\cH\) therefore yield
    \[
        \|g\|_{L^2(\mu_d)}^2
        \leq \liminf_{L\to\infty}\|g_L\|_{L^2(\mu_d)}^2
        \lesssim d^{-3/2}\|g\|_\cH^2,
    \]
    proving the first inequality.

    We now turn to the second inequality. Let \(k\in [L]\) and define \(f_k(\bx)=\<\bW_k,\bx^{\otimes k}\>_\frob\). Then, by Gaussian integration by parts,
    \[
        \Gamma_1 f_k(\bx) = \E_{\bz\sim\cN(0,\bI_d)}[\nabla_{\bz}\<\bW_k,(\rho\bx+\sigma\bz)^{\otimes k}\>_\frob] = \sigma k \bW_k \otimes_{k-1} \E_{\bz\sim \cN(0,\bI_d)}[(\rho\bx+\sigma\bz)^{\otimes (k-1)}]
    \]
    where, for any \(\bA\in \mathrm{Sym}_k(\R^d),\bB\in \mathrm{Sym}_{k-1}(\R^d)\), \(\bA\otimes_{k-1}\bB\in \R^{d}\) denotes the partial contraction along any \(k-1\) indices. By triangle inequality, sub-multiplicativity of the Frobenius norm, Lemma~\ref{lem:moment_bound_empirical} and Cauchy-Schwarz inequality,
    \begin{align*}
        \left\|\frac{1}{n}\sum_{i=1}^{n}\Gamma_1 g_L(\bx_{0,i})\right\|_2 & \leq \sum_{k=3}^{L} \sigma k \sqrt{\frac{\alpha_k}{d^k}}\|\bW_k\|_\frob \left\|\frac{1}{n}\sum_{i=1}^{n}\E_{\bz\sim \cN(0,\bI_d)}[(\rho\bx_{0,i}+\sigma\bz)^{\otimes (k-1)}]\right\|_\frob
        \\ & \lesssim \sigma\sum_{k=3}^{L} k \sqrt{\frac{\alpha_k}{d^k}}\|\bW_k\|_\frob C_{\zeta,k-1}d^{\frac{k-2}{2}}
        \leq \frac{\sigma}{d}\|g_L\|_\cH \left(\sum_{k=3}^{L} k^2 \alpha_k C_{\zeta,k-1}^2\right)^{1/2}
    \end{align*}
    with probability at least \(1-\zeta\). Since \(C_{\zeta,k-1}\leq C_{\zeta,k}\), it follows from Assumption \ref{ass:kernel_func_vanishing_kappa} that \(\sum_{k=3}^{L} k^2 \alpha_k C_{\zeta,k}^2\leq\sum_{k=3}^{\infty} k^2 \alpha_k C_{\zeta,k}^2= C_\zeta^2 <\infty\). The event in Lemma~\ref{lem:moment_bound_empirical} is simultaneous over all \(k\), so the preceding estimate holds for every \(L\) on the same event. Moreover, for each fixed training sample, the reproducing property and Cauchy--Schwarz give
    \[
        \|\Gamma_1(g_L-g)(\bx_{0,i})\|_2
        \leq \|g_L-g\|_\cH
        \left(\E_{\bz}\!\left[K(\rho\bx_{0,i}+\sigma\bz,\rho\bx_{0,i}+\sigma\bz)\|\bz\|_2^2\right]\right)^{1/2}
        \longrightarrow 0,
    \]
    where the expectation is finite under Assumption~\ref{ass:kernel_func_vanishing_kappa}. We may therefore take \(L\to\infty\) in the preceding estimate, which proves the second inequality.
\end{proof}

\begin{remark}[Degree-damped power-series kernels are admissible]\label{rmk:admissible-kernels}
Assumption~\ref{ass:kernel_func_vanishing_kappa} is satisfied by a broad family of degree-damped power-series kernels. This includes, for instance, kernel functions of the form
\[
    h(x)=x+\sum_{k=3}^{\infty}
    \frac{b_k}{(k!)^q}x^k, \qquad q >4,\qquad 0\leq b_k\leq e^{O(k)} \text{ for } k\geq 3,\qquad b_k>0 \text{ for } k_0\leq k\leq 2k_0.
\]
\end{remark}

\clearpage

\section{Linearization of the kernel and localized bumps}
\label{app:lin_bumps}

This appendix proves the kernel linearization results and constructs localized RKHS bumps. Throughout, the diffusion time \(t>0\) is fixed, so we suppress the dependence on \(t\), and treat \(\sigma,\rho\) as positive constants.

\subsection{Linearization of the kernels and Bayes denoiser}
\label{app:lin}

In this section, we establish the linearization results stated in propositions~\ref{prop:linearization_kernel_op} and~\ref{prop:bayes_opt_denoiser_linearization}.

\begin{proof}[Proof of Proposition~\ref{prop:linearization_kernel_op}]
    We provide a complete proof of the first bound in Proposition~\ref{prop:linearization_kernel_op}. The second bound follows by a similar argument and we only sketch the main differences at the end of the proof.

    \paragraph*{Decomposition.}

    For every \(i,j\in [n]\) and \(\bz,\bz'\in \R^d\), define \(U_{i,j}(\bz,\bz') \deq  \<\rho \bx_{0,i}+\sigma \bz,\rho\bx_{0,j}+\sigma\bz'\>/d\),
    \[
        D_{i,j}(\bz,\bz') \deq  h(U_{i,j}(\bz,\bz')) - U_{i,j}(\bz,\bz') - \ind_{i=j}(h(\rho^2\tau_{\bSigma}) - \rho^2\tau_{\bSigma}),
    \]
    and recall that \(\tau_{\bSigma} = \Tr(\bSigma)/d\) is the average eigenvalue of \(\bSigma\). Let \(f \in L^2(\widehat{\pi}_{d,n})\) be an arbitrary function with \(\|f\|_{L^2(\widehat{\pi}_{d,n})}=1\).  We decompose the operator difference into its off-diagonal and diagonal components: \((\widehat{\cK} - \widehat{\cK}_{\mathrm{eff}})f = \Delta_{\mathrm{od}}f + \Delta_{\mathrm{d}}f\), where
    \begin{align*}
        \Delta_{\mathrm{od}}f(\bx_{0,i},\bz) = \frac{1}{n}\sum_{j\neq i}\E_{\bz'}\left[D_{i,j}(\bz,\bz') f(\bx_{0,j}, \bz')\right], \quad \Delta_{\mathrm{d}}f(\bx_{0,i},\bz) = \frac{1}{n}\E_{\bz'}\left[D_{i,i}(\bz,\bz') f(\bx_{0,i}, \bz')\right]
    \end{align*}
    for every \(i\in [n]\) and \(\bz\in \R^d\). By the triangle inequality, \(\|(\widehat{\cK} - \widehat{\cK}_{\mathrm{eff}})f\|_{L^2(\widehat{\pi}_{d,n})} \leq \|\Delta_{\mathrm{od}}f\|_{L^2(\widehat{\pi}_{d,n})} + \|\Delta_{\mathrm{d}}f\|_{L^2(\widehat{\pi}_{d,n})}\). We bound each term separately.

    \paragraph*{Off-diagonal term.}

    For the off-diagonal operator, we apply Cauchy-Schwarz over the joint measure for \(j \neq i\):
    \begin{align*}
        \|\Delta_{\mathrm{od}}f\|_{L^2(\widehat{\pi}_{d,n})}^2 & = \frac{1}{n} \sum_{i=1}^{n}\E_{\bz}\Bigg[ \Bigg( \frac{1}{n}\sum_{j\neq i}\E_{\bz'}\left[D_{i,j}(\bz,\bz') f(\bx_{0,j}, \bz')\right]\Bigg)^2 \Bigg]
        \\ & \leq \frac{1}{n^2}\sum_{i\neq j}\E_{\bz,\bz'}\left[D_{i,j}(\bz,\bz')^2\right] = \frac{1}{n^2}\sum_{i\neq j}\E_{\bz,\bz'}\left[\left(h\left(U_{i,j}(\bz,\bz')\right) - U_{i,j}(\bz,\bz')\right)^2\right].
    \end{align*}
    Taylor expanding \(h\) around \(0\),
    \[
        h(U_{i,j}) - U_{i,j} = \frac{1}{6}h'''(\xi_{i,j})U_{i,j}^3
    \]
    for some \(\xi_{i,j}\) between \(0\) and \(U_{i,j}\). The exponential-growth bound in Assumption~\ref{ass:kernel_func_constant_kappa} along with Young's inequality and Lemma~\ref{lem:exp_noisy_overlap_moments} on the event \(\max_i\|\bx_{0,i}\|_2^2/d\lesssim 1\) gives
    \[
        \max_{i,j}\E_{\bz,\bz'}\left[|h'''(\xi_{i,j})|^4\right]\lesssim 1
    \]
    with very high probability. 
    Hence, 
    \[
        \E_{\bz,\bz'}\left[\left(h(U_{i,j})-U_{i,j}\right)^2\right]
        \lesssim
        \E_{\bz,\bz'}\left[|U_{i,j}|^{12}\right]^{1/2}.
    \]
    By~\eqref{eq:norm_concentration_overlaps} and~\eqref{eq:norm_concentration_overlaps_lp} (which hold under Assumption~\ref{ass:input_dist}) and standard properties of Gaussian vectors,
    \begin{equation}\label{eq:standard_overlap_bounds}
        \begin{gathered}
        \max_{i\neq j}|\<\bx_{0,i},\bx_{0,j}\>|\prec \sqrt{d},\qquad \max_{i\in [n]}\E_{\bz\sim\cN(0,\bI_d)}[|\<\bx_{0,i},\bz\>|^k]^{1/k}\prec \sqrt{d},\\ \E_{(\bz,\bz')\sim \cN(0,\bI_d)^{\otimes 2}}[|\<\bz,\bz'\>|^k]^{1/k}\lesssim \sqrt{d},
        \end{gathered}
    \end{equation}
    for every positive integer \(k\). It follows by expanding the definition of \(U_{i,j}\) and Assumption~\ref{ass:high_dim} that \( \|\Delta_{\mathrm{od}}f\|_{L^2(\widehat{\pi}_{d,n})} \prec d^{-3/2}\). Since \(f\in L^2(\widehat{\pi}_{d,n})\) is arbitrary, we conclude that \( \|\Delta_{\mathrm{od}}\|_{\op} \prec d^{-3/2}\) by the variational characterization of the operator norm.

    \paragraph*{Diagonal term.}
    For the diagonal operator, we apply Cauchy-Schwarz solely to the inner expectation over \(\bz'\):
    \begin{align*}
        \|\Delta_{\mathrm{d}}f\|_{L^2(\widehat{\pi}_{d,n})}^2 & = \frac{1}{n} \sum_{i=1}^{n}\E_{\bz}\left[ \left( \frac{1}{n}\E_{\bz'}\left[D_{i,i}(\bz,\bz') f(\bx_{0,i}, \bz')\right] \right)^2 \right]
         \leq \frac{1}{n^2}\max_{i\in [n]}\E_{\bz,\bz'}[D_{i,i}(\bz,\bz')^2].
    \end{align*}
    Taylor expanding \(h(x)-x\) around \(\rho^2\tau_\bSigma\), we have \(D_{i,i}(\bz,\bz') = (h'(\xi_{i})-1)(U_{i,i}(\bz,\bz')-\rho^2\tau_\bSigma)\) for some \(\xi_i\) between \(U_{i,i}(\bz,\bz')\) and \(\rho^2\tau_\bSigma\). By Cauchy-Schwarz,
    \[
        \E_{\bz,\bz'}[D_{i,i}(\bz,\bz')^2] \leq \E_{\bz,\bz'}[(h'(\xi_i)-1)^4]^{1/2} \E_{\bz,\bz'}[(U_{i,i}(\bz,\bz')-\rho^2\tau_\bSigma)^4]^{1/2}.
    \]
    Recalling the standard identities in~\eqref{eq:standard_overlap_bounds} and \(\max_{i\in [n]}|\|\bx_{0,i}\|^2 - \Tr(\bSigma)|\prec \sqrt{d}\) by~\eqref{eq:norm_concentration_overlaps}, it follows that \(\E_{\bz,\bz'}[|U_{i,i}(\bz,\bz')-\rho^2\tau_\bSigma|^k]^{1/k} \prec d^{-1/2}\) for every positive integer \(k\). Combining this with~\eqref{eq:cov_op_bound} and the triangle inequality, \(\E_{\bz,\bz'}[|U_{i,i}(\bz,\bz')|^k]^{1/k}\lesssim 1\) with very high probability for every positive integer \(k\). Integrating the exponential-growth bound on \(h'''\) and using \(h'(0)=1\) gives \(|h'(u)-1|\le C\exp(C|u|)\). Since \(\xi_i\) lies on the segment between \(U_{i,i}(\bz,\bz')\) and \(\rho^2\tau_\bSigma\),~\eqref{eq:cov_op_bound}, Lemma~\ref{lem:exp_noisy_overlap_moments} and the convexity of \(x\mapsto e^x\) yields \(\E_{\bz,\bz'}[(h'(\xi_i)-1)^4]^{1/2}\lesssim 1\), uniformly in \(i\), with very high probability. The proportional regime (see Assumption~\ref{ass:high_dim}) yields \( \|\Delta_{\mathrm{d}}f\|_{L^2(\widehat{\pi}_{d,n})} \prec d^{-3/2}\) uniformly over \(f\in L^2(\widehat{\pi}_{d,n})\). This concludes the proof of the first part of Proposition~\ref{prop:linearization_kernel_op}.

    \paragraph*{Population kernel.} In order to adapt the argument to control \(\|\cK-\cK_{\mathrm{eff}}\|_\op\), for every \(i\in [n]\), \(\bx,\bz,\bz'\in\R^d\) define \(U_{i}(\bx,\bz,\bz')\deq \<\rho\bx_{0,i}+\sigma\bz',\rho\bx+\sigma\bz\>/d\) along with
    \[
        D_{i}(\bx,\bz,\bz') \deq  h(U_i(\bx,\bz,\bz'))-U_i(\bx,\bz,\bz').
    \]
    Then, for every \(f\in L^2(\widehat{\pi}_{d,n})\) with \(\|f\|_{L^2(\widehat{\pi}_{d,n})}\leq 1\), we apply Cauchy-Schwarz inequality to obtain
    \begin{align*}
        \|(\cK-\cK_{\mathrm{eff}})f\|^2_{L^2(\pi_d)} 
         \leq  \frac{1}{n}\sum_{i=1}^{n}\E_{(\bx,\bz)\sim \pi_d,\, \bz'\sim\cN(0,\bI_d)}[D_i(\bx,\bz,\bz')^2].
    \end{align*}
    Taylor expanding \(h(x)=x+h'''(\xi_i)x^3/6\) around \(0\), and applying the same H\"older argument, with Assumption~\ref{ass:kernel_func_constant_kappa} and Lemma~\ref{lem:exp_noisy_overlap_moments} controlling \(h'''(\xi_i)\), gives \(\E[D_i(\bx,\bz,\bz')^2]\prec d^{-3}\) uniformly in \(i\). Hence \(\|(\cK-\cK_{\mathrm{eff}})f\|_{L^2(\pi_d)} \prec d^{-3/2}\). 
\end{proof}

Next, we prove Proposition~\ref{prop:bayes_opt_denoiser_linearization}. We recall that \(\widehat{f}^\star_\bv\in L^2(\widehat{\pi}_{d,n})\) is defined via the pullback of~\eqref{eq:emp_bayes_opt_denoiser}.

\begin{proof}[Proof of Proposition~\ref{prop:bayes_opt_denoiser_linearization}]
    By definition (see~\eqref{eq:emp_bayes_opt_denoiser}), we have
    \begin{align*}
        \|\widehat{f}^\star_\bv -\sfz_\bv\|_{L^2(\widehat{\pi}_{d,n})}^2 
         = \frac{\rho^2}{\sigma^2n}\sum_{i=1}^{n}\E_{\bz\sim \cN(0,\bI_d)}\Bigg[\Bigg(\sum_{k\neq i}\omega_k(\rho\bx_{0,i}+\sigma\bz)\<\bx_{0,i}-\bx_{0,k},\bv\>\Bigg)^2\Bigg]
    \end{align*}
    where the weights \(\omega_k\) are defined in~\eqref{eq:emp_bayes_opt_denoiser_weights}. Because \(\sum_{k\neq i} \omega_k \leq 1\), it follows from Jensen's inequality that
    \[
        \Bigg(\sum_{k\neq i}\omega_k(\rho\bx_{0,i}+\sigma\bz)\<\bx_{0,i}-\bx_{0,k},\bv\>\Bigg)^2 \leq \sum_{k\neq i}\omega_k(\rho\bx_{0,i}+\sigma\bz)\<\bx_{0,i}-\bx_{0,k},\bv\>^2,
    \]
    and thus
    \[
        \|\widehat{f}^\star_\bv - \sfz_\bv\|_{L^2(\widehat{\pi}_{d,n})}^2 \leq \frac{\rho^2}{\sigma^2n}\sum_{i=1}^{n}\sum_{k\neq i}\E_{\bz\sim \cN(0,\bI_d)}\left[\omega_k(\rho\bx_{0,i}+\sigma\bz)\right]\<\bx_{0,i}-\bx_{0,k},\bv\>^2.
    \]
    By Assumption~\ref{ass:input_dist} and~\eqref{eq:norm_concentration_overlaps}, for every \(c\in (0,1/2)\),
    \[
        \|\bx_{0,k}-\bx_{0,i}\|_2^2 \geq d^{1-c}, \qquad \<\bx_{0,i}-\bx_{0,k},\bv\>^2 \leq d^{1+c}
    \]
    for every \(i\neq k\in [n]\) with very high probability. We denote this high-probability event for the training data by \(\cE_{\bx}\) and condition on it for the rest of the proof, supposing that \(d\geq C\) is large enough. We also define a ``typical noise'' event \(\cZ_{i,k}\) for each pair \(i\neq k\) as
    \[
        \cZ_{i,k} \deq  \left\{\bz \in \R^d : \frac{\rho}{\sigma}|\<\bx_{0,k}-\bx_{0,i},\bz\>| \leq \frac{\rho^2}{4\sigma^2}\|\bx_{0,k}-\bx_{0,i}\|_2^2 \right\}.
    \]
    By standard Gaussian tail bounds, \(\cZ_{i,k}\) holds with very high probability over the randomness of the noise \(\bz\).
    We use the event \(\cZ_{i,k}\) to safely integrate over the noise \(\bz\). Indeed, for every \(i\neq k\) and \(\bz\in \cZ_{i,k}\), we isolate the \(j=i\) term in the denominator of \(\omega_k\) to obtain the deterministic bound
    \begin{align*}
        \omega_k(\rho\bx_{0,i} + \sigma\bz) & \leq \exp\left(-\frac{\rho^2}{2\sigma^2}\|\bx_{0,k}-\bx_{0,i}\|^2 + \frac{\rho}{\sigma}\<\bx_{0,k}-\bx_{0,i},\bz\>\right)
        \\ & \leq \exp\left(-\frac{\rho^2}{4\sigma^2}\|\bx_{0,k}-\bx_{0,i}\|^2\right) \leq \exp\left(-\frac{\rho^2d^{1-c}}{4\sigma^2}\right).
    \end{align*}
    On the complement \(\cZ_{i,k}^c\), we simply use the trivial bound \(\omega_k \leq 1\). Combining these two bounds yields
    \begin{align*}
        \E_{\bz}\left[\omega_k(\rho\bx_{0,i}+\sigma\bz)\right] & = \E_{\bz}\left[\omega_k \ind_{\cZ_{i,k}}\right] + \E_{\bz}\left[\omega_k \ind_{\cZ_{i,k}^c}\right] 
         \leq \exp\left(-\frac{\rho^2d^{1-c}}{4\sigma^2}\right) + \P_{\bz}(\cZ_{i,k}^c) \leq 2\exp(-d^{c'})
    \end{align*}
    for some \(c' > 0\) and sufficiently large \(d\). Substituting this back into the expression for the squared norm, we obtain
    \[
        \|\widehat{f}^\star_{\bv} - \sfz_\bv\|_{L^2(\widehat{\pi}_{d,n})}^2 \leq \frac{2\rho^2}{\sigma^2n}\sum_{i=1}^{n}\sum_{k\neq i}\exp(-d^{c'})\<\bx_{0,i}-\bx_{0,k},\bv\>^2 \leq \frac{2\rho^2nd^{1+c}}{\sigma^2}\exp(-d^{c'}),
    \]
    where we have used the bound on the overlaps \(\<\bx_{0,i}-\bx_{0,k},\bv\>^2 \leq d^{1+c}\) for every \(i\neq k\in [n]\) on the event \(\cE_{\bx}\). Because \(n \propto d\), the polynomial prefactor is exponentially dominated for sufficiently large \(d\). We conclude that \(\|\widehat{f}^\star_\bv - \sfz_\bv\|_{L^2(\widehat{\pi}_{d,n})}^2 \leq \exp(-d^{c''})\) for some \(c'' > 0\).
\end{proof}

\begin{remark}[Kernel linearization with $h(0), h''(0) \neq 0$]\label{rmk:constant-contribution}
    The conditions $h(0) = h''(0) = 0$ are added for convenience to simplify the analysis. If these conditions do not hold, the kernel linearization will include additional constant-function contributions. Specifically, let $\widehat\cJ$ be the orthogonal projection onto the constant functions in $L^2(\widehat\pi_{d,n})$, let $\cJ:L^2(\widehat\pi_{d,n})\to L^2(\pi_d)$ map a function to its constant empirical mean, and let $ \cE f(\bx_{0,i},\bz)\deq \E_{\bz'}[f(\bx_{0,i},\bz')]$. Then 
\[
 \|\widehat{\cK} - \widehat{\cK}_{\{0,2\}}\|_{\op}
        \vee \|\cK - \cK_{\{0,2\}}\|_{\op}
        \prec d^{-\frac{3}{2}},
\]
where
\[
    \widehat\cK_{\{0,2\}}\deq \widehat\cK_{\mathrm{eff}}
       + h(0) \widehat\cJ + \frac{h''(0)}{2} \nu_d\left(\widehat\cJ-\frac1n\cE\right),
    \qquad
    \cK_{\{0,2\}}\deq \cK_{\mathrm{eff}}+\Big(h(0) + \frac{h''(0)}{2} \nu_d\Big)\cJ,
\]
where $\nu_d\deq \frac{\Tr(\bSigma_\sigma^2)}{d^2}=O_d(d^{-1})$. Now, $\delta_n = (h (\rho^2\tau_\bSigma) - h(0) - \rho^2 \tau_{\bSigma})/n$.  These contributions can be handled separately without affecting the main results.
\end{remark}

\begin{remark}[Non-negativity of \(\delta_{n}\)]\label{rmk:delta_n_nonnegativity}
    The self-induced regularization term \(\delta_n=(h(\rho^2\tau_\bSigma)-\rho^2\tau_\bSigma)/n\), which appears in the definition of the effective kernel operator \(\widehat{\cK}_{\mathrm{eff}}\) in~\eqref{eq:K_hat_op} as a consequence of the kernel linearization, is non-negative under Assumption~\ref{ass:kernel_func_constant_kappa}.

    Indeed, fix \(\bu\in \R^d\), let \(u=\|\bu\|^2/d\), and take \(\epsilon>0\). Since the kernel is positive definite, the Gram matrix associated with the points \(\bu\) and \(\sqrt{\epsilon}\bu\) satisfies
    \[
        \begin{bmatrix}
            h(u) & h(\sqrt{\epsilon} u) \\
            h(\sqrt{\epsilon} u) & h(\epsilon u)
        \end{bmatrix}
        \succeq 0.
    \]
    In particular, its determinant is non-negative, and hence \( h(u)h(\epsilon u) \geq h(\sqrt{\epsilon}\,u)^2\). Dividing by \(\epsilon\) and taking \(\epsilon\to 0\) using \(h(0)=0\) and \(h'(0)=1\), we obtain \(h(u)u\geq u^2\). Thus, \(h(u)\geq u\) for every \(u>0\). Taking \(u=\rho^2\tau_\bSigma\) proves that \(\delta_n\geq0\).

    Moreover, equality at one positive point forces equality everywhere.
    Suppose that \(h(u_0)=u_0\) for some \(u_0>0\), and choose
    \(\bu\in\R^d\) such that \(\|\bu\|_2^2/d=u_0\). Fix \(t\in\R\) and set
    \(\by=(t/u_0)\bu\). For \(\epsilon>0\), positive definiteness applied
    to \(\bu,\by,\epsilon\bu\) gives
    \[
        \begin{bmatrix}
            h(u_0) & h(t) & h(\epsilon u_0)\\
            h(t) & h(t^2/u_0) & h(\epsilon t)\\
            h(\epsilon u_0) & h(\epsilon t) & h(\epsilon^2u_0)
        \end{bmatrix}
        \succeq0.
    \]
    Since \(h(\epsilon^2u_0)>0\) for sufficiently small \(\epsilon\), its
    Schur complement gives
    \[
        \begin{bmatrix}
            h(u_0)&h(t)\\
            h(t)&h(t^2/u_0)
        \end{bmatrix}
        -
        \frac{1}{h(\epsilon^2u_0)}
        \begin{bmatrix}
            h(\epsilon u_0)\\
            h(\epsilon t)
        \end{bmatrix}
        \begin{bmatrix}
            h(\epsilon u_0)&h(\epsilon t)
        \end{bmatrix}
        \succeq0.
    \]
    Letting \(\epsilon\to0\) and using \(h(s)=s+o(s)\) at the origin yields
    \[
        \begin{bmatrix}
            h(u_0)-u_0 & h(t)-t\\
            h(t)-t & h(t^2/u_0)-t^2/u_0
        \end{bmatrix}
        \succeq0.
    \]
    Since \(h(u_0)=u_0\), the upper-left entry vanishes. Non-negativity of
    the determinant therefore gives
    \[
        -(h(t)-t)^2\ge0,
    \]
    and hence \(h(t)=t\). Since \(t\in\R\) was arbitrary, \(h(t)=t\) for
    every \(t\in\R\).
\end{remark}

\subsection{Construction of the localized bumps}\label{app:localized_bumps}

In this section, we construct the localized bump functions used in the proof of Theorem~\ref{thm:train_test_equivalents_vanishing_kappa}. Fix \(m\in\{0,1,2,\ldots\}\) such that \(2m+6 = k_0\). Define
\begin{equation}\label{eq:poly_def}
    p_m(x) \deq  x^{k_0} \sum_{j=0}^{k_0}(-1)^j \binom{k_0+j-1}{j} (x-1)^j
\end{equation}
where the convention is that \((x-1)^0 = 1\) even when \(x=1\). We have the following simple properties of the polynomial \(p_m\).

\begin{lemma}\label{lem:poly_properties}
    Let \(m\in\{0,1,2,\ldots\}\), \(k_0=2m+6\) and \(p_m\) be defined as in~\eqref{eq:poly_def}. Then, \(p_m\) is a polynomial of degree \(2k_0\) with no monomial of degree less than \(k_0\). Moreover, \(p_m(1) = 1\), \(p^{(k-1)}_m(0) = 0\) and \(p^{(k)}_m(1) = 0\) for every \(k\in \{1,\ldots,k_0\}\).
\end{lemma}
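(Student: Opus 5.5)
The plan is to recognize $p_m$ as a Hermite-type interpolation polynomial. Write $q(x)\deq\sum_{j=0}^{k_0}(-1)^j\binom{k_0+j-1}{j}(x-1)^j$. This is the truncation at order $k_0$ of the binomial series
\[
x^{-k_0}=(1+(x-1))^{-k_0}=\sum_{j\ge0}(-1)^j\binom{k_0+j-1}{j}(x-1)^j ,
\]
which is the Taylor expansion of $x^{-k_0}$ at $x=1$. Hence $q(x)=x^{-k_0}+O((x-1)^{k_0+1})$ as $x\to1$, and therefore $p_m(x)=x^{k_0}q(x)=1+O((x-1)^{k_0+1})$. This single identity gives $p_m(1)=1$ and $p_m^{(k)}(1)=0$ for $k=1,\dots,k_0$.

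The degree and low-order claims are read off directly. Since $q$ is a polynomial of degree exactly $k_0$, with leading coefficient $(-1)^{k_0}\binom{2k_0-1}{k_0}\neq0$, the product $p_m=x^{k_0}q$ has degree $2k_0$. Every monomial of $p_m$ has degree at least $k_0$ because of the factor $x^{k_0}$. Consequently $p_m^{(k-1)}(0)=0$ for $k-1\in\{0,\dots,k_0-1\}$, that is, for $k\in\{1,\dots,k_0\}$.

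The only point needing care is the Taylor remainder step: one must check that the truncation error of $x^{-k_0}$, which is analytic near $1$, is $O((x-1)^{k_0+1})$, and that multiplying by the polynomial $x^{k_0}$ preserves this order. Then $p_m-1$ vanishes to order $k_0+1$ at $1$, and all derivatives of order $1,\dots,k_0$ vanish there. The parity $k_0=2m+6$ plays no role in this lemma.
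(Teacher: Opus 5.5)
Your proposal is correct and follows essentially the paper's argument: both recognize the sum as the order-$k_0$ Taylor polynomial of $x^{-k_0}$ at $1$. The paper phrases the cancellation through the Leibniz rule, $p_m^{(k)}(1)=(f/f)^{(k)}(1)=0$ with $f(x)=x^{k_0}$, whereas you phrase it as $p_m=1+O((x-1)^{k_0+1})$; your check that the leading coefficient $(-1)^{k_0}\binom{2k_0-1}{k_0}$ is nonzero also supplies the exact-degree claim, which the paper leaves implicit.
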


\begin{proof}
    Momentarily define \(f(x)=x^{k_0}\) and \(g(x) = \sum_{j=0}^{k_0}(-1)^j \binom{k_0+j-1}{j} (x-1)^j\). Note that \(g\) is precisely the \(k_0\)-th order Taylor expansion of the function \(x\mapsto x^{-k_0}\) around \(x=1\), so \(g^{(k)}(1)=(1/f)^{(k)}(1)\) for every \(k\in \{0,\ldots, k_0\}\). In particular, \(g(1) = 1/f(1) = 1\).
    Furthermore, note that \(p^{(k)}_m(0)=0\) for every \(k\in \{0,\ldots,k_0-1\}\) since the lowest degree monomial in the expansion of \(p_m\) is of degree \(k_0\).
    Finally, by the general Leibniz rule, for every \(k\in \{1,\ldots,k_0\}\),
    \[
        p^{(k)}_m(1) = \sum_{i=0}^{k} \binom{k}{i} f^{(i)}(1) g^{(k-i)}(1) = \sum_{i=0}^{k} \binom{k}{i} f^{(i)}(1) (1/f)^{(k-i)}(1) = (f/f)^{(k)}(1) = 0. \qedhere
    \]
\end{proof}

For every \(i\in [n]\), define
\begin{equation}
    \phi_{m,i}(\bx) \deq  p_m\left(\frac{\<\bx,\bx_{0,i}\>}{\rho \tau_\bSigma d}\right).
\end{equation}
Since \(p_m\) is a polynomial with terms of degree at least \(k_0\geq 6\), \(\phi_{m,i}\in \cH_{\geq 3}\) for every \(i\in [n]\), with \(\cH_{\geq 3}\) defined in~\eqref{eq:Hge3}. Furthermore, for every \(\bz\in\R^d\), let \(\bPhi_m(\bz)\in \R^{n\times n}\) be the matrix with entries \(\bPhi_{m,ij}(\bz) = \phi_{m,j}(\rho\bx_{0,i}+\sigma\bz)\) for every \(i,j\in [n]\). The following lemma shows that the \emph{mean interpolation matrix} \(\E_{\bz\sim\cN(0,\bI_d)}[\bPhi_m(\bz)]\) is close to the identity matrix in operator norm.

\begin{lemma}\label{lem:mean_interpolation_matrix}
    Under Assumptions~\ref{ass:high_dim} and~\ref{ass:input_dist},
    \[
        \|\E_{\bz\sim\cN(0,\bI_d)}[\bPhi_m(\bz)] -  \bI_n\|_\op  \prec d^{-2}.
    \]
\end{lemma}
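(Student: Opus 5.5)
We bound the diagonal and off-diagonal parts of $\E_\bz[\bPhi_m(\bz)]-\bI_n$ separately. The operator norm is controlled by
\[
\max_i|\E\Phi_{ii}-1|+\Big(\sum_{i\neq j}(\E_\bz\Phi_{ij})^2\Big)^{1/2}.
\]
A cruder row-sum (Schur) bound would also do if the entries are small enough. The argument of $p_m$ on entry $(i,j)$ is
\[
s_{ij}(\bz)=\frac{\rho\<\bx_{0,i},\bx_{0,j}\>+\sigma\<\bz,\bx_{0,j}\>}{\rho\tau_\bSigma d}.
\]

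\textbf{Diagonal entries.} Write $s_{ii}=1+\eta_i$ with
\[
\eta_i=\frac{\|\bx_{0,i}\|^2-\Tr\bSigma}{\Tr\bSigma}+\frac{\sigma\<\bz,\bx_{0,i}\>}{\rho\tau_\bSigma d}.
\]
By \eqref{eq:norm_concentration_overlaps} and Gaussianity of $\<\bz,\bx_{0,i}\>$, all moments satisfy $\E_\bz[|\eta_i|^q]^{1/q}\prec d^{-1/2}$ uniformly in $i$ with very high probability. Since $p_m(1)=1$ and $p_m^{(k)}(1)=0$ for $1\le k\le k_0$ (Lemma~\ref{lem:poly_properties}), and $p_m$ is a polynomial of degree $2k_0$, Taylor's formula is exact:
\[
p_m(1+\eta)-1=\sum_{k=k_0+1}^{2k_0}\frac{p_m^{(k)}(1)}{k!}\eta^k .
\]
Hence $|\E_\bz\Phi_{ii}-1|\le C\,\E_\bz|\eta_i|^{k_0+1}\prec d^{-(k_0+1)/2}\le d^{-7/2}$, which is far below $d^{-2}$.

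\textbf{Off-diagonal entries.} For $i\ne j$, write $s_{ij}=a_{ij}+b_j\,g$ with $g\sim\cN(0,1)$, where
\[
a_{ij}=\frac{\<\bx_{0,i},\bx_{0,j}\>}{\tau_\bSigma d}\prec d^{-1/2},\qquad b_j=\frac{\sigma\|\bx_{0,j}\|}{\rho\tau_\bSigma d}\asymp d^{-1/2}.
\]
Because $p_m$ has no monomials of degree below $k_0$, each monomial $s^k$ with $k\ge k_0\ge 6$ gives $\E|s_{ij}|^k\lesssim(|a_{ij}|+b_j)^k\prec d^{-k/2}$. The coefficients of $p_m$ are fixed constants and there are finitely many terms, so $|\E_\bz\Phi_{ij}|\prec d^{-k_0/2}\le d^{-3}$ uniformly in $i\ne j$ with very high probability. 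This uses the max-overlap bound in \eqref{eq:standard_overlap_bounds} together with a union bound over the $n^2$ pairs, which the definition of $\prec$ absorbs.

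\textbf{Combining.} The off-diagonal part has Frobenius norm at most $n\cdot d^{-3}\prec d^{-2}$, using $n\asymp d$ (Assumption~\ref{ass:high_dim}). Adding the diagonal bound gives the claim.

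The only step that needs care is the uniformity over all pairs while keeping everything within stochastic domination. Since all quantities are finite polynomial moments of Gaussians and of overlaps already controlled by \eqref{eq:norm_concentration_overlaps}, I expect this to be routine. The one real design point is the choice $k_0\ge6$: it is exactly what makes $n\,d^{-k_0/2}\le d^{-2}$ possible with only the Frobenius norm, with no cancellation argument needed.
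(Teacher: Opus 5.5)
Your proof is correct and follows the paper's argument. Both proofs bound the entries by $d^{-(k_0+1)/2}$ on the diagonal and $d^{-k_0/2}$ off the diagonal, using the flatness of $p_m$ at $1$ and at $0$, and then apply a crude norm bound; where the paper uses Schur's row/column-sum test, you use the Frobenius norm of the off-diagonal part, and both give $n\,d^{-k_0/2}\lesssim d^{-2}$ because $k_0\ge 6$. One small nit: the exact Taylor expansion gives $|\E_\bz\Phi_{ii}-1|\lesssim\sum_{k=k_0+1}^{2k_0}\E_\bz|\eta_i|^k$ rather than a single $(k_0+1)$-st moment, which changes nothing since each term is $\prec d^{-k/2}$.
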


\begin{proof}
    By~\eqref{eq:norm_concentration_overlaps} and~\eqref{eq:norm_concentration_overlaps_lp}, for every \(k\in \N\) we have
    \[
        \max_{i,j\in [n]}\E_{\bz\sim\cN(0,\bI_d)}\left[\left|\frac{\<\rho\bx_{0,i} + \sigma\bz,\bx_{0,j}\>}{\rho\tau_\bSigma d}-\ind_{i=j}\right|^{k}\right]^{1/k} \prec \frac{1}{\sqrt{d}}.
    \]
    Here, Assumption~\ref{ass:input_dist} ensures that \(\tau_{\bSigma}>c\) for some constant \(c>0\). By flatness of \(p_m\) at \(0\) and \(1\) (see Lemma~\ref{lem:poly_properties}), there exists a constant \(C_m>0\) such that
    \[
        |p_m(x)-1| \leq C_m |x-1|^{k_0+1}(1+|x-1|^{k_0-1}),\qquad |p_m(x)| \leq C_m |x|^{k_0}(1+|x|^{k_0})
    \]
    for all \(x\in \R\). Hence, by Jensen's inequality and Cauchy-Schwarz inequality,
    \begin{equation}\label{eq:Q_entrywise}
        \max_{i\in [n]}|\E[\bPhi_{m,ii}(\bz)]-1| 
         \prec d^{- \frac{k_0+1}{2}},
         \qquad   \max_{i\neq j\in [n]}|\E[\bPhi_{m,ij}(\bz)]| \prec d^{-\frac{k_0}{2}}.
    \end{equation}

    By Schur's test, \(\|\E[\bPhi_{m}(\bz)] - \bI_n\|_\op^2 \leq \|\E[\bPhi_{m}(\bz)] - \bI_n\|_1\|\E[\bPhi_{m}(\bz)] - \bI_n\|_\infty\) where
    \[
        \|\bA\|_1 \deq  \max_{j}\sum_{i=1}^{n}|\bA_{i,j}|,\qquad \|\bA\|_\infty \deq  \max_{i}\sum_{j=1}^{n}|\bA_{i,j}|
    \]
    are the maximum row/column sum, respectively. By~\eqref{eq:Q_entrywise},
    \[
        \|\E[\bPhi_{m}(\bz)] - \bI_n\|_1 \leq \max_{j}\bigg(|\E[\bPhi_{m,jj}(\bz)]-1| + \sum_{i\neq j}|\E[\bPhi_{m,ij}(\bz)]|\bigg) \prec \frac{1}{d^{\frac{k_0}{2}-1}},
    \]
    and similarly \(\|\E[\bPhi_{m}(\bz)] - \bI_n\|_\infty \prec d^{-\frac{k_0}{2}+1}\). Since \(k_0\geq 6\), we conclude that \(\|\E[\bPhi_{m}(\bz)] - \bI_n\|_\op \prec d^{-2}\).
\end{proof}

Next, define the fluctuation matrix \(\bR_m(\bz) \deq  \bPhi_m(\bz) - \E_{\bz\sim\cN(0,\bI_d)}[\bPhi_m(\bz)]\).

\begin{lemma}\label{lem:fluctuation_bound}
    Under Assumptions~\ref{ass:high_dim} and~\ref{ass:input_dist}, for any vector \(\bbeta\in \R^n\),
    \[
        \frac{1}{n}\E_{\bz\sim\cN(0,\bI_d)}[\|\bR_m(\bz)\bbeta\|_2^2] \prec d^{-k_0+1} \|\bbeta\|_2^2.
    \]
\end{lemma}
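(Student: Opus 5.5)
Our plan is to reduce the bound to entrywise variance estimates for the matrix $\bR_m(\bz)$, and then to sum them using Cauchy--Schwarz over the index $j$, in the spirit of Schur's test.

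Write $\|\bR_m(\bz)\bbeta\|_2^2=\sum_i(\sum_j \bR_{m,ij}(\bz)\beta_j)^2$. By Cauchy--Schwarz in $j$,
\[
\Big(\sum_j \bR_{m,ij}\beta_j\Big)^2\le \|\bbeta\|_2^2\sum_j \bR_{m,ij}^2 .
\]
This gives
\[
\frac1n\E_\bz\|\bR_m\bbeta\|_2^2\le \|\bbeta\|_2^2\,\frac1n\sum_{i,j}\Var_\bz(\bPhi_{m,ij}(\bz)).
\]
It therefore suffices to show
\[
\max_{i}\Var_\bz(\bPhi_{m,ii})\prec d^{-k_0}, \qquad \max_{i\ne j}\Var_\bz(\bPhi_{m,ij})\prec d^{-k_0}.
\]
Summing $n$ diagonal terms and $n^2$ off-diagonal terms, then dividing by $n\asymp d$, gives at most $d\cdot d^{-k_0}=d^{-k_0+1}$, which is the desired bound.

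\emph{Off-diagonal entries ($i\neq j$).} Set $X_{ij}=\<\rho\bx_{0,i}+\sigma\bz,\bx_{0,j}\>/(\rho\tau_\bSigma d)$. Conditionally on the data, this is Gaussian in $\bz$. By \eqref{eq:norm_concentration_overlaps} and \eqref{eq:standard_overlap_bounds}, all its $L^k(\bz)$ moments are $\prec d^{-1/2}$, uniformly over $i\ne j$. Since $|p_m(x)|\le C_m|x|^{k_0}(1+|x|^{k_0})$, we get
\[
\Var(\bPhi_{m,ij})\le \E[p_m(X_{ij})^2]\lesssim \E[X_{ij}^{2k_0}]+\E[X_{ij}^{4k_0}]\prec d^{-k_0}.
\]

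\emph{Diagonal entries.} Here $X_{ii}-1=\big(\rho(\|\bx_{0,i}\|^2-\Tr\bSigma)+\sigma\<\bz,\bx_{0,i}\>\big)/(\rho\tau_\bSigma d)$, which has all moments $\prec d^{-1/2}$. Flatness at $1$ (Lemma~\ref{lem:poly_properties}) gives $|p_m(x)-1|\le C_m|x-1|^{k_0+1}(1+|x-1|^{k_0-1})$. Using the bound $\Var(Y)\le\E[(Y-1)^2]$, we obtain
\[
\Var(\bPhi_{m,ii})\prec d^{-(k_0+1)}\le d^{-k_0}.
\]
In fact this is better than needed, since only $n$ diagonal terms appear in the sum.

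\emph{Uniformity and the main obstacle.} The main care is making the stochastic domination uniform over all $n^2$ pairs and over the data. The moment bounds hold on a very-high-probability event for $\bX$, namely $\max_{i\neq j}|\<\bx_{0,i},\bx_{0,j}\>|\prec\sqrt d$ and $\max_i|\|\bx_{0,i}\|^2-\Tr\bSigma|\prec\sqrt d$. On that event, the Gaussian moments in $\bz$ are controlled deterministically, with $\E_\bz|\<\bz,\bx_{0,j}\>|^k\le C_k\|\bx_{0,j}\|^k$. The union bound over polynomially many pairs is harmless for $\prec$. The result is stated for arbitrary $\bbeta$, and the bound above is uniform in $\bbeta$, so no further work is needed there.
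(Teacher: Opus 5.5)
Your proposal is correct and follows essentially the same route as the paper. Both arguments bound the entrywise second moments $\E_{\bz}[\bR_{m,ij}(\bz)^2]$ by $d^{-k_0}$ off the diagonal and by $d^{-(k_0+1)}$ on the diagonal, using the flatness of $p_m$ at $0$ and $1$, and then sum these via Cauchy--Schwarz over $j$. The only difference is that you apply Cauchy--Schwarz pointwise in $\bz$ before taking the expectation, whereas the paper first applies Minkowski; this yields the identical bound, and you also supply the entrywise moment details that the paper omits.
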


\begin{proof}
    Using the high-probability bounds on the overlaps in~\eqref{eq:norm_concentration_overlaps}, it follows from a similar argument as in the proof of Lemma~\ref{lem:mean_interpolation_matrix} that
    \[
        \max_{i\in [n]}\E_{\bz\sim\cN(0,\bI_d)}\left[\bR_{m,ii}(\bz)^2\right]^{1/2} \prec d^{-\frac{k_0+1}{2}},\qquad \max_{i\neq j\in [n]}\E_{\bz\sim\cN(0,\bI_d)}\left[\bR_{m,ij}(\bz)^2\right]^{1/2} \prec d^{-\frac{k_0}{2}}.
    \]
    The details are similar so we omit them. For any \(\bbeta\in \R^n\), we first apply Minkowski's inequality to pull the sum out of the expectation, and then apply the Cauchy-Schwarz inequality to separate the fluctuations from the vector components:
    \begin{align*}
        \frac{1}{n}\E_{\bz\sim\cN(0,\bI_d)}[\|\bR_m\bbeta\|_2^2] &  \leq \frac{1}{n}\sum_{i=1}^{n}\Bigg(\sum_{j=1}^{n}\E_{\bz\sim\cN(0,\bI_d)}\left[\bR_{m,ij}(\bz)^2\right]^{1/2} |\bbeta_j|\Bigg)^{2}
        \\ & \leq \frac{1}{n}\sum_{i=1}^{n}\Bigg( \sum_{j=1}^{n}\E_{\bz\sim\cN(0,\bI_d)}\left[\bR_{m,ij}(\bz)^2\right] \Bigg) \Bigg( \sum_{j=1}^{n}\bbeta_j^2 \Bigg).
    \end{align*}
    Using the entry-wise bounds and \(n = O_d(d)\) by Assumption~\ref{ass:high_dim}, the sum of variances for any fixed row \(i\) is bounded by
    \[
        \sum_{j=1}^{n}\E_{\bz\sim\cN(0,\bI_d)}\left[\bR_{m,ij}(\bz)^2\right] = \E_{\bz}\left[\bR_{m,ii}(\bz)^2\right] + \sum_{j\neq i}\E_{\bz}\left[\bR_{m,ij}(\bz)^2\right] \prec d^{-k_0+1}.
    \]
    Substituting this back into the sequence of inequalities and noting that \(\sum_{j=1}^n \bbeta_j^2 = \|\bbeta\|_2^2\), we obtain
    \[
        \frac{1}{n}\E_{\bz\sim\cN(0,\bI_d)}[\|\bR_m\bbeta\|_2^2] \prec \frac{1}{n}\sum_{i=1}^{n} \left( d^{-k_0+1} \right) \|\bbeta\|_2^2 = d^{-k_0+1}\|\bbeta\|_2^2. \qedhere
    \]
\end{proof}

Define the localized bump functions as
\begin{equation}\label{eq:localized_bumps}
    f^{\mathrm{loc}}_{m,\bv}(\bx) \deq  -\frac{\<\bx,\bv\>}{\sigma} + \sum_{i=1}^{n}\beta_{m,i} \phi_{m,i}(\bx), \qquad \bbeta_{m,\bv} \deq  \frac{\rho}{\sigma}\E_{\bz\sim\cN(0,\bI_d)}[\bPhi_m(\bz)]^{-1}\bX^\top \bv.
\end{equation}

\begin{proof}[Proof of Proposition~\ref{prop:localized_bumps_properties}]

     By Lemma~\ref{lem:mean_interpolation_matrix}, \(\|\E_{\bz}[\bPhi_m(\bz)] - \bI_n\|_\op \prec d^{-2}\). On the resulting very high probability event, \(\E_{\bz}[\bPhi_m(\bz)]\) is invertible and \(\|\E_{\bz}[\bPhi_m(\bz)]^{-1}\|_\op \leq 2\). Hence, by~\eqref{eq:cov_op_bound} and~\eqref{eq:cov_concentration},
     \[
        \|\bbeta_{m,\bv}\|_2^2 \leq \|\E_{\bz}[\bPhi_m(\bz)]^{-1}\|_\op^2 \frac{\rho^2}{\sigma^2}\bv^\sT\bX\bX^\top \bv \lesssim \bv^\sT\bX\bX^\top \bv \prec d\|\bv\|_2^2 = d.
     \]

    Next, we bound the loss. By~\eqref{eq:emp_risk_decoupled},
    \begin{align*}
        \cL_{\bv}(f^{\mathrm{loc}}_{m,\bv};0)  & = \frac{1}{n}\sum_{j=1}^n \E_{\bz}\left[\bigl|\be_j^\top \bPhi_{m}(\bz) \bbeta_{m,\bv}-\frac{\rho}{\sigma}\<\bx_{0,j} ,\bv\>\bigr|^2\right].
    \end{align*}
    Note that by the definition of \(\bbeta_{m,\bv}\),
    \begin{align*}
        \E_{\bz\sim\cN(0,\bI_d)}\left[\be_j^\top \bPhi_{m}(\bz) \bbeta_{m,\bv}\right] & = \frac{\rho}{\sigma}\be_j^\top \bX^\top \bv = \frac{\rho}{\sigma}\<\bx_{0,j},\bv\>,
    \end{align*}
    so
    \[
        \cL_{\bv}(f^{\mathrm{loc}}_{m,\bv};0) = \frac{1}{n}\E_{\bz\sim\cN(0,\bI_d)}\left[\|\bR_m(\bz)\bbeta_{m,\bv}\|_2^2\right]
    \]
    where we recall that \(\bR_m(\bz)  = \bPhi_m(\bz) - \E_{\bz}[\bPhi_m(\bz)]\) is the matrix of centered residuals. By Lemma~\ref{lem:fluctuation_bound},
    \[
        \cL_{\bv}(f^{\mathrm{loc}}_{m,\bv};0) \prec d^{-k_0+1}\|\bbeta_{m,\bv}\|_2^2 \prec  d^{-k_0+2}.
    \]

    Finally, we control the RKHS norm. By Young's inequality,
    \[
        \| f^{\mathrm{loc}}_{m,\bv}\|_\cH^2 \leq \frac{2d}{\sigma^2} + 2\left\|\sum_{i=1}^{n}\beta_{m,i} \phi_{m,i}\right\|_\cH^2.
    \]
    We can expand the polynomial function as
    \begin{align*}
        \phi_{m,i}(\bx) & = p_m\left(\frac{\<\bx,\bx_{0,i}\>}{\rho\tau_{\bSigma} d}\right) 
         = \sum_{k=k_0}^{2k_0} c_{m,k} \left(\frac{\<\bx,\bx_{0,i}\>}{\rho \tau_{\bSigma} d}\right)^k
         = \sum_{k=k_0}^{2k_0} \sqrt{\frac{\alpha_k}{d^k}} \<\bW_{i,k},\bx^{\otimes k}\>,
    \end{align*}
    where we define \(\bW_{i,k} \deq  \sqrt{\frac{d^k}{\alpha_k}} \frac{c_{m,k}}{(\rho\tau_\bSigma d)^k} \bx_{0,i}^{\otimes k}\).
    
    By Lemma~\ref{lem:rkhs_decomposition}, the RKHS norm of this sum is exactly the sum of the squared Frobenius norms of these scaled tensors. To bound this, let \(\bG^{(k)} \in \R^{n \times n}\) be the Gram matrix with entries \(\bG^{(k)}_{ij} = \<\bW_{i,k}, \bW_{j,k}\>\). We can rewrite the squared norm for each degree \(k\) as a quadratic form:
    \begin{align*}
        \left\|\sum_{i=1}^{n}\beta_{m,i} \bW_{i,k}\right\|_\frob^2 & = \bbeta_{m,\bv}^\sT \bG^{(k)} \bbeta_{m,\bv} \leq \|\bG^{(k)}\|_\op \|\bbeta_{m,\bv}\|_2^2.
    \end{align*}
    By Schur's test, \(\|\bG^{(k)}\|_\op \leq \max_i \sum_{j=1}^n |\bG^{(k)}_{ij}|\). The entries of the Gram matrix evaluate directly to
    \[
        \bG^{(k)}_{ij} = \frac{d^k}{\alpha_k} \frac{c_{m,k}^2}{(\rho\tau_\bSigma d)^{2k}} \<\bx_{0,i}, \bx_{0,j}\>^k = \frac{c_{m,k}^2}{\alpha_k (\rho\tau_{\bSigma})^{2k}} \left(\frac{\<\bx_{0,i}, \bx_{0,j}\>}{d}\right)^k.
    \]
    By~\eqref{eq:norm_concentration_overlaps}, for the diagonal entries we have \((\<\bx_{0,i}, \bx_{0,i}\>/d)^k \prec 1\). Since \(\alpha_k > 0\) is a constant by Assumption~\ref{ass:kernel_func_vanishing_kappa}, the diagonal elements satisfy \(\bG^{(k)}_{ii} \prec 1\). 
    
    For the off-diagonal entries (\(i \neq j\)), the cross-terms decay rapidly since \(|\<\bx_{0,i}, \bx_{0,j}\>/d|^k \prec d^{-k/2}\). Because \(n \asymp d\) and the active degrees in the bump construction satisfy \(k \geq k_0 \geq 6\), the sum over the \(n-1\) off-diagonal elements vanishes asymptotically:
    \[
        \sum_{j\neq i} |\bG^{(k)}_{ij}| \prec n d^{-k/2} \asymp d^{1-k/2} \prec 1.
    \]
    This yields \(\|\bG^{(k)}\|_\op \prec 1\). Therefore,
    \begin{align*}
        \Bigg\|\sum_{i=1}^{n}\beta_{m,i} \phi_{m,i}\Bigg\|_\cH^2 & = \sum_{k=k_0}^{2k_0} \Bigg\|\sum_{i=1}^{n}\beta_{m,i} \bW_{i,k}\Bigg\|_\frob^2 \leq \sum_{k=k_0}^{2k_0} \|\bG^{(k)}\|_\op \|\bbeta_{m,\bv}\|_2^2 \prec \|\bbeta_{m,\bv}\|_2^2 \prec d.
    \end{align*}
    Substituting this into the initial inequality gives \(\| f^{\mathrm{loc}}_{m,\bv}\|_\cH^2 \prec d\), concluding the proof.
\end{proof}

\clearpage

\section{Proofs of the kernel ridge regression results}\label{app:train-test}

This section is devoted to proving the asymptotic risk equivalents stated in Theorems~\ref{thm:train_test_equivalents_constant_kappa} and~\ref{thm:train_test_equivalents_vanishing_kappa}. It is implicitly assumed that the diffusion time \(t>0\) is fixed, implying that \(\sigma,\rho\) are positive constants.

\subsection{Proof of Theorem~\ref{thm:train_test_equivalents_constant_kappa}}

We proceed through a sequence of intermediate results, beginning with the linearization of the training and test errors via Proposition~\ref{prop:linearization_kernel_op} and Proposition~\ref{prop:bayes_opt_denoiser_linearization}. For notational convenience, let
\begin{equation}
    \widehat{\cG}^{(\kappa)} \deq  \left(\widehat{\cK} + \frac{\kappa}{d}\id\right)^{-1},\qquad \widehat{\cG}^{(\kappa)}_{\mathrm{eff}} \deq  \left(\widehat{\cK}_{\mathrm{eff}} + \frac{\kappa}{d}\id\right)^{-1}.
\end{equation}
Since \(\widehat{\cK}\) and \(\widehat{\cK}_{\mathrm{eff}}\) are positive semi-definite, \(\|\widehat{\cG}^{(\kappa)}\|_\op\vee \|\widehat{\cG}_{\mathrm{eff}}^{(\kappa)}\|_\op \leq d/\kappa\). Define the \emph{linearized training and test errors} coordinate-wise
as
\begin{equation}\label{eq:linearized_train_test}
    \cL_{\mathrm{lin},\bv}^{(\kappa)} \deq  \frac{\kappa^2}{d^2}\left\|\widehat{\cG}^{(\kappa)}_{\mathrm{eff}}\sfz_\bv\right\|_{L^2(\widehat{\pi}_{d,n})}^2, \qquad \cR_{\mathrm{lin},\bv}^{(\kappa)} \deq  \left\|\cK_{\mathrm{eff}}\widehat{\cG}^{(\kappa)}_{\mathrm{eff}}\sfz_\bv - \sfz_\bv\right\|_{L^2(\pi_d)}^2.
\end{equation}
The full linearized training and test errors are then defined by averaging over any orthonormal basis of \(\R^d\). For simplicity, we drop the dependence of the various quantities on \(\kappa\) in the notation. The following lemma controls the operator norm of the kernel operators \(\cK,\cK_{\mathrm{eff}}\).

\begin{lemma}\label{lem:kernel_op_norm_bound}
    Suppose that Assumptions~\ref{ass:high_dim},~\ref{ass:input_dist} and~\ref{ass:kernel_func_constant_kappa} hold. Then, \(\|\cK\|_\op \vee \|\cK_{\mathrm{eff}}\|_\op \prec d^{-1}\).
\end{lemma}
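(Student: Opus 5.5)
The plan is to bound $\|\cK_{\mathrm{eff}}\|_\op$ directly and then get $\|\cK\|_\op$ from Proposition~\ref{prop:linearization_kernel_op}. That proposition gives $\|\cK-\cK_{\mathrm{eff}}\|_\op\prec d^{-3/2}$, and $d^{-3/2}\le d^{-1}$, so the triangle inequality transfers the bound. Everything therefore reduces to the effective (linear) transfer operator.

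Write $\cK_{\mathrm{eff}}=\cT_{\mathrm{lin}}\cS^\ast_{\mathrm{lin}}$ using the linear feature map. Define $A:L^2(\widehat\pi_{d,n})\to\R^d$ by
\[
Ag=\E_{(\bx',\bz')\sim\widehat\pi_{d,n}}\bigl[(\rho\bx'+\sigma\bz')\,g(\bx',\bz')\bigr],
\]
and $B:\R^d\to L^2(\pi_d)$ by $(B\bw)(\bx,\bz)=\<\rho\bx+\sigma\bz,\bw\>$. Then $\cK_{\mathrm{eff}}=d^{-1}BA$, and it suffices to show $\|A\|_\op\prec 1$ and $\|B\|_\op\lesssim 1$.

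For $B$, we have $\|B\bw\|_{L^2(\pi_d)}^2=\bw^\sT\bSigma_\sigma\bw\le(\rho^2\|\bSigma\|_\op+\sigma^2)\|\bw\|_2^2$. This is bounded by a constant, by~\eqref{eq:cov_op_bound}.

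For $A$, use duality: $\|A\|_\op^2=\|A^\ast\|_\op^2=\sup_{\|\bw\|_2=1}\E_{\widehat\pi_{d,n}}\<\rho\bx'+\sigma\bz',\bw\>^2$. Since $\bz'$ is independent of $\bx'$ and centered, the cross term vanishes. This gives $\|A\|_\op^2=\|\rho^2\widehat\bSigma+\sigma^2\bI_d\|_\op$. By~\eqref{eq:cov_concentration} and Assumption~\ref{ass:high_dim}, $\|\widehat\bSigma\|_\op\le\|\bSigma\|_\op+C(\sqrt{d/n}+d/n)\lesssim1$ with very high probability. Hence $\|A\|_\op\prec1$.

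Combining, $\|\cK_{\mathrm{eff}}\|_\op\le d^{-1}\|B\|_\op\|A\|_\op\prec d^{-1}$, and then $\|\cK\|_\op\le\|\cK_{\mathrm{eff}}\|_\op+\|\cK-\cK_{\mathrm{eff}}\|_\op\prec d^{-1}$.

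None of these steps is a real obstacle. The only points needing care are, first, that the noise cross term in $A^\ast$ truly vanishes, which follows from the product structure of $\widehat\pi_{d,n}$. Second, the very-high-probability events for~\eqref{eq:cov_concentration} and for Proposition~\ref{prop:linearization_kernel_op} must be intersected, which is compatible with the definition of $\prec$.
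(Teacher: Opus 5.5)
Your proof is correct and follows the paper's argument. The paper also bounds $\|\cK_{\mathrm{eff}} f\|^2_{L^2(\pi_d)}$ by $d^{-2}\|\bSigma_\sigma\|_\op\,\|\E_{\widehat\pi_{d,n}}[f(\bx,\bz)(\rho\bx+\sigma\bz)]\|_2^2$, controls the second factor by Cauchy--Schwarz together with $\|\rho^2\bX\bX^\sT/n+\sigma^2\bI_d\|_\op\prec 1$, and then passes to $\cK$ via Proposition~\ref{prop:linearization_kernel_op}; this is your factorization $\cK_{\mathrm{eff}}=d^{-1}BA$ written out inline.
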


\begin{proof}
    For every \(f\in L^2(\widehat{\pi}_{d,n})\),
    \begin{align*}
       \|\cK_{\mathrm{eff}} f \|_{L^2(\pi_d)}^2 & = \E_{(\bx,\bz)\sim \pi_d}\left[\E_{(\bx',\bz')\sim \widehat{\pi}_{d,n}}\left[\frac{\<\rho\bx'+\sigma\bz',\rho\bx+\sigma\bz\>}{d}f(\bx',\bz')\right]^2\right] 
        \\ & = \E_{(\bx,\bz),(\bx',\bz')\sim \widehat{\pi}_{d,n}^{\otimes 2}}\left[\frac{\<\rho\bx'+\sigma\bz',\bSigma_\sigma (\rho\bx+\sigma\bz)\>}{d^2}f(\bx,\bz)f(\bx',\bz')\right]
        \\ & 
        \leq \frac{\|\bSigma_\sigma\|_\op}{d^2}\left\|\E_{(\bx,\bz)\sim \widehat{\pi}_{d,n}}[f(\bx,\bz)(\rho\bx+\sigma\bz)]\right\|_2^2,
    \end{align*}
    where we recall that \(\bSigma_\sigma = \rho^2\bSigma + \sigma^2\bI_d\). By~\eqref{eq:cov_op_bound}, \(\|\bSigma_\sigma\|_\op \leq \rho^2C_{\mathrm{LSI}} + \sigma^2 \leq C_{\mathrm{LSI}}\) for \(C_{\mathrm{LSI}}\) the Poincar\'e constant of the input distribution (see Assumption~\ref{ass:input_dist}). On the other hand, by Cauchy-Schwarz,
    \begin{align*}
        \left\|\E_{(\bx,\bz)\sim \widehat{\pi}_{d,n}}[f(\bx,\bz)(\rho\bx+\sigma\bz)]\right\|_2^2 & = \sup_{\|\bu\|_2=1} \E_{(\bx,\bz)\sim \widehat{\pi}_{d,n}}[f(\bx,\bz)\<\rho\bx+\sigma\bz,\bu\>]^2
        \\ & \leq \|f\|_{L^2(\widehat{\pi}_{d,n})}^2\sup_{\|\bu\|_2=1}\E_{(\bx,\bz)\sim\widehat{\pi}_{d,n}}[\<\rho\bx+\sigma\bz,\bu\>^2] 
        \\ & \leq \|f\|_{L^2(\widehat{\pi}_{d,n})}^2\left\|\E_{(\bx,\bz)\sim\widehat{\pi}_{d,n}}[(\rho\bx+\sigma\bz)(\rho\bx+\sigma\bz)^\sT]\right\|_\op.
    \end{align*}
    By~\eqref{eq:cov_op_bound} and~\eqref{eq:cov_concentration},
    \begin{align*}
        \left\|\E_{(\bx,\bz)\sim\widehat{\pi}_{d,n}}[(\rho\bx+\sigma\bz)(\rho\bx+\sigma\bz)^\sT]\right\|_\op = \left\|\frac{\rho^2\bX\bX^\sT}{n} + \sigma^2\bI_d\right\|_\op \prec 1
    \end{align*}
    and thus we may combine the bounds to get \(\|\cK_{\mathrm{eff}}f\|_{L^2(\pi_d)}^2 \prec d^{-2}\|f\|_{L^2(\widehat{\pi}_{d,n})}^2\). By the variational characterization of the operator norm, \(\|\cK_{\mathrm{eff}}\|_\op \prec d^{-1}\). We conclude the proof using Proposition~\ref{prop:linearization_kernel_op}.
\end{proof}

We have the following linearization result for the training and test errors.

\begin{lemma}\label{lem:train_test_error_linearization}
    Suppose that Assumptions~\ref{ass:high_dim},~\ref{ass:input_dist} and~\ref{ass:kernel_func_constant_kappa} hold.
    Then, there exists \(c>0\) such that for every \(\bv\in \R^d\) with \(\|\bv\|_2=1\),
    \[
        |\cL_{\bv}(\widehat{f}_{\kappa,\bv};0) - \cL^{(\kappa)}_{\mathrm{lin},\bv}| \prec \frac{1}{\kappa \sqrt{d}} + e^{-d^c} ,\qquad |\cR_{\bv}(\widehat{f}_{\kappa,\bv}) - \cR_{\mathrm{lin},\bv}^{(\kappa)}| \prec \frac{(\kappa + 1)^2}{\kappa^3\sqrt{d}}.
    \]
\end{lemma}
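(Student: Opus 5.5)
The plan is to compare the exact ridge expressions \eqref{eq:opt_train_red_loss}--\eqref{eq:opt_test_red_loss} with their linearized counterparts \eqref{eq:linearized_train_test}. Each difference will be controlled by operator-norm perturbation bounds, so the estimates are automatically uniform over unit vectors $\bv$. The three inputs are:
\begin{itemize}
\item Proposition~\ref{prop:linearization_kernel_op}, giving $\|\widehat\cK-\widehat\cK_{\mathrm{eff}}\|_\op\vee\|\cK-\cK_{\mathrm{eff}}\|_\op\prec d^{-3/2}$;
\item Proposition~\ref{prop:bayes_opt_denoiser_linearization}, giving $\|\widehat f^\star_\bv-\sfz_\bv\|_{L^2(\widehat\pi_{d,n})}\le e^{-d^c}$ w.v.h.p.;
\item Lemma~\ref{lem:kernel_op_norm_bound}, giving $\|\cK\|_\op\vee\|\cK_{\mathrm{eff}}\|_\op\prec d^{-1}$.
\end{itemize}
We also use the trivial resolvent bounds $\|\widehat\cG^{(\kappa)}\|_\op\vee\|\widehat\cG^{(\kappa)}_{\mathrm{eff}}\|_\op\le d/\kappa$, together with the resolvent identity $\widehat\cG^{(\kappa)}-\widehat\cG^{(\kappa)}_{\mathrm{eff}}=\widehat\cG^{(\kappa)}(\widehat\cK_{\mathrm{eff}}-\widehat\cK)\widehat\cG^{(\kappa)}_{\mathrm{eff}}$. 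The identity yields $\|\widehat\cG^{(\kappa)}-\widehat\cG^{(\kappa)}_{\mathrm{eff}}\|_\op\prec d^{1/2}/\kappa^2$.

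\emph{Training error.} First I write $\cL_\bv(\widehat f_{\kappa,\bv};0)=\cL^{\mathrm{red}}_\bv(\widehat f_{\kappa,\bv};0)+\cL^{\mathrm{irr}}_\bv$ using \eqref{eq:train_loss_decomposition}. The irreducible part equals $\|\widehat f^\star_\bv-\sfz_\bv\|^2_{L^2(\widehat\pi_{d,n})}\le e^{-2d^c}$, which is where the $e^{-d^c}$ term in the statement comes from. The reducible part is $a^2$ with $a=\frac{\kappa}{d}\|\widehat\cG^{(\kappa)}\widehat f^\star_\bv\|$, while $\cL^{(\kappa)}_{\mathrm{lin},\bv}=b^2$ with $b=\frac{\kappa}{d}\|\widehat\cG^{(\kappa)}_{\mathrm{eff}}\sfz_\bv\|$. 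Both $a$ and $b$ are $O(1)$ because $\frac{\kappa}{d}\|\widehat\cG\|_\op\le1$ and $\|\sfz_\bv\|=1$. By the triangle inequality,
\[
|a-b|\le \tfrac{\kappa}{d}\|\widehat\cG^{(\kappa)}\|_\op\|\widehat f^\star_\bv-\sfz_\bv\|+\tfrac{\kappa}{d}\|\widehat\cG^{(\kappa)}-\widehat\cG^{(\kappa)}_{\mathrm{eff}}\|_\op\prec e^{-d^c}+\frac{1}{\kappa\sqrt d}.
\]
Then $|a^2-b^2|\le|a-b|(a+b)$ gives the first claim.

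\emph{Test error.} Write $\cR_\bv(\widehat f_{\kappa,\bv})=\|A-\sfz_\bv\|^2$ with $A=\cK\widehat\cG^{(\kappa)}\widehat f^\star_\bv$ (note $\cK\widehat\cK^\dagger$-type issues do not arise for ridge), and $\cR^{(\kappa)}_{\mathrm{lin},\bv}=\|B-\sfz_\bv\|^2$ with $B=\cK_{\mathrm{eff}}\widehat\cG^{(\kappa)}_{\mathrm{eff}}\sfz_\bv$. I telescope $A-B$ into three swaps, each bounded in operator norm.
\begin{itemize}
\item Replacing $\widehat f^\star_\bv$ by $\sfz_\bv$ costs $\|\cK\|_\op\frac d\kappa e^{-d^c}\prec e^{-d^c}/\kappa$.
\item Replacing $\cK$ by $\cK_{\mathrm{eff}}$ costs $d^{-3/2}\cdot\frac d\kappa=\frac1{\kappa\sqrt d}$.
\item Replacing $\widehat\cG^{(\kappa)}$ by $\widehat\cG^{(\kappa)}_{\mathrm{eff}}$ costs $\|\cK_{\mathrm{eff}}\|_\op\cdot d^{1/2}/\kappa^2\prec\frac1{\kappa^2\sqrt d}$.
\end{itemize}
Hence $\|A-B\|\prec\frac{\kappa+1}{\kappa^2\sqrt d}$, the exponential term being absorbed. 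Also $\|B-\sfz_\bv\|+\|A-\sfz_\bv\|\prec 1+\frac1\kappa$, since $\|B\|\le\|\cK_{\mathrm{eff}}\|_\op\frac d\kappa\prec\frac1\kappa$ and $\|\sfz_\bv\|_{L^2(\pi_d)}=1$. Multiplying gives $|\cR_\bv-\cR^{(\kappa)}_{\mathrm{lin},\bv}|\prec\frac{(\kappa+1)^2}{\kappa^3\sqrt d}$.

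\emph{Main obstacle.} The only delicate point is the $\widehat\cG$ swap. It loses two factors of $d/\kappa$, and these must be compensated by the $d^{-3/2}$ linearization error from Proposition~\ref{prop:linearization_kernel_op} and, for the test error, by the extra $d^{-1}$ from Lemma~\ref{lem:kernel_op_norm_bound}. One must therefore use the full strength of these two bounds rather than naive $O(1)$ operator bounds. Beyond that, the argument is bookkeeping of stochastic-domination events, finitely many of them, each holding w.v.h.p. and all independent of $\bv$.
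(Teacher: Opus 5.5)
Your proposal is correct and follows essentially the same route as the paper. Both arguments swap $\widehat f^\star_\bv$ for $\sfz_\bv$ via Proposition~\ref{prop:bayes_opt_denoiser_linearization}, control $\widehat\cG^{(\kappa)}-\widehat\cG^{(\kappa)}_{\mathrm{eff}}$ through the resolvent identity and Proposition~\ref{prop:linearization_kernel_op}, bound $\cK$ with Lemma~\ref{lem:kernel_op_norm_bound}, and finish with $|x^2-y^2|\le|x-y|(|x|+|y|)$. The differences are only organizational: you make the reducible/irreducible split explicit and do all three test-error swaps in one telescoping sum, whereas the paper swaps the target first and then the kernel operators.
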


\begin{proof}
    We first consider the training error. By Proposition~\ref{prop:bayes_opt_denoiser_linearization} there exists a constant \(c>0\) such that 
     \[
        \|\widehat{f}^\star_\bv - \sfz_\bv\|_{L^2(\widehat{\pi}_{d,n})} \leq e^{-d^{c}}
    \]
    with very high probability, which implies that
    \[
        \left|\cL_{\bv}(\widehat{f}_{\kappa,\bv};0)- \frac{\kappa^2}{d^2}\|\widehat{\cG}\sfz_\bv\|_{L^2(\widehat{\pi}_{d,n})}^2\right|
         \lesssim e^{-d^c}.
    \]
    Next, using the scalar inequality \(|\|x\|^2 - \|y\|^2|\leq \|x-y\|(\|x\|+\|y\|)\) for every \(x,y\) in a normed space, we have
    \begin{align*}
        \left|\|\widehat{\cG}\sfz_\bv\|_{L^2(\widehat{\pi}_{d,n})}^2 - \|\widehat{\cG}_{\mathrm{eff}}\sfz_\bv\|_{L^2(\widehat{\pi}_{d,n})}^2\right| & \leq \|\widehat{\cG}\sfz_\bv - \widehat{\cG}_{\mathrm{eff}}\sfz_\bv\|_{L^2(\widehat{\pi}_{d,n})}\left(\|\widehat{\cG}\sfz_\bv\|_{L^2(\widehat{\pi}_{d,n})} + \|\widehat{\cG}_{\mathrm{eff}}\sfz_\bv\|_{L^2(\widehat{\pi}_{d,n})}\right).
    \end{align*}
    By the resolvent identity, the norm bounds \(\|\widehat{\cG}^{(\kappa)}\|_\op\vee \|\widehat{\cG}_{\mathrm{eff}}^{(\kappa)}\|_\op \leq d/\kappa\) and Proposition~\ref{prop:linearization_kernel_op},
    \begin{equation}\label{eq:resolvent_identity_training_error}
        \|\widehat{\cG} - \widehat{\cG}_{\mathrm{eff}}\|_\op \leq \|\widehat{\cG}\|_\op\|\widehat{\cK}-\widehat{\cK}_{\mathrm{eff}}\|_\op \|\widehat{\cG}_{\mathrm{eff}}\|_\op \prec \frac{\sqrt{d}}{\kappa^2}.
    \end{equation}
    Furthermore, \(\|\sfz_\bv\|_{L^2(\widehat{\pi}_{d,n})} = 1\) and so
    \[
        \left|\frac{\kappa^2}{d^2}\|\widehat{\cG}\sfz_\bv\|_{L^2(\widehat{\pi}_{d,n})}^2 - \frac{\kappa^2}{d^2}\|\widehat{\cG}_{\mathrm{eff}}\sfz_\bv\|_{L^2(\widehat{\pi}_{d,n})}^2\right| \prec \frac{1}{\kappa \sqrt{d}}.
    \]
    This proves the first part of the lemma. 

    We use a similar argument to control the test error. We swap the empirical Bayes estimator \(\widehat{f}^\star_{\bv}\) with \(\sfz_\bv\) using Proposition~\ref{prop:bayes_opt_denoiser_linearization}:
    \begin{align*}
        \left|\cR_{\bv}(\widehat{f}_{\kappa,\bv}) - \left\|\cK\widehat{\cG}\sfz_{\bv} - \sfz_\bv\right\|_{L^2(\pi_d)}^2\right| & \leq  \|\cK\widehat{\cG}\|_\op\|\widehat{f}^\star_\bv - \sfz_\bv\|_{L^2(\widehat{\pi}_{d,n})}\left(2+ \|\cK\widehat{\cG}\|_\op\|\widehat{f}^\star_\bv\|_{L^2(\widehat{\pi}_{d,n})}+ \|\cK\widehat{\cG}\|_\op\right).
    \end{align*}
    Here, we have used the triangle inequality and the fact that \(\|\sfz_\bv\|_{L^2(\pi_d)} = 1\). By Lemma~\ref{lem:kernel_op_norm_bound} and the norm bounds on the resolvents, \(\|\cK\widehat{\cG}\|_\op \leq \|\cK\|_\op\|\widehat{\cG}\|_\op \prec 1/\kappa\). Hence, there exists a constant \(c > 0\) such that
    \[
        \left|\cR_{\bv}(\widehat{f}_{\kappa,\bv}) - \left\|\cK\widehat{\cG}\sfz_{\bv} - \sfz_\bv\right\|_{L^2(\pi_d)}^2\right| \prec \frac{(\kappa+1)e^{-d^c}}{\kappa^2}.
    \]
    
    We can now compare \(\|\cK\widehat{\cG}\sfz_\bv - \sfz_\bv\|_{L^2(\pi_d)}^2\) with \(\|\cK_{\mathrm{eff}}\widehat{\cG}_{\mathrm{eff}}\sfz_\bv - \sfz_\bv\|_{L^2(\pi_d)}^2\) using a similar argument as for the training error. We have
    \begin{align*}
        \left|\left\|\cK\widehat{\cG}\sfz_{\bv} - \sfz_\bv\right\|_{L^2(\pi_d)}^2 - \cR_{\mathrm{lin},\bv}^{(\kappa)}\right| & \leq  \left\|\cK\widehat{\cG}\sfz_{\bv}- \cK_{\mathrm{eff}}\widehat{\cG}_{\mathrm{eff}}\sfz_\bv\right\|_{L^2(\pi_d)}\left(2 + \|\cK\widehat{\cG}\sfz_\bv\|_{L^2(\pi_d)} + \|\cK_{\mathrm{eff}}\widehat{\cG}_{\mathrm{eff}}\sfz_\bv\|_{L^2(\pi_d)}\right)
        \\ & \prec \frac{\kappa + 1}{\kappa}\left(\frac{1}{d}\|\widehat{\cG}-\widehat{\cG}_{\mathrm{eff}}\|_\op + \frac{d}{\kappa}\|\cK-\cK_{\mathrm{eff}}\|_\op\right).
    \end{align*}
    By Proposition~\ref{prop:linearization_kernel_op} and~\eqref{eq:resolvent_identity_training_error}, we conclude that
    \[
        \left|\left\|\cK\widehat{\cG}\sfz_{\bv} - \sfz_\bv\right\|_{L^2(\pi_d)}^2 - \cR_{\mathrm{lin},\bv}^{(\kappa)}\right| \prec \frac{\kappa+1}{\kappa^3 \sqrt{d}}+\frac{\kappa+1}{\kappa^2 \sqrt{d}}.
    \]
    We absorb the exponentially small term into the polynomially small term and conclude the proof.
\end{proof}

Having approximated the training and test errors by their linearized counterparts, we now derive more explicit representations of the linearized estimators \(\widehat{\cG}_{\mathrm{eff}}\sfz_\bv\) and \(\cK_{\mathrm{eff}}\widehat{\cG}_{\mathrm{eff}}\sfz_\bv\). This will allow us to compute the asymptotic limits of the linearized training and test errors.  Recall the definition of the resolvent matrix \(\widehat{\bR}\) in~\eqref{eq:emp_kernel_resolvent_matrices}. Given fixed \(\bw_\bx,\bw_\bz\in \R^d\), we define the linear function \(L_{\bw_\bx,\bw_\bz}: \R^d\times \R^d \to \R\) as \(L_{\bw_\bx,\bw_\bz}(\bx,\bz) = \<\bw_\bx,\bx\> + \<\bw_\bz,\bz\>\) for every \(\bx,\bz\in \R^d\). We will also write \(L_{\bw}(\bx,\bz)\) for \(\bw\in \R^{2d}\), in which case \(\bw_\bx\) and \(\bw_\bz\) are the first and last \(d\) coordinates of \(\bw\), respectively. We have the following representation.

\begin{lemma}\label{lem:linearized_estimators_representation}
    For every \(\bv\in \R^d\) with \(\|\bv\|_2=1\), we have \(\widehat{\cG}_{\mathrm{eff}}\sfz_\bv = L_{\widehat{\bw}}\) and \(\cK_{\mathrm{eff}}\widehat{\cG}_{\mathrm{eff}}\sfz_\bv = L_{\bw}\) where
    \[
        \widehat{\bw} \deq   \begin{bmatrix}
            \frac{\sigma d}{\rho \kappa}\widehat{\bR}\bv \\ -\left(\frac{d}{\sigma^2+\kappa}\bI_d + \frac{\sigma^2d}{\kappa(\sigma^2+\kappa)n} \bX\bX^\sT\widehat{\bR}\right)\bv
        \end{bmatrix},\qquad 
        \bw =  
            - \frac{\kappa^\star}{\sigma^2+\kappa}
            \begin{bmatrix}
                \rho \sigma\widehat{\bR}\bv \\
                \sigma^2\widehat{\bR}\bv
            \end{bmatrix}.
    \]
\end{lemma}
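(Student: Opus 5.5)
The plan is to show that the finite-dimensional space of linear functions $\{L_{\bw}:\bw\in\R^{2d}\}\subset L^2(\widehat\pi_{d,n})$ is invariant under $\widehat\cK_{\mathrm{eff}}$. On that space, the equation $(\widehat\cK_{\mathrm{eff}}+\frac{\kappa}{d}\id)g=\sfz_\bv$ then reduces to a $2d\times 2d$ linear system, which we solve explicitly.

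\emph{Step 1: action of the operators on linear functions.} Since $\widehat\cK_{\mathrm{eff}}$ is positive semidefinite and $\kappa>0$, the operator $\widehat\cK_{\mathrm{eff}}+\frac{\kappa}{d}\id$ is invertible on $L^2(\widehat\pi_{d,n})$. It therefore suffices to exhibit some $\widehat\bw$ with $(\widehat\cK_{\mathrm{eff}}+\frac{\kappa}{d}\id)L_{\widehat\bw}=\sfz_\bv=L_{(0,-\bv)}$. Uniqueness of the solution as an element of $L^2$ is automatic, even if $\widehat\bw$ is not unique as a vector (e.g.\ when $n<d$).

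For $\bw=(\bw_\bx,\bw_\bz)$, set $\ba\deq\rho\widehat\bSigma\bw_\bx+\sigma\bw_\bz$ with $\widehat\bSigma=\bX\bX^\sT/n$. In the first term of \eqref{eq:K_hat_op}, expand $\E_{(\bx',\bz')\sim\widehat\pi_{d,n}}[(\rho\bx'+\sigma\bz')L_\bw(\bx',\bz')]$. The cross terms vanish because $\E\bz'=0$, and $\E[\bz'\bz'^\sT]=\bI_d$. This gives exactly $\ba$, so the first term equals $\<\rho\bx+\sigma\bz,\ba\>/d$. The noise-averaging term gives $\delta_n\E_{\bz'}[L_\bw(\bx_{0,i},\bz')]=\delta_n\<\bw_\bx,\bx_{0,i}\>$, which is again linear. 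Hence
\[
\widehat\cK_{\mathrm{eff}}L_\bw=L_{(\rho\ba/d+\delta_n\bw_\bx,\;\sigma\ba/d)}.
\]
The same computation gives $\cK_{\mathrm{eff}}L_\bw=\<\rho\bx+\sigma\bz,\ba\>/d=L_{(\rho\ba/d,\;\sigma\ba/d)}$ as a function on the population space.

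\emph{Step 2: solving the system.} Write $\chi_\kappa=\kappa/d+\delta_n$. Matching coefficients gives two equations. The $\bx$-block reads $\rho\ba/d+\chi_\kappa\bw_\bx=0$, so $\bw_\bx=-\rho\ba/(d\chi_\kappa)$. The $\bz$-block reads $\sigma\ba/d+\frac{\kappa}{d}\bw_\bz=-\bv$, so $\bw_\bz=-(d\bv+\sigma\ba)/\kappa$. Substituting both into the definition of $\ba$ yields
\[
\Big(1+\tfrac{\sigma^2}{\kappa}+\tfrac{\rho^2}{d\chi_\kappa}\widehat\bSigma\Big)\ba=-\tfrac{\sigma d}{\kappa}\bv.
\]
Multiplying by $d\chi_\kappa/\rho^2$ and recalling $\kappa^\star=d(\sigma^2+\kappa)\chi_\kappa/(\rho^2\kappa)$ from \eqref{eq:auxiliary_parameters}, this becomes $(\widehat\bSigma+\kappa^\star\bI_d)\ba=-\frac{\sigma d^2\chi_\kappa}{\rho^2\kappa}\bv$. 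Hence
\[
\ba=-\frac{\sigma\kappa^\star d}{\sigma^2+\kappa}\widehat\bR\bv .
\]
The matrix $\widehat\bSigma+\kappa^\star\bI_d$ is invertible since $\kappa^\star>0$.

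\emph{Step 3: reading off the claimed vectors.} Back-substituting gives $\bw_\bx=\frac{\sigma d}{\rho\kappa}\widehat\bR\bv$, matching the first block of $\widehat\bw$. For the second block, $\bw_\bz=-\frac{d}{\kappa}\bv+\frac{\sigma^2\kappa^\star d}{\kappa(\sigma^2+\kappa)}\widehat\bR\bv$. To bring this to the stated form, use $\kappa^\star\widehat\bR=\bI_d-\widehat\bSigma\widehat\bR$ together with $\frac1\kappa-\frac{\sigma^2}{\kappa(\sigma^2+\kappa)}=\frac{1}{\sigma^2+\kappa}$, which yields $-\big(\frac{d}{\sigma^2+\kappa}\bI_d+\frac{\sigma^2 d}{\kappa(\sigma^2+\kappa)n}\bX\bX^\sT\widehat\bR\big)\bv$. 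Finally, $\cK_{\mathrm{eff}}L_{\widehat\bw}=L_{(\rho\ba/d,\sigma\ba/d)}$, and inserting $\ba$ gives $\bw=-\frac{\kappa^\star}{\sigma^2+\kappa}(\rho\sigma\widehat\bR\bv,\ \sigma^2\widehat\bR\bv)$.

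The only delicate point is Step 1: checking that the noise-averaging part of $\widehat\cK_{\mathrm{eff}}$ preserves linearity, and that the empirical second moment $\E_{\widehat\pi_{d,n}}[(\rho\bx'+\sigma\bz')(\bx'^\sT,\bz'^\sT)]$ has no cross terms. Everything else is routine algebra with the resolvent identity.
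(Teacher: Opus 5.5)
Your proof is correct and follows essentially the same route as the paper. Both arguments show that the linear functions $L_{\bw}$ form an invariant subspace for $\widehat\cK_{\mathrm{eff}}$ and are mapped to linear functions by $\cK_{\mathrm{eff}}$, which reduces the problem to a $2d$-dimensional linear system. The only difference is cosmetic: the paper writes out the full block inverse of $\widehat\bK+\frac{\kappa}{d}\bI_{2d}$ and applies it to $(0,-\bv)$, whereas you eliminate through the auxiliary vector $\ba=\rho\widehat\bSigma\bw_\bx+\sigma\bw_\bz$, which also neatly handles the harmless non-injectivity of $\bw\mapsto L_{\bw}$ on $L^2(\widehat\pi_{d,n})$ that you correctly flag.
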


\begin{proof}
    Let \(\bw_\bx,\bw_\bz\in \R^d\) be arbitrary and denote \(\bw = [\bw_\bx^\sT;\bw_\bz^\sT]^\sT\in \R^{2d}\). By definition of the effective kernel operator \(\widehat{\cK}_{\mathrm{eff}}\)~\eqref{eq:K_hat_op}, we readily verify that \( \widehat{\cK}_{\mathrm{eff}}L_{\bw}  = L_{\widehat{\bK}\bw}\), where the empirical kernel matrix \(\widehat{\bK}\) is defined as
    \[
        \widehat{\bK} \deq  
        \begin{bmatrix} 
            \frac{\rho^2}{dn}\bX\bX^\sT + \delta_n \bI_d & \frac{\rho \sigma}{d}\bI_d \\ 
            \frac{\rho \sigma}{dn}\bX\bX^\sT & \frac{\sigma^2}{d}\bI_d
        \end{bmatrix}.
    \]
    Therefore, \(\widehat{\cK}_{\mathrm{eff}} + \frac{\kappa}{d}\id\) acts as the linear operator on the space of linear functions given by the matrix \(\widehat{\bK} + \frac{\kappa}{d}\bI_{2d}\), and thus \(\widehat{\cG}_{\mathrm{eff}} L_{\bw} = L_{(\widehat{\bK} + \frac{\kappa}{d}\bI_{2d})^{-1}\bw}\). Using the block structure of \(\widehat{\bK}\) and the definition of \(\widehat{\bR}\), we verify that
    \[
        \left(\widehat{\bK} + \frac{\kappa}{d}\bI_{2d}\right)^{-1} = \begin{bmatrix}
            \frac{d(\sigma^2+\kappa)}{\rho^2\kappa}\widehat{\bR} & - \frac{\sigma d}{\rho \kappa}\widehat{\bR} \\
            -\frac{\sigma d}{\rho \kappa n}\bX\bX^\sT \widehat{\bR} & \frac{d}{\sigma^2+\kappa}\bI_d + \frac{\sigma^2 d}{\kappa(\sigma^2+\kappa)n} \bX\bX^\sT\widehat{\bR}
        \end{bmatrix}. 
    \]

    Plugging \(\bw_x=0\) and \(\bw_z = -\bv\) into the expression above, we obtain
    \[
        \widehat{\cG}_{\mathrm{eff}}\sfz_\bv = L_{\widehat{\bw}}(\bx,\bz),\qquad \widehat{\bw} \deq   \begin{bmatrix}
            \frac{\sigma d}{\rho \kappa}\widehat{\bR}\bv \\ -\left(\frac{d}{\sigma^2+\kappa}\bI_d + \frac{\sigma^2 d}{\kappa(\sigma^2+\kappa)n} \bX\bX^\sT\widehat{\bR}\right)\bv
        \end{bmatrix}.
    \]
    
    By a similar argument, we can also verify that \(\cK_{\mathrm{eff}}L_{\bw}(\bx,\bz) = L_{\bK\bw}(\bx,\bz)\) where \(\bK\in \R^{2d \times 2d}\) is the kernel matrix defined as
    \[
         \bK \deq  
        \begin{bmatrix} 
            \frac{\rho^2}{dn}\bX\bX^\sT  & \frac{\rho \sigma}{d}\bI_d \\ 
            \frac{\rho \sigma}{dn}\bX\bX^\sT & \frac{\sigma^2}{d}\bI_d
        \end{bmatrix}
        .
    \]
    Therefore, \(\cK_{\mathrm{eff}}\widehat{\cG}_{\mathrm{eff}}\sfz_\bv = L_{\bK\widehat{\bw}}\) which simplifies to
    \begin{align*}
        \bK\widehat{\bw} =  
            - \frac{\kappa^\star}{\sigma^2+\kappa}
            \begin{bmatrix}
                \rho \sigma\widehat{\bR}\bv \\
                \sigma^2\widehat{\bR}\bv
            \end{bmatrix},
    \end{align*}
    which is what we wanted to show.
\end{proof}

We can now compute the asymptotic limits of the linearized training and test errors using the explicit representations of the linearized estimators and the first- and second-order deterministic equivalents of the resolvent matrix \(\widehat{\bR}\) established in Lemmas~\ref{lem:first_order_deterministic_equivalent} and~\ref{lem:second_order_deterministic_equivalent}.

\begin{lemma}\label{lem:train_test_error_deterministic_equivalent}
    Suppose that Assumptions~\ref{ass:high_dim} and~\ref{ass:input_dist} hold. Then,
    \[
        |\cL_{\mathrm{lin}}^{(\kappa)}-\cL_{\mathrm{det}}^{(\kappa)}|\vee |\cR_{\mathrm{lin}}^{(\kappa)}-\cR_{\mathrm{det}}^{(\kappa)}| \prec \frac{1}{\sqrt{d}}.
    \]
\end{lemma}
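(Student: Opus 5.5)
We will work coordinate-wise with the explicit representation of Lemma~\ref{lem:linearized_estimators_representation} and reduce both linearized errors to traces of $\widehat{\bR}$ and $\widehat{\bR}^2$ against bounded deterministic matrices. Then we apply Lemmas~\ref{lem:first_order_deterministic_equivalent} and~\ref{lem:second_order_deterministic_equivalent}, and average over an orthonormal basis.

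\textbf{Training error.} By Lemma~\ref{lem:linearized_estimators_representation}, $\cL^{(\kappa)}_{\mathrm{lin},\bv}=\frac{\kappa^2}{d^2}\|L_{\widehat\bw}\|^2_{L^2(\widehat\pi_{d,n})}$. Under $\widehat\pi_{d,n}$, $\bx$ and $\bz$ are independent, with second moments $\widehat\bSigma=\bX\bX^\sT/n$ and $\bI_d$. Hence
\[
\|L_{\widehat\bw}\|^2=\widehat\bw_\bx^\sT\widehat\bSigma\widehat\bw_\bx+\|\widehat\bw_\bz\|_2^2 .
\]
Two substitutions simplify this. First, $\widehat\bSigma\widehat\bR=\bI_d-\kappa^\star\widehat\bR$. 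Second, $\widehat\bSigma\widehat\bR^2=\widehat\bR-\kappa^\star\widehat\bR^2$. After them, $\cL^{(\kappa)}_{\mathrm{lin},\bv}$ becomes a quadratic polynomial in $\bv^\sT\widehat\bR\bv$ and $\bv^\sT\widehat\bR^2\bv$, with explicit coefficients in $\sigma,\rho,\kappa,\kappa^\star,d$. Averaging over an orthonormal basis $\{\bv_i\}$ replaces these quantities by $\frac1d\Tr(\widehat\bR)$ and $\frac1d\Tr(\widehat\bR^2)$. We then check algebraically that the coefficients match \eqref{eq:train_error_equivalent} once $\widehat\bR\mapsto\bM_\kappa$ and $\widehat\bR^2\mapsto\bM^{(2)}_\kappa$. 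The check uses $\kappa^\star=d(\sigma^2+\kappa)\chi_\kappa/(\rho^2\kappa)$; this is routine bookkeeping.

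\textbf{Test error.} Here the linearized estimator is $L_\bw$ with $\bw=-\frac{\kappa^\star}{\sigma^2+\kappa}(\rho\sigma\widehat\bR\bv,\sigma^2\widehat\bR\bv)$. Its value is $\<\bw',\rho\bx+\sigma\bz\>$ with $\bw'=-\frac{\kappa^\star\sigma}{\sigma^2+\kappa}\widehat\bR\bv$. Under $\pi_d$ the input $\rho\bx+\sigma\bz$ has covariance $\bSigma_\sigma$, and its cross moment with $\bz$ is $\sigma\bI_d$. This gives
\[
\cR^{(\kappa)}_{\mathrm{lin},\bv}=\bw'^\sT\bSigma_\sigma\bw'+2\sigma\<\bw',\bv\>+1 .
\]
Averaging over the basis yields $1-\frac{2\sigma^2\kappa^\star}{(\sigma^2+\kappa)d}\Tr\widehat\bR+\frac{(\kappa^\star)^2\sigma^2}{(\sigma^2+\kappa)^2d}\Tr(\bSigma_\sigma\widehat\bR^2)$. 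Matching the coefficients with \eqref{eq:test_error_equivalent} again uses the identity for $\kappa^\star$.

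\textbf{Error control.} Each replacement is a trace $\Tr(\bA(\widehat\bR-\bM_\kappa))$ or $\Tr(\bA(\widehat\bR^2-\bM^{(2)}_\kappa))$, with $\bA=c\,\bI_d/d$ or $c\,\bSigma_\sigma/d$. Such $\bA$ satisfy $\|\bA\|_\frob\lesssim d^{-1/2}$, so Lemmas~\ref{lem:first_order_deterministic_equivalent} and~\ref{lem:second_order_deterministic_equivalent} give errors of order $|c|/(\kappa^\star\sqrt d)\cdot d^{-1/2}$ and $|c|/((\kappa^\star)^2\sqrt d)\cdot d^{-1/2}$.

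The main obstacle is that the prefactors are not uniformly bounded. In the test error the coefficient of $\Tr(\bSigma_\sigma\widehat\bR^2)/d$ is $(\kappa^\star)^2\sigma^2/(\sigma^2+\kappa)^2$, and $\kappa^\star$ may diverge when $\kappa\to0$ with $\delta_n>0$, or when $\kappa\to\infty$. We therefore track the scaling in $\kappa^\star$ explicitly. The $(\kappa^\star)^{-2}$ factor in Lemma~\ref{lem:second_order_deterministic_equivalent} cancels $(\kappa^\star)^2$, and the $(\kappa^\star)^{-1}$ factor in Lemma~\ref{lem:first_order_deterministic_equivalent} cancels $\kappa^\star$. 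This leaves factors $(\sigma^2+\kappa)^{-1}\le\sigma^{-2}$, which are bounded.

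For the training error the coefficients are handled the same way. Where needed, we rewrite them through $\kappa\widehat\bR/(\rho^2\kappa^\star)$-type combinations, which are bounded operators, so each term still gives error $\prec d^{-1/2}$. Finally, $\|\bSigma_\sigma\|_\op\le C_{\mathrm{LSI}}$ by \eqref{eq:cov_op_bound}, and a union bound over the $d$ basis vectors preserves stochastic domination.
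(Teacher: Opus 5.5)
Your proposal is correct and follows essentially the same route as the paper. It uses the coefficient vectors of Lemma~\ref{lem:linearized_estimators_representation}, the identity $\widehat\bSigma\widehat\bR=\bI_d-\kappa^\star\widehat\bR$, and averaging over a basis to reduce both errors to $\frac1d\Tr(\widehat\bR)$, $\frac1d\Tr(\widehat\bR^2)$ and $\frac1d\Tr(\bSigma_\sigma\widehat\bR^2)$. It then applies Lemmas~\ref{lem:first_order_deterministic_equivalent} and~\ref{lem:second_order_deterministic_equivalent} with $\kappa^\star=d(\sigma^2+\kappa)\chi_\kappa/(\rho^2\kappa)$, and your explicit cancellation of the $\kappa^\star$-prefactors (via $\kappa^\star\ge\sigma^2/\rho^2$) only makes precise what the paper leaves implicit. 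Two small slips do not affect the argument: the per-direction expression is affine, not quadratic, in $\bv^\sT\widehat\bR\bv$ and $\bv^\sT\widehat\bR^2\bv$, and no union bound is needed because the basis average is exactly a trace.
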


\begin{proof}
    We first consider the training error. Let \(\bv\in \R^d\) with \(\|\bv\|_2=1\). By Lemma~\ref{lem:linearized_estimators_representation} and the identity \(n^{-1}\bX\bX^\sT\widehat{\bR} = \bI_d - \kappa^\star \widehat{\bR}\)
    \begin{align*}
        \cL_{\mathrm{lin},\bv}^{(\kappa)}  & = \frac{\kappa^2}{d^2}\left(\frac{\sigma^2d^2}{\rho^2\kappa^2}\bv^\sT\widehat{\bR} ( \bI_d - \kappa^\star \widehat{\bR})\bv + \left\|\left(\frac{d}{\sigma^2+\kappa}\bI_d + \frac{\sigma^2d}{\kappa(\sigma^2+\kappa)} ( \bI_d - \kappa^\star \widehat{\bR})\right)\bv\right\|_2^2\right)
        \\ & = 1 + \left( \frac{\sigma^2}{\rho^2} - \frac{2\sigma^2\kappa^\star}{\sigma^2+\kappa} \right) \bv^\sT\widehat{\bR}\bv + \left( \frac{\sigma^4(\kappa^\star)^2}{(\sigma^2+\kappa)^2} - \frac{\sigma^2\kappa^\star}{\rho^2} \right) \bv^\sT\widehat{\bR}^2\bv.
    \end{align*}
    Summing over an orthonormal basis \(\{\bv_{1},\ldots,\bv_d\}\) of \(\R^d\), we obtain
    \begin{equation}\label{eq:linearized_train_explicit}
        \cL^{(\kappa)}_{\mathrm{lin}} = 1 + \left( \frac{\sigma^2}{\rho^2} - \frac{2\sigma^2\kappa^\star}{\sigma^2+\kappa} \right) \frac{\Tr(\widehat{\bR})}{d} + \left( \frac{(\sigma^2)^2(\kappa^\star)^2}{(\sigma^2+\kappa)^2} - \frac{\sigma^2\kappa^\star}{\rho^2} \right) \frac{\Tr(\widehat{\bR}^2)}{d}.
    \end{equation}
    By Lemmas~\ref{lem:first_order_deterministic_equivalent} and~\ref{lem:second_order_deterministic_equivalent},
    \begin{equation}\label{eq:res_det_equiv}
        \frac{1}{d}\left|\Tr(\widehat{\bR}-\bM)\right| \prec \frac{1}{\kappa^\star\sqrt{d}},\qquad \frac{1}{d}\left|\Tr(\widehat{\bR}^2-\bM^{(2)})\right| \prec \frac{1}{(\kappa^\star)^2\sqrt{d}}.
    \end{equation}
    Hence,
    \begin{align*}
        \cL^{(\kappa)}_{\mathrm{lin}} = 1 + \left( \frac{\sigma^2}{\rho^2} - \frac{2\sigma^2\kappa^\star}{\sigma^2+\kappa} \right) \frac{\Tr(\bM)}{d} + \left( \frac{(\sigma^2)^2(\kappa^\star)^2}{(\sigma^2+\kappa)^2} - \frac{\sigma^2\kappa^\star}{\rho^2} \right) \frac{\Tr(\bM^{(2)})}{d} + O_{\prec}(d^{-1/2}).
    \end{align*}
    Substituting \(\kappa^\star_{\kappa} = \frac{d(\sigma^2+\kappa)\chi_\kappa}{\rho^2\kappa}\), we recover the expression for \(\cL_{\mathrm{det}}^{(\kappa)}\) in~\eqref{eq:train_error_equivalent}.

    For the test error, we use the representation of the linearized estimator \(\cK_{\mathrm{eff}}\widehat{\cG}_{\mathrm{eff}}\sfz_\bv\) in Lemma~\ref{lem:linearized_estimators_representation} to verify that
    \begin{equation}\label{eq:linearized_test_explicit}
        \cR_{\mathrm{lin},\bv}^{(\kappa)}  = 1 - \frac{2\kappa^\star \sigma^2}{\sigma^2+\kappa}\bv^\sT\widehat{\bR}\bv + \left(\frac{\kappa^\star}{\sigma^2+\kappa}\right)^2\sigma^2\bv^\sT\widehat{\bR}\bSigma_\sigma \widehat{\bR}\bv.
    \end{equation}
    Summing over an orthonormal basis \(\{\bv_1,\ldots,\bv_d\}\) of \(\R^d\) and using~\eqref{eq:res_det_equiv}, we obtain
    \begin{align*}
        \cR_{\mathrm{lin}}^{(\kappa)} & = 1 - \frac{2\kappa^\star \sigma^2}{d(\sigma^2+\kappa)}\Tr(\bM) + \left(\frac{\kappa^\star}{\sigma^2+\kappa}\right)^2\frac{\sigma^2}{d}\Tr(\bM^{(2)}\bSigma_\sigma)
        + O_{\prec}(d^{-1/2})
         = \cR^{(\kappa)}_{\mathrm{det}} + O_{\prec}(d^{-1/2}),
    \end{align*}
    where we have substituted the expression for \(\kappa^\star\) to recover the expression for \(\cR_{\mathrm{det}}^{(\kappa)}\) in~\eqref{eq:test_error_equivalent}.
\end{proof}

We are now ready to prove Theorem~\ref{thm:train_test_equivalents_constant_kappa}.

\begin{proof}[Proof of Theorem~\ref{thm:train_test_equivalents_constant_kappa}]
    Let \(\{\bv_j\}_{j=1}^d\) be an orthonormal basis of \(\R^d\). Combining Lemma~\ref{lem:train_test_error_linearization} and Lemma~\ref{lem:train_test_error_deterministic_equivalent}, we conclude that
    \begin{align*}
        |\cL(\widehat{\boldf}_{\kappa};0) - \cL_{\mathrm{det}}^{(\kappa)}| & \leq \frac{1}{d}\sum_{j=1}^{d}|\cL_{\bv_j}(\widehat{f}_{\kappa,\bv_j};0) - \cL_{\mathrm{lin},\bv_j}^{(\kappa)}| + |\cL_{\mathrm{lin}}^{(\kappa)} - \cL_{\mathrm{det}}^{(\kappa)}| \prec \frac{\kappa+1}{\kappa \sqrt{d}},
    \end{align*}
    where we have absorbed the exponentially small term into the polynomially small term.
    Similarly,
    \begin{align*}
         |\cR(\widehat{\boldf}_{\kappa}) - \cR_{\mathrm{det}}^{(\kappa)}| & \prec \frac{\kappa^3  + 1}{\kappa^3\sqrt{d}},
    \end{align*}
    where we absorbed the term \(1/(\kappa^2\sqrt{d})\) into the term \((\kappa^3 + 1)/(\kappa^3\sqrt{d})\). This concludes the proof of the first part of the theorem.

    For the second part, when \(h(x) = x\) for all \(x \in \R\), \(\cK = \cK_{\mathrm{eff}}\) and \(\widehat{\cK} = \widehat{\cK}_{\mathrm{eff}}\). Adapting Lemma~\ref{lem:train_test_error_linearization}, we find that there exists a constant \(c > 0\) such that for any unit vector \(\bv \in \R^d\),
    \[
        \bigl|\cL_{\bv}(\widehat{f}_{\kappa,\bv};0) - \cL^{(\kappa)}_{\mathrm{lin},\bv}\bigr| \prec e^{-d^c}, \qquad \text{and} \qquad \bigl|\cR_{\bv}(\widehat{f}_{\kappa,\bv}) - \cR_{\mathrm{lin},\bv}^{(\kappa)}\bigr| \prec \frac{(\kappa+1)e^{-d^c}}{\kappa^2}.
    \]
    The rest of the proof is then identical to the previous case.
\end{proof}
\subsection{Proof of Theorem~\ref{thm:train_test_equivalents_vanishing_kappa}}

The proof of Theorem~\ref{thm:train_test_equivalents_vanishing_kappa} relies on the construction of a localized bump function, with $n$ bumps that interpolate the empirical Bayes denoiser $\widehat{f}^\star_\bv$ around each sample $\bx_{0,i}$, with small training error and RKHS norm. This construction is done in Appendix~\ref{app:localized_bumps}. Theorem~\ref{thm:train_test_equivalents_vanishing_kappa} is then proved by comparing the ridge estimator with the localized bump function, and then using the structure of the RKHS to show that the linear coefficients of the ridge estimator are close to the target vector \(\bv\).

\begin{proof}[Proof of Theorem~\ref{thm:train_test_equivalents_vanishing_kappa}]
    Let \(m\in\{0,1,2,\ldots\}\) and \(k_0=2m+6\) be guaranteed by Assumption~\ref{ass:kernel_func_vanishing_kappa}, and let \(\bv\in \S^{d-1}\). By Proposition~\ref{prop:localized_bumps_properties}, the localized bump function \(f^{\mathrm{loc}}_{m,\bv}\) defined in~\eqref{eq:localized_bumps} satisfies
    \[
         \cL_{\bv}(f^{\mathrm{loc}}_{m,\bv};0) \prec  d^{-k_0+2},\qquad  \|f^{\mathrm{loc}}_{m,\bv}\|_\cH^2 \prec d.
    \]
    The regularized empirical risk is defined as \(\cL_{\bv}(f;\kappa) = \cL_{\bv}(f;0) + \frac{\kappa}{d} \|f\|_\cH^2\). By the optimality of the ridge estimator \( \widehat{f}_{\kappa,\bv}\), its risk must be upper-bounded by that of the localized bump function. Since we assume \(d^{-k_0+2} \lesssim \kappa\), this yields
    \[
        \cL_{\bv}(\widehat{f}_{\kappa,\bv};\kappa) \leq  \cL_{\bv}(f^{\mathrm{loc}}_{m,\bv};\kappa) \prec  d^{-k_0+2} + \kappa \lesssim \kappa.
    \]
    Consequently, we obtain both the training loss bound \( \cL_{\bv}(\widehat{f}_{\kappa,\bv};0) \prec  \kappa\) and the RKHS norm bound \(\|\widehat{f}_{\kappa,\bv}\|_\cH^2 \prec  d\). 
    
    We decompose the estimator into its linear and higher-order components as \(\widehat{f}_{\kappa,\bv}(\bx) = \<\widehat{\btheta}_{\kappa,\bv},\bx\> + \widehat{g}_{\kappa,\bv}(\bx)\), where \(\widehat{g}_{\kappa,\bv}\in \cH_{\geq 3}\). By the norm decomposition in Lemma~\ref{lem:rkhs_decomposition}, \(\|\widehat{f}_{\kappa,\bv}\|_\cH^2 = d\|\widehat{\btheta}_{\kappa,\bv}\|_2^2 + \|\widehat{g}_{\kappa,\bv}\|_\cH^2\). Thus, the RKHS bound implies \(\|\widehat{\btheta}_{\kappa,\bv}\|_2 \prec 1\) and \(\|\widehat{g}_{\kappa,\bv}\|_\cH \prec \sqrt{d}\).

    Recall the definition of the Hermite coefficients in~\eqref{eq:Gamma_k_def}. By convexity and Parseval's identity,
    \begin{align*}
        \left\| \sigma \widehat{\btheta}_{\kappa,\bv}+\bv + \frac{1}{n}\sum_{j=1}^{n}\Gamma_1\widehat{g}_{\kappa,\bv}(\bx_{0,j})  \right\|_2^2 & \leq \frac{1}{n}\sum_{j=1}^{n} \left\| \sigma \widehat{\btheta}_{\kappa,\bv}+\bv + \Gamma_1\widehat{g}_{\kappa,\bv}(\bx_{0,j})  \right\|_2^2 
         \leq \cL_{\bv}(\widehat{f}_{\kappa,\bv};0) \prec \kappa.
    \end{align*}
    By the triangle inequality, Lemma \ref{lem:high_order_term_bound}, and Proposition~\ref{prop:localized_bumps_properties}, we bound the distance of the linear coefficients to the target vector. Since \(\kappa \lesssim d^{-\epsilon}\) for some \(\epsilon > 0\), for every \(c \in (0, (\epsilon \wedge 1)/2)\) there exist constants \(c', C > 0\) such that
    \begin{equation}\label{eq:linear_coefficients_bound_vanishing_kappa}
        \|\sigma\widehat{\btheta}_{\kappa,\bv}+\bv\|_2 \lesssim d^c\sqrt{\kappa} + d^c\frac{C_{\zeta}}{\sqrt{d}} \lesssim d^{c-\epsilon/2} + C_{\zeta} d^{c-1/2}
    \end{equation}
    with probability at least \(1-\zeta-e^{-d^{c'}}\) for all \(d \geq C\), where $C_\zeta$ is defined in \eqref{eq:C_delta_def}. We place ourselves on this high-probability event for the remainder of the proof. Furthermore, still by Lemma \ref{lem:high_order_term_bound}, the non-linear residual vanishes in \(L^2(\pi_d)\), yielding \(\|\widehat{g}_{\kappa,\bv}\|_{L^2(\mu_d)}^2 \prec d^{-1/2}\).

    Finally, we expand the population risk:
    \begin{align*}
        \cR_{\bv}(\widehat{f}_{\kappa,\bv}) & = \E_{(\bx_0,\bz)}\left[\bigl|\<\widehat{\btheta}_{\kappa,\bv},\rho\bx_0 + \sigma\bz\> + \widehat{g}_{\kappa,\bv}(\rho\bx_0 + \sigma\bz) + \<\bv,\bz\>\bigr|^2\right]
        \\ & = \widehat{\btheta}_{\kappa,\bv}^\sT \bSigma_\sigma \widehat{\btheta}_{\kappa,\bv} + 2\sigma\widehat{\btheta}_{\kappa,\bv}^\sT\bv + 1 + \|\widehat{g}_{\kappa,\bv}\|_{L^2(\mu_d)}^2 
        \\ & \qquad + 2\E_{(\bx_0,\bz)}\left[\<\widehat{\btheta}_{\kappa,\bv},\rho\bx_0 + \sigma\bz\>\widehat{g}_{\kappa,\bv}(\rho\bx_0 + \sigma\bz)\right] + 2\E_{(\bx_0,\bz)}\left[\<\bv,\bz\>\widehat{g}_{\kappa,\bv}(\rho\bx_0 + \sigma\bz)\right].
    \end{align*}
    We treat the terms separately. By~\eqref{eq:linear_coefficients_bound_vanishing_kappa} and~\eqref{eq:cov_op_bound}, the deviation of the linear component from its target is bounded by
    \[
        \left|\widehat{\btheta}_{\kappa,\bv}^\sT \bSigma_\sigma \widehat{\btheta}_{\kappa,\bv} - \frac{1}{\sigma^2}\bv^\sT\bSigma_\sigma\bv\right| \vee \left|\widehat{\btheta}_{\kappa,\bv}^\sT\bv + \frac{1}{\sigma}\right| \lesssim d^{c-\epsilon/2} + C_{\zeta} d^{c-1/2}.
    \]
    An application of the Cauchy-Schwarz inequality allows us to bound the cross-terms using the linear coefficients and the non-linear residual directly:
    \begin{align*}
        \left|\E_{(\bx_0,\bz)}\left[\<\widehat{\btheta}_{\kappa,\bv},\rho\bx_0 + \sigma\bz\>\widehat{g}_{\kappa,\bv}(\rho\bx_0 + \sigma\bz)\right]\right| & \leq \sqrt{\E_{(\bx_0,\bz)}\left[\<\widehat{\btheta}_{\kappa,\bv},\rho\bx_0 + \sigma\bz\>^2\right]} \sqrt{\E_{(\bx_0,\bz)}\left[\widehat{g}_{\kappa,\bv}(\rho\bx_0 + \sigma\bz)^2\right]} 
        \\ & \lesssim \sqrt{1 + d^{c-\epsilon/2} + C_{\zeta} d^{c-1/2}} \, d^{c/2-1/4}.
    \end{align*}
    A similar bound holds for the second cross-term involving \(\<\bv,\bz\>\). Combining all the components, and using \(\bSigma_\sigma = \rho^2 \bSigma + \sigma^2 \bI_d\) alongside \(\|\bv\|_2^2 = 1\), we conclude that
    \begin{align*}
        \left| \cR_{\bv}(\widehat{f}_{\kappa,\bv}) - \frac{\rho^2}{\sigma^2}\bv^\sT \bSigma \bv \right| & \lesssim d^{c-\epsilon/2} + C_{\zeta} d^{c-1/2} + \sqrt{1 + d^{c-\epsilon/2} + C_{\zeta} d^{c-1/2}} \, d^{c/2-1/4}.
    \end{align*}
    We recover the final statement of the theorem by averaging the expected risk over an orthonormal basis \(\{\bv_i\}_{i=1}^d\) of \(\R^d\).
\end{proof}

\clearpage

\section{Proofs of the gradient flow results}\label{app:gradient_flow}

This section is devoted to proving the asymptotic risk equivalents stated in Theorems~\ref{thm:gf_train_test_equivalents_moderate_time} and~\ref{thm:gf_train_test_equivalents_extended_time}. The proofs are similar to those in Appendix~\ref{app:train-test}, and we similarly suppose that the diffusion time \(t>0\) is fixed throughout this appendix.

\subsection{Proof of Theorem~\ref{thm:gf_train_test_equivalents_moderate_time}}
\label{app:gf_train_test_equivalents_moderate_time}

The argument mirrors the proof of Theorem~\ref{thm:train_test_equivalents_constant_kappa} for ridge regression. Analogously, we define the linearized training and test errors as
\begin{equation}
    \cL_{\mathrm{lin},\bv}^{(\sft)} \deq  \left\|\exp(-\sft \widehat{\cK}_{\mathrm{eff}})\sfz_\bv\right\|_{L^2(\widehat{\pi}_{d,n})}^2, \qquad 
    \cR_{\mathrm{lin},\bv}^{(\sft)} \deq   \left\|\cK_{\mathrm{eff}}\int_0^{\sft} \exp(-\sfs \widehat{\cK}_{\mathrm{eff}})\sfz_{\bv} \de \sfs - \sfz_\bv\right\|_{L^2(\pi_d)}^2.
\end{equation}
We then have the following linearization result, which is similar to Lemma~\ref{lem:train_test_error_linearization}.

\begin{lemma}\label{lem:train_test_error_linearization_gf}
    Suppose that Assumptions~\ref{ass:high_dim},~\ref{ass:input_dist} and~\ref{ass:kernel_func_constant_kappa} hold. Then, there exists \(c>0\) such that for every \(\bv\in \R^d\) with \(\|\bv\|_2=1\),
    \[
        |\cL_{\bv}(\widehat{f}_{\sft,\bv};0) - \cL^{(\sft)}_{\mathrm{lin},\bv}| \prec \frac{\sft}{d^{\frac{3}{2}}} + e^{-d^c} ,\qquad |\cR_{\bv}(\widehat{f}_{\sft,\bv}) - \cR_{\mathrm{lin},\bv}^{(\sft)}| \prec
        \frac{\sft}{d^{\frac{3}{2}}}\left(1 + \frac{\sft}{d}\right)^2
    \]
    uniformly over \(\sft \geq 0\).
\end{lemma}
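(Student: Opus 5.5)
We follow the proof of Lemma~\ref{lem:train_test_error_linearization}, replacing the resolvent $\widehat\cG^{(\kappa)}$ by the gradient-flow filters. There are three steps: swap $\widehat f^\star_\bv$ for $\sfz_\bv$ using Proposition~\ref{prop:bayes_opt_denoiser_linearization}; swap $\widehat\cK$ for $\widehat\cK_{\mathrm{eff}}$ using a Duhamel formula together with Proposition~\ref{prop:linearization_kernel_op}; and, for the test error, swap $\cK$ for $\cK_{\mathrm{eff}}$. Throughout, write $\Delta\deq\widehat\cK-\widehat\cK_{\mathrm{eff}}$, so that $\|\Delta\|_\op\prec d^{-3/2}$. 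We also use that $\widehat\cK$ and $\widehat\cK_{\mathrm{eff}}$ are positive semidefinite; for $\widehat\cK_{\mathrm{eff}}$ this holds because it is a sum of a linear-kernel Gram operator and $\delta_n$ times a conditional-expectation projection, with $\delta_n\ge0$ by Remark~\ref{rmk:delta_n_nonnegativity}. Consequently $\|e^{-\sft\widehat\cK}\|_\op,\|e^{-\sft\widehat\cK_{\mathrm{eff}}}\|_\op\le1$.

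\textbf{Training error.} By \eqref{eq:gf_train_red_loss}, $\cL_\bv(\widehat f_{\sft,\bv};0)=\|e^{-\sft\widehat\cK}\widehat f^\star_\bv\|^2+\cL^{\mathrm{irr}}_\bv$. Proposition~\ref{prop:bayes_opt_denoiser_linearization} makes $\cL^{\mathrm{irr}}_\bv=\|\widehat f^\star_\bv-\sfz_\bv\|^2$ exponentially small. Since the semigroup is a contraction, replacing $\widehat f^\star_\bv$ by $\sfz_\bv$ inside the first term also costs only $e^{-d^c}$. Next, the Duhamel identity
\[
e^{-\sft\widehat\cK}-e^{-\sft\widehat\cK_{\mathrm{eff}}}=-\int_0^\sft e^{-(\sft-\sfs)\widehat\cK}\,\Delta\, e^{-\sfs\widehat\cK_{\mathrm{eff}}}\,\de\sfs
\]
gives an operator-norm bound of $\sft\|\Delta\|_\op\prec \sft d^{-3/2}$. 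Combining this with $|\|a\|^2-\|b\|^2|\le\|a-b\|(\|a\|+\|b\|)$, where $\|a\|,\|b\|\le1$, yields the first bound.

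\textbf{Test error.} By \eqref{eq:gf_test_error}, $\cT\widehat f_{\sft,\bv}=\cK\Phi_\sft\widehat f^\star_\bv$ with $\Phi_\sft\deq\int_0^\sft e^{-\sfs\widehat\cK}\de\sfs$. We have $\|\Phi_\sft\|_\op\le\sft$, and the same holds for its effective analogue $\Phi^{\mathrm{eff}}_\sft$. Lemma~\ref{lem:kernel_op_norm_bound} gives $\|\cK\|_\op,\|\cK_{\mathrm{eff}}\|_\op\prec d^{-1}$, so both predictors have norm $\prec \sft/d$. The exponential swap $\widehat f^\star_\bv\to\sfz_\bv$ costs $(\sft/d)(1+\sft/d)e^{-d^c}$, which is absorbed into the main bound. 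We then decompose
\[
\cK\Phi_\sft-\cK_{\mathrm{eff}}\Phi^{\mathrm{eff}}_\sft=(\cK-\cK_{\mathrm{eff}})\Phi_\sft+\cK_{\mathrm{eff}}(\Phi_\sft-\Phi^{\mathrm{eff}}_\sft).
\]
The first term is $\prec d^{-3/2}\sft$. For the second, integrating the Duhamel bound gives $\|\Phi_\sft-\Phi^{\mathrm{eff}}_\sft\|_\op\le\int_0^\sft \sfs\|\Delta\|_\op\de\sfs\prec\sft^2d^{-3/2}$, so this term is $\prec\sft^2d^{-5/2}=(\sft/d^{3/2})(\sft/d)$. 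Multiplying the difference by $(2+\|\text{pred}\|+\|\text{pred}^{\mathrm{eff}}\|)\lesssim1+\sft/d$ gives
\[
\frac{\sft}{d^{3/2}}\Bigl(1+\frac{\sft}{d}\Bigr)^2,
\]
as claimed.

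\textbf{Main obstacle.} The crux is uniformity over $\sft\ge0$. This follows because every bound above is deterministic on the single very-high-probability event on which Proposition~\ref{prop:linearization_kernel_op}, Lemma~\ref{lem:kernel_op_norm_bound} and Proposition~\ref{prop:bayes_opt_denoiser_linearization} hold, and that event does not depend on $\sft$. The step to double-check is the positivity of $\widehat\cK_{\mathrm{eff}}$, since the Duhamel estimate needs both semigroups to be contractions.
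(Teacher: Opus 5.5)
Your proposal is correct and follows the paper's proof essentially step for step: the exponentially small swap of $\widehat f^\star_\bv$ for $\sfz_\bv$ via Proposition~\ref{prop:bayes_opt_denoiser_linearization}, the Duhamel bound $\|e^{-\sft\widehat\cK}-e^{-\sft\widehat\cK_{\mathrm{eff}}}\|_\op\le\sft\|\widehat\cK-\widehat\cK_{\mathrm{eff}}\|_\op$, and a two-term split of $\cK\Phi_\sft-\cK_{\mathrm{eff}}\Phi^{\mathrm{eff}}_\sft$ combined with Lemma~\ref{lem:kernel_op_norm_bound}. The differences from the paper are cosmetic. You pair $\cK-\cK_{\mathrm{eff}}$ with $\Phi_\sft$ rather than $\Phi^{\mathrm{eff}}_\sft$, and you read $\cT\widehat f_{\sft,\bv}=\cK\Phi_\sft\widehat f^\star_\bv$ directly off \eqref{eq:gf_solution} instead of going through the pseudoinverse. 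You also verify positivity of $\widehat\cK_{\mathrm{eff}}$ explicitly, which the paper simply asserts.
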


\begin{proof}
    Note that \(\|\exp(-\sft \widehat{\cK})\|_\op\vee \|\exp(-\sft \widehat{\cK}_{\mathrm{eff}})\|_\op \leq 1\) since \(\widehat{\cK}\) and \(\widehat{\cK}_{\mathrm{eff}}\) are positive semidefinite operators. By Proposition~\ref{prop:bayes_opt_denoiser_linearization}, there exists a constant \(c>0\) such that 
    \[
        \|\widehat{f}^\star_\bv - \sfz_\bv\|_{L^2(\widehat{\pi}_{d,n})} \leq e^{-d^{c}}
    \]
    with very high probability, which implies that replacing the target noise with the linear signal introduces an exponentially small error, i.e.,
    \[
        \left|\cL_{\bv}(\widehat{f}_{\sft,\bv};0) - \|\exp(-\sft \widehat{\cK})\sfz_\bv\|_{L^2(\widehat{\pi}_{d,n})}^2\right| \lesssim e^{-d^{c}}.
    \]
    Next, using the scalar inequality \(|x^2 - y^2| \leq |x-y|(|x|+|y|)\) alongside the reverse triangle inequality yields
    \[
        \left|\|\exp(-\sft \widehat{\cK})\sfz_\bv\|_{L^2(\widehat{\pi}_{d,n})}^2 - \cL_{\mathrm{lin},\bv}^{(\sft)}\right| \leq 2  \|\exp(-\sft \widehat{\cK}) - \exp(-\sft \widehat{\cK}_{\mathrm{eff}})\|_\op,
    \]
    where we used the fact that \(\|\exp(-\sft \widehat{\cK})\|_\op\vee \|\exp(-\sft \widehat{\cK}_{\mathrm{eff}})\|_\op\vee \|\sfz_\bv\|_{L^2(\widehat{\pi}_{d,n})} \leq 1\). To bound the operator difference, define the operator-valued mapping \(\sfs\mapsto \Phi(\sfs)\) on the interval \([0,\sft]\) as
    \[
        \Phi(\sfs) \deq  \exp(-(\sft-\sfs)\widehat{\cK}) \exp(-\sfs\widehat{\cK}_{\mathrm{eff}}).
    \]
    Differentiating with respect to \(\sfs\) via the operator product rule yields
    \[
        \frac{\de}{\de\sfs}\Phi(\sfs) = \exp(-(\sft-\sfs)\widehat{\cK})\widehat{\cK} \exp(-\sfs\widehat{\cK}_{\mathrm{eff}}) - \exp(-(\sft-\sfs)\widehat{\cK}) \widehat{\cK}_{\mathrm{eff}}\exp(-\sfs\widehat{\cK}_{\mathrm{eff}}).
    \]
    The space of bounded linear operators on \(L^2(\widehat{\pi}_{d,n})\), equipped with the uniform operator norm, is a Banach space. By the Fundamental Theorem of Calculus for Bochner integrals, integrating over \(\sfs\) gives \(\Phi(\sft) - \Phi(0)\), which evaluates to
    \[
        \exp(-\sft\widehat{\cK}_{\mathrm{eff}}) - \exp(-\sft\widehat{\cK}) = \int_0^{\sft} \exp(-(\sft-\sfs)\widehat{\cK}) (\widehat{\cK} - \widehat{\cK}_{\mathrm{eff}}) \exp(-\sfs\widehat{\cK}_{\mathrm{eff}}) \de\sfs.
    \]
    Taking the operator norm of both sides and using submultiplicativity, we bound the right-hand side by \(\sft\|\widehat{\cK} - \widehat{\cK}_{\mathrm{eff}}\|_\op\). Combined with Proposition~\ref{prop:linearization_kernel_op}, it follows that
    \[
        \left|\|\exp(-\sft \widehat{\cK})\sfz_\bv\|_{L^2(\widehat{\pi}_{d,n})}^2 - \cL_{\mathrm{lin},\bv}^{(\sft)}\right| \prec \sft\|\widehat{\cK} - \widehat{\cK}_{\mathrm{eff}}\|_\op \prec \frac{\sft}{d^{\frac{3}{2}}}.
    \]

    For the test error, note that
    \[
        \cK\widehat{\cK}^\dagger (\id_{L^2(\widehat{\pi}_{d,n})} - \exp(-\sft \widehat{\cK})) = \cK\int_0^{\sft} \exp(-\sfs \widehat{\cK}) \de \sfs
    \]
    since \(\cK\) annihilates the null space of \(\widehat{\cK}\).
    We apply Lemma~\ref{lem:kernel_op_norm_bound} along with the integration over time to bound the operator norm:
    \[
       \left\| \cK\int_0^{\sft} \exp(-\sfs \widehat{\cK}) \de \sfs \right\|_\op \leq \sft \|\cK\|_{\op} \prec \frac{\sft}{d},
    \]
    and a similar bound holds for the effective kernel operators. Consequently, swapping \(\widehat{f}^\star_\bv\) for \(\sfz_\bv\) yields
    \[
        \left| \cR_{\bv}(\widehat{f}_{\sft,\bv}) - \left\|\cK\int_0^{\sft} \exp(-\sfs \widehat{\cK})\sfz_{\bv} \de \sfs - \sfz_\bv\right\|_{L^2(\pi_d)}^2\right| \prec \frac{\sft}{d}e^{-d^c}\left(1 + \frac{\sft}{d}\right).
    \]
    By the triangle inequality,
    \begin{align*}
       & \left\|\cK\int_0^{\sft} \exp(-\sfs \widehat{\cK}) \de \sfs - \cK_{\mathrm{eff}}\int_0^{\sft} \exp(-\sfs \widehat{\cK}_{\mathrm{eff}}) \de \sfs \right\|_\op 
        \\ & \qquad \leq \|\cK\|_\op \left\|\int_0^{\sft} \left( \exp(-\sfs \widehat{\cK}) - \exp(-\sfs \widehat{\cK}_{\mathrm{eff}}) \right) \de \sfs\right\|_\op + \|\cK - \cK_{\mathrm{eff}}\|_\op \left\|\int_0^{\sft} \exp(-\sfs \widehat{\cK}_{\mathrm{eff}}) \de \sfs\right\|_\op
        \\ & \qquad  \prec \frac{\sft^2}{d^{\frac{5}{2}}} + \frac{\sft}{d^{\frac{3}{2}}}.
    \end{align*}
    The last line follows from Proposition~\ref{prop:linearization_kernel_op}, Lemma~\ref{lem:kernel_op_norm_bound} and the exponential difference bound derived above for the training error. We obtain the final approximation error
    \begin{align*}
        \left| \left\|\cK\int_0^{\sft} \exp(-\sfs \widehat{\cK})\sfz_{\bv} \de \sfs - \sfz_\bv\right\|_{L^2(\pi_d)}^2 - \cR_{\mathrm{lin},\bv}^{(\sft)} \right| & \prec \left(\frac{\sft}{d^{\frac{3}{2}}} + \frac{\sft^2}{d^{\frac{5}{2}}}\right)\left(1 + \frac{\sft}{d}\right).
    \end{align*}
    This concludes the proof.    
\end{proof}

Having approximated the training and test errors by their linearized counterparts, we now derive explicit finite-dimensional representations of the linearized training and test errors. This reduction will allow us to compute the asymptotic limits via random matrix theory. Recall the definition of the linear functionals \(L_{\bw_\bx,\bw_\bz}\) for \(\bw_\bx,\bw_\bz\in \R^d\) introduced in the paragraph preceding Lemma~\ref{lem:linearized_estimators_representation}.

\begin{lemma}[Exact spectral representation of the linearized errors]
\label{lem:gf_linearized_train_test_Q_representation}
    Let \(\sft\geq0\). Then,
    \[
         \cL_{\mathrm{lin}}^{(\sft)}
        =\frac{1}{d}\Tr\left(
        \ell_{\mathrm{det}}^{(\sft)}(\widehat{\bSigma})\right),\qquad \cR_{\mathrm{lin}}^{(\sft)}
        =\frac{1}{d}\Tr\left(\bSigma_\sigma\psi(\widehat{\bSigma},\sft)^2\right)
        +\frac{2\sigma}{d}\Tr\left(\psi(\widehat{\bSigma},\sft)\right)+1,
    \]
    where we recall the definitions of \(\psi\) in~\eqref{eq:gf_lin_profiles} and \(\ell_{\mathrm{det}}^{(\sft)}\) in~\eqref{eq:train_test_error_equivalents_gf}.
\end{lemma}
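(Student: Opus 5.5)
We reduce everything to the finite-dimensional linear system already exhibited in the proof of Lemma~\ref{lem:linearized_estimators_representation}. The space of linear functions $\{L_{\bw_\bx,\bw_\bz}\}$ is invariant under $\widehat{\cK}_{\mathrm{eff}}$, which acts on coefficients through the matrix
\[
\widehat{\bK}=\begin{bmatrix}\frac{\rho^2}{d}\widehat{\bSigma}+\delta_n\bI_d & \frac{\rho\sigma}{d}\bI_d\\ \frac{\rho\sigma}{d}\widehat{\bSigma} & \frac{\sigma^2}{d}\bI_d\end{bmatrix}.
\]
Since $\sfz_\bv=L_{0,-\bv}$ lies in this subspace, we get $\exp(-\sft\widehat{\cK}_{\mathrm{eff}})\sfz_\bv=L_{\exp(-\sft\widehat{\bK})(0,-\bv)}$. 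By Duhamel, $\int_0^\sft\exp(-\sfs\widehat{\cK}_{\mathrm{eff}})\sfz_\bv\,\de\sfs$ is likewise linear. Applying $\cK_{\mathrm{eff}}$ then multiplies coefficients by the matrix $\bK$ of that lemma, i.e.\ $\widehat{\bK}$ with $\delta_n$ removed.

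Next we diagonalize $\widehat{\bSigma}=\sum_k\lambda_k\bu_k\bu_k^\sT$. Every block of $\widehat{\bK}$ is a function of $\widehat{\bSigma}$, but the matrix is not symmetric. We therefore conjugate with $\mathrm{diag}(\widehat{\bSigma}^{1/2},\bI)$ on the relevant subspaces, or equivalently change variables $\bw_\bx\mapsto \widehat{\bSigma}^{1/2}\bw_\bx$ along each $\bu_k$. On the plane spanned by $(\bu_k,0)$ and $(0,\bu_k)$, the $2\times2$ block then becomes exactly $\bQ(\lambda_k)$ of \eqref{eq:gf_lin_profiles}, with off-diagonal entries $\rho\sigma\sqrt{\lambda_k}/d$. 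The case $\lambda_k=0$ needs a separate check, but there the blocks decouple and the formula persists by continuity.

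For the training error we compute the norm directly. In $L^2(\widehat{\pi}_{d,n})$ we have $\|L_{\bw_\bx,\bw_\bz}\|^2=\bw_\bx^\sT\widehat{\bSigma}\bw_\bx+\|\bw_\bz\|^2$, since $\bz$ is independent standard Gaussian and $\bx$ is empirical. This norm is precisely the Euclidean norm of the rescaled coordinates $(\widehat{\bSigma}^{1/2}\bw_\bx,\bw_\bz)$. In these coordinates the generator is the symmetric $\bQ(\lambda_k)$ and the initial vector is $-\be_2$ in each block. Hence $\cL_{\mathrm{lin},\bv}^{(\sft)}=\sum_k\langle\bu_k,\bv\rangle^2\|e^{-\sft\bQ(\lambda_k)}\be_2\|^2$, and averaging over an orthonormal basis of $\bv$ gives $\frac1d\Tr(\ell_{\mathrm{det}}^{(\sft)}(\widehat{\bSigma}))$.

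For the test error we must identify the linear predictor. The first row of $\bK$ is $\rho/\sigma$ times the second, so $\cK_{\mathrm{eff}}$ of anything equals $\langle\bw,\rho\bx+\sigma\bz\rangle$ for a single vector $\bw$. It remains to show $\bw=\psi(\widehat{\bSigma},\sft)\bv$. We write $\int_0^\sft e^{-\sfs\bQ}\de\sfs=\bQ^{-1}(\bI-e^{-\sft\bQ})$ when $\delta_n>0$, and otherwise use the pseudo-inverse or the integral form directly. We then check that $\frac{\rho\sqrt\lambda}{d}\be_1^\sT+\frac{\sigma}{d}\be_2^\sT$ (the rescaled second row of $\bK$) composed with $\bQ^{-1}$ reproduces $\frac1\sigma\be_2^\sT$ up to the $\delta_n$ part. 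This is the main algebraic obstacle. One needs the identity $(\rho\sqrt\lambda,\sigma)\bQ = \frac{\rho^2\lambda+\sigma^2}{d}(\rho\sqrt\lambda,\sigma)+\delta_n(\rho\sqrt\lambda,0)$, and must handle the $\delta_n$ cross term carefully. If this fails to close, we would instead verify the claim as an ODE: both sides vanish at $\sft=0$ and have equal $\sft$-derivatives, since $\partial_\sft$ of the predictor is $\cK_{\mathrm{eff}}e^{-\sft\widehat{\cK}_{\mathrm{eff}}}\sfz_\bv$, which is computed blockwise as $-\frac{1}{d}(\rho\sqrt\lambda,\sigma)e^{-\sft\bQ}\be_2$. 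We then match this against $\partial_\sft\psi=-\frac1\sigma\be_2^\sT\bQ e^{-\sft\bQ}\be_2$ times the appropriate factor. Once $\bw=\psi(\widehat{\bSigma},\sft)\bv$, the risk expands as $\bw^\sT\bSigma_\sigma\bw+2\sigma\bw^\sT\bv+1$, using $\E[\langle\bw,\rho\bx+\sigma\bz\rangle\langle\bv,\bz\rangle]=\sigma\bw^\sT\bv$. Averaging over $\bv$ yields the stated trace formula.
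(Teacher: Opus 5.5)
Your outline is the paper's proof: linear functions form an invariant subspace on which $\widehat\cK_{\mathrm{eff}}$ and $\cK_{\mathrm{eff}}$ act through $\widehat\bK$ and $\bK$. The $\widehat\bSigma^{1/2}$ rescaling produces the symmetric blocks $\bQ(\lambda_k)$ and turns the $L^2(\widehat\pi_{d,n})$ norm into a Euclidean norm, and the risk then expands as $\bw^\sT\bSigma_\sigma\bw+2\sigma\bw^\sT\bv+1$. The training part is complete as written.

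The step you leave open closes in one line, and there is no $\delta_n$ cross term. The matrices $\bK$ and $\widehat\bK$ share their second block row, because $\delta_n$ sits only in the $(1,1)$ block. So after rescaling, the second row of $\bK$ is exactly $\be_2^\sT\bQ(\lambda)=\frac{\sigma}{d}(\rho\sqrt{\lambda},\sigma)$; that is, your readout row is $\frac1d(\rho\sqrt\lambda,\sigma)=\frac1\sigma\be_2^\sT\bQ(\lambda)$. Hence the component of your vector $\bw$ along $\bu_k$ is $\langle\bu_k,\bv\rangle$ times
\[
-\frac{1}{d}(\rho\sqrt{\lambda_k},\sigma)\int_0^{\sft}e^{-\sfs\bQ(\lambda_k)}\be_2\,\de\sfs
=-\frac{1}{\sigma}\be_2^\sT\bQ(\lambda_k)\int_0^{\sft}e^{-\sfs\bQ(\lambda_k)}\,\de\sfs\,\be_2
=-\frac{1}{\sigma}\be_2^\sT\bigl(\bI_2-e^{-\sft\bQ(\lambda_k)}\bigr)\be_2=\psi(\lambda_k,\sft).
\]
No $\bQ^{-1}$ appears, so $\delta_n=0$ is covered, and there is no residual. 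The identity $(\rho\sqrt\lambda,\sigma)\bQ=\dots$ you wrote is true but not the one you need. Your ODE fallback works precisely because $\be_2^\sT\bQ(\lambda)=\frac{\sigma}{d}(\rho\sqrt\lambda,\sigma)$, so the ``appropriate factor'' is $1$. In matrix form this is the paper's factorization $\bK=\frac1\sigma\begin{bmatrix}\rho\bI_d\\ \sigma\bI_d\end{bmatrix}\bE_2^\sT\widehat\bQ\widehat\bJ$.

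One further point: the paper does not conjugate by $\widehat\bJ=\mathrm{diag}(\widehat\bSigma^{1/2},\bI_d)$. It uses only the intertwining $\widehat\bJ\widehat\bK=\widehat\bQ\widehat\bJ$, which gives $\widehat\bJ e^{-\sft\widehat\bK}=e^{-\sft\widehat\bQ}\widehat\bJ$ and needs no invertibility of $\widehat\bSigma$. The zero eigenvalues, present whenever $d>n$, then need no separate treatment. Your direct check is also fine: the $\bw_\bx$-component along $\ker\widehat\bSigma$ is invisible both in the norm and in the image of $\cK_{\mathrm{eff}}$. The continuity argument works too.
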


\begin{proof}
    Let \(\bv\in \S^{d-1}\). By Lemma~\ref{lem:linearized_estimators_representation}, the effective operators preserve the subspace of linear functions and act on their coefficient vectors through the matrices
    \[
        \widehat{\bK}\deq 
        \begin{bmatrix}
            \dfrac{\rho^2}{d}\widehat{\bSigma}+\delta_n\bI_d
            & \dfrac{\rho\sigma}{d}\bI_d
            \\
            \dfrac{\rho\sigma}{d}\widehat{\bSigma}
            & \dfrac{\sigma^2}{d}\bI_d
        \end{bmatrix},
        \qquad
        \bK\deq 
        \begin{bmatrix}
            \dfrac{\rho^2}{d}\widehat{\bSigma}
            & \dfrac{\rho\sigma}{d}\bI_d
            \\
            \dfrac{\rho\sigma}{d}\widehat{\bSigma}
            & \dfrac{\sigma^2}{d}\bI_d
        \end{bmatrix}.
    \]
    More precisely, the power-series representation of the exponential gives
    \begin{equation}\label{eq:gf_Q_operator_coefficient_intertwining}
        e^{-\sft\widehat{\cK}_{\mathrm{eff}}}L_{\bw}
        =L_{e^{-\sft\widehat{\bK}}\bw},
        \qquad
        \cK_{\mathrm{eff}}\int_0^{\sft}
        e^{-\sfs\widehat{\cK}_{\mathrm{eff}}}L_{\bw}\,\de\sfs
        =L_{\bK\int_0^{\sft}e^{-\sfs\widehat{\bK}}\bw\,\de\sfs}.
    \end{equation}
    Note that the matrix \(\widehat{\bK}\) is not symmetric in the raw coefficient coordinates. Define
    \[
        \widehat{\bJ}\deq 
        \begin{bmatrix}
            \widehat{\bSigma}^{1/2}&0\\
            0&\bI_d
        \end{bmatrix},
        \qquad
        \widehat{\bQ}\deq 
        \begin{bmatrix}
            \dfrac{\rho^2}{d}\widehat{\bSigma}+\delta_n\bI_d
            & \dfrac{\rho\sigma}{d}\widehat{\bSigma}^{1/2}
            \\
            \dfrac{\rho\sigma}{d}\widehat{\bSigma}^{1/2}
            & \dfrac{\sigma^2}{d}\bI_d
        \end{bmatrix},
    \]
    and note via direct multiplication that \(\widehat{\bJ}\widehat{\bK}=\widehat{\bQ}\widehat{\bJ}\). Applying this identity term by term in the exponential series yields
    \begin{equation}\label{eq:gf_Q_matrix_intertwining}
        \widehat{\bJ}e^{-\sft\widehat{\bK}}
        =e^{-\sft\widehat{\bQ}}\widehat{\bJ}.
    \end{equation}
    In an eigenbasis of
    \(\widehat{\bSigma}\), the matrix \(\widehat{\bQ}\) decomposes into the
    two-dimensional blocks \(\bQ(x)\) defined in~\eqref{eq:gf_lin_profiles}.

    Define the block embedding
    \[
        \bE_2\deq 
        \begin{bmatrix}0\\\bI_d\end{bmatrix}.
    \]
    Since \(\sfz_\bv=L_{-\bE_2\bv}\),
    \eqref{eq:gf_Q_operator_coefficient_intertwining} and
    \eqref{eq:gf_Q_matrix_intertwining} give
    \[
        e^{-\sft\widehat{\cK}_{\mathrm{eff}}}\sfz_\bv
        =L_{\widehat{\bw}_{\sft}},
        \qquad
        \widehat{\bJ}\widehat{\bw}_{\sft}
        =-e^{-\sft\widehat{\bQ}}\bE_2\bv.
    \]

    For every \(\bw=(\bw_\bx,\bw_\bz)\in\R^{2d}\), independence and
    centering of the Gaussian noise imply
    \[
        \|L_{\bw}\|_{L^2(\widehat\pi_{d,n})}^2
        =\bw_\bx^\sT\widehat{\bSigma}\bw_\bx
        +\|\bw_\bz\|_2^2
        =\|\widehat{\bJ}\bw\|_2^2.
    \]
    Therefore, using the symmetry of \(\widehat{\bQ}\),
    \begin{align*}
        \cL_{\mathrm{lin},\bv}^{(\sft)}
        &=\left\|e^{-\sft\widehat{\bQ}}\bE_2\bv\right\|_2^2
        =\bv^\sT\bE_2^\sT e^{-2\sft\widehat{\bQ}}\bE_2\bv
        =\bv^\sT\ell_{\mathrm{det}}^{(\sft)}
        (\widehat{\bSigma})\bv.
    \end{align*}

    For the test error, we may factor the matrix \(\bK\) as
    \[
        \bK=\begin{bmatrix}\rho\bI_d\\\sigma\bI_d\end{bmatrix}\begin{bmatrix}
            \dfrac{\rho}{d}\widehat{\bSigma}
            &\dfrac{\sigma}{d}\bI_d
        \end{bmatrix}
        = \frac{1}{\sigma}\begin{bmatrix} \rho\bI_d\\ \sigma\bI_d\end{bmatrix}\bE_2^\sT
        \widehat{\bQ}\widehat{\bJ}
    \]
    Hence, using~\eqref{eq:gf_Q_matrix_intertwining} and
    \(\widehat{\bJ}\bE_2=\bE_2\),
    \begin{align*}
        \bK\int_0^{\sft}e^{-\sfs\widehat{\bK}}
        (-\bE_2\bv)\,\de\sfs
        =-\frac{1}{\sigma}\begin{bmatrix} \rho\bI_d\\ \sigma\bI_d\end{bmatrix}\bE_2^\sT\widehat{\bQ}
        \int_0^{\sft}e^{-\sfs\widehat{\bQ}}
        \bE_2\bv\,\de\sfs
        & =-\frac{1}{\sigma}\begin{bmatrix} \rho\bI_d\\ \sigma\bI_d\end{bmatrix}\bE_2^\sT
        \left(\bI_{2d}-e^{-\sft\widehat{\bQ}}\right)
        \bE_2\bv
   \\ & =\begin{bmatrix} \rho\bI_d\\ \sigma\bI_d\end{bmatrix}\psi(\widehat{\bSigma},\sft)\bv.
    \end{align*}
    Evaluating this linear function on an independent test pair
    \((\bx,\bz)\sim\pi_d\) gives
    \begin{align*}
        \cR_{\mathrm{lin},\bv}^{(\sft)}
        =\bv^\sT\psi(\widehat{\bSigma},\sft)
        \bSigma_\sigma\psi(\widehat{\bSigma},\sft)\bv
        +2\sigma\bv^\sT\psi(\widehat{\bSigma},\sft)\bv+1.
    \end{align*}

    Averaging over an orthonormal basis of \(\R^d\) and using cyclicity of the trace yields the result.
\end{proof}

\begin{remark}[Explicit form of the spectral profiles]
\label{rmk:gf_explicit_profiles}
    We can express the spectral profiles \(\ell_{\mathrm{det}}^{(\sft)}\) and \(\psi\) in a more explicit closed form.
    Let
    \[
        m(x)\deq \frac{\rho^2x}{d}+\delta_n+\frac{\sigma^2}{d},
        \qquad
        \gamma_\pm(x)\deq \frac12\left(
        m(x)\pm\sqrt{m(x)^2-\frac{4\sigma^2\delta_n}{d}}\right), \qquad I_\pm(x,\sft)\deq \int_0^{\sft}
        e^{-\sfs\gamma_\pm(x)}\de\sfs
        ,
    \]
    Since \(\Tr\bQ(x)=m(x)\) and
    \(\det\bQ(x)=\sigma^2\delta_n/d\), the numbers
    \(\gamma_+(x)\geq\gamma_-(x)\) are the two eigenvalues of
    \(\bQ(x)\). Whenever \(\gamma_+(x)>\gamma_-(x)\), Sylvester's
    formula gives
    \[
        e^{-\sft\bQ(x)}
        =\frac{
        e^{-\sft\gamma_+(x)}
        \bigl(\bQ(x)-\gamma_-(x)\bI_2\bigr)
        -e^{-\sft\gamma_-(x)}
        \bigl(\bQ(x)-\gamma_+(x)\bI_2\bigr)
        }{\gamma_+(x)-\gamma_-(x)}.
    \]
    Applying this identity to \(\be_2\) and taking the squared Euclidean
    norm yields
    \begin{align*}
        \ell_{\mathrm{det}}^{(\sft)}(x)
        ={}&\frac{\rho^2\sigma^2x}{d^2}
        \left(
        \frac{e^{-\sft\gamma_-(x)}-e^{-\sft\gamma_+(x)}}
        {\gamma_+(x)-\gamma_-(x)}
        \right)^2
        +\left(
        \frac{
        \left(\frac{\sigma^2}{d}-\gamma_+(x)\right)
        e^{-\sft\gamma_-(x)}
        -\left(\frac{\sigma^2}{d}-\gamma_-(x)\right)
        e^{-\sft\gamma_+(x)}
        }{\gamma_+(x)-\gamma_-(x)}
        \right)^2.
    \end{align*}
    Similarly, using
    \(\gamma_+(x)+\gamma_-(x)=m(x)\) and
    \(\gamma_+(x)\gamma_-(x)=\sigma^2\delta_n/d\), the \((2,2)\)
    entry of the same identity gives
    \[
        \psi(x,\sft)
        =\frac{\sigma}{d}
        \frac{
        \bigl(\gamma_-(x)-\delta_n\bigr)I_-(x,\sft)
        -\bigl(\gamma_+(x)-\delta_n\bigr)I_+(x,\sft)
        }{\gamma_+(x)-\gamma_-(x)}.
    \]
    The apparent singularities in these expressions are removable. If
    \(\gamma_+(x)=\gamma_-(x)=\gamma(x)\), symmetry forces
    \(\bQ(x)=\gamma(x)\bI_2\), and the continuous extensions are
    \[
        \ell_{\mathrm{det}}^{(\sft)}(x)=e^{-2\sft\gamma(x)},
        \qquad
        \psi(x,\sft)=-\frac{1-e^{-\sft\gamma(x)}}{\sigma}.
    \]
\end{remark}

\begin{remark}[Spectral form of the linearized denoiser]
    The proof of Lemma~\ref{lem:gf_linearized_train_test_Q_representation} also shows that the linearized denoiser admits the explicit spectral representation
    \[
        \widehat{\boldf}^{\mathrm{lin}}_{\sft}(\bu)
        =\psi(\widehat{\bSigma},\sft)\bu.
    \]
\end{remark}

\begin{lemma}\label{lem:gf_train_test_error_deterministic_equivalent}
    Suppose that Assumptions~\ref{ass:high_dim},~\ref{ass:input_dist}, and
    \ref{ass:kernel_func_constant_kappa} hold. Then, for every \(\epsilon,D>0\), there exists a constant \(C>0\) such that
    \[
        \sup_{\sft\geq0}\left(
        |\cL_{\mathrm{lin}}^{(\sft)}-\cL_{\mathrm{det}}^{(\sft)}|
        \vee
        |\cR_{\mathrm{lin}}^{(\sft)}-\cR_{\mathrm{det}}^{(\sft)}|
        \right)
        \lesssim d^{\epsilon-\frac{1}{2}}
    \]
    with probability at least \(1-Cd^{-D}\).
\end{lemma}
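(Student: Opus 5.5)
Using Lemma~\ref{lem:gf_linearized_train_test_Q_representation}, both linearized errors are normalized spectral traces of $\widehat\bSigma$: $\cL_{\mathrm{lin}}^{(\sft)}=\frac1d\Tr\,\ell^{(\sft)}_{\mathrm{det}}(\widehat\bSigma)$ and $\cR_{\mathrm{lin}}^{(\sft)}=\frac1d\Tr(\bSigma_\sigma\psi(\widehat\bSigma,\sft)^2)+\frac{2\sigma}d\Tr\psi(\widehat\bSigma,\sft)+1$. The deterministic equivalents are the same expressions with $\widehat\bSigma$ replaced by the matrix measure $\bOmega$. So the task reduces to a functional-calculus comparison, $\frac1d\Tr(\bA g(\widehat\bSigma))\approx\int g(\lambda)\frac1d\Tr(\bA\bOmega(\de\lambda))$ with $\bA\in\{\bI_d,\bSigma_\sigma\}$, uniformly in the family $g=g_\sft$. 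I would prove this through contour integration and Proposition~\ref{prop:outside_local_law}.

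\textbf{Step 1: localize the spectrum.} By \eqref{eq:cov_concentration}, $\|\widehat\bSigma\|_\op\lesssim1$ with very high probability. The outside-support local law, together with rigidity at the edges, places all eigenvalues of $\widehat\bSigma$ within $\delta/2$ of the support of the limiting spectral distribution, with probability $1-Cd^{-D}$ (or a small edge correction is absorbed).

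\textbf{Step 2: choose a contour and write the resolvent formula.} Fix a contour $\Gamma$ at distance $\delta$ from that support, with $|\Im z|\le1$. Avoid $z=0$ by using the $\delta\le|z|$ constraint: either detour around the origin or note that $0$ lies in the support only when $r<1$, in which case $\bOmega$ has an atom at $0$ and the kernel of $\widehat\bSigma$ is handled separately via $\frac1d\Tr$ of the null projection, computed exactly as $(1-r)_+$ plus a $\bSigma$-weighted analogue. Then write
\[
g(\widehat\bSigma)=-\frac1{2\pi i}\oint_\Gamma g(z)\widehat\bR(z)\,\de z, \qquad \int g\,\de\bOmega=-\frac1{2\pi i}\oint_\Gamma g(z)\bM(z)\,\de z .
\]

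\textbf{Step 3: apply the local law.} Use $\bA/\|\bA\|_\ast$ with $\|\bA\|_\ast\lesssim d$ for $\bA=\bSigma_\sigma/d$ normalized; this gives a bound $d^{\epsilon-1/2}\sup_\Gamma|g|$ times the contour length. The profiles are entire in $x$, because $\ell$ and $\psi$ are symmetric functions of $\sqrt x$ via $\bQ(x)$, so $g$ is entire in $x$.

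\textbf{Step 4: uniform control of $g$ on $\Gamma$ (main obstacle).} We need $\sup_{\sft\ge0}\sup_{z\in\Gamma}|g_\sft(z)|\lesssim1$. For real $x\ge0$, $\bQ(x)\succeq0$, so $\|e^{-\sft\bQ}\|\le1$ and $|\psi|\le2/\sigma$. For complex $z$ near the real axis with $|\Im z|$ small, $\bQ(z)=\bQ(\Re z)+O(|\Im z|/d)$ perturbatively, and $\|e^{-\sft\bQ(z)}\|\le e^{\sft\,C|\Im z|/d}$, which blows up when $\sft\gg d$. To fix this, I would let the contour height shrink with $\sft$, taking $|\Im z|\asymp d/\sft\wedge1$. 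This costs a worse local law near the real axis. An alternative is to write $e^{-\sft\bQ}$ through the eigenvalues $\gamma_\pm(z)$ and exploit that $\Re\gamma_\pm(z)\ge -C|\Im z|^2/d$-type bounds, keeping growth bounded.

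A cleaner route avoids contours entirely for large $\sft$. For $\sft\ge d^{1+\eta}$, the profiles are within $e^{-d^{\eta'}}$-type distance of their bounded limits (for $\delta_n>0$ these are $\ell\to0$ and $\psi\to-1/\sigma$, since $\gamma_-\asymp1/d$ by $\gamma_-\approx\sigma^2\delta_n/(d\gamma_+)$ and $\delta_n\asymp1/n$), except on a polynomially small neighborhood handled by Lipschitz bounds. For $\sft\le d^{1+\eta}$, the contour at fixed height works with a mild $d^{C\eta}$ loss absorbed into $d^\epsilon$. Uniformity over $\sft$ in a continuum then follows from a net over $\sft$ with step $d^{-10}$ and Lipschitz continuity of $\sft\mapsto\ell,\psi$ (derivatives bounded by $\|\bQ\|\lesssim1/d$) on both sides, plus a union bound, completing the proof.
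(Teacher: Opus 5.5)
\textbf{Genuine gap: the uniform-in-$\sft$ bound on the contour (your Step 4).} Your overall frame matches the paper: write both sides as contour integrals of the entire profiles against $\widehat\bR(z)-\bM(z)$, then apply Proposition~\ref{prop:outside_local_law} with $\bA=\bI_d/d$ and $\bA=\bSigma_\sigma/\Tr(\bSigma_\sigma)$. But the step everything hinges on is a bound on the continued profiles $\ell^{(\sft)}_{\mathrm{det}}(z),\psi(z,\sft)$ on the contour, uniform in $\sft\ge0$. That bound is never established, and your final plan does not close it.
\begin{itemize}
\item For $\sft\le d^{1+\eta}$ you assert that a fixed contour costs only $d^{C\eta}$. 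Your own estimate $\|e^{-\sft\bQ(z)}\|\le e^{C\sft|\Im z|/d}$ instead gives $e^{Cd^{\eta}}$ on horizontal edges of height of order one.
\item If you lower the edges to height $d^{-\eta}$ (or $d/\sft$), they pass over the bulk at vanishing distance. There Proposition~\ref{prop:outside_local_law}, which needs a fixed distance $\delta$ from the support, says nothing. You would need a mesoscopic anisotropic bulk local law for LSI data, which is not available here, and no local law reaches heights below $d^{-1}$, i.e.\ $\sft\gg d^2$.
\item A lower bound $\Re\gamma_\pm(z)\ge -C|\Im z|^2/d$ does not help, since it still allows growth exponential in $\sft$.
\item Your large-time shortcut is correct when $d\delta_n\ge c$: then $\gamma_-\gtrsim 1/d$ on $[0,\Lambda]$, so $\cL^{(\sft)}_{\mathrm{lin}},\cL^{(\sft)}_{\mathrm{det}}$ are exponentially small and $\cR^{(\sft)}_{\mathrm{lin}},\cR^{(\sft)}_{\mathrm{det}}$ are both exponentially close to $\frac{\rho^2}{\sigma^2}\tau_\bSigma$. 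However, even moved down to $\sft\gtrsim d\log d$, it does not reach the range $\sft\lesssim\epsilon d\log d$ where the crude bound is harmless. It also does not cover the linear kernel $\delta_n=0$, where $\psi(x,\sft)\to-\sigma/(\rho^2x+\sigma^2)$.
\end{itemize}

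\textbf{Missing idea: there is no growth at all on the right contour.} Take a rectangle whose left edge satisfies $\Re(\rho^2z+\sigma^2)\ge c>0$. Conjugate $d\bQ(x)$ by $\diag(\sqrt x,1)$ to get a matrix affine in $z$:
\[
\bB(z)=\begin{bmatrix}\rho^2z+q&\rho\sigma\\ \rho\sigma z&\sigma^2\end{bmatrix},\qquad q=d\delta_n .
\]
Then $\ell$ and $\psi$ are expressed through $e^{-u\bB(z)}$ with $u=\sft/d$. This also removes the branch of $\sqrt z$, which makes perturbing around $\bQ(\Re z)$ unreliable near and left of the origin. The eigenvalues of $\bB(z)$ satisfy $\lambda_+\lambda_-=\sigma^2q$ and $\lambda_++\lambda_-=\rho^2z+\sigma^2+q$.
\begin{itemize}
\item If $q>0$, the product is positive real, so $\Re\lambda_+$ and $\Re\lambda_-$ share a sign. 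That sign must be positive because the sum has positive real part.
\item By the dichotomy $q=0$ or $q\ge c_q$ (Remark~\ref{rmk:delta_n_nonnegativity}) and compactness, $\Re\lambda_\pm\ge c_0>0$ on $\Gamma$.
\item The formula $e^{-u\bB}=e^{-u\lambda_-}\bI_2+\frac{e^{-u\lambda_+}-e^{-u\lambda_-}}{\lambda_+-\lambda_-}(\bB-\lambda_-\bI_2)$, whose divided difference is at most $ue^{-c_0u}$, gives $\sup_{u\ge0}\sup_{z\in\Gamma}\|e^{-u\bB(z)}\|_{\op}\lesssim1$.
\item If $q=0$, $\bB$ is rank one with $\bB^2=(\rho^2z+\sigma^2)\bB$, and the same bound follows.
\item For $\psi$, control the time integral through $\bB-q\bE_{11}=\begin{bmatrix}\rho\\ \sigma\end{bmatrix}\begin{bmatrix}\rho z&\sigma\end{bmatrix}$ and the identity $(\bB-q\bE_{11})\int_0^{\sft/d}e^{-u\bB}\,\de u=(\bI_2-q\bE_{11}\bB^{-1})(\bI_2-e^{-\frac{\sft}{d}\bB})$. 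When $q=0$ only the second factor appears.
\end{itemize}
With $\sup_{\sft}\sup_{\Gamma}(|\ell|+|\psi|)\lesssim1$ in hand, a single rectangle enclosing $[0,\Lambda]$ and a single local-law event independent of $\sft$ give the bound for all $\sft$ at once. No splitting, shrinking contours, or nets over $\sft$ are needed. This also makes your Step 1 unnecessary: no rigidity is required, and $z=0$ lies inside the contour, so no atom needs separate treatment. Note too that $0$ can belong to the support at $r=1$ without an atom.
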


\begin{proof}
    Set \(q\deq d\delta_n\), which satisfies \(0\le q\lesssim1\) by
    Remark~\ref{rmk:delta_n_nonnegativity}, \(n\asymp d\), and the bounds on
    \(\tau_\bSigma\). The equality characterization in the same remark,
    together with the standing bounds on \(\rho^2\tau_\bSigma\) and \(d/n\),
    gives the following dichotomy: either \(q=0\), or there exists a
    deterministic constant \(c_q>0\) such that \(q\geq c_q\) uniformly in
    \(d\).

    The symmetric matrix \(\bQ(x)\) is convenient on the nonnegative real
    spectrum, but its square-root dependence on \(x\) is not convenient for
    holomorphic functional calculus. For \(z\in\C\), and \(x>0\), let
    \[
        \bB(z)\deq 
        \begin{bmatrix}
            \rho^2z+q&\rho\sigma\\
            \rho\sigma z&\sigma^2
        \end{bmatrix},\qquad  \bD(x)\deq \begin{bmatrix}\sqrt{x}&0\\0&1\end{bmatrix},
    \]
    such that \(\bD(x)\bB(x)=d\bQ(x)\bD(x)\).
    Hence, term by term in the exponential series,
    \[
        \bD(x)e^{-u\bB(x)}=e^{-du\bQ(x)}\bD(x),
        \qquad u\geq0.
    \]
    Since \(\bD(x)\be_2=\be_2\), the
    definitions in
    Lemma~\ref{lem:gf_linearized_train_test_Q_representation} imply
    \begin{align}
        \ell_{\mathrm{det}}^{(\sft)}(x)
        &=
        \begin{bmatrix}0&1\end{bmatrix}
        e^{-\frac{\sft}{d}\bB(x)^\sT}
        \begin{bmatrix}x&0\\0&1\end{bmatrix}
        e^{-\frac{\sft}{d}\bB(x)}
        \begin{bmatrix}0\\1\end{bmatrix},
        \label{eq:linearized_train_symbol_matrix_representation}\\
        \psi(x,\sft)
        &=
        -\begin{bmatrix}\rho x&\sigma\end{bmatrix}
        \int_0^{\sft/d}e^{-u\bB(x)}\,\d u
        \begin{bmatrix}0\\1\end{bmatrix}.
        \label{eq:linearized_test_symbol_matrix_representation}
    \end{align}
    For the second identity, we additionally used
    \[
        \bI_2-e^{-\sft\bQ(x)}
        =\bQ(x)\int_0^{\sft}e^{-\sfs\bQ(x)}\,\de\sfs,
        \qquad
        \be_2^\sT\bQ(x)\bD(x)
        =\frac{\sigma}{d}\begin{bmatrix}\rho x&\sigma\end{bmatrix},
    \]
    followed by the change of variables \(u=\sfs/d\). Because
    \(\bB(z)\) is affine in \(z\), its exponential power series converges
    locally uniformly in \(z\), uniformly for \(u\) in compact intervals.
    Thus the right-hand sides of
    \eqref{eq:linearized_train_symbol_matrix_representation} and
    \eqref{eq:linearized_test_symbol_matrix_representation} are entire in
    \(z\) and provide the entire analytic continuations of the two profiles
    defined by \(\bQ(x)\) for \(x\geq0\).

    By~\eqref{eq:cov_concentration}, there exists a deterministic
    \(\Lambda<\infty\) such that the limiting spectral support is contained
    in \([0,\Lambda]\) and, with very high probability, so is
    \(\operatorname{spec}(\widehat\bSigma)\). Fix a rectangular contour
    \(\Gamma\) enclosing this interval, with horizontal edges
    \(\Im z=\pm1\) and vertical edges
    \(\Re z=-\eta\) and \(\Re z=\Lambda+\eta\), where \(\eta>0\) is small
    enough that
    \[
        \inf_{z\in\Gamma}\Re(\rho^2z+\sigma^2)\geq c>0
    \]
    for some constant \(c>0\). The characteristic roots of \(\bB(z)\) satisfy
    \[
        \lambda_+(z)+\lambda_-(z)=\rho^2z+\sigma^2+q,
        \qquad
        \lambda_+(z)\lambda_-(z)=\sigma^2q.
    \]

    Suppose that \(q>0\). Assume, by contradiction, that \(\Re\lambda_+(z)\le0\). Then
    \[
        \Re\lambda_-(z)
        =
        \sigma^2q\,
        \frac{\Re\lambda_+(z)}{|\lambda_+(z)|^2}
        \le0,
    \]
    contradicting
    \(\Re(\lambda_+(z)+\lambda_-(z))>0\). Hence both roots have positive
    real parts. In the present case, the dichotomy given by Remark~\ref{rmk:delta_n_nonnegativity} gives
    \(q\geq c_q>0\); since also \(q\lesssim1\) and \(\Gamma\) is compact,
    there exists \(c_0>0\) such that
    \(\Re\lambda_+(z)\wedge\Re\lambda_-(z)\ge c_0\) for every
    \(z\in\Gamma\). Let \(u\geq0\) and write
    \[
    e^{-u\bB(z)}
    =
    e^{-u\lambda_-(z)}\bI_2+
    \frac{e^{-u\lambda_+(z)}-e^{-u\lambda_-(z)}}
    {\lambda_+(z)-\lambda_-(z)}
    \bigl(\bB(z)-\lambda_-(z)\bI_2\bigr),
    \]
    where, when \(\lambda_+=\lambda_-=\lambda\), the quotient is defined by
    continuity as \(-u e^{-u\lambda}\). By the fundamental theorem of
    calculus,
    \[
        \left|
        \frac{e^{-u\lambda_+(z)}-e^{-u\lambda_-(z)}}
        {\lambda_+(z)-\lambda_-(z)}
        \right|
        \leq
        u\int_0^1
        e^{-u\Re\{(1-\theta)\lambda_-(z)+\theta\lambda_+(z)\}}
        \,\d\theta
        \leq ue^{-c_0u}\lesssim1.
    \]
    Since \(\bB(z)\) and its characteristic roots are uniformly bounded,
    it follows that
    \[
        \sup_{u\ge0}\sup_{z\in\Gamma}
        \|e^{-u\bB(z)}\|_{\op}\lesssim 1.
    \]

    Now suppose that \(q=0\). Then, the roots are \(0\) and
    \(\rho^2z+\sigma^2\), which has positive real part for \(z\in\Gamma\).
    Moreover,
    \[
        \bB(z)
        =
        \begin{bmatrix}\rho\\ \sigma\end{bmatrix}
        \begin{bmatrix}\rho z&\sigma\end{bmatrix},
        \qquad
        \bB(z)^2=(\rho^2z+\sigma^2)\bB(z).
    \]
    Consequently,
    \[
        e^{-u\bB(z)}
        =
        \bI_2+
        \frac{e^{-u(\rho^2z+\sigma^2)}-1}
            {\rho^2z+\sigma^2}\bB(z).
    \]
    Since \(\bB(z)\) is bounded on \(\Gamma\),
    \(|\rho^2z+\sigma^2|\geq c\), and
    \(|e^{-u(\rho^2z+\sigma^2)}|\leq1\),
    \[
        \sup_{u\ge0}\sup_{z\in\Gamma}
        \|e^{-u\bB(z)}\|_{\op}\lesssim 1.
    \]

    Combining the two cases, we conclude that
    \(\sup_{u\ge0}\sup_{z\in\Gamma}\|e^{-u\bB(z)}\|_{\op}\lesssim1\),
    and the same bound holds with \(\bB(z)\) replaced by \(\bB(z)^\sT\).
    Since \(\operatorname{diag}(z,1)\) is uniformly bounded on \(\Gamma\),
    \eqref{eq:linearized_train_symbol_matrix_representation} gives
    \[
        \sup_{\sft\geq0}\sup_{z\in\Gamma}
        |\ell_{\mathrm{det}}^{(\sft)}(z)|\lesssim1.
    \]

    It remains to bound the time integral in
    \eqref{eq:linearized_test_symbol_matrix_representation}. Let
    \(\bE_{11}\deq \operatorname{diag}(1,0)\). For \(q>0\),
    \[
        (\bB(z)-q\bE_{11})
        \int_0^{\sft/d}e^{-u\bB(z)}\,\d u
        =
        \bigl(\bI_2-q\bE_{11}\bB(z)^{-1}\bigr)
        \bigl(\bI_2-e^{-\frac{\sft}{d}\bB(z)}\bigr),
    \]
    where
    \[
        q\bE_{11}\bB(z)^{-1}
        =
        \begin{bmatrix}
            1&-\rho/\sigma\\
            0&0
        \end{bmatrix}.
    \]
    For \(q=0\),
    \[
        \bB(z)\int_0^{\sft/d}e^{-u\bB(z)}\,\d u
        =
        \bI_2-e^{-\frac{\sft}{d}\bB(z)}.
    \]
    In both cases,
    \[
        \bB(z)-q\bE_{11}
        =
        \begin{bmatrix}\rho\\\sigma\end{bmatrix}
        \begin{bmatrix}\rho z&\sigma\end{bmatrix},
        \qquad
        \left\|\begin{bmatrix}\rho\\\sigma\end{bmatrix}\right\|_2=1.
    \]
    It follows from~\eqref{eq:linearized_test_symbol_matrix_representation}
    and \(\rho^2+\sigma^2=1\) that
    \[
        |\psi(z,\sft)|
        =
        \left\|
        (\bB(z)-q\bE_{11})
        \int_0^{\sft/d}e^{-u\bB(z)}\,\d u
        \begin{bmatrix}0\\1\end{bmatrix}
        \right\|_2.
    \]
    The preceding identities and the semigroup bound therefore imply
    \begin{equation}\label{eq:psi_bound}
        \sup_{\sft\geq0}\sup_{z\in\Gamma}|\psi(z,\sft)|\lesssim1.
    \end{equation}

    Lemma~\ref{lem:gf_linearized_train_test_Q_representation} gives directly
    \[
        \cL_{\mathrm{lin}}^{(\sft)}
        =\frac1d\Tr\!\left(
        \ell_{\mathrm{det}}^{(\sft)}(\widehat\bSigma)\right),
    \]
    and
    \[
        \cR_{\mathrm{lin}}^{(\sft)}
        =
        \frac1d\Tr\!\left(
        \bSigma_\sigma\psi(\widehat\bSigma,\sft)^2\right)
        +\frac{2\sigma}{d}\Tr\!\left(
        \psi(\widehat\bSigma,\sft)\right)+1.
    \]
    Holomorphic functional calculus on \(\Gamma\) and
    \eqref{eq:resolvent_fixed_point_measure} therefore give
    \begin{align*}
        \left|
        \cL_{\mathrm{lin}}^{(\sft)}
        -\cL_{\mathrm{det}}^{(\sft)}
        \right|
        &\lesssim
        \sup_{z\in\Gamma}
        \left|
        \frac1d\Tr\!\left(
        \widehat\bR(z)-\bM(z)\right)
        \right|,\\
        \left|
        \cR_{\mathrm{lin}}^{(\sft)}
        -\cR_{\mathrm{det}}^{(\sft)}
        \right|
        &\lesssim
        \sup_{z\in\Gamma}
        \left|
        \frac1d\Tr\!\left(
        \bSigma_\sigma(\widehat\bR(z)-\bM(z))\right)
        \right|
        +
        \sup_{z\in\Gamma}
        \left|
        \frac1d\Tr\!\left(
        \widehat\bR(z)-\bM(z)\right)
        \right|.
    \end{align*}
    The contour \(\Gamma\) lies a fixed positive distance from the limiting
    spectral support, satisfies \(\inf_{z\in\Gamma}|z|>0\), has bounded
    real part, and obeys \(|\Im [z]|\le1\). Proposition~\ref{prop:outside_local_law},
    applied with \(\bA=\bI_d/d\) and
    \(\bA=\bSigma_\sigma/\Tr(\bSigma_\sigma)\), now proves the claim, since
    \(\Tr(\bSigma_\sigma)/d\asymp1\). Both the symbol bounds and the
    local-law event are independent of \(\sft\), so the estimates hold
    simultaneously for every \(\sft\geq0\).
\end{proof}

We can now complete the proof of Theorem~\ref{thm:gf_train_test_equivalents_moderate_time} by combining the above results.

\begin{proof}[Proof of Theorem~\ref{thm:gf_train_test_equivalents_moderate_time}]
    Let \(\{\bv_j\}_{j=1}^d\) be an orthonormal basis of \(\R^d\). Combining Lemma~\ref{lem:train_test_error_linearization_gf} and Lemma~\ref{lem:gf_train_test_error_deterministic_equivalent}, we conclude that for every \(\epsilon, D>0\) there exists a constant \(C > 0\) such that
    \begin{align*}
        |\cL(\widehat{\boldf}_{\sft};0) - \cL_{\mathrm{det}}^{(\sft)}| & \leq \frac{1}{d}\sum_{j=1}^{d}|\cL_{\bv_j}(\widehat{f}_{\sft,\bv_j};0) - \cL_{\mathrm{lin},\bv_j}^{(\sft)}| + |\cL_{\mathrm{lin}}^{(\sft)} - \cL_{\mathrm{det}}^{(\sft)}| 
        \lesssim   \left(1 + \frac{\sft}{d}\right) d^{\epsilon-\frac{1}{2}}
    \end{align*}
    with probability at least \(1-Cd^{-D}\).
    A similar argument gives \( |\cR(\widehat{\boldf}_{\sft}) - \cR_{\mathrm{det}}^{(\sft)}| \lesssim  (1 + \frac{\sft}{d})^3 d^{\epsilon-\frac{1}{2}}\) with probability at least \(1-Cd^{-D}\).
\end{proof}

\subsection{Proof of Theorem~\ref{thm:gf_train_test_equivalents_extended_time}}\label{app:gf_train_test_equivalents_extended_time}

\begin{lemma}[Gradient flow versus ridge]\label{lem:gf_ridge_comparison}
    Let \(\bv\in\R^d\) satisfy \(\|\bv\|_2=1\). For \(\sft>0\), \(\kappa_\sft=d/\sft\), we have
    \[
        \cL^{\mathrm{red}}_{\bv}(\widehat{f}_{\sft,\bv};0)
        \leq \cL^{\mathrm{red}}_{\bv}(\widehat{f}_{\kappa_\sft,\bv};0),
        \qquad
        \|\widehat{f}_{\sft,\bv}\|_{\cH}^2
        \leq 4\|\widehat{f}_{\kappa_\sft,\bv}\|_{\cH}^2 .
    \]
\end{lemma}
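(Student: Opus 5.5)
Both inequalities reduce to scalar comparisons of spectral filters, applied through the spectral theorem for the compact self-adjoint positive semidefinite operator $\widehat{\cK}=\cS\cS^\ast$ on $L^2(\widehat\pi_{d,n})$. We write $\kappa=\kappa_\sft$, so that $\kappa/d=1/\sft$, and let $E$ denote the spectral measure of $\widehat{\cK}$ associated with $\widehat f^\star_\bv$. By~\eqref{eq:gf_solution}, the gradient flow residual on the training space is $\cS\widehat f_{\sft,\bv}-\widehat f^\star_\bv=-e^{-\sft\widehat{\cK}}\widehat f^\star_\bv$. By the normal equation, the ridge residual is
\[
\cS\widehat f_{\kappa,\bv}-\widehat f^\star_\bv=-\tfrac{\kappa}{d}\bigl(\widehat{\cK}+\tfrac{\kappa}{d}\bigr)^{-1}\widehat f^\star_\bv=-(1+\sft\widehat{\cK})^{-1}\widehat f^\star_\bv .
\]
Both residuals are functions of $\widehat{\cK}$ applied to the same vector. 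Therefore
\[
\cL^{\mathrm{red}}_\bv(\widehat f_{\sft,\bv};0)=\int e^{-2\sft\lambda}\,E(\de\lambda),\qquad \cL^{\mathrm{red}}_\bv(\widehat f_{\kappa,\bv};0)=\int (1+\sft\lambda)^{-2}\,E(\de\lambda).
\]
The first inequality then follows from the pointwise bound $e^{-x}\le (1+x)^{-1}$ for $x\ge0$.

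For the RKHS norms we use the identity $\|\cS^\ast g\|_\cH^2=\<g,\widehat{\cK}g\>_{L^2(\widehat\pi_{d,n})}$. From~\eqref{eq:gf_solution},
\[
\widehat f_{\sft,\bv}=\cS^\ast\int_0^\sft e^{-\sfs\widehat{\cK}}\,\de\sfs\;\widehat f^\star_\bv=\cS^\ast\phi_\sft(\widehat{\cK})\widehat f^\star_\bv,\qquad \phi_\sft(\lambda)=\frac{1-e^{-\sft\lambda}}{\lambda},
\]
with $\phi_\sft(0)=\sft$. From~\eqref{eq:opt_denoiser},
\[
\widehat f_{\kappa,\bv}=\cS^\ast r_\sft(\widehat{\cK})\widehat f^\star_\bv,\qquad r_\sft(\lambda)=\frac{\sft}{1+\sft\lambda}.
\]
Hence
\[
\|\widehat f_{\sft,\bv}\|_\cH^2=\int \lambda\,\phi_\sft(\lambda)^2\,E(\de\lambda),\qquad \|\widehat f_{\kappa,\bv}\|_\cH^2=\int \lambda\, r_\sft(\lambda)^2\,E(\de\lambda).
\]
It therefore suffices to show $\phi_\sft(\lambda)\le 2r_\sft(\lambda)$ for all $\lambda\ge0$. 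Substituting $x=\sft\lambda$, this is equivalent to $(1-e^{-x})(1+x)\le 2x$. If $x\le1$, use $1-e^{-x}\le x$ and $1+x\le2$. If $x\ge1$, use $1-e^{-x}\le1$ and $1+x\le 2x$. Squaring gives the factor $4$.

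The argument has no real obstacle. One technical point is that the null space of $\widehat{\cK}$ must be handled correctly: both integrands carry the factor $\lambda$, so the atom of $E$ at $\lambda=0$ contributes nothing to either norm, and the formulas remain valid. The other is justifying the identity $\|\cS^\ast g\|_\cH^2=\<g,\cS\cS^\ast g\>$ together with functional calculus for the bounded operator $\widehat{\cK}$. Both are immediate from the definitions in Section~\ref{sec:operators-GF}.
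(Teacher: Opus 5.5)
Your proof is correct and follows essentially the same route as the paper. Both arguments use the spectral theorem for $\widehat{\cK}$, applied to $\widehat f^\star_\bv$, to reduce the two claims to the scalar inequalities $e^{-2u}\le(1+u)^{-2}$ and $(1-e^{-u})^2/u\le 4u/(1+u)^2$; the latter is exactly your $(1-e^{-x})(1+x)\le 2x$, squared, and your case split $x\le 1$ / $x\ge 1$ supplies the elementary verification that the paper leaves implicit.
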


\begin{proof}
    Consider the (complete) eigendecomposition of the empirical kernel
    \[
        \widehat{\cK} = \sum_{j \geq 1} \lambda_j \phi_j \phi_j^*, \qquad \lambda_j \geq 0, \quad \<  \phi_j,\phi_{j'} \>_{L^2(\widehat{\pi}_{d,n})} = \ind_{j=j'},
    \]
    and define the weighted spectral measure
    \[
    \nu_\bv = \sum_{j \geq 1} \delta_{\lambda_j} \< \phi_j , \widehat{f}^\star_{\bv} \>^2_{L^2(\widehat{\pi}_{d,n})},
    \]
    where \(\delta_{\lambda_j}\) is the Dirac measure at \(\lambda_j\).
    By~\eqref{eq:gf_train_red_loss},
    \[
        \cL^{\mathrm{red}}_{\bv}(\widehat{f}_{\sft,\bv};0)
        = \left\|\exp(-\sft \widehat{\cK})\widehat{f}^\star_{\bv}\right\|_{L^2(\widehat{\pi}_{d,n})}^2 = \int e^{-2\sft x}\,\nu_\bv(\de x).
    \]
    On the other hand,~\eqref{eq:opt_train_red_loss} gives
    \[
        \cL^{\mathrm{red}}_{\bv}(\widehat{f}_{\kappa_\sft,\bv};0) = \int \left(\frac{x}{\kappa_\sft/d}+1\right)^{-2}\,\nu_\bv(\de x) = \int \left(\sft x+1\right)^{-2}\,\nu_\bv(\de x).
    \]
    Since \(e^{-2u}\leq (1+u)^{-2}\) for \(u\geq0\) and \(\nu_\bv\) is supported on \([0,\infty)\), the first claim follows.

    For the norm comparison, it follows by~\eqref{eq:gf_solution} and the definition of the empirical kernel \(\widehat{\cK}\) that
    \begin{align*}
        \|\widehat{f}_{\sft,\bv}\|_{\cH}^2 = \left\< \int_0^{\sft} \exp(-(\sft-\sfs)\widehat{\cK})\widehat{f}^\star_{\bv} \de \sfs, \widehat{\cK}\int_0^{\sft} \exp(-(\sft-\sfs)\widehat{\cK})\widehat{f}^\star_{\bv} \de \sfs\right\>_{L^2(\widehat{\pi}_{d,n})}.
    \end{align*}
    Define \(q_\sft:\R_{\geq 0}\to\R\) so that
    \begin{equation}\label{eq:q_sft_def}
        \int_0^{\sft} \exp(-(\sft-\sfs)\widehat{\cK}) \de \sfs = q_\sft(\widehat{\cK}),\qquad q_\sft(x) =
        \begin{cases*}
            \frac{1-e^{-\sft x}}{x}, & \(x>0\),\\
            \sft, & \(x=0\).
        \end{cases*}
    \end{equation}
    Then,
    \begin{align*}
        \|\widehat{f}_{\sft,\bv}\|_{\cH}^2 & = \left\< q_\sft(\widehat{\cK})\widehat{f}^\star_{\bv}, \widehat{\cK} q_\sft(\widehat{\cK})\widehat{f}^\star_{\bv} \right\>_{L^2(\widehat{\pi}_{d,n})} 
         = \int x q_\sft(x)^2\,\nu_\bv(\de x) = \int \frac{(1-e^{-\sft x})^2}{x}\,\nu_\bv(\de x)
    \end{align*}
    where the last integrand is set to zero at \(x=0\). On the other hand, by~\eqref{eq:opt_denoiser},
    \[
         \|\widehat{f}_{\kappa_\sft,\bv}\|_{\cH}^2 = \int \frac{x}{(x + \kappa_\sft/d)^2}\,\nu_\bv(\de x) = \int \frac{\sft^2 x}{(\sft x + 1)^2}\,\nu_\bv(\de x).
    \]
    Since \((1-e^{-u})^2/u \leq 4 u/(1+u)^2\) for \(u\geq0\), the second claim follows by plugging in \(u=\sft x\).
\end{proof}
\begin{proof}[Proof of Theorem~\ref{thm:gf_train_test_equivalents_extended_time}] The only properties of the ridge estimator used in the proof of Theorem \ref{thm:train_test_equivalents_vanishing_kappa} are the training-loss bound $ \cL_{\bv}(\widehat f_{\kappa,\bv};0)\prec \kappa$ and the RKHS norm bound $\| \widehat f_{\kappa,\bv} \|_{\cH}^2 \prec d$. By Lemma \ref{lem:gf_ridge_comparison}, these bounds also hold for the gradient flow estimator \(\widehat f_{\sft,\bv}\) with \(\kappa=d/\sft\). The rest of the proof then carries through identically as Theorem \ref{thm:train_test_equivalents_vanishing_kappa}.
\end{proof}

\clearpage

\section{The interpolation limit}\label{app:interpolation-limit}

This appendix is dedicated to the study of the gradient flow estimator \eqref{eq:gf_solution} when the dimension \(d\) and the number of samples \(n\) are fixed while the gradient flow time \(\sft\) tends to infinity. Throughout, the diffusion time \(t>0\), and so \(0<\rho,\sigma<1\), are fixed. We focus on the gradient flow estimator, but the same arguments can be used to show analogous results for the ridge estimator \eqref{eq:opt_denoiser}.

\subsection{Convergence of the test error under truncation}

We establish Proposition~\ref{prop:gf_fixed_d_endpoint} via three sublemmas. We first show that the reproducing kernel Hilbert space \(\cH\) is dense in \(L^2(\widehat\mu_{d,n})\) and \(L^2(\mu_d)\). This is a consequence of the universality of the kernel \(K\) and the fact that both measures have finite exponential moments.

\begin{lemma}[Density of the RKHS]\label{lem:ridgeless-rkhs-density}
Assume Assumption \ref{ass:universal-kernel} holds. Fix \(d,n\) and a realization \(\{\bx_{0,i}\}_{i=1}^{n}\). Then \(\cH\) is dense in \(L^2(\widehat\mu_{d,n})\). Moreover, under Assumption~\ref{ass:input_dist}, \(\cH\) is also dense in \(L^2(\mu_d)\).
\end{lemma}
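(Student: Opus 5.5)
We argue by duality. $\cH$ is dense in $L^2(\nu)$, for $\nu\in\{\widehat\mu_{d,n},\mu_d\}$, if and only if every $g\in L^2(\nu)$ orthogonal to $\cH$ vanishes $\nu$-a.e. So fix such a $g$ with $\E_\nu[g(\bu)f(\bu)]=0$ for all $f\in\cH$. First we check that $\cH\subseteq L^2(\nu)$ continuously, so that this pairing makes sense. By the reproducing property and Cauchy--Schwarz, $|f(\bu)|^2\le \|f\|_\cH^2 K(\bu,\bu)=\|f\|_\cH^2 h(\|\bu\|^2/d)$. The diagonal integrability assumed in Section~\ref{sec:inner-product-kernels} then gives $\|f\|_{L^2(\nu)}^2\le \|f\|_\cH^2\,\E_\nu[K(\bu,\bu)]<\infty$.

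Next we use the power-series structure. As in Lemma~\ref{lem:rkhs_decomposition}, the feature map gives, for every degree $k$ with $\alpha_k>0$ and every symmetric tensor $\bW\in\mathrm{Sym}^k(\R^d)$, that the homogeneous polynomial $\bu\mapsto\<\bW,\bu^{\otimes k}\>$ lies in $\cH$. This argument uses only nonnegativity of the coefficients, not the specific $\alpha_0=\alpha_2=0$ normalization. Under Assumption~\ref{ass:universal-kernel}, $\alpha_k>0$ for all $k\ge k_0$, so orthogonality yields
\[
\E_\nu\bigl[g(\bu)\,q(\bu)\bigr]=0 \quad\text{for every homogeneous polynomial } q \text{ of degree } k\ge k_0 .
\]
Every polynomial of degree $<k_0$ is a difference of such objects only after multiplication by $\|\bu\|^{2j}$. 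We therefore work with the finite signed measure $\de\lambda=\|\bu\|_2^{2k_0}g\,\de\nu$, which requires $\E_\nu[\|\bu\|^{2k_0}|g|]<\infty$. This holds by Cauchy--Schwarz, because both measures have all moments: $\widehat\mu_{d,n}$ is a finite Gaussian mixture, and $\mu_d$ satisfies an LSI by the preliminaries and hence has sub-Gaussian tails. Multiplying any polynomial $p$ by $\|\bu\|^{2k_0}$ produces a sum of homogeneous pieces of degree $\ge 2k_0\ge k_0$, so every moment of $\lambda$ vanishes.

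The key step, and the expected main obstacle, is concluding $\lambda=0$ from the vanishing of all its moments; this is a determinacy question. We handle it with exponential moments. For $\widehat\mu_{d,n}$ and for $\mu_d$ (via Lemma~\ref{lem:exp_noisy_overlap_moments} or sub-Gaussian concentration from the LSI), there is $s>0$ with $\E_\nu[e^{s\|\bu\|}]<\infty$. Hence $\int e^{s'\|\bu\|}\,\de|\lambda|<\infty$ for some $s'>0$, by Cauchy--Schwarz against $g\in L^2(\nu)$ and absorbing $\|\bu\|^{2k_0}$ into a smaller exponential. Consequently the Fourier transform $\widehat\lambda(\bxi)=\int e^{i\<\bxi,\bu\>}\,\de\lambda$ extends holomorphically to a complex strip around $\R^d$, and its Taylor coefficients at $0$ are the moments of $\lambda$, all zero. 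By analyticity $\widehat\lambda\equiv0$ on $\R^d$, so $\lambda=0$. This forces $g=0$ $\nu$-a.e. on $\{\bu\ne0\}$, and $\{0\}$ is $\nu$-null because both measures are absolutely continuous, being Gaussian convolutions with $\sigma>0$.

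It remains to check the growth condition on $h$ needed for diagonal integrability under $\mu_d$; this is assumed globally in Section~\ref{sec:inner-product-kernels}, so we invoke it directly. Assumption~\ref{ass:input_dist} enters only through the tail control on $\mu_d$, which explains why the first claim needs no assumption on $\pi_0$.
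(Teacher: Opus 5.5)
Your proposal is correct and follows essentially the paper's argument. Homogeneous polynomials of degree $\ge k_0$ lie in $\cH$, the radial weight $\|\bu\|_2^{2k_0}$ lifts arbitrary polynomials into that range, and exponential moments of $\nu$ let the analytic (Laplace/Fourier) transform force the weighted signed measure to vanish, with $\{0\}$ being $\nu$-null since $\sigma>0$. The paper phrases this in primal form: it proves polynomials are dense in $L^2(\|\bu\|_2^{2k_0}\nu)$ and transports this to $L^2(\nu)$ via the onto isometry $f\mapsto\|\bu\|_2^{k_0}f$, which is your orthogonal-complement argument written the other way round.
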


\begin{proof}
    A straightforward adaptation of Lemma~\ref{lem:rkhs_decomposition} implies that every polynomial of the form \(\bx\mapsto \|\bx\|_2^{k_0}p(\bx)\), where \(k_0\ge0\) is given in Assumption~\ref{ass:universal-kernel} and \(p:\R^d\to\R\) is a polynomial, belongs to \(\cH\). It remains to recall why these polynomials are dense for the two measures at hand.

    The measure \(\widehat\mu_{d,n}\) is a finite mixture of Gaussian measures with common covariance \(\sigma^2\bI_d\), while \(\mu_d\) is the convolution of the law of \(\rho\bx_0\) with \(\cN(0,\sigma^2\bI_d)\). Thus, for either \(\nu\in\{\widehat\mu_{d,n},\mu_d\}\), it follows from Lemma~\ref{lem:exp_noisy_overlap_moments} that there exists \(a_\nu>0\) such that \( \int e^{a_\nu\|\bx\|_2^2}\nu(\d \bx) <\infty\).
    
    Define the finite measure \(\bar{\nu}(\d \bx)=\|\bx\|^{2k_0}\nu(\d \bx)\). Indeed, it follows from the inequality
    \[
        \|\bx\|^{2k_0} \exp(a_\nu\|\bx\|^2/2)\leq C e^{a_\nu\|\bx\|^2}
    \]
    for some constant \(C>0\) that \(\bar{\nu}\) inherits finite exponential moments from \(\nu\). If \(g\in L^2(\bar{\nu})\) and \(g\) is orthogonal to all polynomials, then the signed measure \(g\bar{\nu}\) has a (finite) Laplace transform \(\int \exp(\< t,\bx\>)g(\bx)\bar{\nu}(\d\bx)\) whose Taylor coefficients all vanish. The Laplace transform is therefore identically zero, and hence the signed measure \(g\bar{\nu}\) is zero. Thus, \(g=0\) in \(L^2(\bar{\nu})\), proving that polynomials are dense in \(L^2(\bar{\nu})\).

    Finally, since \(\sigma>0\), \(\|\bx\|^{k_0}\) is almost surely nonzero under \(\nu\) and \(f\mapsto \|\bx\|^{k_0}f\) is an isometry from \(L^2(\bar{\nu})\) to \(L^2(\nu)\). In addition, this isometry is onto. Therefore, the density of polynomials in \(L^2(\bar{\nu})\) implies the density of \(\cH\) in \(L^2(\nu)\).
\end{proof}


Next, we show that the gradient flow estimator converges to the empirical Bayes denoiser \(\widehat \boldf^\star\) in \(L^2(\widehat\pi_{d,n})\) as \(\sft\to\infty\), while the test error is asymptotically lower bounded by the Bayes risk \(\cR(\widehat \boldf^\star)\).

\begin{lemma}\label{lem:ridgeless-limit-test-continuity}
    Suppose that Assumption \ref{ass:input_dist} and Assumption \ref{ass:universal-kernel} hold. Fix $d,n$, the training samples \(\{\bx_{0,i}\}_{i=1}^{n}\) and \(\bv\in \S^{d-1}\). Then,
    \begin{equation}\label{eq:ridgeless-train-convergence}
        \lim_{\sft\to \infty}\|\cS\widehat f_{\sft,\bv}-\widehat\cP\widehat f^\star_{\bv}\|_{L^2(\widehat\pi_{d,n})}
        =
        \lim_{\sft\to \infty}\|\cS_0\widehat f_{\sft,\bv}-\widehat f^\star_{\bv}\|_{L^2(\widehat\mu_{d,n})} = 0,
    \end{equation}
    and
    \begin{equation}\label{eq:ridgeless-test-risk-liminf}
        \liminf_{\sft\to \infty}\cR_{\bv}(\widehat f_{\sft,\bv})
        \geq \cR_{\bv}(\widehat f^\star_{\bv}).
    \end{equation}
\end{lemma}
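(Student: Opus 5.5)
The plan is to use the spectral description of the flow from \eqref{eq:gf_solution}. By \eqref{eq:gf_solution}, $\cS\widehat f_{\sft,\bv}=(\id-e^{-\sft\widehat\cK})\widehat f^\star_\bv$. Since $\widehat\cK=\cS\cS^\ast$ is bounded, self-adjoint and positive semidefinite, the spectral theorem gives
\[
\|(\id-e^{-\sft\widehat\cK})g-\proj_{\overline{\mathrm{ran}}\,\widehat\cK}g\|\to 0
\]
for every $g\in L^2(\widehat\pi_{d,n})$, by dominated convergence in the spectral measure of $g$. The first claim \eqref{eq:ridgeless-train-convergence} therefore reduces to showing that $\widehat f^\star_\bv$ lies in $\overline{\mathrm{ran}}\,\widehat\cK=\overline{\mathrm{ran}}\,\cS$. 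Here $\cS=\widehat\cP\circ\cS_0$, and $\widehat\cP$ is an isometry, so $\overline{\mathrm{ran}}\,\cS=\widehat\cP\,\overline{\cS_0\cH}$. By Lemma~\ref{lem:ridgeless-rkhs-density}, $\cS_0\cH$ is dense in $L^2(\widehat\mu_{d,n})$, hence this closure equals $\widehat\cP L^2(\widehat\mu_{d,n})$. The target $\widehat f^\star_\bv$ is a function of $\bu=\rho\bx+\sigma\bz$. It lies in $L^2(\widehat\mu_{d,n})$ because $|\widehat f^\star_\bv(\bu)|\le\sigma^{-1}(\max_k\rho\|\bx_{0,k}\|+\|\bu\|)$ and $\widehat\mu_{d,n}$ has second moments. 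Both equalities in \eqref{eq:ridgeless-train-convergence} then follow, the middle one directly from the isometry of $\widehat\cP$.

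For the liminf \eqref{eq:ridgeless-test-risk-liminf}, I would use weak lower semicontinuity. Write $\cR_\bv(f)=\|\cT f-\sfz_\bv\|^2_{L^2(\pi_d)}$ and $g_\sft=\cT_0\widehat f_{\sft,\bv}\in L^2(\mu_d)$, with $\cT_0$ as in Section~\ref{sec:operators-GF}. Pass to a subsequence realizing the liminf. If along it $\|g_\sft\|_{L^2(\mu_d)}\to\infty$, the risk diverges and there is nothing to prove. Otherwise the sequence is bounded, and a further subsequence converges weakly in $L^2(\mu_d)$ to some $g$. Weak lower semicontinuity of the norm, applied to the pullback to $L^2(\pi_d)$, gives $\liminf\cR_\bv\ge\|\cP g-\sfz_\bv\|^2$.

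The remaining task is to identify $g$ with $\widehat f^\star_\bv$ on the relevant set. Both $\mu_d$ and $\widehat\mu_{d,n}$ have everywhere positive, locally comparable densities, being Gaussian convolutions. On each ball $B$ the ratio $\de\widehat\mu_{d,n}/\de\mu_d$ is bounded above and below. Strong convergence in $L^2(\widehat\mu_{d,n})$ from the first part therefore implies strong convergence $g_\sft\to\widehat f^\star_\bv$ in $L^2(\mu_d|_B)$. This forces the weak limit to satisfy $g=\widehat f^\star_\bv$ $\mu_d$-a.e. on every ball, hence everywhere, which yields \eqref{eq:ridgeless-test-risk-liminf}.

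The main obstacle is this identification of the weak limit, in particular making the local density comparison rigorous. The density of $\mu_d$ is a convolution of the law of $\rho\bx_0$ with $\cN(0,\sigma^2\bI_d)$, so it is strictly positive and continuous, which gives a lower bound on compacts. The density of $\widehat\mu_{d,n}$ is bounded on compacts. Together these give $\de\widehat\mu_{d,n}/\de\mu_d\lesssim_B 1$, and the reverse bound is just as easy. The bounded-subsequence dichotomy is what avoids needing any control of tails, which is consistent with the instability example mentioned in Section~\ref{sec:gf-endpoint}.
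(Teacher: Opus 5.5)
Your proof is correct, and for \eqref{eq:ridgeless-train-convergence} it is the paper's argument. The steps are the same: strong convergence of $\id-e^{-\sft\widehat\cK}$ to the projection onto $\overline{\operatorname{Ran}(\widehat\cK)}=\overline{\operatorname{Ran}(\cS)}$, density from Lemma~\ref{lem:ridgeless-rkhs-density}, and the isometry $\widehat\cP$.

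For \eqref{eq:ridgeless-test-risk-liminf}, you and the paper also share the key step: transferring the $L^2(\widehat\mu_{d,n})$ convergence to $L^2(\mu_d)$ on balls through a local density ratio. The only difference is the lower-semicontinuity device.
\begin{itemize}
\item The paper deduces that $\cT_0\widehat f_{\sft,\bv}\to\widehat f^\star_\bv$ in $\mu_d$-probability. It then takes an almost surely convergent sub-subsequence of a liminf-realizing sequence and applies Fatou's lemma, which needs no case split and no boundedness.
\item You split on whether the risks stay bounded and extract a weakly convergent subsequence in $L^2(\mu_d)$. You then use weak lower semicontinuity of the norm after pushing through the isometry $\cP$, and identify the weak limit with $\widehat f^\star_\bv$ via the local strong convergence.
\end{itemize}
Both routes are valid and about equally short.

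One correction in emphasis. Write $p_d$ and $\widehat p_{d,n}$ for the Lebesgue densities of $\mu_d$ and $\widehat\mu_{d,n}$. The ratio that must be bounded on each ball $B$ is $p_d/\widehat p_{d,n}$, so that
$\int_B|\cT_0\widehat f_{\sft,\bv}-\widehat f^\star_\bv|^2\,\de\mu_d\le C_B\|\cS_0\widehat f_{\sft,\bv}-\widehat f^\star_\bv\|^2_{L^2(\widehat\mu_{d,n})}$.
This follows because $\widehat p_{d,n}$ is continuous and strictly positive and $p_d\le(2\pi\sigma^2)^{-d/2}$. The direction you worked out in detail is the one your argument never uses.
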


\begin{proof}
    Denote \(g_\bv \deq  \widehat{\cP}\widehat{f}_\bv^\star \in L^2(\widehat{\pi}_{d,n})\). By~\eqref{eq:gf_solution}, recall that
    \[
        \cS\widehat{f}_{\sft,\bv} = \bigl(\id_{L^2(\widehat{\pi}_{d,n})} - \exp(-\sft \widehat{\cK})\bigr) g_\bv.
    \]
    The operator \(\widehat\cK=\cS\cS^\ast\) is bounded, self-adjoint and nonnegative. Since \(1-e^{-\sft\lambda} \) is a bounded, continuous function of \(\lambda\) for every \(\sft>0\), and \(1-e^{-\sft\lambda} \to \ind_{(0,\infty)}(\lambda)\) as \(\sft\to\infty\), the spectral theorem and dominated convergence ensure that
    \[
        \lim_{\sft\to\infty}(\id_{L^2(\widehat{\pi}_{d,n})}-\exp(-\sft\widehat\cK)) = \proj_{\overline{\operatorname{Ran}(\widehat\cK)}},
    \]
    where the convergence is in the strong operator topology and \(\proj_{\overline{\operatorname{Ran}(\widehat\cK)}}\) is the orthogonal projection onto the closed subspace \(\overline{\operatorname{Ran}(\widehat\cK)}\subseteq L^2(\widehat{\pi}_{d,n})\). 
    
    Since \(\overline{\mathrm{Ran}(\widehat\cK)}=\overline{\mathrm{Ran}(\cS)}\), it remains to identify the closure of \(\mathrm{Ran}(\cS)\). By Lemma~\ref{lem:ridgeless-rkhs-density}, \(\cS_0\cH\) is dense in \(L^2(\widehat\mu_{d,n})\). Furthermore, \(\widehat{\cP}\) is an isometry. Hence \(\cS\cH=\widehat\cP\cS_0\cH\) is dense in \(\widehat\cP L^2(\widehat\mu_{d,n})\), and \(g_{\bv}=\widehat\cP\widehat f^\star_{\bv}\) belongs to this closed subspace. This proves~\eqref{eq:ridgeless-train-convergence}.

    Since \(\sigma>0\), both \(\widehat\mu_{d,n}\) and \(\mu_d\) have strictly positive continuous densities, say \(\widehat p_{d,n}\) and \(p_d\), with respect to  Lebesgue measure. Therefore, for every fixed \(R<\infty\),
    \[
        C_R\deq \sup_{\|\bu\|_2\le R}\frac{p_d(\bu)}{\widehat p_{d,n}(\bu)}<\infty .
    \]
    For \(B_R=\{\|\bu\|_2\le R\}\),
    \[
        \int_{B_R}
        \left|\cT_0\widehat f_{\sft,\bv}-\widehat f^\star_\bv\right|^2\,\de\mu_d
        \le
        C_R
        \left\|\cS_0\widehat f_{\sft,\bv}-\widehat f^\star_\bv
        \right\|^2_{L^2(\widehat\mu_{d,n})}
        \longrightarrow0 
    \]
    as \(\sft\to\infty\) by~\eqref{eq:ridgeless-train-convergence}. Thus, \(\cT_0\widehat f_{\sft,\bv}\to \widehat f^\star_\bv\) in probability under \(\mu_d\) as \(\sft\to\infty\). Since \(\mu_d\) is the pushforward of \(\pi_d\) under \((\bx,\bz)\mapsto\rho\bx+\sigma\bz\), it follows that
    \[
        |\cT\widehat f_{\sft,\bv}-\sfz_\bv|^2 \longrightarrow |\cP\widehat f^\star_\bv-\sfz_\bv|^2
    \]
    in probability under \(\pi_d\). Now choose a sequence \(\sft_k\to\infty\) along which the risks converge to their liminf. Passing to a further subsequence, the preceding convergence holds almost surely. Fatou's lemma then gives \eqref{eq:ridgeless-test-risk-liminf}.
\end{proof}

Finally, we show that the test error of the truncated gradient flow estimator converges to the test error of the empirical Bayes denoiser. We denote
\[
\Pi_A \boldf (\bu) = \frac{\proj_A D_{\boldf} (\bu) - \bu}{\sigma},
\]
where $D_{\boldf} (\bu) =  \bu + \sigma \boldf (\bu)$ and $\proj_A$ is the Euclidean projection onto the compact convex set $A$, which is assumed to contain all the sample vectors $\{\rho \bx_{0,i}\}_{i \in [n]}$.

\begin{lemma}\label{lem:conv-test-error-ridgeless-limit}
    Suppose that Assumptions~\ref{ass:input_dist} and \ref{ass:universal-kernel} hold. Fix \(d,n\) and a training realization \(\{\bx_{0,i}\}_{i=1}^n\). Then,
    \[
        \lim_{\sft\to \infty}\cR (\Pi_A \widehat \boldf_{\sft}) =   \cR (\widehat \boldf^\star).
    \]
\end{lemma}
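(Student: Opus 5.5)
The proof rests on three facts. The map $\Pi_A$ is stable, the projected outputs are uniformly bounded, and pointwise convergence in measure upgrades to $L^2$ convergence under dominated convergence.

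First, we record the structure of $\Pi_A$. Since $\proj_A$ is $1$-Lipschitz and $D_{\boldf}(\bu) = \bu + \sigma\boldf(\bu)$, we have for any two denoisers $\boldf,\boldf'$ the pointwise bound
\[
\|\Pi_A\boldf(\bu)-\Pi_A\boldf'(\bu)\|_2 \le \tfrac{1}{\sigma}\|D_{\boldf}(\bu)-D_{\boldf'}(\bu)\|_2 = \|\boldf(\bu)-\boldf'(\bu)\|_2 .
\]
Moreover, $\widehat\boldf^\star = \Pi_A\widehat\boldf^\star$. This holds because $D_{\widehat\boldf^\star}(\bu)=\sum_k\omega_k(\bu)\rho\bx_{0,k}$ is a convex combination of the points $\rho\bx_{0,k}\in A$, so it lies in $A$ by convexity. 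Finally, since $A$ is compact, $\|\Pi_A\boldf(\bu)\|_2 \le (\operatorname{diam}'(A)+\|\bu\|_2)/\sigma$ for every $\boldf$, where $\operatorname{diam}'(A)\deq\sup_{\ba\in A}\|\ba\|_2$. This gives an envelope $G(\bu)\deq C_A(1+\|\bu\|_2)$ that is independent of $\sft$.

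Second, we show convergence in $\mu_d$-probability. We apply the argument of Lemma~\ref{lem:ridgeless-limit-test-continuity} coordinatewise. For each $\bv_i$ in an orthonormal basis, $\cS_0\widehat f_{\sft,\bv_i}\to\widehat f^\star_{\bv_i}$ in $L^2(\widehat\mu_{d,n})$. Comparing densities on balls $B_R$ through $C_R$, this yields $\widehat\boldf_\sft\to\widehat\boldf^\star$ in $\mu_d$-probability on each $B_R$. Since $\mu_d(B_R^c)\to0$ as $R\to\infty$, the convergence holds in $\mu_d$-probability on all of $\R^d$. The Lipschitz bound above then gives $\Pi_A\widehat\boldf_\sft\to\Pi_A\widehat\boldf^\star=\widehat\boldf^\star$ in $\mu_d$-probability.

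Third, we conclude. Write the risk as
\[
\cR(\Pi_A\widehat\boldf_\sft)=\tfrac1d\E_{\pi_d}\|\Pi_A\widehat\boldf_\sft(\rho\bx+\sigma\bz)+\bz\|_2^2 .
\]
The integrand converges in $\pi_d$-probability, because the map $(\bx,\bz)\mapsto\rho\bx+\sigma\bz$ pushes $\pi_d$ forward to $\mu_d$. It is dominated by $2G(\rho\bx+\sigma\bz)^2+2\|\bz\|_2^2$, which is integrable because $\pi_0$ has a finite second moment under Assumption~\ref{ass:input_dist}. The dominated convergence theorem in its convergence-in-probability (Vitali) form then gives $\cR(\Pi_A\widehat\boldf_\sft)\to\cR(\widehat\boldf^\star)$.

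The main obstacle is the second step, since $L^2(\widehat\mu_{d,n})$ convergence does not transfer directly to $\mu_d$, whose tails $\widehat\mu_{d,n}$ may not dominate. The truncation to $B_R$ together with the tightness of $\mu_d$ resolves this. The projection onto $A$ is exactly what makes uniform integrability automatic, which is why the unprojected estimator can fail, as noted in Appendix~\ref{sec:ridgeless-instability}.
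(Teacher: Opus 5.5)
Your proof is correct and follows essentially the paper's argument: non-expansiveness of $\proj_A$, the fixed-point identity $\Pi_A\widehat\boldf^\star=\widehat\boldf^\star$ (by convexity of $A$), the density comparison on balls $B_R$, and tightness of $\mu_d$. The only difference is packaging. The paper observes that $\Pi_A\widehat\boldf_\sft(\bu)$ and $\widehat\boldf^\star(\bu)$ are both of the form $(\text{point of }A-\bu)/\sigma$, so their difference is bounded by $\operatorname{diam}(A)/\sigma$; splitting over $B_R$ and $B_R^c$ then gives $L^2(\mu_d)$ convergence directly, which is slightly cleaner than your route of a linear-growth envelope combined with dominated convergence in probability.
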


\begin{proof}
    Note that 
    \[
     D_{\widehat{\boldf}^\star}(\bu)=\rho\sum_{i=1}^n\omega_i(\bu)\bx_{0,i} \in \rho\,\operatorname{conv}\{\bx_{0,1},\ldots,\bx_{0,n}\} \subseteq A,
    \]
     so $\Pi_A \widehat\boldf^\star = \widehat\boldf^\star$. Thus,
     \[
        \|\Pi_A\widehat{\boldf}_\sft(\bu)
       -\widehat{\boldf}^\star(\bu)\|_2 = \frac{1}{\sigma}\| \proj_A D_{\widehat{\boldf}_\sft}(\bu) - \proj_A D_{\widehat{\boldf}^\star}(\bu)\|_2.
     \]
     Since the Euclidean projection onto a convex set is non-expansive,
     \begin{equation}\label{eq:calibrated-pointwise-bounds}
 \|\Pi_A\widehat{\boldf}_\sft(\bu)
       -\widehat{\boldf}^\star(\bu)\|_2
 \le \|\widehat{\boldf}_\sft(\bu)
       -\widehat{\boldf}^\star(\bu)\|_2,
 \qquad
 \|\Pi_A\widehat{\boldf}_\sft(\bu)
       -\widehat{\boldf}^\star(\bu)\|_2
 \le \frac{\operatorname{diam}(A)}{\sigma}.
\end{equation}
 On a fixed ball $B_R$, using a similar density comparison as in the proof of Lemma~\ref{lem:ridgeless-limit-test-continuity},
\[
 \int_{B_R}\|\Pi_A\widehat{\boldf}_\sft(\bu)
       -\widehat{\boldf}^\star(\bu)\|_2^2\de\mu_d
 \le C_R\int\|\widehat{\boldf}_\sft(\bu)
       -\widehat{\boldf}^\star(\bu)\|_2^2\de\widehat\mu_{d,n}
 \longrightarrow0.
\]
The second bound in~\eqref{eq:calibrated-pointwise-bounds} gives
\[
 \int_{B_R^c}\|\Pi_A\widehat{\boldf}_\sft(\bu)
       -\widehat{\boldf}^\star(\bu)\|_2^2\,\de\mu_d
 \le \frac{\operatorname{diam}(A)^2}{\sigma^2}\,\mu_d(B_R^c).
\]
First take the limit $\sft \to \infty$ and then let $R\to\infty$ to obtain \(\Pi_A\widehat{\boldf}_\sft \to \widehat{\boldf}^\star\) in \(L^2(\mu_d)\). This implies the convergence of the test error.
\end{proof}

\begin{proof}[Proof of Proposition~\ref{prop:gf_fixed_d_endpoint}]
    The fact that the RKHS is dense in \(L^2(\widehat\mu_{d,n})\) and \(L^2(\mu_d)\) follows from Lemma~\ref{lem:ridgeless-rkhs-density}. The vector-valued convergence of the gradient flow estimator to the empirical Bayes denoiser in \(L^2(\widehat\mu_{d,n})\) and the lower bound on the test error follow from Lemma~\ref{lem:ridgeless-limit-test-continuity} by an averaging argument over an orthonormal basis of \(\R^d\) to decompose the vector-valued \(L^2\) norm into a sum of scalar \(L^2\) norms. The convergence of the test error follows from Lemma~\ref{lem:conv-test-error-ridgeless-limit}.
\end{proof}

\subsection{Instability of the test error at the ridgeless limit}
\label{sec:ridgeless-instability}

Before establishing the instability of the ridgeless limit, we record
some estimates for polynomials under Gaussian measures. For \(s>0\), \(N\in \N\), let \(\gamma_s := \cN(0,s^2\bI_d)\), and let \(\mathscr{P}_N\) denote the space of polynomials on \(\R^d\) of total degree at most \(N\). Every \(P\in \mathscr{P}_N\) admits a unique decomposition
\begin{equation}
\label{eq:polynomial-symmetric-tensor-representation}
    P(\bu)
    =
    \sum_{k=0}^N
    \< \bA_k,\bu^{\otimes k}\>_\frob,
    \qquad
    \bA_k\in\mathrm{Sym}^k(\R^d).
\end{equation}
Recall the normalized Hermite tensors \(\He_k\) from
\eqref{eq:Hermite-tensor}.

For the counterexample in this subsection, we specialize the population distribution to \(\pi_0=\gamma_1\). Since \(\rho^2+\sigma^2=1\), this gives \(\mu_d=\gamma_1\), with \(0<\sigma<1\) fixed throughout.

\begin{lemma}[Gaussian polynomial estimates]\label{lem:gaussian-polynomial-estimates}
Fix \(d\geq1\) and \(s>0\). Suppose that
\[
    P(\bx)=\sum_{k=0}^{N}\<\bA_k,\bx^{\otimes k}\>_\frob,\qquad \bA_k\in\mathrm{Sym}^k(\R^d).
\]
Then there are unique tensors \(\bB_k\in\mathrm{Sym}^k(\R^d)\) such that
\[
    P(\bx)=\sum_{k=0}^{N}\<\bB_k,\He_k(\bx/s)\>_\frob.
\]
Moreover, there exists \(C=C(d,s)<\infty\) such that
\[
    e^{-CN\log(N+2)}\sum_{k=0}^{N}\|\bA_k\|_\frob^2\leq\sum_{k=0}^{N}\|\bB_k\|_\frob^2 =  \|P\|_{L^2(\gamma_s)}^2\leq e^{CN\log(N+2)}\sum_{k=0}^{N}\|\bA_k\|_\frob^2.
\]
\end{lemma}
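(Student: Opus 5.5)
The plan is to treat everything as a statement about linear maps on the finite-dimensional space \(\mathscr{P}_N\), and then bound the norms of the change-of-basis maps explicitly. It proceeds in three steps.

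\emph{Existence, uniqueness and the middle identity.} Since \(\He_k(\bx/s)\) is a symmetric-tensor-valued polynomial of degree exactly \(k\) with leading part \((k!)^{-1/2}s^{-k}\bx^{\otimes k}\), the family \(\{\<\bB,\He_k(\cdot/s)\>_\frob:\bB\in\mathrm{Sym}^k,\,k\le N\}\) spans \(\mathscr{P}_N\). It is related to the monomial family by a block-triangular map \(\bA=(\bA_k)_k\mapsto\bB=(\bB_k)_k\) whose diagonal blocks \(\sqrt{k!}\,s^k\,\id\) are invertible. This gives existence and uniqueness of the \(\bB_k\). The identity \(\|P\|^2_{L^2(\gamma_s)}=\sum_k\|\bB_k\|_\frob^2\) then follows from the Wiener chaos isometry stated before \eqref{eq:Gamma_k_def}, after the change of variables \(\bx=s\by\) with \(\by\sim\gamma_1\).

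\emph{Upper bound.} We need \(\sum_k\|\bB_k\|^2\le e^{CN\log(N+2)}\sum_k\|\bA_k\|^2\). Write
\[
\|P\|_{L^2(\gamma_s)}\le\sum_k\|\<\bA_k,\bx^{\otimes k}\>\|_{L^2(\gamma_s)}\le\sum_k\|\bA_k\|_\frob\,\E\|\bx\|^{2k}_{\gamma_s}{}^{1/2}
\]
using Cauchy--Schwarz. Then \(\E\|\bx\|^{2k}\le s^{2k}(d+2k)^k\le e^{Ck\log(k+2)}\), and a final Cauchy--Schwarz over \(k\) costs only a factor \(N+1\).

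\emph{Lower bound (the main obstacle).} We need to bound the inverse map \(\bB\mapsto\bA\), i.e. to express \(\bx^{\otimes k}\) in Hermite tensors. The classical inversion formula gives \(\bx^{\otimes k}/s^{k}=\sum_{j\le k/2}c_{k,j}\,\mathrm{Sym}\bigl(\bI^{\otimes j}\otimes\sqrt{(k-2j)!}\,\He_{k-2j}(\bx/s)\bigr)\) with \(|c_{k,j}|\le k!/(2^jj!(k-2j)!)\), the count of pairings. Dualizing, each \(\bA_k\) is a linear combination of contractions of \(\bB_{k+2j}\) against \(\bI^{\otimes j}\). A contraction with \(\bI^{\otimes j}\) has operator norm at most \(d^{j/2}\) in Frobenius norm. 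Equivalently, one can read off \(\bA_k=\nabla^kP(0)/k!\) and write \(\nabla^k\) of each Hermite tensor at \(0\) explicitly. The coefficients involved are at most \(\sqrt{(k+2j)!}\,s^{-k}\,d^{j/2}\) times combinatorial factors bounded by \((N!)^{O(1)}\), so each block has norm at most \(e^{CN\log(N+2)}\) with \(C\) depending on \(d,s\). Summing over the at most \((N+1)^2\) blocks with Cauchy--Schwarz gives \(\sum\|\bA_k\|^2\le e^{CN\log(N+2)}\sum\|\bB_k\|^2\). The careful bookkeeping of these triangular-inverse coefficients, including the symmetrization, is the step I expect to take the most effort, but only crude factorial bounds are needed.
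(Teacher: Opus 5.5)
Your proposal is correct. The lower bound is the paper's argument. The paper expands $\He_{k}$ in monomials to get $\bA_k=\sum_{j}\frac{(-1)^j s^{-k}\sqrt{(k+2j)!}}{2^j j!\,k!}\Tr^j(\bB_{k+2j})$. It then uses $\|\Tr^j(\bA)\|_\frob\le d^{j/2}\|\bA\|_\frob$ together with crude factorial bounds, exactly as you outline.

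The upper bound is where you genuinely differ. The paper treats it symmetrically: it expands $\bx^{\otimes k}$ in Hermite tensors to obtain $\bB_k=\sum_j \frac{s^{k+2j}(k+2j)!}{2^j j!\sqrt{k!}}\Tr^j(\bA_{k+2j})$ and bounds this the same way. You instead bypass the tensor bookkeeping, using the triangle inequality, the pointwise bound $|\<\bA_k,\bx^{\otimes k}\>_\frob|\le\|\bA_k\|_\frob\|\bx\|_2^k$, and the chi-square moments $\E_{\gamma_s}\|\bx\|_2^{2k}\le s^{2k}(d+2k)^k$. Your route is shorter and uses only moments of the measure, so it gives the same upper comparison for any measure with comparable moments (for instance a Gaussian mixture). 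It is, however, inherently one-sided: a pointwise estimate cannot recover the coefficients from $\|P\|$. That is why both you and the paper need the explicit triangular inverse for the lower bound.

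There is one correction to the justification in your lower-bound step. Expanding $\bx^{\otimes k}$ in Hermite tensors and dualizing produces $\bB$ as a function of $\bA$ (the formula above, i.e.\ the map relevant to the upper bound), not $\bA$ as a function of $\bB$. The representation of $\bA_k$ through contractions of $\bB_{k+2j}$ instead comes from expanding $\He_{k+2j}$ in monomials. That is precisely what your alternative $\bA_k=\nabla^kP(0)/k!$ computes. Its coefficients are $s^{-k}\sqrt{(k+2j)!}/(2^jj!\,k!)$ in absolute value, so the bound you state holds and the conclusion is unaffected.
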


\begin{proof}
    We use the convention
    \[
        \bI_d^{\otimes j} \deq \sum_{i_1,\ldots,i_j=1}^d\left(\be_{i_1}\otimes\cdots\otimes\be_{i_j}\right)\otimes \left(\be_{i_1}\otimes\cdots\otimes\be_{i_j}\right)
    \]
    such that \( \<\bA\otimes\bB,\bI_d^{\otimes j}\>_\frob=\<\bA,\bB\>_\frob\) for \(\bA,\bB\in(\R^d)^{\otimes j}\). The standard Hermite identities read
    \begin{equation}\label{eq:Hermite-tensor-explicit}
        \He_k(\bx)
        =
        \frac{1}{\sqrt{k!}}
        \sum_{j=0}^{\lfloor k/2\rfloor}
        \frac{(-1)^jk!}{2^j j!(k-2j)!}
        \proj_{\mathrm{Sym}^k(\R^d)}
        \left(
            \bx^{\otimes(k-2j)}
            \otimes\bI_d^{\otimes j}
        \right),
    \end{equation}
    and, conversely,
    \begin{equation}\label{eq:inverse-Hermite-tensor}
        \bx^{\otimes k}
        =
        \sum_{j=0}^{\lfloor k/2\rfloor}
        \frac{k!}{2^j j!\sqrt{(k-2j)!}}
        \proj_{\mathrm{Sym}^k(\R^d)}
        \left(
            \He_{k-2j}(\bx)\otimes\bI_d^{\otimes j}
        \right).
    \end{equation}

    For \(\bA\in\mathrm{Sym}^k(\R^d)\), let
    \(\Tr^j(\bA)\in\mathrm{Sym}^{k-2j}(\R^d)\) denote its \(j\)-fold trace. Let \(\bg\sim\cN(0,\bI_d)\). Applying \eqref{eq:inverse-Hermite-tensor} to \(P(s\bg)\) gives
    \[
        P(s\bg)=\sum_{k=0}^N\<\bB_k,\He_k(\bg)\>_\frob,
    \]
    where
    \begin{equation}\label{eq:polynomial-tensor-chaos-expansion}
        \bB_k
        =
        \sum_{j=0}^{\lfloor(N-k)/2\rfloor}
        \frac{s^{k+2j}(k+2j)!}
            {2^j j!\sqrt{k!}}
        \Tr^j(\bA_{k+2j}).
    \end{equation}
    The Hermite-tensor isometry gives
    \begin{equation}\label{eq:tensor-Hermite-parseval}
        \|P\|_{L^2(\gamma_s)}^2
        =
        \sum_{k=0}^N\|\bB_k\|_\frob^2.
    \end{equation}
    Conversely, expanding the Hermite tensors by
    \eqref{eq:Hermite-tensor-explicit} gives
    \begin{equation}\label{eq:inverse-polynomial-tensor-chaos-expansion}
        \bA_k
        =
        \sum_{j=0}^{\lfloor(N-k)/2\rfloor}
        \frac{(-1)^j s^{-k}\sqrt{(k+2j)!}}
            {2^j j!k!}
        \Tr^j(\bB_{k+2j}).
    \end{equation}

    Since \(\|\Tr^j(\bA)\|_\frob \leq d^{j/2}\|\bA\|_\frob\),
    \[
        \|\bA_k\|_\frob \leq \sum_{j=0}^{\lfloor(N-k)/2\rfloor}
        \frac{s^{-k}\sqrt{(k+2j)!}}{2^j  j!k!}d^{j/2}\|\bB_{k+2j}\|_\frob \leq (1\vee s^{-1})^N N^{N/2}d^N\sum_{k=0}^{N}\|\bB_k\|_\frob.
    \] 
    Squaring and summing over \(k\) using Cauchy--Schwarz, and using a similar argument for \(\|\bB_k\|_\frob\) in terms of \(\|\bA_k\|_\frob\), gives the desired estimate.
\end{proof}

We also require an estimate for the complex evaluation of polynomials. Recall the elementary bound
\[
    |H_m(z)|
    \leq
    2^{m/2}\sqrt{m!}\exp\left(\sqrt{2m}|z|\right), \qquad z\in\C,
\]
for the physicists' Hermite polynomials (see, for instance, \cite[Eq.~(1.2)]{van1990new}). Since
\[
    \He_m(w)=\frac{2^{-m/2}H_m(w/\sqrt{2})}{\sqrt{m!}},
\]
it follows that \( |\He_m(w)|\leq\exp(\sqrt{m}|w|)\) for every \(w\in\C\). Then, for every \(P\in\mathscr{P}_N\) and \(\bz\in\C^d\), we apply Cauchy--Schwarz and expand the entries of \(\He_k(\bz/s)\) in the normalized product Hermite basis to obtain
\begin{align*}
    |P(\bz)|^2
    \leq
    \binom{N+d}{d}
    \exp\left(\frac{2\sqrt N}{s}\|\bz\|_2\right)
    \|P\|_{L^2(\gamma_s)}^2.
\end{align*}
Consequently, for every compact \(\cD\subset\C^d\), there exists \(C=C(\cD,d,s)<\infty\) such that
\begin{equation}\label{eq:polynomial-complex-evaluation}
    \sup_{\bz\in\cD}|P(\bz)|
    \leq
    C(N+1)^{d/2}
    \exp\!\left(C\sqrt{N+1}\right)
    \|P\|_{L^2(\gamma_s)}.
\end{equation}

\begin{lemma}
\label{lem:empirical-score-polynomial-extrapolation}
Fix \(d\geq1\), \(n\geq2\), and a training realization \(\{\bx_{0,i}\}_{i=1}^n\). Let \(\bv\in \S^{d-1}\) be such that the numbers \(\{\<\bv,\bx_{0,i}\>\}_{i=1}^{n}\) are pairwise distinct. For every \(N\geq 0\), let \(\proj_{\mathscr{P}_N}\) denote the \(L^2(\widehat\mu_{d,n})\)-orthogonal projection onto \(\mathscr{P}_N\).
Then, for every \(r>\sigma\),
\[
    \sup_{N\geq0}
    \|\proj_{\mathscr{P}_N}\widehat f_{\bv}^\star\|_{L^2(\gamma_r)}=\infty.
\]
\end{lemma}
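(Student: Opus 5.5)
Argue by contradiction. Suppose $\sup_N \|P_N\|_{L^2(\gamma_r)} \le M < \infty$, where $P_N \deq \proj_{\mathscr{P}_N}\widehat f^\star_\bv$, for some $r>\sigma$. The strategy is to show that this uniform bound forces $\widehat f^\star_\bv$ to extend to an entire function on $\C^d$, and then to contradict this with the explicit poles of the softmax representation \eqref{eq:emp_bayes_opt_denoiser}--\eqref{eq:emp_bayes_opt_denoiser_weights}.

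\emph{Step 1: convergence and control of the projections.} Since polynomials are dense in $L^2(\widehat\mu_{d,n})$ (Lemma~\ref{lem:ridgeless-rkhs-density} with $k_0=0$), we have $P_N\to\widehat f^\star_\bv$ in $L^2(\widehat\mu_{d,n})$. The differences $Q_N \deq P_{N}-P_{N-1}$ are mutually orthogonal in $L^2(\widehat\mu_{d,n})$. We need to transfer the bound in $L^2(\gamma_r)$ into growth control on complex balls. Fix a compact $\cD\subset\C^d$. By \eqref{eq:polynomial-complex-evaluation} applied with $s=r$, we get $\sup_{\cD}|P_N|\le C(N+1)^{d/2}e^{C\sqrt{N+1}}M$. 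This subexponential growth is not enough on its own to give a convergent limit. The idea is to use the gap $r>\sigma$: the measure $\widehat\mu_{d,n}$ has Gaussian tails of width $\sigma$, so its density is dominated by $C\,\gamma_{r'}$ for every $r'>\sigma$, and in particular $\|\cdot\|_{L^2(\widehat\mu_{d,n})}\lesssim\|\cdot\|_{L^2(\gamma_r)}$. One would like a Bernstein--Walsh type inequality converting this into geometric decay of the degree-$N$ parts. My concrete plan is to work through the Hermite coefficients in the $\gamma_r$ basis, using Lemma~\ref{lem:gaussian-polynomial-estimates}, and then compare with Hermite coefficients relative to each Gaussian component $\cN(\rho\bx_{0,i},\sigma^2\bI_d)$. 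Rescaling by $\sigma/r<1$ (a Mehler/Ornstein--Uhlenbeck contraction) damps degree $k$ by the factor $(\sigma/r)^k$. The conclusion to aim for is that $P_N(\rho\bx_{0,i}+\sigma\,\cdot)$ converges, locally uniformly on all of $\C^d$, to an entire function $F$ that agrees with $\widehat f^\star_\bv$ a.e. on $\R^d$.

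\emph{Step 2: the limit cannot be entire.} Restrict to the complex line $\bu=\bu_0+w\bv$ with $w\in\C$. On this line, $\widehat f^\star_\bv$ is a ratio $\sum_k a_k e^{\lambda_k w}(\cdots)/\sum_k b_k e^{\lambda_k w}$ with exponents $\lambda_k=\rho\<\bv,\bx_{0,k}\>/\sigma^2$, which are pairwise distinct by hypothesis, and with $n\ge2$. The denominator is an exponential polynomial with at least two distinct frequencies, so it has infinitely many complex zeros. At such a zero the numerator $\sum_k\omega_k\<\bv,\rho\bx_{0,k}-\bu\>$ does not vanish identically: otherwise $\widehat f^\star_\bv$ would be affine along the line, which contradicts the distinctness of the exponents. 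Hence $\widehat f^\star_\bv$ is meromorphic on the line with a genuine pole. Its real-analytic restriction coincides with $F$, so by uniqueness of analytic continuation $F$ would equal a function with a pole, which is impossible for an entire function.

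The main obstacle is Step 1. The available estimate \eqref{eq:polynomial-complex-evaluation} only gives growth $e^{C\sqrt N}$, whereas locally uniform convergence on $\C^d$ requires summable increments $\sup_\cD|Q_N|$. I would obtain this by showing that the $L^2(\gamma_r)$-boundedness together with $r>\sigma$ forces $\|Q_N\|_{L^2(\gamma_{r''})}\lesssim(\sigma'/r)^{N}$ for suitable $\sigma<\sigma'<r''<r$, via the Hermite contraction. If that route fails, the fallback is to argue within a strip: combine $L^2(\gamma_r)$-boundedness with the convergence in $L^2(\widehat\mu)$ to obtain holomorphy of the limit in a complex neighborhood of width $\propto r-\sigma$, and choose the line $\bu_0$ so that a pole lies inside that strip.
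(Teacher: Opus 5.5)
Your overall architecture (contradiction, analytic continuation of $\widehat f^\star_\bv$ to an entire function on $\C^d$, and a pole coming from a complex zero of the exponential sum in the softmax denominator) is the one the paper uses. However, Step~1 has a genuine gap: nothing in your plan produces decay of $Q_N=P_N-P_{N-1}$, and the mechanism you propose cannot. Consider the hypotheses you actually use: boundedness in $L^2(\gamma_r)$, convergence in $L^2(\widehat\mu_{d,n})$, $r>\sigma$, and density comparisons for the Gaussian mixture. All of them are also satisfied by the truncations of the $\gamma_r$-Hermite expansion of any non-analytic $g\in L^2(\gamma_r)$. These truncations are bounded in $L^2(\gamma_r)$ and converge to $g$ in $L^2(\widehat\mu_{d,n})$ because $\widehat\mu_{d,n}\lesssim\gamma_r$, yet such a $g$ has no entire extension. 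So the argument must exploit that the $P_N$ are $L^2(\widehat\mu_{d,n})$-orthogonal projections, and your Step~1 never does. Rescaling the argument by $\sigma/r$ is also not an Ornstein--Uhlenbeck contraction. It damps only the top homogeneous part of degree $k$ by $(\sigma/r)^k$, while re-expanding $\He_k(\cdot/r)$ in the $\gamma_\sigma$-chaos creates undamped lower-order components; already for $d=1$, $\|\He_k(\cdot/r)\|_{L^2(\gamma_\sigma)}\geq c\,k^{-1/4}$. Lemma~\ref{lem:gaussian-polynomial-estimates} does not help either, since its factors $e^{\pm CN\log(N+2)}$ swamp any geometric gain. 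Your fallback strip argument suffers from the same missing mechanism. Moreover, on lines through real points the zeros of the denominator stay at distance at least $\pi\sigma^2/(\rho\max_{k,j}|\langle\bv,\bx_{0,k}-\bx_{0,j}\rangle|)$ from the real axis, so they need not fall in a strip of width proportional to $r-\sigma$.

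The missing idea is that $Q_N$ is orthogonal in $L^2(\widehat\mu_{d,n})$ to all of $\mathscr{P}_{N-1}$, not merely to the other $Q_k$. Hence it has the smallest $L^2(\widehat\mu_{d,n})$-norm among polynomials with its top homogeneous part $\langle\bA_N,\bu^{\otimes N}\rangle_\frob$. Choose $0<s_-<\sigma<s_+<r$ with $c_-\gamma_{s_-}\le\widehat\mu_{d,n}\le c_+\gamma_{s_+}$. Comparing $Q_N$ with $s_+^N\sqrt{N!}\,\langle\bA_N,\He_N(\cdot/s_+)\rangle_\frob$, which has the same top part, gives $\|Q_N\|_{L^2(\widehat\mu_{d,n})}\le\sqrt{c_+}\,s_+^N\sqrt{N!}\,\|\bA_N\|_\frob$. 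The degree-$N$ chaos component of $Q_N$ under $\gamma_r$ gives $r^N\sqrt{N!}\,\|\bA_N\|_\frob\le\|Q_N\|_{L^2(\gamma_r)}\le 2M$. Hence $\|Q_N\|_{L^2(\gamma_{s_-})}\le 2M\sqrt{c_+/c_-}\,(s_+/r)^N$, and \eqref{eq:polynomial-complex-evaluation} at scale $s_-$ makes $\sum_N\sup_{\cD}|Q_N|$ finite for every compact $\cD\subset\C^d$.

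In Step~2 your conclusion is right but the stated reason is not. What matters is the order to which the numerator vanishes at a zero of the denominator, not whether it vanishes identically. Set $b_k=e^{-\|\rho\bx_{0,k}-\bu_0\|_2^2/(2\sigma^2)}$, $\lambda_k=\langle\rho\bx_{0,k}-\bu_0,\bv\rangle/\sigma^2$ and $D(w)=\sum_k b_ke^{\lambda_kw}$. Then $\sigma\widehat f^\star_\bv(\bu_0+w\bv)=\sigma^2D'(w)/D(w)-w$. At a zero of order $m\ge1$ of $D$, the numerator $\sigma^2D'-wD$ vanishes to order exactly $m-1$, which yields the pole; this is the paper's identity $F_\bv(z)(\sigma G(z\bv)+z)=\sigma^2F_\bv'(z)$. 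Finally, upgrade the a.e.\ agreement to agreement everywhere on $\R^d$ by continuity before restricting to a line.
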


\begin{proof}
Write
\[
    p_N\deq \proj_{\mathscr{P}_N}\widehat f_{\bv}^\star,
    \qquad
    h_N\deq p_N-p_{N-1},\qquad N\geq1.
\]
Fix \(r>\sigma\) and suppose, toward a contradiction, that
\[
    \sup_N\|p_N\|_{L^2(\gamma_r)}=: M < \infty,
\]
so that \(\|h_N\|_{L^2(\gamma_r)}\leq2M\). Choose \(0<s_-<\sigma<s_+<r\). Since
\(\widehat\mu_{d,n}\) is a finite mixture of translates of \(\gamma_\sigma\), there exist \(c_-,c_+>0\) such that \(c_-\gamma_{s_-}\leq\widehat\mu_{d,n}\leq c_+\gamma_{s_+}\) in the sense of pointwise comparison of densities. 
Write the top homogeneous part of \(h_N\) as \(\bu\mapsto \<\bA_N,\bu^{\otimes N}\>_\frob\).
By~\eqref{eq:Hermite-tensor-explicit}, the top homogeneous part of the polynomial \(s_+^N\sqrt{N!}\<\bA_N,\He_N(\bu/s_+)\>_\frob\) is precisely \(\<\bA_N,\bu^{\otimes N}\>_\frob\). Consequently,
\[
    s_+^N\sqrt{N!}
    \<\bA_N,\He_N(\cdot/s_+)\>_\frob-h_N
    \in \mathscr{P}_{N-1}.
\]
Since \(h_N\perp_{L^2(\widehat\mu_{d,n})}\mathscr{P}_{N-1}\),
\begin{align*}
    \|h_N\|_{L^2(\widehat\mu_{d,n})}
    &\leq
    \left\|
        s_+^N\sqrt{N!}
        \<\bA_N,\He_N(\cdot/s_+)\>_\frob
    \right\|_{L^2(\widehat\mu_{d,n})}\leq
    \sqrt{c_+}\,
    s_+^N\sqrt{N!}\|\bA_N\|_\frob,
\end{align*}
where the second inequality follows from
\(\widehat\mu_{d,n}\leq c_+\gamma_{s_+}\) and the Hermite-tensor isometry.

On the other hand, for every \(r>0\), the degree-\(N\) Wiener-chaos component of \(h_N\)
under \(\gamma_r\) is \( r^N\sqrt{N!}
    \<\bA_N,\He_N(\cdot/r)\>_\frob\),
so 
\[
    r^N\sqrt{N!}\|\bA_N\|_\frob
    \leq
    \|h_N\|_{L^2(\gamma_r)}.
\]
Combining the preceding two estimates yields
\[
    \|h_N\|_{L^2(\widehat\mu_{d,n})}
    \leq
    \sqrt{c_+}
    \left(\frac{s_+}{r}\right)^N
    \|h_N\|_{L^2(\gamma_r)}.
\]
Finally, using
\(c_-\gamma_{s_-}\leq\widehat\mu_{d,n}\), we obtain
\[
    \|h_N\|_{L^2(\gamma_{s_-})}
    \leq
    \sqrt{\frac{c_+}{c_-}}
    \left(\frac{s_+}{r}\right)^N
    \|h_N\|_{L^2(\gamma_r)} \leq 2\sqrt{\frac{c_+}{c_-}}
    \left(\frac{s_+}{r}\right)^N
    M.
\]

Let \(\cD\) be a compact subset of \(\C^d\). By~\eqref{eq:polynomial-complex-evaluation}, there exists a constant \(C=C(\cD,d,s_-)\) such that
\[
    \sup_{\bz\in\cD}|h_N(\bz)|
    \leq
    C(N+1)^{d/2}
    \exp\left(C\sqrt{N+1}\right)
    2\sqrt{\frac{c_+}{c_-}}
    \left(\frac{s_+}{r}\right)^N M.
\]
The right-hand side is summable in \(N\), so the Weierstrass convergence theorem shows that \(p_N=p_0+\sum_{k=1}^Nh_k\) converges locally uniformly on \(\C^d\) to an entire function \(G\). On the other hand, polynomial density (see Lemma~\ref{lem:ridgeless-rkhs-density}) gives \(p_N\to\widehat f^\star_{\bv}\) in \(L^2(\widehat\mu_{d,n})\). Passing to an almost-everywhere convergent subsequence therefore shows that \(G=\widehat f^\star_{\bv}\), \(\widehat\mu_{d,n}\)-almost everywhere on \(\R^d\). Since both functions are continuous and \(\widehat\mu_{d,n}\) has a strictly positive density, they agree everywhere on \(\R^d\).

Define
\[
    F_{\bv}(z) \deq  \sum_{i=1}^n \exp\left(-\frac{\rho^2\|\bx_{0,i}\|_2^2}{2\sigma^2} + \frac{\rho\<\bv,\bx_{0,i}\>}{\sigma^2}z\right),\qquad z\in\C.
\]
For every \(t\in\R\), the empirical-score formula gives
\[
    \widehat{f}_{\bv}^\star(t\bv) = - \frac{t}{\sigma} + \sigma \frac{F_{\bv}'(t)}{F_{\bv}(t)}.
\]
Since \(G(t\bv)=\widehat f^\star_{\bv}(t\bv)\), multiplying by \(\sigma F_{\bv}(t)\) yields
\[
    F_{\bv}(t)(\sigma G(t\bv) + t) =\sigma^2 F_{\bv}'(t)
\]
for every \(t\in\R\). Both sides are restrictions to \(\R\) of entire functions of the complex variable \(z\), so the identity theorem gives
\[
    F_{\bv}(z)(\sigma G(z\bv) + z) =\sigma^2 F_{\bv}'(z),
    \qquad z\in\C.
\]
By assumption, \(F_{\bv}\) is an exponential sum with at least two distinct frequencies, and therefore has a complex zero. Indeed, \(F_{\bv}\) is entire of exponential type, so if it had no zeros, Hadamard factorization would give \(F_{\bv}(z)=e^{az+b}\), whereas \(F_{\bv}'(t)/F_{\bv}(t)\) converges as \(t\to\pm\infty\) to the largest and smallest frequencies, respectively, which are distinct. 
Suppose that \(F_{\bv}\) has a zero of order \(m\geq 1\) at \(z_0\in\C\). Since \(\sigma G(z\bv)+z\) is entire, the left-hand side has a zero of order at least \(m\) at \(z_0\), while the right-hand side has a zero of exactly \(m-1\). This is a contradiction. Since \(r>\sigma\) was arbitrary, the claim follows.
\end{proof}

Fix \(\eta>4\) and set, for every \(N\geq 2\),
\begin{equation}\label{eq:lacunary-kernel}
    h(s) \deq  \sum_{k=0}^\infty \alpha_ks^k,\qquad h_{\leq N}(s) \deq  \sum_{k=0}^N \alpha_k s^k,\qquad \alpha_0=\alpha_1=1,\qquad \alpha_k\deq e^{-\eta^k},\quad k\geq2.
\end{equation}
Let \(K,K_{\leq N}:\R^d\times\R^d\to\R\) be the inner-product kernels associated with \(h\) and \(h_{\leq N}\), respectively, and let \(\cH\) and \(\cH_{\leq N}\subset\cH\) be the corresponding RKHSs.
For \(\nu\in\{\mu_d,\widehat\mu_{d,n}\}\), Gaussian-mixture moment bounds give
\begin{equation}\label{eq:lacunary-kernel-admissibility}
    \int K(\bu,\bu)\nu(\de\bu)\leq 1+C+\sum_{k\geq2}e^{-\eta^k}(Ck)^k<\infty.
\end{equation}
Hence, \(K\) is an admissible kernel, and the inclusion operators \(\cS_0:\cH\to L^2(\widehat\mu_{d,n})\) and \(\cT_0:\cH\to L^2(\mu_d)\) are bounded. Let \(\cS_{0,\leq N}\) and \(\cT_{0,\leq N}\) be their restrictions to \(\cH_{\leq N}\), and define
\[
    \widehat\cK_0\deq \cS_0\cS_0^\ast,
    \qquad
    \cK_0\deq \cT_0\cS_0^\ast,
    \qquad
    \widehat\cK_{0,\leq N}\deq \cS_{0,\leq N}\cS_{0,\leq N}^\ast,
    \qquad
    \cK_{0,\leq N}\deq \cT_{0,\leq N}\cS_{0,\leq N}^\ast.
\]
These are respectively the full and truncated empirical and population-transfer kernel operators.

\begin{lemma}\label{lem:lacunary-kernel-polynomial-scales}
    Suppose that \(\pi_0=\gamma_1\). Fix \(d\geq 1\), \(n\geq 2\), and a training realization \(\{\bx_{0,i}\}_{i=1}^n\). For every \(N\geq 2\), let \(\lambda_N\) denote the smallest positive eigenvalue of \(\widehat\cK_{0,\leq N}\), and
    \[
        I_N
        \deq 
        \sup_{0\neq P\in\mathscr{P}_N}
        \frac{\|P\|_{L^2(\mu_d)}}
            {\|P\|_{L^2(\widehat\mu_{d,n})}}.
    \]
    Then \(\operatorname{Ran}(\widehat\cK_{0,\leq N})=\mathscr P_N\), and there exists a constant \(C=C(d,\sigma,\bX)>0\) independent of \(N\) such that
    \[
        e^{-\eta^N-CN\log(N+2)}\leq\lambda_N\leq e^{-\eta^N+CN\log(N+2)},\qquad I_N\leq e^{CN\log(N+2)}.
    \]
    Furthermore,
    \[
        \|\widehat{\cK}_0 - \widehat{\cK}_{0,\leq N}\|_{\op} + \|\cK_0 - \cK_{0,\leq N}\|_{\op} \leq e^{-\eta^{N+1}+CN\log(N+2)}.
    \]
\end{lemma}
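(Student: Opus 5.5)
The plan is to work entirely in the feature-space picture of Lemma~\ref{lem:rkhs_decomposition}, adapted to the truncated kernel. We write $K_{\leq N}(\bu,\bu')=\sum_{k=0}^N \alpha_k d^{-k}\<\bu^{\otimes k},\bu'^{\otimes k}\>_\frob$, so that $\cH_{\leq N}$ consists exactly of the polynomials $P=\sum_{k\le N}\<\bA_k,\bu^{\otimes k}\>_\frob$ (symmetric tensors), with norm
\[
\|P\|_{\cH_{\leq N}}^2=\sum_{k\le N} \alpha_k^{-1}d^{k}\|\bA_k\|_\frob^2 .
\]
All coefficients are positive, so $\cH_{\leq N}=\mathscr P_N$ as sets. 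Since $\widehat\mu_{d,n}$ has a positive density, $\cS_{0,\le N}$ is injective on the finite-dimensional space $\mathscr P_N$. It follows that $\operatorname{Ran}(\widehat\cK_{0,\leq N})=\operatorname{Ran}(\cS_{0,\leq N})=\mathscr P_N$.

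The positive eigenvalues of $\widehat\cK_{0,\le N}$ coincide with those of $\cS_{0,\le N}^\ast\cS_{0,\le N}$ on $\cH_{\le N}$. Hence $\lambda_N=\min_{P\ne0}\|P\|^2_{L^2(\widehat\mu_{d,n})}/\|P\|^2_{\cH_{\le N}}$. To bound this Rayleigh quotient, we sandwich $\widehat\mu_{d,n}$ between Gaussians $c_-\gamma_{s_-}\le\widehat\mu_{d,n}\le c_+\gamma_{s_+}$ (pointwise density comparison, as in the proof of Lemma~\ref{lem:empirical-score-polynomial-extrapolation}, with constants depending on $\bX,\sigma,d$). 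Lemma~\ref{lem:gaussian-polynomial-estimates} then gives $\|P\|^2_{L^2(\widehat\mu_{d,n})}=e^{\pm CN\log(N+2)}\sum_k\|\bA_k\|_\frob^2$.

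For the comparison with the RKHS norm, note that $\alpha_k^{-1}d^k\le e^{\eta^N}d^N$ for $k\le N$. Using the lower bound, this gives $\lambda_N\ge e^{-\eta^N-CN\log(N+2)}$, where $d^N$ is absorbed into $e^{CN}$. For the upper bound we test with $P=\<\bA,\bu^{\otimes N}\>$ for a fixed unit symmetric $\bA$, whose RKHS norm squared is $e^{\eta^N}d^N$; this yields $\lambda_N\le e^{-\eta^N+CN\log(N+2)}$. The bound on $I_N$ follows by applying Lemma~\ref{lem:gaussian-polynomial-estimates} twice, once with $\mu_d=\gamma_1$ and once with the lower Gaussian comparison for $\widehat\mu_{d,n}$; both norms are within $e^{\pm CN\log(N+2)}$ of $\sum_k\|\bA_k\|_\frob^2$.

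For the tail estimate, observe that $\widehat\cK_0-\widehat\cK_{0,\le N}$ is the integral operator on $L^2(\widehat\mu_{d,n})$ with positive semidefinite kernel $R_N=\sum_{k>N}\alpha_k(\<\bu,\bu'\>/d)^k$. Its operator norm is bounded by the trace $\int R_N(\bu,\bu)\,\de\widehat\mu_{d,n}$. Similarly, Cauchy--Schwarz gives
\[
\|\cK_0-\cK_{0,\le N}\|_\op^2\le \int\!\!\int R_N(\bu,\bu')^2\,\de\mu_d\,\de\widehat\mu_{d,n}\le \int R_N\,\de\mu_d\int R_N\,\de\widehat\mu_{d,n}.
\]
Each diagonal integral is at most $\sum_{k>N}e^{-\eta^k}\E\|\bu\|^{2k}d^{-k}\le\sum_{k>N}e^{-\eta^k}(Ck)^k$, using Gaussian-mixture moments as in \eqref{eq:lacunary-kernel-admissibility}. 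The lacunary decay $\eta^{k}\ge\eta^{N+1}+(k-N-1)$ makes this sum dominated by its first term, giving $e^{-\eta^{N+1}+C(N+1)\log(N+2)}$. The main obstacle I anticipate is bookkeeping: making the constants in the Gaussian comparison and in Lemma~\ref{lem:gaussian-polynomial-estimates} uniform in $N$ (they depend only on $d,\sigma,\bX$ and the chosen $s_\pm$), and checking that the $k!$-type factors indeed fit inside $e^{CN\log(N+2)}$.
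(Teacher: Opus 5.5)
Your proposal is correct and follows the paper's proof essentially step for step. It uses the same feature-space (equivalently RKHS-norm) Rayleigh quotient for $\lambda_N$, the same Gaussian sandwich $c_-\gamma_{s_-}\le\widehat\mu_{d,n}\le c_+\gamma_{s_+}$ combined with Lemma~\ref{lem:gaussian-polynomial-estimates}, the same degree-$N$ test tensor for the upper bound, and the same two-sided comparison for $I_N$. For the tail, you use the trace bound $\|\widehat\cK_0-\widehat\cK_{0,\leq N}\|_{\op}\le\int R_N(\bu,\bu)\,\de\widehat\mu_{d,n}$ together with $R_N(\bu,\bu')^2\le R_N(\bu,\bu)R_N(\bu',\bu')$, instead of the paper's Hilbert--Schmidt/Minkowski estimate; this is an equally valid and slightly cleaner variant.

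One justification needs repair. The inequality $\eta^k\ge\eta^{N+1}+(k-N-1)$ cannot by itself make $\sum_{k>N}e^{-\eta^k}(Ck)^k$ dominated by its first term, because $e^{-(k-N-1)}(Ck)^k$ is not summable. Use instead $\eta^k-\eta^{N+1}\ge(1-\eta^{-1})\eta^k$ for $k\ge N+2$. Then the terms beyond $k=N+1$ contribute at most $e^{-\eta^{N+1}}\sum_{k\geq1}e^{-(1-\eta^{-1})\eta^k+k\log(Ck)}$, which is an $N$-independent constant times $e^{-\eta^{N+1}}$.
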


\begin{proof}
    Using the tensor feature representation as in Lemma~\ref{lem:rkhs_decomposition}, set
    \[
        \cF_{\leq N}\deq \bigoplus_{k=0}^N\mathrm{Sym}^k(\R^d),
        \qquad
        \phi_{\leq N}(\bu)\deq \left(\sqrt{\frac{\alpha_k}{d^k}}\bu^{\otimes k}\right)_{k=0}^N,
    \]
    such that
    \[
        K_{\leq N}(\bu,\bu')=\<\phi_{\leq N}(\bu),\phi_{\leq N}(\bu')\>_{\cF_{\leq N}}.
    \]
    Let \(\Phi_N:\cF_{\leq N}\to L^2(\widehat\mu_{d,n})\) be the feature operator \((\Phi_N(\bW))(\bu)\deq \<\bW,\phi_{\leq N}(\bu)\>_{\cF_{\leq N}}\). Then, \(\Phi_N^\ast f = \E_{\bu\sim\widehat{\mu}_{d,n}}[f(\bu)\phi_{\leq N}(\bu)]\) and \(\widehat{\cK}_{0,\leq N}=\Phi_N\Phi_N^\ast\). Thus,
    \[
        \bC_{N} \deq  \Phi_N^\ast\Phi_N = \E_{\bu\sim\widehat{\mu}_{d,n}}[\phi_{\leq N}(\bu)\otimes\phi_{\leq N}(\bu)]
    \]
    has the same nonzero spectrum as \(\widehat{\cK}_{0,\leq N}\). Note that, for every \(\bW=(\bW_k)_{k=0}^N\),
    \[
    \<\bW,\bC_N\bW\>
    =
    \E_{\bu\sim\widehat\mu_{d,n}}
    \left[
    \left|\<\bW,\phi_{\leq N}(\bu)\>\right|^2
    \right]
    =
    \|P_{\bW}\|_{L^2(\widehat\mu_{d,n})}^2,
    \]
    where
    \begin{equation}\label{eq:polynomial-from-tensor}
        P_{\bW}(\bu)
        \deq 
        \<\bW,\phi_{\leq N}(\bu)\>_{\cF_{\leq N}}
        =
        \sum_{k=0}^N
        \<\bA_k,\bu^{\otimes k}\>_\frob,
        \qquad
        \bA_k
        \deq 
        \sqrt{\frac{\alpha_k}{d^k}}\bW_k.
    \end{equation}
    If the quadratic form is \(0\), the fact that \(\widehat\mu_{d,n}\) has a strictly positive density implies that the polynomial \(P_{\bW}\) vanishes identically and hence that \(\bW=0\). Therefore, \(\bC_N\) is positive definite. Since \(\Phi_N\) has finite-dimensional domain, it follows that \(\operatorname{Ran}(\widehat\cK_{0,\leq N}) =  \operatorname{Ran}(\Phi_N) = \mathscr{P}_N\).
    Moreover,
    \[
    \lambda_N
    =
    \min_{\|\bW\|_{\cF_{\leq N}}=1}
    \<\bW,\bC_N\bW\>.
    \]

    Let \(\bW=(\bW_k)_{k=0}^N\) be a unit vector in \(\cF_{\leq N}\) and \(P_{\bW}\) the corresponding polynomial from \eqref{eq:polynomial-from-tensor}. Since \(N\geq2\) and \(\alpha_kd^{-k}\geq e^{-\eta^N}d^{-N}\) for every \(0\leq k\leq N\),
    \[
        \sum_{k=0}^N\|\bA_k\|_\frob^2
        =
        \sum_{k=0}^N\frac{\alpha_k}{d^k}\|\bW_k\|_\frob^2
        \geq
        e^{-\eta^N}d^{-N}.
    \]
    Choose \(0<s_-<\sigma<s_+<1\). As in the proof of Lemma~\ref{lem:empirical-score-polynomial-extrapolation}, there exist \(c_-,c_+>0\) such that \(c_-\gamma_{s_-}\leq\widehat\mu_{d,n}\leq c_+\gamma_{s_+}\) in the sense of pointwise comparison of densities. By Lemma~\ref{lem:gaussian-polynomial-estimates}, there exists a constant \(C=C(d,s_-)\) such that
    \begin{align*}
        \<\bW,\bC_N\bW\>_{\cF_{\leq N}}= \|P_{\bW}\|_{L^2(\widehat\mu_{d,n})}^2\geq c_-\|P_{\bW}\|_{L^2(\gamma_{s_-})}^2\geq e^{-\eta^N-CN\log(N+2)},
    \end{align*}
    where we absorbed \(c_-\) and \(d^{-N}\) into the last exponential. Taking the infimum over unit vectors \(\bW\) proves the lower bound for \(\lambda_N\).

    For the reverse bound, take \(\bW_N=\be_1^{\otimes N}\) and \(\bW_k=0\) for \(k<N\). This is a unit vector in \(\cF_{\leq N}\). Using \(\widehat\mu_{d,n}\leq c_+\gamma_{s_+}\) and the upper bound in Lemma~\ref{lem:gaussian-polynomial-estimates}, we obtain
    \[
        \lambda_N \leq \|P_{\bW}\|_{L^2(\widehat\mu_{d,n})}^2 \leq e^{-\eta^N+CN\log(N+2)}
    \]
    where we potentially increase the constant \(C\). This proves the upper bound for \(\lambda_N\).

    Next, let \(P\in\mathscr{P}_N\setminus\{0\}\). We combine the estimates obtained from Lemma~\ref{lem:gaussian-polynomial-estimates} to upper bound \(\|P\|_{L^2(\mu_d)}=\|P\|_{L^2(\gamma_1)}\) and lower bound \(\|P\|_{L^2(\widehat\mu_{d,n})}\gtrsim\|P\|_{L^2(\gamma_{s_-})}\) to obtain
    \[
        \|P\|_{L^2(\mu_d)}^2\leq e^{CN\log(N+2)}\|P\|_{L^2(\widehat\mu_{d,n})}^2.
    \]
    This proves the bound for \(I_N\).

    It remains to control the operator differences. Define
    \[
        R_N(\bu,\bu')\deq \sum_{k>N}e^{-\eta^k}\frac{\<\bu,\bu'\>^k}{d^k},\qquad K(\bu,\bu')-K_{\leq N}(\bu,\bu')=R_N(\bu,\bu').
    \]
    For \(\widetilde\mu\in\{\mu_d,\widehat\mu_{d,n}\}\), let
    \(\bu\sim\widetilde\mu\) and
    \(\bu'\sim\widehat\mu_{d,n}\) be independent. Sub-Gaussian moment bounds give
    \[
        d^{-1}\|\<\bu,\bu'\>\|_{L^{2k}(\widetilde\mu\otimes\widehat\mu_{d,n})}\leq  d^{-1}\|\|\bu\|_2\|_{L^{2k}(\widetilde\mu)}\|\|\bu'\|_2\|_{L^{2k}(\widehat\mu_{d,n})}\leq
        Ck
    \]
    for some \(C=C(d,\sigma,\bX)\). Then, by Cauchy-Schwarz inequality and Minkowski's inequality,
    \[
        \|\widehat{\cK}_0-\widehat{\cK}_{0,\leq N}\|_\op \leq \|R_N\|_{L^2(\widehat\mu_{d,n}\otimes\widehat\mu_{d,n})} \leq \sum_{k>N} e^{-\eta^k}(Ck)^k \leq e^{-\eta^{N+1}+CN\log(N+2)}\,.
    \]
    A similar argument gives the analogous bound for \(\|\cK_0-\cK_{0,\leq N}\|_\op\).
\end{proof}

Recall that \(\cS=\widehat\cP\cS_0\) and \(\cT=\cP\cT_0\), where \(\widehat\cP\) and \(\cP\) are pullback operators from \(L^2(\widehat\mu_{d,n})\) to \(L^2(\widehat{\pi}_{d,n})\) and from \(L^2(\mu_d)\) to \(L^2(\pi_d)\), respectively. The next lemma shows that, at training time \(\sft_N=\lambda_N^{-2}\), gradient flow shadows the degree-\(N\) polynomial projection of its target.

\begin{lemma}\label{lem:gradient-flow-shadows-polynomial-projections}
    Under the specialization \(\pi_0=\gamma_1\), fix \(d\geq 1\), \(n\geq 2\), and a training realization \(\{\bx_{0,i}\}_{i=1}^n\). For every \(N\geq2\), let \(\lambda_N\) denote the smallest positive eigenvalue of \(\widehat\cK_{0,\leq N}\), and let \(\proj_{\mathscr{P}_N}\) denote the \(L^2(\widehat\mu_{d,n})\)-orthogonal projection onto \(\mathscr{P}_N\). Then, for every \(\bv\in \S^{d-1}\),
    \[
        \lim_{N\to\infty}\|\cT_0\widehat{f}_{\sft_N,\bv} - \proj_{\mathscr{P}_N}\widehat{f}_{\bv}^\star\|_{L^2(\mu_d)}=\lim_{N\to\infty}\|\cT \widehat{f}_{\sft_N,\bv} - \cP\proj_{\mathscr{P}_N}\widehat{f}_{\bv}^\star\|_{L^2(\pi_d)}=0,\qquad \sft_N\deq \lambda_N^{-2}.
    \]
\end{lemma}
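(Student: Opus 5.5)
The plan is to work entirely on the marginal spaces $L^2(\widehat\mu_{d,n})$ and $L^2(\mu_d)$. By \eqref{eq:gf_solution} and the factorizations $\cS=\widehat\cP\cS_0$, $\cT=\cP\cT_0$, we have $\widehat f_{\sft,\bv}=\cS_0^\ast q_\sft(\widehat\cK_0)\widehat f^\star_\bv$, with $q_\sft$ as in \eqref{eq:q_sft_def}. Hence
\[
\cT_0\widehat f_{\sft,\bv}=\cK_0\,q_\sft(\widehat\cK_0)\widehat f^\star_\bv .
\]
Since $\widehat\cP$ and $\cP$ are isometries and $\cP$ intertwines $\cT_0$ with $\cT$, the second limit in the statement is the same quantity as the first. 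It therefore suffices to treat $\|\cT_0\widehat f_{\sft_N,\bv}-\proj_{\mathscr P_N}\widehat f^\star_\bv\|_{L^2(\mu_d)}$. I split this into a truncation error,
\[
\bigl(\cK_0q_{\sft}(\widehat\cK_0)-\cK_{0,\leq N}q_{\sft}(\widehat\cK_{0,\leq N})\bigr)\widehat f^\star_\bv ,
\]
and a shadowing error, $\cK_{0,\leq N}q_{\sft}(\widehat\cK_{0,\leq N})\widehat f^\star_\bv-\proj_{\mathscr P_N}\widehat f^\star_\bv$. Throughout, $\|\widehat f^\star_\bv\|_{L^2(\widehat\mu_{d,n})}<\infty$ is a fixed constant.

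\emph{Shadowing error.} On $\cH_{\leq N}$ the maps $\cT_{0,\leq N}$ and $\cS_{0,\leq N}$ are both inclusions of the same polynomial into different $L^2$ spaces. The function $\cS_{0,\leq N}^\ast q_\sft(\widehat\cK_{0,\leq N})\widehat f^\star_\bv$ is a polynomial of degree at most $N$. Its image in $L^2(\widehat\mu_{d,n})$ is $(\id-e^{-\sft\widehat\cK_{0,\leq N}})\widehat f^\star_\bv$, and its image in $L^2(\mu_d)$ is $\cK_{0,\leq N}q_\sft(\widehat\cK_{0,\leq N})\widehat f^\star_\bv$. Lemma~\ref{lem:lacunary-kernel-polynomial-scales} gives $\operatorname{Ran}(\widehat\cK_{0,\leq N})=\mathscr P_N$ with all positive eigenvalues at least $\lambda_N$. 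Therefore the polynomial $(\id-e^{-\sft\widehat\cK_{0,\leq N}})\widehat f^\star_\bv-\proj_{\mathscr P_N}\widehat f^\star_\bv$ has $L^2(\widehat\mu_{d,n})$-norm at most $e^{-\sft\lambda_N}\|\widehat f^\star_\bv\|$. Transferring to $L^2(\mu_d)$ costs the factor $I_N$, so the shadowing error is at most
\[
I_Ne^{-\sft_N\lambda_N}\|\widehat f^\star_\bv\|\le e^{CN\log(N+2)-1/\lambda_N}.
\]
Since $1/\lambda_N\ge e^{\eta^N-CN\log(N+2)}$, this tends to $0$.

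\emph{Truncation error.} Write $\Delta_N$ for the operator-difference bound $e^{-\eta^{N+1}+CN\log(N+2)}$ from Lemma~\ref{lem:lacunary-kernel-polynomial-scales}, and decompose
\[
(\cK_0-\cK_{0,\leq N})q_\sft(\widehat\cK_0)+\cK_{0,\leq N}\bigl(q_\sft(\widehat\cK_0)-q_\sft(\widehat\cK_{0,\leq N})\bigr).
\]
The first term has norm at most $\sft_N\Delta_N$, using $\|q_\sft\|_\infty\le\sft$ on $[0,\infty)$. For the second term I use $q_\sft(A)=\int_0^\sft e^{-\sfs A}\,\de\sfs$ together with the Duhamel estimate $\|e^{-\sfs A}-e^{-\sfs B}\|_\op\le\sfs\|A-B\|_\op$ already used in Lemma~\ref{lem:train_test_error_linearization_gf}. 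This gives a bound of $\tfrac12\sft_N^2\Delta_N\|\cK_{0,\leq N}\|_\op$, and $\|\cK_{0,\leq N}\|_\op$ is bounded uniformly in $N$ by the diagonal integrability \eqref{eq:lacunary-kernel-admissibility}. With $\sft_N=\lambda_N^{-2}\le e^{2\eta^N+CN\log(N+2)}$, the whole truncation error is at most $e^{4\eta^N-\eta^{N+1}+C'N\log(N+2)}$. Because $\eta>4$, the exponent $\eta^N(4-\eta)$ dominates the logarithmic correction, so this also tends to $0$.

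The main obstacle is this second-order Duhamel term. It is the only place where the choice $\sft_N=\lambda_N^{-2}$ and the hypothesis $\eta>4$ are used sharply: the quadratic-in-$\sft$ perturbation must be beaten by the lacunary tail $\alpha_{N+1}=e^{-\eta^{N+1}}$. If the crude bound $\sft^2\Delta_N$ proved insufficient, I would fall back on splitting the integral at $\sfs\approx 1/\lambda_N$ and using the spectral gap of $\widehat\cK_{0,\leq N}$ beyond that point. With $\eta>4$, however, the crude bound already appears to close.
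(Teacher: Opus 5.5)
Your proposal is correct and follows the paper's proof essentially step by step. It uses the same split into the truncation error $(\cK_0-\cK_{0,\leq N})q_{\sft_N}(\widehat\cK_0)+\cK_{0,\leq N}(q_{\sft_N}(\widehat\cK_0)-q_{\sft_N}(\widehat\cK_{0,\leq N}))$, killed by the Duhamel bound together with $\eta>4$, and the shadowing error $-e^{-\sft_N\widehat\cK_{0,\leq N}}\proj_{\mathscr{P}_N}\widehat f^\star_\bv$, transferred to $L^2(\mu_d)$ via $I_N$. The only cosmetic difference is that you bound $\sup_N\|\cK_{0,\leq N}\|_{\op}$ directly from $K_{\leq N}(\bu,\bu)\le K(\bu,\bu)$, which holds because the coefficients are nonnegative, whereas the paper uses $\|\cK_0\|_{\op}$ plus the tail estimate; both are valid.
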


\begin{proof}

    The diagonal-integrability estimate~\eqref{eq:lacunary-kernel-admissibility} ensures that \(\cS_0\) and \(\cT_0\) are bounded. Moreover, the tail bound in Lemma~\ref{lem:lacunary-kernel-polynomial-scales} gives
    \[
        \sup_{N\geq2}\|\cK_{0,\leq N}\|_\op\leq\|\cK_0\|_\op+\sup_{N\geq2}\|\cK_0-\cK_{0,\leq N}\|_\op <\infty.
    \]
    Since \(\widehat\cP\) is an isometry, we may rewrite~\eqref{eq:gradient_flow} as
    \[
        \partial_\sft\widehat f_{\sft,\bv}=-\cS_0^\ast(\cS_0\widehat f_{\sft,\bv}-\widehat f^\star_\bv).
    \]
    Recall the spectral filter \(q_\sft\) from~\eqref{eq:q_sft_def}. The solution formula therefore gives \( \cT_0 \widehat{f}_{\sft_N,\bv} = \cK_0 q_{\sft_N}(\widehat{\cK}_0)\widehat{f}_{\bv}^\star\)
    and we may decompose
    \[
        \cK_0q_{\sft_N}(\widehat{\cK}_0) - \cK_{0,\leq N}q_{\sft_N}(\widehat{\cK}_{0,\leq N}) = (\cK_0-\cK_{0,\leq N})q_{\sft_N}(\widehat{\cK}_0) + \cK_{0,\leq N}(q_{\sft_N}(\widehat{\cK}_0)-q_{\sft_N}(\widehat{\cK}_{0,\leq N})).
    \]
    By~\eqref{eq:variation_of_parameters},
    \[
        \|e^{-\sfs \widehat{\cK}_0} - e^{-\sfs \widehat{\cK}_{0,\leq N}}\|_\op \leq \sfs\|\widehat{\cK}_0 - \widehat{\cK}_{0,\leq N}\|_\op
    \]
    for every \(\sfs\geq0\). Since \(q_{\sft_N}(\widehat{\cK}_0) = \int_0^{\sft_N} e^{-\sfs \widehat{\cK}_0}\,d\sfs\), it follows that
    \[
        \|q_{\sft_N}(\widehat{\cK}_0)-q_{\sft_N}(\widehat{\cK}_{0,\leq N})\|_\op \leq \sft_N^2\|\widehat{\cK}_0 - \widehat{\cK}_{0,\leq N}\|_\op,\qquad \|q_{\sft_N}(\widehat{\cK}_0)\|_\op\leq \sft_N.
    \]
    Then, by Lemma~\ref{lem:lacunary-kernel-polynomial-scales} and the fact that \(\|\cK_{0,\leq N}\|_\op\) is bounded uniformly in \(N\),
    \begin{align*}
        \| \cK_0q_{\sft_N}(\widehat{\cK}_0)- \cK_{0,\leq N}q_{\sft_N}(\widehat{\cK}_{0,\leq N})\|_\op & \leq \sft_N \|\cK_0-\cK_{0,\leq N}\|_\op+ \|\cK_{0,\leq N}\|_\op \sft_N^2
        \|\widehat{\cK}_0-\widehat{\cK}_{0,\leq N}\|_\op 
        \\ &  \lesssim e^{(4-\eta)\eta^N + CN\log(N+2)} \to 0
    \end{align*}
    as \(N\to\infty\). Since \(\|\widehat{f}_{\bv}^\star\|_{L^2(\widehat{\mu}_{d,n})}\leq 1\) by conditional Jensen's inequality, 
    \begin{equation}\label{eq:full-to-truncated-operator-bound}
        \|\cK_0q_{\sft_N}(\widehat{\cK}_0)\widehat{f}_{\bv}^\star - \cK_{0,\leq N}q_{\sft_N}(\widehat{\cK}_{0,\leq N})\widehat{f}_{\bv}^\star\|_{L^2(\mu_d)}\to 0
    \end{equation}
    as \(N\to\infty\).

    It remains to compare the truncated flow with the polynomial projection. Note that
    \[
        \widehat{\cK}_{0,\leq N}q_{\sft_N}(\widehat{\cK}_{0,\leq N})
        =\id-e^{-\sft_N\widehat{\cK}_{0,\leq N}}.
    \]
    Since \(\operatorname{Ran}(\widehat{\cK}_{0,\leq N})=\mathscr{P}_N\) and \(\widehat{\cK}_{0,\leq N}\) is self-adjoint, \(\ker(\widehat{\cK}_{0,\leq N})=\mathscr{P}_N^\perp\), and therefore
    \begin{align*}
        \widehat{\cK}_{0,\leq N}q_{\sft_N}(\widehat{\cK}_{0,\leq N})\widehat{f}^{\star}_\bv- \proj_{\mathscr{P}_N}\widehat{f}^{\star}_\bv= - e^{-\sft_N\widehat{\cK}_{0,\leq N}}\proj_{\mathscr{P}_N}\widehat{f}^{\star}_\bv.
    \end{align*}
    By definition of \(\lambda_N\), it follows that
    \[
        \|\widehat{\cK}_{0,\leq N}q_{\sft_N}(\widehat{\cK}_{0,\leq N})\widehat{f}^{\star}_\bv - \proj_{\mathscr{P}_N}\widehat{f}^{\star}_\bv\|_{L^2(\widehat{\mu}_{d,n})} \leq e^{-\sft_N\lambda_N}\|\widehat{f}^{\star}_\bv\|_{L^2(\widehat{\mu}_{d,n})} \leq e^{-\sft_N\lambda_N}.
    \]
    Recall the definition of \(I_N\) from Lemma~\ref{lem:lacunary-kernel-polynomial-scales}. More precisely, if
    \[
        g_N\deq q_{\sft_N}(\widehat\cK_{0,\leq N})\widehat f^\star_\bv,
        \qquad
        P_N(\bu)\deq \int K_{\leq N}(\bu,\bu')g_N(\bu')\,\widehat\mu_{d,n}(\de\bu'),
    \]
    then \(P_N\in\mathscr P_N\), while \(\widehat\cK_{0,\leq N}g_N\) and \(\cK_{0,\leq N}g_N\) are the realizations of \(P_N\) in \(L^2(\widehat\mu_{d,n})\) and \(L^2(\mu_d)\), respectively. Identifying \(\proj_{\mathscr P_N}\widehat f^\star_\bv\) with its (unique) polynomial representative, the definition of \(I_N\), applied to \(P_N-\proj_{\mathscr P_N}\widehat f^\star_\bv\), gives
    \begin{align*}
        \|\cK_{0,\leq N}q_{\sft_N}(\widehat{\cK}_{0,\leq N})\widehat{f}^{\star}_\bv
        - \proj_{\mathscr{P}_N}\widehat{f}^{\star}_\bv\|_{L^2(\mu_d)} & \leq I_N e^{-\sft_N\lambda_N}
       \\ &  \leq \exp\left(CN\log(N+2)- e^{\eta^N-CN\log(N+2)}\right) \to 0
    \end{align*}
    as \(N\to\infty\), where the last inequality follows from Lemma~\ref{lem:lacunary-kernel-polynomial-scales}. Combining this with~\eqref{eq:full-to-truncated-operator-bound} yields
    \[
        \|\cT_0\widehat{f}_{\sft_N,\bv} - \proj_{\mathscr{P}_N}\widehat{f}_\bv^\star\|_{L^2(\mu_d)} \to 0.
    \]
    The upper bound on \(\lambda_N\) in Lemma~\ref{lem:lacunary-kernel-polynomial-scales} also gives \(\lambda_N\to0\), and hence \(\sft_N=\lambda_N^{-2}\to\infty\).
    Since \(\cT=\cP\cT_0\) and \(\cP\) is an isometry, the second convergence follows immediately.
\end{proof}

We are finally ready to prove the main result of this subsection, which shows that the gradient-flow risk can be unbounded for a universal kernel.

\begin{proposition}\label{prop:gradient-flow-risk-diverges}
     Suppose that \(\pi_0=\gamma_1\). Fix \(d\geq 1\) and \(n\geq 2\). Let \(K\) be the inner-product kernel associated with~\eqref{eq:lacunary-kernel}. Then \(K\) satisfies Assumption~\ref{ass:universal-kernel}, and, for every fixed \(\bv\in\S^{d-1}\), almost surely over \(\{\bx_{0,i}\}_{i=1}^n\sim\pi_0^{\otimes n}\),
     \[
        \limsup_{\sft\to\infty}\cR_\bv(\widehat{f}_{\sft,\bv}) = \infty.
     \]
\end{proposition}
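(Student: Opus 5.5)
Two facts need proving: that $K$ satisfies Assumption~\ref{ass:universal-kernel}, and that the risk blows up along a subsequence of training times. For the first, the coefficients in \eqref{eq:lacunary-kernel} are $\alpha_0=\alpha_1=1$ and $\alpha_k=e^{-\eta^k}>0$ for $k\ge2$. All are nonnegative and all are positive from $k_0=0$ on, so the assumption holds with $k_0=0$. Positive definiteness comes from the nonnegative power series, and diagonal integrability is \eqref{eq:lacunary-kernel-admissibility}. So Lemmas~\ref{lem:ridgeless-rkhs-density}--\ref{lem:gradient-flow-shadows-polynomial-projections} all apply.

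For the divergence, fix $\bv$. Since $\pi_0=\gamma_1$ has a density, the projections $\<\bv,\bx_{0,i}\>$, $i\in[n]$, are almost surely pairwise distinct. I work on this event and write $p_N\deq\proj_{\mathscr P_N}\widehat f^\star_\bv$. Take the training times $\sft_N=\lambda_N^{-2}$ from Lemma~\ref{lem:gradient-flow-shadows-polynomial-projections}; they tend to infinity, and the lemma gives
\[
\|\cT\widehat f_{\sft_N,\bv}-\cP p_N\|_{L^2(\pi_d)}\to0 .
\]
Since $\|\sfz_\bv\|_{L^2(\pi_d)}=1$, the triangle inequality yields
\[
\cR_\bv(\widehat f_{\sft_N,\bv})^{1/2}\ \ge\ \|p_N\|_{L^2(\mu_d)}-1-o_N(1),
\]
using that $\cP$ is an isometry from $L^2(\mu_d)$. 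It therefore suffices to show $\limsup_N\|p_N\|_{L^2(\mu_d)}=\infty$.

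This is where Lemma~\ref{lem:empirical-score-polynomial-extrapolation} enters. Here $\mu_d=\gamma_1$, because $\rho^2+\sigma^2=1$, so $\|p_N\|_{L^2(\mu_d)}=\|p_N\|_{L^2(\gamma_r)}$ with $r=1>\sigma$. The lemma applies at $r=1$ and gives $\sup_N\|p_N\|_{L^2(\gamma_1)}=\infty$.

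The remaining gap is between a supremum and a limsup. A finite set of $N$ cannot carry an infinite supremum, because each $p_N$ is a polynomial with finite Gaussian norm. Hence $\limsup_N\|p_N\|_{L^2(\gamma_1)}=\infty$, and along a subsequence of the $\sft_N$ the risk diverges. The main obstacle, the non-extrapolability of the empirical score, is already carried by the complex-analytic argument of Lemma~\ref{lem:empirical-score-polynomial-extrapolation}. What is left here is checking its hypotheses ($r=1>\sigma$ and distinct projections) and confirming that the shadowing error in Lemma~\ref{lem:gradient-flow-shadows-polynomial-projections} vanishes uniformly, which it does because $\eta>4$.
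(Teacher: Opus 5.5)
Your proposal is correct and matches the paper's proof essentially step for step. You verify Assumption~\ref{ass:universal-kernel} from the strictly positive coefficients, use the almost-sure distinctness of $\<\bv,\bx_{0,i}\>$ to apply Lemma~\ref{lem:empirical-score-polynomial-extrapolation} at $r=1$, and combine this with Lemma~\ref{lem:gradient-flow-shadows-polynomial-projections} along $\sft_N=\lambda_N^{-2}$, restricted to a subsequence on which $\|p_N\|_{L^2(\mu_d)}\to\infty$ (your sup-versus-limsup remark is exactly the paper's subsequence selection).
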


\begin{proof}
    All Taylor coefficients of \(h\) are strictly positive, so Assumption~\ref{ass:universal-kernel} holds. Let \(\bv\in\S^{d-1}\). Since \(\pi_0=\gamma_1\), almost surely \(\{\<\bx_{0,i},\bv\>\}_{i=1}^n\) are pairwise distinct. Conditional on such a training realization, Lemma~\ref{lem:empirical-score-polynomial-extrapolation}, applied with \(r=1\), gives
    \[
        \sup_N \|\proj_{\mathscr{P}_N}\widehat{f}_\bv^\star\|_{L^2(\mu_d)} = \infty.
    \]
    On the other hand, Lemma~\ref{lem:gradient-flow-shadows-polynomial-projections} gives a sequence \(\sft_N\to\infty\) such that
    \[
        \|\cT\widehat{f}_{\sft_N,\bv} - \cP\proj_{\mathscr{P}_N}\widehat{f}_\bv^\star\|_{L^2(\pi_d)} \to 0.
    \]
    Choose a subsequence \(N_\ell\) along which \(\|\proj_{\mathscr{P}_{N_\ell}}\widehat{f}_\bv^\star\|_{L^2(\mu_d)}\to\infty\). Since \(\cP:L^2(\mu_d)\to L^2(\pi_d)\) is an isometry,
    \[
        \|\cT \widehat{f}_{\sft_{N_\ell},\bv}\|_{L^{2}(\pi_d)} \geq \|\cP\proj_{\mathscr{P}_{N_\ell}}\widehat{f}_\bv^\star\|_{L^2(\pi_d)} - \|\cT \widehat{f}_{\sft_{N_\ell},\bv} - \cP\proj_{\mathscr{P}_{N_\ell}}\widehat{f}_\bv^\star\|_{L^2(\pi_d)} \to \infty
    \]
    as \(\ell\to\infty\).
    Finally, \(\|\sfz_\bv\|_{L^2(\pi_d)}=1\), and hence
    \[
        \cR_\bv(\widehat f_{\sft_{N_\ell},\bv})^{1/2}
        =
        \|\cT\widehat f_{\sft_{N_\ell},\bv}-\sfz_\bv\|_{L^2(\pi_d)}
        \geq
        \|\cT\widehat f_{\sft_{N_\ell},\bv}\|_{L^2(\pi_d)}-1
        \to \infty
    \]
    as \(\ell\to\infty\).
\end{proof}

\subsection{Test error of the empirical Bayes denoiser}

We compute the population risk of the empirical Bayes denoiser~\eqref{eq:emp_bayes_opt_denoiser} itself. To do so, we use the following ``winner-take-all'' lemma, which shows that the empirical Bayes weights are asymptotically concentrated on a single training point.

\begin{lemma}\label{lem:wta-under-lsi}
Suppose that Assumptions~\ref{ass:high_dim} and~\ref{ass:input_dist} hold.
Then,
\begin{equation}\label{eq:wta-under-lsi}
    \E_{(\bx,\bz)\sim\pi_d}\left[1-\sum_{i=1}^n\omega_i(\rho\bx+\sigma\bz)^2\mid \bX\right]
    \prec \frac{1}{\sqrt{d}}.
\end{equation}
\end{lemma}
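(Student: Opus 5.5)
We work conditionally on the training data and bound the quantity inside the expectation pointwise. Since $1-\sum_i\omega_i^2 \le 1-\max_i\omega_i \le \sum_{k\neq i^\star}\omega_k/\omega_{i^\star}$, where $i^\star=i^\star(\bu)$ is the index of the largest weight, it suffices to show that, at a typical fresh point, a single weight dominates all the others by a margin large enough that the sum of the $n-1$ weight ratios is $O_\prec(d^{-1/2})$. Write $\bu=\rho\bx+\sigma\bz$. The log-weights are
\[
\log \omega_k(\bu)+\log Z = -\tfrac{1}{2\sigma^2}\|\rho\bx_{0,k}-\bu\|^2 = \tfrac{\rho}{\sigma^2}\<\bx_{0,k},\bu\> - \tfrac{\rho^2}{2\sigma^2}\|\bx_{0,k}\|^2 - \tfrac{1}{2\sigma^2}\|\bu\|^2 .
\]
The last term is common to all $k$. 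By \eqref{eq:norm_concentration_overlaps}, $\|\bx_{0,k}\|^2=\Tr(\bSigma)+O_\prec(\sqrt d)$ uniformly in $k$, and the overlaps $\<\bx_{0,k},\bu\>$ are of order $\sqrt d$. Hence the log-weights are $\frac{\rho}{\sigma^2}G_k$, where $G_k\deq\<\bx_{0,k},\bu\>-\frac{\rho}{2}(\|\bx_{0,k}\|^2-\Tr(\bSigma))$ is a collection of $n\asymp d$ random variables of size $O(\sqrt d)$. The problem therefore reduces to showing that, under the law of the test point, the top spacing $G_{(1)}-G_{(2)}$ exceeds $C\log d$ with probability $1-O_\prec(d^{-1/2})$. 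This is the extreme-value picture behind ``winner-take-all'': $n$ variables of standard deviation about $\sqrt d$ have typical gaps of size $\sqrt{d}/\sqrt{\log n}\gg \log d$.

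The steps are as follows.

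\textbf{Step 1.} Place ourselves on the very-high-probability event of \eqref{eq:norm_concentration_overlaps} and \eqref{eq:cov_concentration}, on which $\|\bx_{0,k}\|^2/d\lesssim 1$ and $\|\bX^\sT\bX/d\|_\op\lesssim1$.

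\textbf{Step 2.} Split the pointwise bound. On the event $\{G_{(1)}-G_{(2)}\ge \frac{\sigma^2}{\rho}\cdot 2\log n\}$, the sum of the weight ratios is at most $n\cdot n^{-2}=1/n$. Off this event, bound the quantity by $1$. It then suffices to show
\[
\P_{(\bx,\bz)}\bigl(G_{(1)}-G_{(2)}<C\log d \mid \bX\bigr)\prec d^{-1/2}.
\]

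\textbf{Step 3.} Control this gap probability with a union bound over pairs, together with an anti-concentration (small-ball) estimate for the differences
\[
G_k-G_j=\<\bx_{0,k}-\bx_{0,j},\bu\>+\text{const}.
\]
Conditionally on $\bX$, the Gaussian part $\sigma\<\bx_{0,k}-\bx_{0,j},\bz\>$ is exactly Gaussian with variance $\sigma^2\|\bx_{0,k}-\bx_{0,j}\|^2\asymp d$. Its density is therefore $O(d^{-1/2})$, uniformly after conditioning on $\bx$, so
\[
\P\bigl(|G_k-G_j|<C\log d\bigr)\lesssim \log d/\sqrt d .
\]
This is where the $\sigma>0$ noise is essential: it gives anti-concentration without needing any regularity of $\pi_0$ beyond the LSI.

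\textbf{Step 4.} A naive union bound over all $n^2$ pairs loses a factor of $d^2$, so it must be restricted to pairs involving the top index. We intend to show that, with probability $1-O_\prec(d^{-1/2})$, only $O_\prec(1)$ indices satisfy $G_k\ge G_{(1)}-C\log d$. We will try to establish this through a second-moment or level-set counting argument. We will fix a threshold $t$ near the $(1-1/n)$-quantile and use Gaussian anti-concentration in the $\bz$ direction to show that the expected number of $k$ with $G_k\in[t,t+C\log d]$ is $O(\log d\cdot \sqrt{\log d}/\sqrt d)\cdot n\cdot(\text{tail density})$. This requires approximate joint Gaussianity of the vector $(\<\bx_{0,k},\bz\>)_k$ given $\bx$. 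That joint law is exactly Gaussian with covariance $\sigma^2\bX^\sT\bX$, and $\bX^\sT\bX/d=\tau_\bSigma\bI+O(d^{-1/2})$ entrywise. Its operator norm is bounded, but its condition number is not close to identity when $n\asymp d$, which may complicate a direct comparison.

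A cleaner route for Step 4 is to condition on $\bx$ and use the Gaussian vector $\bg=\bX^\sT\bz$ directly. We would then bound
\[
\E\Bigl[\sum_{k\neq j}\ind\{G_k\ \text{is max}\}\,\ind\{G_k-G_j<C\log d\}\Bigr]
\]
by conditioning on $\bg_k$ and using the conditional density of $\bg_j-\bg_k$. Step 4, the step that avoids the $n^2$ loss, is the main obstacle. If the counting argument fails, we would fall back on a cruder target of $d^{-1/2+\epsilon}$, which is still compatible with $\prec$.
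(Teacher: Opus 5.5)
\textbf{Verdict: there is a genuine gap at Step 4.} Steps 1--3 are fine.

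Your reduction matches the paper's. Writing the log-weights as $\frac{\rho}{\sigma^2}G_k$ (the paper's $S_k$), the lemma reduces to showing that the top spacing $\Delta = G_{(1)}-G_{(2)}$ exceeds $C\log d$ except with probability $\prec d^{-1/2}$. On that event $1-\sum_i\omega_i^2\lesssim 1/n$.

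One small slip: $\sum_i\omega_i^2\le\max_i\omega_i$, so your inequality $1-\sum_i\omega_i^2\le 1-\max_i\omega_i$ goes the wrong way. Use $1-\sum_i\omega_i^2\le 1-\omega_{i^\star}^2\le 2(1-\omega_{i^\star})$ instead, which costs only a factor $2$.

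You correctly flag Step 4 as the main obstacle, but you do not resolve it, and it is the entire content of the lemma.
\begin{itemize}
\item The pairwise bound of Step 3 cannot be summed. Even restricted to pairs containing the winner it gives $n\log d/\sqrt d\to\infty$. So the fallback to $d^{-1/2+\epsilon}$ does not help: the loss is polynomial, not logarithmic.
\item What is needed is an estimate showing that the maximum (equivalently the top spacing) has density of order $\sqrt{\log n/d}$, not $n/\sqrt d$.
\end{itemize}
None of your sketched routes produces such an estimate:
\begin{itemize}
\item Conditioning on all other coordinates of $\bg=\bX^\sT\bz$ is unavailable. $\bX^\sT\bX$ has rank at most $d<n$ when $n>d$, which Assumption~\ref{ass:high_dim} allows, and $\bSigma$ may itself be nearly low-rank. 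So the conditional law of $g_j$ given the rest can be degenerate.
\item Conditioning on $g_k$ alone throws away the event $\{G_k \text{ is max}\}$. That event is exactly what must be retained to avoid the factor $n$.
\item Counting indices above a fixed threshold near the $(1-1/n)$-quantile does not control the random location of the maximum. This is worse here because the means $\rho\<\bx_{0,k},\bx\>$ are non-exchangeable and of the same order $\sqrt d$ as the fluctuations.
\end{itemize}

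The paper fills this step with two devices.
\begin{itemize}
\item \textbf{Random partition.} A random balanced partition $[n]=\cA\sqcup\cB$, independent of $\bz$, separates the two top indices with probability at least $1/2$. Hence $\P(\Delta\le r\mid\bx,\bX)\le 2\,\E_{\cA,\cB}\P(|\max_{k\in\cA}S_k-\max_{j\in\cB}S_j|\le r\mid\bx,\bX)$.
\item \textbf{Anti-concentration for differences of Gaussian maxima.} The inequality of~\cite[Theorem 2.4]{belloni2024anti} applies conditionally on $\bx$, without any conditioning on the other coordinates. Its hypotheses are checked on the concentration event: pairwise correlations are $\lesssim d^{\epsilon-1/2}$, and $\sqrt{\Var(S_k)}-\Cov(S_k,S_j)/\sqrt{\Var(S_k)}\gtrsim\sqrt d$.
\item Combined with the Gaussian maximal inequality $\E\max_k|\<\bx_{0,k},\bz\>|\lesssim\sqrt{d\log n}$, this gives $\P(\Delta\le r)\lesssim r\sqrt{\log n/d}$. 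Taking $r=L\log d$ yields $(\log d)^{3/2}/\sqrt d$.
\end{itemize}
Some Nazarov/CCK-type anti-concentration for Gaussian maxima, or an equivalent argument, is what your proposal is missing.
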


\begin{proof}
    Fix \(\epsilon\in (0,1/2)\), and let \(\cE_d\) be the training-sample event on which
\[
    \max_{i\in[n]}\left|\|\bx_{0,i}\|_2^2-\Tr(\bSigma)\right|\leq d^{1/2+\epsilon},
    \qquad
    \max_{i\neq j}|\< \bx_{0,i},\bx_{0,j}\>|\leq d^{1/2+\epsilon} .
\]
By the concentration bounds in~\eqref{eq:norm_concentration_overlaps}, \(\cE_d\) holds with very high probability. On \(\cE_d\), Assumption~\ref{ass:input_dist} gives
\(\Tr(\bSigma)\geq c d\) for all large \(d\), and therefore
\begin{equation}\label{eq:pairwise-distance-lower}
    \min_{i\neq j}\|\bx_{0,i}-\bx_{0,j}\|_2^2\geq c' d
\end{equation}
for some constant \(c'>0\). Write
\[
    \omega_k(\rho \bx+\sigma\bz) = \frac{\exp\left(-\frac{\rho^2}{2\sigma^2}\|\bx_{0,k}\|^2+\frac{\rho^2}{\sigma^2}\<\bx_{0,k},\bx\>+\frac{\rho}{\sigma}\<\bx_{0,k},\bz\>\right)}{\sum_{j=1}^{n}\exp\left(-\frac{\rho^2}{2\sigma^2}\|\bx_{0,j}\|^2+\frac{\rho^2}{\sigma^2}\<\bx_{0,j},\bx\>+\frac{\rho}{\sigma}\<\bx_{0,j},\bz\>\right)}
\]
and set, for every \(k \in[n]\),
\[
    S_k\equiv S_k(\bz;\bx,\{\bx_{0,i}\}_{i=1}^{n}) \deq  -\frac{\rho^2}{2\sigma^2}\|\bx_{0,k}\|^2+\frac{\rho^2}{\sigma^2}\<\bx_{0,k},\bx\>+\frac{\rho}{\sigma}\<\bx_{0,k},\bz\>.
\]
Let \(S_{(1)}\geq S_{(2)}\) be the two largest values of \(\{S_k\}_{k=1}^n\) and set \(\Delta\deq S_{(1)}-S_{(2)}\). By~\eqref{eq:pairwise-distance-lower}, the two largest values are attained at unique indices almost surely. Randomly partition \([n]\), independently of \(\bz\), into two balanced sets \(\cA,\cB\). Note that the two indices corresponding to \(S_{(1)}\) and \(S_{(2)}\) belong to different sets with probability at least \(1/2\). Therefore, for every \(r\geq 0\),
\[
    \P(\Delta \leq r\mid \bx,\{\bx_{0,i}\}_{i=1}^n) \leq 2\E_{\cA,\cB}\P\left(\left|\max_{k\in \cA}S_k - \max_{j\in \cB}S_j\right|\leq r \mid \bx,\{\bx_{0,i}\}_{i=1}^n\right).
\]
Conditionally on \(\bx\) and \(\{\bx_{0,i}\}_{i=1}^n\), the random variables \(\{S_k\}_{k\in [n]}\) are jointly Gaussian with
\begin{gather*}
     \E[S_k \mid \bx,\{\bx_{0,i}\}_{i=1}^n] = -\frac{\rho^2}{2\sigma^2}\|\bx_{0,k}\|^2+\frac{\rho^2}{\sigma^2}\<\bx_{0,k},\bx\>, \qquad \Var(S_k \mid \bx,\{\bx_{0,i}\}_{i=1}^n) = \frac{\rho^2}{\sigma^2}\|\bx_{0,k}\|^2, 
     \\ \Cov(S_k,S_j\mid \bx,\{\bx_{0,i}\}_{i=1}^n) = \frac{\rho^2}{\sigma^2}\<\bx_{0,k},\bx_{0,j}\>.
\end{gather*}
In particular, on the event \(\cE_d\),
\begin{align*}
    \left|
    \frac{\Cov(S_k,S_j\mid \bx,\{\bx_{0,i}\}_{i=1}^n)}
    {\sqrt{\Var(S_k\mid \bx,\{\bx_{0,i}\}_{i=1}^n)
    \Var(S_j\mid \bx,\{\bx_{0,i}\}_{i=1}^n)}}
    \right|
    &=
    \frac{|\<\bx_{0,k},\bx_{0,j}\>|}
    {\|\bx_{0,k}\|_2\|\bx_{0,j}\|_2}
    \lesssim d^{\epsilon-\frac{1}{2}} < 1
\end{align*}
for all \(k\neq j\). Furthermore, for \(k\neq j\), the conditional correlation is bounded away from \(1\), and
\begin{align*}
    \sqrt{\Var(S_k\mid \bx,\{\bx_{0,i}\}_{i=1}^n)}-\frac{\Cov(S_k,S_j\mid \bx,\{\bx_{0,i}\}_{i=1}^n)}{\sqrt{\Var(S_k\mid \bx,\{\bx_{0,i}\}_{i=1}^n)}}&=\frac{\rho}{\sigma}\left(\|\bx_{0,k}\|_2-\frac{\<\bx_{0,k},\bx_{0,j}\>}{\|\bx_{0,k}\|_2}\right)\gtrsim \frac{\rho}{\sigma}\sqrt{d}.
\end{align*}
The ratio in the first line equals one when \(k=j\). Therefore, using the anticoncentration inequality of~\cite[Theorem 2.4]{belloni2024anti}, it follows that, for every \(r\geq 0\),
\begin{align*}
     \P\left(\left|\max_{k\in \cA}S_k - \max_{j\in \cB}S_j\right|\leq r \mid \bx,\{\bx_{0,i}\}_{i=1}^n\right)
     & \lesssim \frac{\sigma}{\rho}\frac{\E[\max_{k\in [n]}|\<\bx_{0,k},\bz\>| \mid \bx,\{\bx_{0,i}\}_{i=1}^n]}{d}r \lesssim \frac{\sigma}{\rho}\sqrt{\frac{\log(2n)}{d}}\,r.
\end{align*}
The constants are independent of \(\bx\), \(\{\bx_{0,i}\}_{i=1}^n\), and the balanced partition, and we applied the Gaussian maximal inequality to bound the expected maximum. Taking \(r=L\log(d)\) for a fixed constant \(L>2\), and using \(n\asymp d\) and \(\rho/\sigma\geq c>0\) for some constant \(c>0\), we obtain
\[
    \P(\Delta \leq r\mid \bx,\{\bx_{0,i}\}_{i=1}^n)
    \lesssim \frac{(\log d)^{3/2}}{\sqrt d}
    \prec \frac{1}{\sqrt d}.
\]
On the event \(\{\Delta(\bz;\bx) > r\}\), we have
\[
    1 - \sum_{i=1}^n \omega_i(\rho\bx+\sigma\bz)^2 \leq 1 - \omega_{i_\star}(\rho\bx+\sigma\bz)^2 \leq 2ne^{-r},
\]
where \(i_\star\) is an index attaining the maximum (which depends on \(\bz\), \(\bx\), and the training sample). Since \(0\leq 1-\sum_{i=1}^n \omega_i(\rho\bx+\sigma\bz)^2\leq 1\), it follows that
\[
    \E\left[1 - \sum_{i=1}^n \omega_i(\rho\bx+\sigma\bz)^2\mid \bx, \{\bx_{0,i}\}_{i=1}^n\right]
    \lesssim \frac{(\log d)^{3/2}}{\sqrt d} + d^{1-L}
    \prec \frac{1}{\sqrt d}.
\]
The bound is uniform in \(\bx\). Averaging over \(\bx\sim\pi_0\) and using that \(\cE_d\) holds with very high probability concludes the proof.
\end{proof}

We are now ready to prove Theorem~\ref{thm:ridgeless_main}.

\begin{proof}[Proof of Theorem~\ref{thm:ridgeless_main}]
    Let \((\bx,\bz)\sim\pi_d\), set \(\bu=\rho\bx+\sigma\bz\), and write
\[
    \bM(\bu)\deq \sum_{i=1}^n \omega_i(\bu)\bx_{0,i} .
\]
By definition,
\begin{equation}\label{eq:empirical-score-risk-identity}
    \widehat{\boldsymbol f}^{\star}(\rho\bx+\sigma\bz)+\bz
    =\frac{\rho}{\sigma}\bigl(\bM(\rho\bx+\sigma\bz)-\bx\bigr).
\end{equation}
Therefore
\begin{equation}\label{eq:risk-expanded-M}
    \cR(\widehat{\boldsymbol f}^{\star})
    =\frac{\rho^2}{\sigma^2}
    \left\{
        \frac{1}{d}\E\|\bM(\bu)\|_2^2
        -\frac{2}{d}\E\< \bM(\bu),\bx\>
        +\frac{1}{d}\E\|\bx\|_2^2
    \right\},
\end{equation}
where the expectation is over the fresh pair \((\bx,\bz)\), conditionally on the training data. The last term equals \(\tau_\bSigma\). We show that the first term is \(\tau_\bSigma+o_{d,\P}(1)\) and the middle term is \(o_{d,\P}(1)\). Define
\[
    \varepsilon_d
    \deq \max_{i\in[n]}\left|\frac{\|\bx_{0,i}\|_2^2}{d}-\tau_\bSigma\right|
      +\max_{i\neq j}\frac{|\< \bx_{0,i},\bx_{0,j}\>|}{d},\qquad \varepsilon_d' \deq  
    \E_{(\bx,\bz)}\left[1-\sum_{i=1}^n\omega_i(\bu)^2\mid\{\bx_{0,i}\}_{i=1}^n\right].
\]
By~\eqref{eq:norm_concentration_overlaps} and Lemma~\ref{lem:wta-under-lsi}, \(\varepsilon_d,\varepsilon_d'\prec d^{-1/2}\). Since the weights are non-negative and sum to one,
\begin{align*}
    \left|\frac{\|\bM(\bu)\|_2^2}{d}-\tau_\bSigma\sum_{i=1}^n\omega_i(\bu)^2\right| & \leq \max_{i\in[n]}\left|\frac{\|\bx_{0,i}\|_2^2}{d}-\tau_\bSigma\right|\sum_{i=1}^n\omega_i(\bu)^2 + \sum_{i\neq j}\frac{|\< \bx_{0,i},\bx_{0,j}\>|}{d}\omega_i(\bu)\omega_j(\bu)\\
    &\leq \varepsilon_d
\end{align*}
and therefore
\begin{align*}
    \left|
        \frac1d\E\!\left[\|\bM(\bu)\|_2^2
        \,\middle|\,\{\bx_{0,i}\}_{i=1}^n\right]
        -\tau_{\bSigma}
    \right|
    &\le
    \varepsilon_d
    +
    \tau_{\bSigma}
    \E\!\left[
        1-\sum_{i=1}^n\omega_i(\bu)^2
        \,\middle|\,\{\bx_{0,i}\}_{i=1}^n
    \right]  =\varepsilon_d+\tau_{\bSigma}\varepsilon_d'
    \prec d^{-1/2}.
\end{align*}

It remains to control the cross term. Conditionally on the training data,
\[
    \left|\frac{1}{d}\E \big[ \< \bM(\bu),\bx\> \mid \{\bx_{0,i}\}_{i=1}^n\big]\right|
    \leq \frac{1}{d}\E_\bx \bigg[\max_{i\in[n]}|\< \bx_{0,i},\bx\>|\bigg] .
\]
Since \(\E\bx=0\), the logarithmic Sobolev inequality and the Herbst argument give, conditionally on the training data,
\[
    \E_{\bx}\exp\left(\lambda\<\bx_{0,i},\bx\>\right)\leq \exp\left(
        \frac{C_{\mathrm{LSI}}\lambda^2
        \|\bx_{0,i}\|_2^2}{2}
    \right).
\]
Applying the standard maximal inequality yields
\[
    \E_{\bx}
    \left[
        \max_{i\in[n]}
        |\<\bx_{0,i},\bx\>|
    \right]
    \lesssim
    (\max_{i}\|\bx_{0,i}\|_2)\sqrt{\log(n)} \prec \sqrt{d}.
\]
Here, we have used \eqref{eq:norm_concentration_overlaps}, \(\Tr(\bSigma)\lesssim d\) and the proportional limiting regime \(n\asymp d\) to obtain the last bound. Substituting these estimates into~\eqref{eq:risk-expanded-M} gives
\[
    \left|\cR(\widehat{\boldsymbol f}^{\star}) - 2\frac{\rho^2}{\sigma^2}\tau_{\bSigma}\right| \prec \frac{1}{\sqrt{d}}.
\]

Under Assumption~\ref{ass:universal-kernel},
Proposition~\ref{prop:gf_fixed_d_endpoint} gives, for every fixed
\(d,n\) and training realization,
\[
    \lim_{\sft\to\infty}
    \cR(\widetilde{\boldsymbol f}_{\sft})
    =
    \cR(\widehat{\boldsymbol f}^{\star}).
\]
Combining this identity with the preceding high-dimensional asymptotic
proves the second assertion.
\end{proof}

\clearpage

\section{Reverse process dynamics}\label{app:reverse_process}

In this section, we study the dynamics of the reverse diffusion process and its linearization. We suppose throughout that \(0<t_0<T\) are fixed, that \(t\in [t_0,T]\mapsto\sft_t\in \R_{\geq 0}\) is a deterministic piecewise-continuous schedule, and that \(\widetilde{\bx}_t\) solves~\eqref{eq:linearized_reverse_SDE}.

\subsection{Properties of the linearized reverse process}

Unlike \(\widehat{\cK}_{\mathrm{eff},t}\), whose self-induced regularization is identical across training samples, the sample-corrected operator \(\widetilde{\cK}_t\) contains the sample-dependent diagonal corrections \(\widetilde{\delta}_{i,t}\). Consequently, it cannot be reduced to a spectral function of the empirical covariance \(\widehat\bSigma\), and the \(2d\)-dimensional representation of Lemma~\ref{lem:gf_linearized_train_test_Q_representation} no longer applies. Nevertheless, \(\widetilde{\cK}_t\) preserves a larger \((n+d)\)-dimensional subspace, on which it admits an explicit matrix representation.

Define the subspace
\begin{equation}\label{eq:tilde-V-def}
    \widetilde{\cV} \deq  \{F_{\ba,\bb}:\ba\in\R^n,\ \bb\in\R^d\} \subset L^2(\widehat\pi_{d,n}),\qquad F_{\ba,\bb}(\bx_{0,i},\bz) \deq  \sqrt{n}a_i + \<\bb,\bz\>.
\end{equation}

\begin{lemma}\label{lem:reverse_linearized_check_score_representation}
    For every \(t\in [t_0,T]\), \(\widetilde{\cK}_t\) maps \(\widetilde{\cV}\) to itself, and the linearized score function \(\widetilde\bS_t\) admits the representation
    \[
        \widetilde\bS_t(\bx)
        =\frac{1}{\sigma_t}\widetilde\bPsi_t\bx
    \]
    where
    \[
        \widetilde{\bPsi}_t \deq  
         -\frac{1}{\sigma_t}
        \begin{bmatrix}0&\bI_d\end{bmatrix}
        \left(\bI_{n+d}-e^{-\sft_t\widetilde{\bK}_t}\right)
        \begin{bmatrix}0\\\bI_d\end{bmatrix}, \qquad \widetilde{\bK}_t \deq  \begin{bmatrix}
            \widetilde\bD_t+\dfrac{\rho_t^2}{dn}\bX^\sT\bX
            &\dfrac{\rho_t\sigma_t}{d\sqrt n}\bX^\sT\\[1.2ex]
            \dfrac{\rho_t\sigma_t}{d\sqrt n}\bX
            &\dfrac{\sigma_t^2}{d}\bI_d
        \end{bmatrix},
    \]
    and \(\widetilde\bD_t \deq  \operatorname{diag}(\widetilde\delta_{1,t},\ldots,\widetilde\delta_{n,t})\). Furthermore, \(-\sigma_t^{-1}\bI_{d}\preceq \widetilde\bPsi_t\preceq 0\).
\end{lemma}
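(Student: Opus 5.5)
We will show that $\widetilde{\cK}_t$ acts on the coefficient vector $(\ba,\bb)$ of $F_{\ba,\bb}$ by the matrix $\widetilde\bK_t$. We then push the integral formula \eqref{eq:linearized_score_estimator} through this finite-dimensional representation. We read off $\widetilde\bPsi_t$ by applying $\cS^\ast_{\mathrm{eff},t}$. The order of computations mirrors Lemmas~\ref{lem:linearized_estimators_representation} and~\ref{lem:gf_linearized_train_test_Q_representation}. The difference is that the $n$ offsets now carry distinct diagonal weights $\widetilde\delta_{i,t}$.

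First, invariance. For $f=F_{\ba,\bb}$ we compute the two terms of \eqref{eq:sample_corrected_effective_operator}, using $\E_{\bz'}[\bz']=0$ and $\E_{\bz'}[\bz'\bz'^\sT]=\bI_d$. The noise average gives $\E_{\bz'}[F_{\ba,\bb}(\bx_{0,i},\bz')]=\sqrt n a_i$. The linear term equals
\[
\frac{1}{nd}\sum_j\bigl\<\rho\bx_{0,i}+\sigma\bz,\ \rho\bx_{0,j}\sqrt n a_j+\sigma\bb\bigr\>,
\]
which splits into a sample-dependent constant $\frac{\rho^2}{d\sqrt n}\bx_{0,i}^\sT\bX\ba+\frac{\rho\sigma}{nd}\cdot n\,\bx_{0,i}^\sT\bb$ and a $\bz$-linear part $\<\frac{\rho\sigma}{d\sqrt n}\bX\ba+\frac{\sigma^2}{d}\bb,\bz\>$. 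Dividing the constant part by $\sqrt n$ to get the new $a_i$ coordinate yields the first block row $(\widetilde\bD_t+\frac{\rho^2}{dn}\bX^\sT\bX)\ba+\frac{\rho\sigma}{d\sqrt n}\bX^\sT\bb$. The second block row follows in the same way, so $\widetilde{\cK}_tF_{\bw}=F_{\widetilde\bK_t\bw}$. We also note that $\bw\mapsto F_{\bw}$ is an isometry from $\R^{n+d}$ to $L^2(\widehat\pi_{d,n})$, since $\|F_{\ba,\bb}\|^2=\frac1n\sum_i n a_i^2+\|\bb\|^2$. Hence $\widetilde\bK_t$ is symmetric, which is visible directly, and it is positive semidefinite because $\widetilde\bK_t=\bG^\sT\bG/d+\operatorname{diag}(\widetilde\bD_t,0)$ with $\bG=[\rho\bX/\sqrt n,\ \sigma\bI_d]$ and $\widetilde\delta_{i,t}\ge0$ by Remark~\ref{rmk:delta_n_nonnegativity}.

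Next, the score. We have $\sfz_\bv=F_{0,-\bv}$, so by the power series $\int_0^{\sft}e^{-\sfs\widetilde{\cK}_t}\sfz_\bv\,\de\sfs=F_{\bw}$ with $\bw=-\int_0^{\sft}e^{-\sfs\widetilde\bK_t}\,\de\sfs\,\bE_2\bv$, where $\bE_2=[0;\bI_d]$. Applying $\cS^\ast_{\mathrm{eff},t}$ to $F_{\ba,\bb}$ at a point $\bx$ gives
\[
\frac1d\Bigl\<\bx,\ \frac{\rho}{\sqrt n}\bX\ba+\sigma\bb\Bigr\>=\<\bx,\bE_2^\sT\widetilde\bK_t\bw\>/\sigma,
\]
because the second block row of $\widetilde\bK_t$ is exactly $\frac{\sigma}{d}\bigl(\frac{\rho}{\sqrt n}\bX,\ \sigma\bI_d\bigr)$. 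Then $\widetilde\bK_t\int_0^{\sft}e^{-\sfs\widetilde\bK_t}\,\de\sfs=\bI-e^{-\sft\widetilde\bK_t}$. Combining with the prefactor $1/\sigma_t$ gives $\widetilde S_{t,\bv}(\bx)=\frac{1}{\sigma_t}\bv^\sT\widetilde\bPsi_t\bx$, using symmetry to transpose. Assembling over an orthonormal basis gives $\widetilde\bS_t(\bx)=\widetilde\bPsi_t\bx/\sigma_t$.

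Finally, the bounds. Since $\widetilde\bK_t\succeq0$ and is symmetric, we have $0\preceq\bI-e^{-\sft\widetilde\bK_t}\preceq\bI$. Compressing by $\bE_2$ preserves these inequalities, so $-\sigma_t^{-1}\bI_d\preceq\widetilde\bPsi_t\preceq0$. The only delicate step is the bookkeeping of the $\sqrt n$ normalization in the invariance computation, so that the matrix comes out symmetric. Everything else is linear algebra.
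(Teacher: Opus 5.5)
Your proof is correct and follows essentially the same route as the paper. You compute $\widetilde{\cK}_t F_{\ba,\bb}=F_{\widetilde{\bK}_t(\ba,\bb)}$, push the time integral through this finite-dimensional representation, and apply $\cS^\ast_{\mathrm{eff},t}$ using the identity $\bigl[\tfrac{\rho_t}{\sqrt n}\bX,\ \sigma_t\bI_d\bigr]=\tfrac{d}{\sigma_t}[0,\ \bI_d]\,\widetilde{\bK}_t$. The spectral bounds then follow from $\widetilde{\bK}_t=\bG^\sT\bG/d+\operatorname{diag}(\widetilde\bD_t,0)\succeq 0$; the only cosmetic difference is that you invoke the identity before summing over the basis, whereas the paper first writes $\widetilde\bPsi_t$ with the integral and then integrates the exponential.
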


\begin{proof}
    For every \(F_{\ba,\bb}\in \widetilde{\cV}\),
    \begin{align*}
        \widetilde{\cK}_tF_{\ba,\bb}(\bx_{0,i},\bz) 
         = 
        \sqrt{n}\left(\widetilde{\delta}_{i,t}a_i+\frac{\rho_t^2}{dn}\<\bx_{0,i},\bX\ba\>+\frac{\rho_t\sigma_t}{d\sqrt n}\<\bx_{0,i},\bb\>\right)
        +\left\<\frac{\rho_t\sigma_t}{d\sqrt n}\bX\ba+\frac{\sigma_t^2}{d}\bb,\bz\right\>.
    \end{align*}
    In other words, \(\widetilde{\cK}_t\) maps \(\widetilde{\cV}\) to itself: \(\widetilde{\cK}_tF_{\ba,\bb} = F_{\ba',\bb'}\) with \(((\ba')^\sT,(\bb')^\sT)^\sT = \widetilde{\bK}_t((\ba)^\sT,(\bb)^\sT)^\sT\). 

    Let \(\bv\in \R^d\) and \(\sfs\geq0\). Note that \(\sfz_\bv = F_{0,-\bv}\in \widetilde{\cV}\), and therefore
    \[
        e^{-\sfs \widetilde{\cK}_t}\sfz_\bv = F_{\ba_{t,\sfs}(\bv),\bb_{t,\sfs}(\bv)}, \qquad 
        \begin{bmatrix}
            \ba_{t,\sfs}(\bv)\\
            \bb_{t,\sfs}(\bv)
        \end{bmatrix}
        =
        e^{-\sfs\widetilde{\bK}_t}
        \begin{bmatrix}0\\-\bv\end{bmatrix}.
    \]
    Since \(F_{\ba,\bb}\) depends linearly on \((\ba,\bb)\), we can push the integral inside the finite-dimensional representation:
    \[
        \int_{0}^{\sft_t}e^{-\sfs \widetilde{\cK}_t}\sfz_\bv\,\d\sfs
        =
        F_{\overline{\ba}_t(\bv),\overline{\bb}_t(\bv)}, \qquad
        \begin{bmatrix}
            \overline{\ba}_t(\bv)\\
            \overline{\bb}_t(\bv)
        \end{bmatrix}
        =
        \int_0^{\sft_t}e^{-\sfs\widetilde{\bK}_t}\,\d\sfs
        \begin{bmatrix}0\\-\bv\end{bmatrix}.
    \]
    For every \(F_{\ba,\bb}\in \widetilde{\cV}\),
    \[
        \cS^\ast_{\mathrm{eff},t}F_{\ba,\bb}(\bx)
        = \frac1d\<\bx,\frac{\rho_t}{\sqrt n}\bX\ba+\sigma_t\bb\>.
    \]
    Applying this to the integrated finite-dimensional representation gives
    \[
        \widetilde{S}_{t,\bv}(\bx)
        =
        \frac{1}{\sigma_t d}\left\<\bx,
        \begin{bmatrix}
            \frac{\rho_t}{\sqrt n}\bX&\sigma_t\bI_d
        \end{bmatrix}
        \int_0^{\sft_t}e^{-\sfs\widetilde{\bK}_t}\,\d\sfs
        \begin{bmatrix}0\\-\bv\end{bmatrix}
        \right\>\,.
    \]
    Summing over an orthonormal basis \(\{\bv_{\ell}\}_{\ell=1}^d\) of \(\R^d\) gives the matrix representation \(\widetilde\bS_t(\bx)=\sigma_t^{-1}\widetilde\bPsi_t\bx\) with
    \[
         \widetilde{\bPsi}_t \deq  \frac{1}{d}
        \begin{bmatrix}\dfrac{\rho_t}{\sqrt n}\bX&\sigma_t\bI_d\end{bmatrix}
        \int_0^{\sft_t}e^{-\sfs\widetilde{\bK}_t}\,\de\sfs
        \begin{bmatrix}0\\-\bI_d\end{bmatrix}.
    \]
    Observe that
    \[
        \begin{bmatrix}
            \frac{\rho_t}{\sqrt{n}}\bX&\sigma_t\bI_d
        \end{bmatrix}
        =
        \frac{d}{\sigma_t}
        \begin{bmatrix}0&\bI_d\end{bmatrix}
        \widetilde{\bK}_t,
    \]
    and therefore we may integrate the matrix exponential to obtain the alternative representation.

    It is immediate from the representation that \(\widetilde\bPsi_t\) is symmetric. In order to establish the eigenvalue bounds, we decompose \(\widetilde{\bK}_t\) as
    \[
        \widetilde{\bK}_t = \frac{1}{d}
        \begin{bmatrix}
            \dfrac{\rho_t}{\sqrt n}\bX^\sT\\
            \sigma_t\bI_d
        \end{bmatrix}
        \begin{bmatrix}
            \dfrac{\rho_t}{\sqrt n}\bX&
            \sigma_t\bI_d
        \end{bmatrix}
        +
        \begin{bmatrix}
            \widetilde\bD_t&0\\
            0&0
        \end{bmatrix}.
    \]
    Since \(\widetilde{\delta}_{i,t}\geq0\) for all \(i\in[n]\) by Remark~\ref{rmk:delta_n_nonnegativity}, it follows that \(\widetilde{\bK}_t\succeq0\). Consequently, for every \(\sfs\geq0\), \(0\preceq e^{-\sfs\widetilde{\bK}_t}\preceq\bI_{n+d}\), and the result follows from the representation of \(\widetilde{\bPsi}_t\).
\end{proof}

As a consequence of the representation in Lemma~\ref{lem:reverse_linearized_check_score_representation}, it follows that, conditionally on the training data \(\bX\), the linearized reverse process \(\widetilde{\bx}_t\) is a centered Gaussian process.

Next, we establish uniform bounds on the linearized reverse process \(\widetilde{\bx}_t\) and its inner products with the training data \(\{\bx_{0,i}\}_{i=1}^{n}\).

\begin{lemma}[Boundedness and delocalization of the linearized process]\label{lem:linearized_SDE_bounds}
    Suppose that
    Assumptions~\ref{ass:high_dim},~\ref{ass:input_dist}, and
    \ref{ass:kernel_reverse} hold. Then, conditionally on the training
    data, for every fixed $k\in\N$,
    \[
        \E\left[\sup_{t\in[t_0,T]}\|\widetilde\bx_t\|_2^k\mid\bX\right]^{1/k}\lesssim\sqrt d,
        \qquad
        \E\left[\sup_{t\in[t_0,T]}\max_{i\le n}|\<\widetilde\bx_t,\bx_{0,i}\>|^k\mid\bX\right]^{1/k}\prec\sqrt d .
    \]

    Furthermore, there exists a constant \(C>0\) such that for every \(s>0\),
    \[
        \E\left[\exp\left(\frac{s}{d}\sup_{t\in[t_0,T]}\|\widetilde\bx_t\|_2^2\right)\mid\bX\right]\le 3e^{2sC}
    \]
    whenever \(d\geq 8sC\).
\end{lemma}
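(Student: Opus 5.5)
Conditionally on \(\bX\), the linearized reverse process is a linear SDE with time-dependent symmetric drift matrix. I will solve it explicitly and reduce every claim to Gaussian estimates for a Gaussian process whose covariance is bounded in operator norm.

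\textbf{Step 1: explicit solution.} By Lemma~\ref{lem:reverse_linearized_check_score_representation}, \(\widetilde\bS_t(\bx)=\sigma_t^{-1}\widetilde\bPsi_t\bx\) with \(-\sigma_t^{-1}\bI_d\preceq\widetilde\bPsi_t\preceq0\). Since \(\sigma_t\ge\sigma_{t_0}>0\) on \([t_0,T]\), the drift matrix \(\bA_t\deq-(\bI_d+2\sigma_t^{-1}\widetilde\bPsi_t)\) satisfies \(\|\bA_t\|_\op\le C\) uniformly; it is also piecewise continuous in \(t\), because the schedule is. Let \(\bPhi(t,s)\) be the propagator of \(\de\bx=\bA_t\bx\,\de t\) run backward from \(T\). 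Grönwall gives \(\|\bPhi(t,s)\|_\op\le e^{C(T-t_0)}\). Then
\[
\widetilde\bx_t=\bPhi(t,T)\widetilde\bx_T+\sqrt2\int_t^T\bPhi(t,s)\,\de\bar\bB_s .
\]
Conditionally on \(\bX\), this is a centered Gaussian process. Its covariance \(\widetilde\bSigma_t\) satisfies \(\|\widetilde\bSigma_t\|_\op\le C\), and its increments satisfy \(\E[\|\widetilde\bx_t-\widetilde\bx_s\|_2^2\mid\bX]\lesssim d|t-s|\).

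\textbf{Step 2: supremum bounds.} To take the supremum over \(t\), I will avoid time chaining and use a representation instead. Write \(\widetilde\bx_t=\bPhi(t,T)\bigl(\widetilde\bx_T+\sqrt2\bM_t\bigr)\) with \(\bM_t\deq\int_t^T\bPhi(T,s)\,\de\bar\bB_s\), a (reverse-time) martingale with bounded integrand; note \(\bPhi(t,s)=\bPhi(t,T)\bPhi(T,s)\) and \(\bPhi(T,s)\) is bounded since \(\bA\) is bounded. This gives
\[
\sup_t\|\widetilde\bx_t\|_2\lesssim\|\widetilde\bx_T\|_2+\sup_t\|\bM_t\|_2 .
\]
The map \((\widetilde\bx_T,\bar\bB)\mapsto\sup_t\|\bM_t\|\) is not globally Lipschitz pathwise, so I will handle \(\sup_t\|\bM_t\|_2\) with BDG and Doob for the moments, which gives \(\lesssim\sqrt d\). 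For the exponential moment I will use the submartingale \(\exp(\lambda\|\bM_t\|^2)\) together with Doob's inequality, or equivalently Gaussian concentration on Wiener space: \(\E\sup_t\|\bM_t\|\lesssim\sqrt d\), and the Lipschitz constant with respect to Cameron–Martin norm is \(O(1)\), which yields a sub-Gaussian tail of \(\sup_t\|\widetilde\bx_t\|_2-C\sqrt d\). Integrating that tail exactly as in Lemma~\ref{lem:exp_noisy_overlap_moments} produces the bound \(3e^{2sC}\) when \(d\ge8sC\).

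\textbf{Step 3: delocalization.} For a fixed \(i\),
\[
\<\widetilde\bx_t,\bx_{0,i}\>=\<\bPhi(t,T)^\sT\bx_{0,i},\widetilde\bx_T\>+\sqrt2\int_t^T\<\bPhi(t,s)^\sT\bx_{0,i},\de\bar\bB_s\>,
\]
which is a one-dimensional Gaussian process with variance \(\lesssim\|\bx_{0,i}\|^2\prec d\) by \eqref{eq:norm_concentration_overlaps}. Applying the same factorization \(\bPhi(t,T)\) along with BDG, I get \(\E[\sup_t|\<\widetilde\bx_t,\bx_{0,i}\>|^p\mid\bX]^{1/p}\lesssim\sqrt{p}\,\|\bx_{0,i}\|_2\). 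Here the factor \(\bPhi(t,T)^\sT\bx_{0,i}\) depends on \(t\), so I will bound it by writing \(\<\bPhi(t,T)\bw,\bx_{0,i}\>\) and controlling it by a supremum over a unit-norm family. The maximum over \(i\le n\asymp d\) costs only a \(\sqrt{\log n}\) factor via \(\E\max_i|Y_i|^k\le(\sum_i\E|Y_i|^{p})^{k/p}\) with \(p\asymp\log d\). This gives \(\prec\sqrt d\), where the \(d^\epsilon\) slack of \(\prec\) absorbs the logarithms and the very-high-probability event on \(\bX\).

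\textbf{Main obstacle.} The delicate point is the supremum in time in Step 3 when the propagator depends on \(t\). The inner product \(\<\widetilde\bx_t,\bx_{0,i}\>\) is not itself a martingale. I will handle it through the decomposition \(\<\bPhi(t,T)(\widetilde\bx_T+\sqrt2\bM_t),\bx_{0,i}\>\), bounding \(\bPhi(t,T)^\sT\bx_{0,i}\) on a time grid of mesh \(d^{-2}\) using the Lipschitz continuity \(\|\partial_t\bPhi\|_\op\lesssim1\). A union bound over the grid and over \(i\) then only adds logarithmic factors; between grid points I will use the \(O(\sqrt d)\)-scale bound from Step 2 times the mesh.
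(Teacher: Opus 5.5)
Your proposal is correct, but it controls the supremum over time by a different mechanism than the paper. The paper does not factor the propagator. After reversing time, it treats $G(i,s)=\<\by_s,\bx_{0,i}\>$ and $G'(\bv,s)=\<\bv,\by_s\>$ as centered Gaussian processes on $[n]\times[0,T-t_0]$ and $\S^{d-1}\times[0,T-t_0]$. It bounds their canonical metrics by $C(\sqrt d\,|s-u|^{1/2}+\|\bx_{0,i}-\bx_{0,j}\|_2)$ and $C(|s-u|^{1/2}+\|\bv-\bu\|_2)$, using only the transition-matrix bounds and the uniform covariance bound. It then gets all three claims from Dudley's entropy integral followed by Borell--TIS; the exponential moment comes from integrating the Borell--TIS tail as in Lemma~\ref{lem:exp_noisy_overlap_moments}.

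You instead write $\widetilde\bx_t=\bPhi(t,T)(\widetilde\bx_T+\sqrt2\,\bM_t)$, so the random part is a reverse-time martingale with bounded integrand. The norm moments then follow from Doob/BDG with no entropy computation. For the overlaps, you freeze the deterministic, Lipschitz-in-$t$ direction $\bPhi(t,T)^\sT\bx_{0,i}$ on a mesh-$d^{-2}$ grid and apply the Gaussian martingale maximal inequality in each frozen direction. You then union bound over the $O(d^3)$ pairs (grid point, sample); the between-grid error is $O(\sqrt d\cdot d^{-2}\cdot\sqrt d)$ and negligible. This is a one-step chaining, and its $\sqrt{\log d}$ loss matches the paper's Dudley bound and is absorbed by $\prec$.

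One imprecision: the claim in Step 3 that BDG alone yields $\sqrt p\,\|\bx_{0,i}\|_2$ is not justified as stated. The reason is that $\<\widetilde\bx_t,\bx_{0,i}\>$ is not a martingale, precisely because of the $t$-dependent direction. Your grid argument in the last paragraph repairs this at a logarithmic cost.

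Your exponential-moment step is the paper's Borell--TIS step in different language. It uses Gaussian concentration with Cameron--Martin Lipschitz constant $\sup_t\|\widetilde\bSigma_t\|_\op^{1/2}$, combined with the $\sqrt d$ mean bound from your Step~2.

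Your route is more elementary and exploits the linear-SDE structure, specifically the invertibility of the propagator. The paper's route handles space and time uniformly in a single chaining step, with no factorization or discretization.
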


\begin{proof}
    Reverse time by setting $s=T-t\in[0,T-t_0]$ and $\by_s\deq \widetilde\bx_{T-s}$. Then
    \[
        \d\by_s=\bD_{T-s}\by_s\,\d s+\sqrt2\,\d\bW_s,
        \qquad
        \bD_t\deq \bI_d+\frac{2}{\sigma_t}\widetilde\bPsi_t,
        \qquad \by_0\sim\cN(0,\bI_d).
    \]
    Let \(\bPhi(s,u)\) denote the state-transition matrix for this forward-time linear drift:
    \[
        \partial_s\bPhi(s,u)=\bD_{T-s}\bPhi(s,u),
        \qquad \bPhi(u,u)=\bI_d.
    \]
    By the spectral bound on \(\widetilde\bPsi_t\) from Lemma~\ref{lem:reverse_linearized_check_score_representation}, \(\sup_{t\in[t_0,T]}\|\bD_t\|_{\op}\leq 1+2\sigma_{t_0}^{-2}=: C_D\). By Gronwall's inequality,
    \begin{equation}\label{eq:reverse_transition_bound}
        \|\bPhi(s,u)\|_{\op}\le e^{C_D(s-u)},\qquad 0\le u\le s\le T-t_0 .
    \end{equation}
    The process admits the variation-of-constants representation
    \begin{equation}\label{eq:reverse_transition_representation}
        \by_s=\bPhi(s,0)\by_0+\sqrt2\int_0^s\bPhi(s,u)\,\d\bW_u .
    \end{equation}
    Applying the It\^o isometry to~\eqref{eq:reverse_transition_representation},
    \[
        \E[\by_s\by_s^\sT\mid\bX]
        =\bPhi(s,0)\bPhi(s,0)^\sT+2\int_0^s\bPhi(s,u)\bPhi(s,u)^\sT\,\d u
    \]
    and hence
    \begin{equation}\label{eq:linearized_reverse_cov_bound}
        \sup_{s\in[0,T-t_0]}\|\E[\by_s\by_s^\sT\mid\bX]\|_{\op}
        \le e^{2C_DT}+2T e^{2C_DT}=:C_{\mathrm{cov}} .
    \end{equation}
    In particular, the covariance of \(\widetilde{\bx}_t\), conditionally on the input data \(\{\bx_{0,i}\}_{i=1}^{n}\), is bounded uniformly in \(t\) and \(d\).

    We next prove delocalization. Consider the centered Gaussian process
    \[
        G(i,s)\deq \<\by_s,\bx_{0,i}\>,\qquad (i,s)\in[n]\times[0,T-t_0],
    \]
    conditionally on \(\bX\), and let \(\dist_G\) be its canonical metric. We first control temporal increments. For \(0\le u\le s\le T-t_0\),
    \[
        \by_s=\bPhi(s,u)\by_u+\sqrt2\int_u^s\bPhi(s,r)\,\d\bW_r,
    \]
    and the martingale increment is independent of \(\by_u\). Thus
    \begin{align*}
        \dist_G((i,s),(i,u))^2
        &=\E\left[\left|\<(\bPhi(s,u)-\bI_d)\by_u,\bx_{0,i}\>
        +\sqrt2\int_u^s\<\bPhi(s,r)^\sT\bx_{0,i},\d\bW_r\>\right|^2\,\middle|\,\bX\right]\\
        &\le \|\bx_{0,i}\|_2^2
        \left(\|\bPhi(s,u)-\bI_d\|_{\op}^2 C_{\mathrm{cov}}
        +2\int_u^s\|\bPhi(s,r)\|_{\op}^2\,\d r\right).
    \end{align*}
    By the fundamental theorem of calculus and~\eqref{eq:reverse_transition_bound},
    \[
        \|\bPhi(s,u)-\bI_d\|_{\op}
        \le \int_u^s \|\bD_{T-r}\|_{\op}\|\bPhi(r,u)\|_{\op}\,\d r
        \le C_D e^{C_DT}(s-u).
    \]
     By~\eqref{eq:cov_op_bound} and~\eqref{eq:norm_concentration_overlaps}, there exists a constant \(C>0\) such that the event \(\max_{i\le n}\|\bx_{0,i}\|_2^2\le Cd\)
    holds with very high probability. We condition on this event.
    Since \((s-u)^2\le T(s-u)\), the preceding bounds imply
    \begin{equation}\label{eq:reverse_temporal_metric_bound}
        \dist_G((i,s),(i,u))\le C_{\mathrm{time}}\sqrt{d}|s-u|^{1/2}
    \end{equation}
    for some constant \(0<C_{\mathrm{time}}<\infty\).
    For spatial increments at fixed time,
    \begin{equation}\label{eq:reverse_spatial_metric_bound}
        \dist_G((i,u),(j,u))^2
        =\E[\<\by_u,\bx_{0,i}-\bx_{0,j}\>^2\mid\bX]
        \le C_{\mathrm{cov}}\|\bx_{0,i}-\bx_{0,j}\|_2^2 .
    \end{equation}
    Combining~\eqref{eq:reverse_temporal_metric_bound} and~\eqref{eq:reverse_spatial_metric_bound}, \(\dist_G\) is dominated by the metric
    \[
        \rho((i,s),(j,u))
        \deq C\left(\sqrt{d}|s-u|^{1/2}+\|\bx_{0,i}-\bx_{0,j}\|_2\right),
    \]
    where \(C\) is a constant depending on \(C_{\mathrm{cov}}\) and \(C_{\mathrm{time}}\).
    Its diameter is \(O_d(\sqrt{d})\), and its covering numbers obey
    \[
        N([n]\times[0,T-t_0],\rho,r)
        \le n\left(1+\frac{Cd}{r^2}\right).
    \]
    Dudley's entropy integral therefore gives
    \[
        \E\left[\sup_{i,s}|G(i,s)|\mid\bX\right]
        \lesssim \sqrt{d}\sqrt{\log d}.
    \]
    Moreover, \(\sup_{i,s}\E[G(i,s)^2\mid\bX]\le C d\). Borell--TIS upgrades the expectation bound to every fixed moment:
    \[
        \E\left[\sup_{s\in[0,T-t_0]}\max_{i\le n}|G(i,s)|^k\mid\bX\right]^{1/k}
        \le C_k \sqrt{d}\sqrt{\log d}.
    \]
    This implies the stated stochastic domination for the overlaps \(\<\widetilde\bx_t,\bx_{0,i}\>\).

    The norm bound follows from the same chaining argument applied to
    \(G'(\bv,s)\deq \<\bv,\by_s\>\), indexed by
    \(\mathbb S^{d-1}\times[0,T-t_0]\). The canonical metric is bounded by
    \[
        \dist_{G'}((\bv,s),(\bu,u))
        \le C\left(|s-u|^{1/2}+\|\bv-\bu\|_2\right),
    \]
    and the standard volumetric covering estimate for \(\mathbb S^{d-1}\) gives
    \[
        \E\left[\sup_{\bv\in\mathbb S^{d-1},\,s\in[0,T-t_0]}G'(\bv,s)\mid\bX\right]
        \lesssim \sqrt d .
    \]
    Together with the variance bound \(\sup_{\bv,s}\E[G'(\bv,s)^2\mid\bX]\le C_{\mathrm{cov}}\), Borell--TIS gives the claimed fixed-moment bound for \(\sup_s\|\by_s\|_2\), equivalently for \(\sup_t\|\widetilde\bx_t\|_2\).

    Borell--TIS also gives the probability bound
    \begin{equation}\label{eq:linearized_reverse_sup_norm_tail}
        \P\left(\sup_{s\in[0,T-t_0]}\|\by_s\|_2 \geq \E\left[\sup_{s\in[0,T-t_0]}\|\by_s\|_2\mid \bX\right] + u \mid \bX\right) \leq 2 \exp\left(-\frac{u^2}{2C_{\mathrm{cov}}}\right)
    \end{equation}
    for all \(u>0\). Applying the same proof as in Lemma~\ref{lem:exp_noisy_overlap_moments}, replacing the covariance bound~\eqref{eq:cov_op_bound} by~\eqref{eq:linearized_reverse_cov_bound}, the Gaussian tails by~\eqref{eq:linearized_reverse_sup_norm_tail} and the expectation bound by the chaining bound, we obtain the exponential moment bound.
\end{proof}

\subsection{Uniform linearization of the empirical kernel operator}

The fixed-time linearization lemmas control the training and test risks. For the
reverse process we need the corresponding statement along the random, but
delocalized, Gaussian path generated by~\eqref{eq:linearized_reverse_SDE}.

It will be convenient to introduce the following notation. For any \(t>0\), \(\bx,\bx',\bz\in \R^d\), define
\begin{equation}\label{eq:phi_definitions}
    \phi_{t,\bx}(\bx',\bz)\deq K(\bx,\rho_t\bx'+\sigma_t\bz),
    \qquad
    \phi_{\mathrm{eff},t,\bx}(\bx',\bz)\deq \frac{\< \bx,\rho_t\bx'+\sigma_t\bz\>}{d}.
\end{equation}
The next lemma shows that, along the delocalized path \(\widetilde\bx_t\), the feature map \(\phi_{t,\widetilde\bx_t}\) is well-approximated by the linearized feature map \(\phi_{\mathrm{eff},t,\widetilde\bx_t}\).

\begin{lemma}\label{lem:reverse_feature_linearization}
    Suppose that
    Assumptions~\ref{ass:high_dim},~\ref{ass:input_dist}, and
    \ref{ass:kernel_reverse} hold.
    Then, for every fixed \(k\in\N\),
    \begin{equation}\label{eq:reverse_feature_linearization_bound}
        \E\left[
        \sup_{t\in[t_0,T]}
        \|\phi_{t,\widetilde\bx_t}-\phi_{\mathrm{eff},t,\widetilde\bx_t}\|_{L^2(\widehat\pi_{d,n})}^{2k}
        \mid \bX\right]^{1/k}
        \prec d^{-5}.
    \end{equation}
\end{lemma}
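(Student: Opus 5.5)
Since $h(0)=0$, $h'(0)=1$ and $h^{(k)}(0)=0$ for $k\in\{2,3,4\}$ (Assumption~\ref{ass:kernel_reverse}), Taylor's theorem with integral remainder gives the pointwise bound
\[
    |h(u)-u|\le \frac{|u|^5}{120}\sup_{|\xi|\le|u|}|h^{(5)}(\xi)|\le C|u|^5e^{C|u|}.
\]
Writing $U_{t,i}(\bz)\deq \<\widetilde\bx_t,\rho_t\bx_{0,i}+\sigma_t\bz\>/d$, we have
\[
    \|\phi_{t,\widetilde\bx_t}-\phi_{\mathrm{eff},t,\widetilde\bx_t}\|_{L^2(\widehat\pi_{d,n})}^2
    =\frac1n\sum_{i=1}^n\E_\bz\bigl[(h(U_{t,i})-U_{t,i})^2\bigr]
    \le \frac{C}{n}\sum_{i=1}^n\E_\bz\bigl[|U_{t,i}|^{10}e^{2C|U_{t,i}|}\bigr].
\]
So the plan is to show $U_{t,i}$ is of order $d^{-1/2}$ uniformly in $t$ and $i$. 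The $d^{-5}$ rate in~\eqref{eq:reverse_feature_linearization_bound} then comes from $(d^{-1/2})^{10}$.

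Split $U_{t,i}=\rho_t\<\widetilde\bx_t,\bx_{0,i}\>/d+\sigma_t\<\widetilde\bx_t,\bz\>/d$. The $\bz$ is the inner integration variable, independent of the path. Conditionally on $\widetilde\bx_t$, the second term is exactly Gaussian with variance $\sigma_t^2\|\widetilde\bx_t\|_2^2/d^2$. Hence $\E_\bz[|\<\widetilde\bx_t,\bz\>/d|^{p}]^{1/p}\lesssim \sqrt p\,\|\widetilde\bx_t\|_2/d$ and $\E_\bz[e^{c|\<\widetilde\bx_t,\bz\>|/d}]\le 2e^{c^2\|\widetilde\bx_t\|_2^2/(2d^2)}$. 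Using $|a+b|^{10}\lesssim|a|^{10}+|b|^{10}$, $e^{|a+b|}\le e^{|a|}e^{|b|}$ and Cauchy--Schwarz in $\bz$, we get
\[
    \sup_t\|\phi_{t,\widetilde\bx_t}-\phi_{\mathrm{eff},t,\widetilde\bx_t}\|^2
    \lesssim \Bigl(\frac{Y^{10}}{d^{10}}+\frac{N^{10}}{d^{10}}\Bigr)\exp\Bigl(\frac{CY}{d}+\frac{CN^2}{d^2}\Bigr),
\]
where $Y\deq\sup_{t}\max_i|\<\widetilde\bx_t,\bx_{0,i}\>|$ and $N\deq\sup_t\|\widetilde\bx_t\|_2$.

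Raise this to the power $k$, take the conditional expectation given $\bX$, and apply Hölder to separate the polynomial and exponential factors. Lemma~\ref{lem:linearized_SDE_bounds} supplies the polynomial moments: $\E[Y^{p}\mid\bX]^{1/p}\prec\sqrt d$ and $\E[N^p\mid\bX]^{1/p}\lesssim\sqrt d$. This makes the polynomial factor $\prec d^{-5k}$ after taking the $1/k$ power.

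The main obstacle is the exponential factor; the rest is bookkeeping. For the $N$ part, $CN^2/d^2=(C/d)\cdot N^2/d$, and the exponential-moment bound of Lemma~\ref{lem:linearized_SDE_bounds} with $s=C/d$ gives an $O(1)$ bound. For the $Y$ part, Cauchy--Schwarz gives $Y\le N\max_i\|\bx_{0,i}\|_2$. On the very-high-probability event $\max_i\|\bx_{0,i}\|_2^2\le Cd$ from~\eqref{eq:norm_concentration_overlaps}, this yields $CY/d\le C'N/\sqrt d\le C'(1+N^2/d)$, which is again controlled by the same exponential moment with a constant $s$ (valid once $d\ge 8sC$). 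Off that event nothing is needed, since the claim is stochastic domination conditional on $\bX$ and the event depends only on $\bX$. Combining the pieces gives the $\prec d^{-5}$ bound uniformly over $t\in[t_0,T]$, because every quantity was already a supremum over the path.
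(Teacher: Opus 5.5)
Your proposal is correct and follows essentially the same route as the paper. Both arguments use the Taylor bound $|h(u)-u|^2\lesssim |u|^{10}e^{C|u|}$ and Gaussian integration in $\bz$ conditionally on the path. Both handle the overlap term by Cauchy--Schwarz/Young, giving $|\<\widetilde\bx_t,\bx_{0,i}\>|/d\lesssim 1+\|\widetilde\bx_t\|_2^2/d$ on the event $\max_i\|\bx_{0,i}\|_2^2\lesssim d$, and both close with the polynomial and exponential moment bounds of Lemma~\ref{lem:linearized_SDE_bounds}. The only cosmetic difference is the order of operations: the paper separates $\E_\bz[e^{2C|U_{i,t}|}]$ and $\E_\bz[|U_{i,t}|^{20}]$ by Cauchy--Schwarz before splitting $U_{i,t}$, whereas you split $U_{i,t}$ into its data and noise parts first.
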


\begin{proof}
    Let \(U_{i,t}(\bz)\deq \<\widetilde\bx_t,\rho_t\bx_{0,i}+\sigma_t\bz\>/d\). Taylor's theorem and Assumption~\ref{ass:kernel_reverse} give
    \[
        |h(u)-u|^2\leq C e^{C|u|}|u|^{10}
    \]
    for some constant \(C>0\), and hence Cauchy-Schwarz gives
    \begin{equation}\label{eq:reverse_feature_linearization_bound_intermediate}
        \sup_{t\in[t_0,T]}
        \|\phi_{t,\widetilde\bx_t}-\phi_{\mathrm{eff},t,\widetilde\bx_t}\|_{L^2(\widehat\pi_{d,n})}^2 \leq C\sup_{t\in[t_0,T]\, ,i\in [n]}\E_{\bz}[e^{2C|U_{i,t}(\bz)|}]^{1/2} \sup_{t\in[t_0,T]\, ,i\in [n]}\E_{\bz}[|U_{i,t}(\bz)|^{20}]^{1/2}.
    \end{equation}

    For the exponential term, conditionally on \(\widetilde\bx_t\), \(\<\widetilde\bx_t,\bz\>\) is Gaussian with variance \(\|\widetilde\bx_t\|_2^2\) so
    \[
        \sup_{t\in[t_0,T]\, ,i\in [n]}\E_{\bz}[e^{2C|U_{i,t}(\bz)|}] \lesssim \sup_{t\in[t_0,T]\, ,i\in [n]}\exp\left(\frac{C\rho_t|\<\bx_{0,i},\widetilde{\bx}_t\>|}{d}+\frac{2C^2\sigma_t^2\|\widetilde{\bx}_t\|_2^2}{d^2}\right).
    \]
    By Young's inequality and, conditionally on the very high probability event \(\{\max_{i\le n}\|\bx_{0,i}\|_2^2\lesssim d\}\),
    \[
        \exp\left(\frac{C\rho_t|\<\bx_{0,i},\widetilde{\bx}_t\>|}{d}\right)\lesssim \exp\left(\frac{C'\|\widetilde{\bx}_t\|^2_2}{2d}\right)
    \]
    for some constant \(C'>0\). By the exponential bound in Lemma~\ref{lem:linearized_SDE_bounds}, \(\E[\exp(\sup_{t}u\|\widetilde{\bx}_t\|_2^2/d)\mid\bX]\) is bounded for every fixed \(u>0\) and \(d\) large enough depending on \(u\). Hence, for every fixed \(m\in\N\), we get
    \[
        \E\left[\sup_{t\in[t_0,T]\, ,i\in [n]}\E_{\bz}[e^{2C|U_{i,t}(\bz)|}]^{m/2}\mid \bX\right] \lesssim 1.
    \]

    For the polynomial term, conditionally on \(\widetilde\bx_t\), \(\<\widetilde\bx_t,\bz\>\) is Gaussian with variance \(\|\widetilde\bx_t\|_2^2\) so
    \[
        \sup_{t\in[t_0,T]\, ,i\in [n]}\E_{\bz}[|U_{i,t}(\bz)|^{20}]^{1/20} \lesssim \sup_{t\in[t_0,T],\, i\in [n]}\left(\frac{\rho_t|\<\bx_{0,i},\widetilde{\bx}_t\>|}{d}+\frac{\sigma_t\|\widetilde{\bx}_t\|_2}{d}\right). 
    \]
    For every fixed \(m\in\N\), by Lemma~\ref{lem:linearized_SDE_bounds}, it follows that
    \[
        \E\left[\sup_{t\in[t_0,T]\, ,i\in [n]}\E_{\bz}[|U_{i,t}(\bz)|^{20}]^{m/2}\mid \bX\right] \prec d^{-5m}.
    \]
    Taking \(m=2k\) in the preceding two estimates, raising \eqref{eq:reverse_feature_linearization_bound_intermediate} to the \(k\)-th power, and applying conditional Cauchy--Schwarz yields the desired bound.
\end{proof}

In addition to Lemma~\ref{lem:reverse_feature_linearization}, we will also need a uniform-in-time linearization of the empirical kernel operator \(\widehat\cK_t=\cS_t\cS_t^*\), with \(\cS_t\) defined in~\eqref{eq:forward_operators} for diffusion time \(t>0\). 
For every \(t>0\), decompose
\begin{equation}\label{eq:reverse_kernel_operator_decomposition}
    \widehat\cK_t-\widetilde\cK_{t}
    =
    \Delta_{\mathrm{od},t}+\Delta_{\mathrm{d},t},
\end{equation}
where the operators \(\Delta_{\mathrm{od},t},\Delta_{\mathrm{d},t}: L^2(\widehat\pi_{d,n})\to L^2(\widehat\pi_{d,n})\) are defined by
\begin{align*}
    \Delta_{\mathrm{od},t}f(\bx_{0,i},\bz) = \frac{1}{n}\sum_{j\neq i}\E_{\bz'}\left[D_{ij,t}(\bz,\bz') f(\bx_{0,j}, \bz')\right], \quad \Delta_{\mathrm{d},t}f(\bx_{0,i},\bz) = \frac{1}{n}\E_{\bz'}\left[D_{ii,t}(\bz,\bz') f(\bx_{0,i}, \bz')\right]
\end{align*}
with
\begin{equation}\label{eq:reverse_kernel_difference}
    D_{ij,t}(\bz,\bz') \deq  h(U_{ij,t}(\bz,\bz')) - U_{ij,t}(\bz,\bz') - \ind_{i=j}n\widetilde{\delta}_{i,t},\qquad  U_{ij,t}(\bz,\bz')
    \deq \frac{\< \rho_t\bx_{0,i}+\sigma_t\bz,\,
    \rho_t\bx_{0,j}+\sigma_t\bz'\>}{d}.
\end{equation}
We have the following uniform-in-time linearization lemma for the diagonal and off-diagonal kernel operator differences.

\begin{lemma}\label{lem:reverse_operator_linearization_high_order}
    Suppose that Assumptions~\ref{ass:high_dim},~\ref{ass:input_dist}, and
    \ref{ass:kernel_reverse} hold.
    Then
    \[
        \sup_{t>0}
        \|\Delta_{\mathrm{od},t}\|_{\op}
        \prec d^{-5/2},
        \qquad
        \sup_{t>0}
        \|\Delta_{\mathrm{d},t}\|_{\op}
        \prec d^{-3/2}.
    \]
\end{lemma}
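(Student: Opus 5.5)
We follow the proof of Proposition~\ref{prop:linearization_kernel_op}, now with the stronger flatness of $h$ from Assumption~\ref{ass:kernel_reverse}. We also need uniformity in $t$, which we obtain from the fact that every overlap bound depends on $t$ only through $\rho_t,\sigma_t\in[0,1]$. Throughout, we work on the very high probability event where $\max_{i\neq j}|\<\bx_{0,i},\bx_{0,j}\>|\prec\sqrt d$ and $\max_i\|\bx_{0,i}\|_2^2\lesssim d$, both from \eqref{eq:norm_concentration_overlaps}. This event does not depend on $t$.

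\emph{Off-diagonal term.} As before, Cauchy--Schwarz over the joint measure gives
\[
\|\Delta_{\mathrm{od},t}\|_\op^2\le \frac{1}{n^2}\sum_{i\neq j}\E_{\bz,\bz'}\big[(h(U_{ij,t})-U_{ij,t})^2\big].
\]
Since $h(0)=h''(0)=h'''(0)=h^{(4)}(0)=0$ and $h'(0)=1$, Taylor's theorem gives $h(u)-u=h^{(5)}(\xi)u^5/120$, and hence $|h(u)-u|^2\le Ce^{C|u|}|u|^{10}$.

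We bound the exponential factor by Cauchy--Schwarz together with Lemma~\ref{lem:exp_noisy_overlap_moments}. Since $|U_{ij,t}|\le (\|\rho_t\bx_{0,i}+\sigma_t\bz\|_2^2+\|\rho_t\bx_{0,j}+\sigma_t\bz'\|_2^2)/(2d)$, the lemma applies uniformly in $t$, because the LSI constant of $\rho_t\bx_{0,i}+\sigma_t\bz$ (for fixed $\bx_{0,i}$) is at most $1$ and its mean has norm at most $\|\bx_{0,i}\|_2$.

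For the polynomial factor, we expand $U_{ij,t}$ into its four overlap terms. Using \eqref{eq:standard_overlap_bounds}, each term has all moments $\prec d^{-1/2}$, uniformly in $\rho_t,\sigma_t\le1$. This gives $\E[|U_{ij,t}|^{20}]^{1/2}\prec d^{-5}$. Summing the $n^2$ pairs and dividing by $n^2$, we get $\|\Delta_{\mathrm{od},t}\|_\op^2\prec d^{-5}$, that is, $\|\Delta_{\mathrm{od},t}\|_\op\prec d^{-5/2}$.

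\emph{Diagonal term.} Here $\widetilde\delta_{i,t}$ is centred at $\rho_t^2\|\bx_{0,i}\|_2^2/d$, so we Taylor expand $h(u)-u$ around $c_{i,t}\deq\rho_t^2\|\bx_{0,i}\|_2^2/d$. This gives $D_{ii,t}=(h'(\xi)-1)(U_{ii,t}-c_{i,t})$. The difference is
\[
U_{ii,t}-c_{i,t}=\frac{\rho_t\sigma_t(\<\bx_{0,i},\bz\>+\<\bx_{0,i},\bz'\>)+\sigma_t^2\<\bz,\bz'\>}{d},
\]
whose moments are $\prec d^{-1/2}$ uniformly in $t$. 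The factor $h'(\xi)-1$ has bounded moments by the same exponential-moment argument as above. As in Proposition~\ref{prop:linearization_kernel_op}, Cauchy--Schwarz in $\bz'$ then gives $\|\Delta_{\mathrm{d},t}\|_\op\le n^{-1}\max_i\E[D_{ii,t}^2]^{1/2}\prec d^{-3/2}$. Using the sample norm instead of $\tau_\bSigma$ removes the deterministic $O(d^{-1/2})$ discrepancy, but no better rate is available here, because the fluctuation $U_{ii,t}-c_{i,t}$ is genuinely of order $d^{-1/2}$.

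\emph{Main obstacle: the supremum over $t>0$.} A pointwise bound for each $t$ is not enough, because $\prec$ requires a single very-high-probability event that works for every $t$. We handle this by observing that the only randomness entering the bounds is the training-data event above. The expectations over $\bz,\bz'$ are deterministic functions of $(\rho_t,\sigma_t)$ and the data. Each moment bound is a polynomial or exponential expression in $\rho_t,\sigma_t\in[0,1]$ with coefficients controlled on that event, so it is uniform in $t$ once the data event holds. The one step to check carefully is the application of Lemma~\ref{lem:exp_noisy_overlap_moments} near $\sigma_t\to0$ and $\sigma_t\to1$. There the mean and LSI constants remain bounded by $\max_i\|\bx_{0,i}\|_2^2/d\lesssim1$ and by $1$ respectively, so the constants stay uniform.
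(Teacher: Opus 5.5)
Your proposal is correct and follows the paper's proof essentially step by step. For the off-diagonal blocks you use the fifth-order Taylor remainder with the joint Cauchy--Schwarz bound; for the diagonal blocks you use a mean-value expansion around the sample-dependent centre $\rho_t^2\|\bx_{0,i}\|_2^2/d$ with the block-diagonal Cauchy--Schwarz bound; and you get uniformity in $t$ from $\rho_t,\sigma_t\in[0,1]$ on a single $t$-independent data event. The only cosmetic difference is in the exponential factor: you bound it via $|U_{ij,t}|\le(\|\rho_t\bx_{0,i}+\sigma_t\bz\|_2^2+\|\rho_t\bx_{0,j}+\sigma_t\bz'\|_2^2)/(2d)$ and Lemma~\ref{lem:exp_noisy_overlap_moments}, whereas the paper dominates $\sup_t|U_{ij,t}|$ by a time-independent sum of absolute overlaps, and both work.
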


\begin{proof}
    We repeat the proof of Proposition~\ref{prop:linearization_kernel_op},
    keeping the estimates uniform in \(t\). This uniformity follows
    from \(0\le\rho_t,\sigma_t\le1\). Indeed, for \(i\ne j\),
    \[
        \sup_{t>0}|U_{ij,t}(\bz,\bz')|
        \le
        \frac{
        |\<\bx_{0,i},\bx_{0,j}\>|
        +|\<\bx_{0,i},\bz'\>|
        +|\<\bz,\bx_{0,j}\>|
        +|\<\bz,\bz'\>|
        }{d}.
    \]
    Thus the standard overlap and exponential-moment bounds used in
    the proof of Proposition~\ref{prop:linearization_kernel_op} hold with a
    common, time-independent majorant.

    For \(i\ne j\), Assumption~\ref{ass:kernel_reverse} gives
    \[
        h(U_{ij,t})-U_{ij,t}
        =
        \frac{h^{(5)}(\xi_{ij,t})}{5!}U_{ij,t}^5.
    \]
    Consequently,
    \[
        \sup_{t>0}\max_{i\ne j}
        \E_{\bz,\bz'}
        |h(U_{ij,t})-U_{ij,t}|^2
        \prec d^{-5}.
    \]
    Applying the off-diagonal Cauchy--Schwarz estimate from the proof
    of Proposition~\ref{prop:linearization_kernel_op} yields
    \[
        \sup_{t>0}
        \|\Delta_{\mathrm{od},t}\|_{\op}
        \prec d^{-5/2}.
    \]

    For the diagonal blocks, we apply the mean-value theorem to obtain
    \begin{align*}
         D_{ii,t}(\bz,\bz') 
         =
        (h'(\xi_{i,t}(\bz,\bz')) - 1) \left(\frac{\rho_t\sigma_t}{d}\<\bx_{0,i},\bz\>+\frac{\rho_t\sigma_t}{d}\<\bx_{0,i},\bz'\>+\frac{\sigma_t^2}{d}\<\bz,\bz'\>\right),
    \end{align*}
    where \(\xi_{i,t}\) lies between \(U_{ii,t}\) and \(\rho_t^2\|\bx_{0,i}\|_2^2/d\). For every fixed \(m\geq 2\),
    \[
        \sup_{t >0}\max_{i\leq n} \E_{\bz,\bz'}\left[\left|\frac{\rho_t\sigma_t}{d}\<\bx_{0,i},\bz\>+\frac{\rho_t\sigma_t}{d}\<\bx_{0,i},\bz'\>+\frac{\sigma_t^2}{d}\<\bz,\bz'\>\right|^m\right]^{1/m} \prec d^{-1/2}
    \]
    by the same argument as in the proof of Proposition~\ref{prop:linearization_kernel_op}. Moreover, since \(\xi_{i,t}\) lies between \(U_{ii,t}\) and \(\rho_t^2\|\bx_{0,i}\|_2^2/d\),
    \[
        |\xi_{i,t}|\leq\rho_t^2\frac{\|\bx_{0,i}\|_2^2}{d}+\left|U_{ii,t}-\rho_t^2\frac{\|\bx_{0,i}\|_2^2}{d}\right|.
    \]
    Therefore, the exponential-moment bound from Lemma~\ref{lem:exp_noisy_overlap_moments} and \(\max_i\|\bx_{0,i}\|_2^2/d\lesssim 1\) with very high probability imply that, for every fixed \(s>0\),
    \[
        \sup_{t>0}\max_{i\le n}
        \E_{\bz,\bz'}e^{s|\xi_{i,t}(\bz,\bz')|}
        \lesssim 1
    \]
    with very high probability.
    Using \(|h'(u)-1|\le Ce^{C|u|}\) by Assumption~\ref{ass:kernel_reverse}, H\"older's inequality, and the preceding moment bound, we get
    \[
        \sup_{t>0}\max_{i\leq n} \E_{\bz,\bz'}|D_{ii,t}(\bz,\bz')|^2 \prec d^{-1}.
    \]
    Finally, \(\Delta_{\mathrm d,t}\) is block diagonal in \(i\), so
    \[
        \sup_{t>0}\|\Delta_{\mathrm{d},t}\|_{\op}\leq\frac{1}{n}\sup_{t>0}\max_{i\leq n}\left(\E_{\bz,\bz'}|D_{ii,t}(\bz,\bz')|^2\right)^{1/2}\prec d^{-3/2},
    \]
    where we have used \(n\asymp d\) from Assumption~\ref{ass:high_dim}.
\end{proof}

Lemma~\ref{lem:reverse_operator_linearization_high_order} provides uniform-in-time operator-norm bounds for the off-diagonal and diagonal remainders. The off-diagonal bound is sufficiently strong to control its contribution in the subsequent analysis. The diagonal operator-norm bound is too weak for the gradient-flow times considered here. Nevertheless, the diagonal remainder has additional structure, which we exploit to obtain a stronger estimate on the invariant subspace \(\widetilde{\cV}\).

\begin{lemma}\label{lem:reverse_diagonal_subspace_kernel}
    Suppose that Assumptions~\ref{ass:high_dim},~\ref{ass:input_dist}, and
    \ref{ass:kernel_reverse} hold. Let \(\proj_{\widetilde\cV}\) denote the orthogonal projection onto the subspace \(\widetilde\cV\subseteq L^2(\widehat{\pi}_{d,n})\) defined in~\eqref{eq:tilde-V-def}. Then,
    \[
        \sup_{t\in[t_0,T]}
        \|\proj_{\widetilde\cV}\Delta_{\mathrm d,t}\proj_{\widetilde\cV}\|_{\op} \prec d^{-2}.
    \]
\end{lemma}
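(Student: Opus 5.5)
The plan is to reduce the projected diagonal remainder to explicit Gaussian integrals against an orthonormal basis of \(\widetilde\cV\) and show that the leading \(d^{-3/2}\) contribution cancels. Since \(\Delta_{\mathrm d,t}\) is block diagonal in \(i\), it helps to split \(\widetilde\cV\) into its two natural pieces. The first is the span of the per-sample indicators \(F_{\ba,0}\), i.e. the functions constant in \(\bz\) on each tube. The second is the span of the linear noise functions \(F_{0,\bb}=\<\bb,\bz\>\). These pieces are orthogonal in \(L^2(\widehat\pi_{d,n})\), with \(\|F_{\ba,\bb}\|^2=\|\ba\|_2^2+\|\bb\|_2^2\). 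Hence \(\proj_{\widetilde\cV}\Delta_{\mathrm d,t}\proj_{\widetilde\cV}\) is represented by an \((n+d)\times(n+d)\) block matrix \(\bG_t\) with entries \(\<F_{\be_i,0},\Delta_{\mathrm d,t}F_{\be_j,0}\>\), \(\<F_{\be_i,0},\Delta_{\mathrm d,t}F_{0,\bb}\>\), and so on. It suffices to bound \(\|\bG_t\|_{\op}\) uniformly in \(t\in[t_0,T]\).

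We compute the blocks in turn.

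\emph{Constant--constant block.} This block is diagonal, with entry
\[
\frac1n\,\E_{\bz,\bz'}[D_{ii,t}(\bz,\bz')].
\]
The choice \(\widetilde\delta_{i,t}=(h(\rho_t^2\|\bx_{0,i}\|^2/d)-\rho_t^2\|\bx_{0,i}\|^2/d)/n\) centers the expansion exactly at \(u_i\deq\rho_t^2\|\bx_{0,i}\|^2/d\). Write \(U_{ii,t}=u_i+w\) with \(w=\frac{\rho_t\sigma_t}{d}(\<\bx_{0,i},\bz\>+\<\bx_{0,i},\bz'\>)+\frac{\sigma_t^2}{d}\<\bz,\bz'\>\), and Taylor expand \(g(u)=h(u)-u\) to second order about \(u_i\). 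The first-order term vanishes because \(\E w=0\). The remainder is \(O(\E w^2)=O(d^{-1})\), controlled using Assumption~\ref{ass:kernel_reverse} and the exponential moments from Lemma~\ref{lem:exp_noisy_overlap_moments}, uniformly in \(t\). This gives entries \(\prec d^{-2}\).

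\emph{Noise--noise block.} This is a \(d\times d\) matrix averaged over \(i\):
\[
\frac1{n^2}\sum_i\E[\<\bb,\bz\>D_{ii,t}\<\bb',\bz'\>].
\]
By Gaussian integration by parts in both \(\bz\) and \(\bz'\), it equals
\[
\frac1{n^2}\sum_i\E\bigl[\<\bb,\nabla_\bz\nabla_{\bz'}D_{ii,t}\,\bb'\>\bigr].
\]
Differentiating \(g(U)\) yields a term \(g''(U)\frac{\rho_t\sigma_t}{d}\frac{\rho_t\sigma_t}{d}\<\bb,\rho_t\bx_{0,i}+\sigma_t\bz'\>\<\bb',\rho_t\bx_{0,i}+\sigma_t\bz\>\) plus a term \(g'(U)\frac{\sigma_t^2}{d}\<\bb,\bb'\>\). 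The second term has size \(|g'|\cdot d^{-1}\), where \(g'(U)=g'(u_i)+O(w)\) and \(g'(u_i)=O(1)\). This looks like order \(1/(nd)\sim d^{-2}\) after the \(1/n\) normalization, because only one factor of \(1/n\) survives from the average over \(i\). For the first term, the quadratic form in \(\bb,\bb'\) is \(\frac1{d^2}\cdot\frac1n\sum_i\<\bb,\bx_{0,i}\>\<\bb',\bx_{0,i}\>\), weighted by \(g''\). Since \(\frac1n\sum_i\bx_{0,i}\bx_{0,i}^\sT=\widehat\bSigma\) has bounded operator norm by \eqref{eq:cov_concentration}, this is \(O(d^{-2})\) times \(1/n\). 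Both parts are therefore \(\prec d^{-2}\) in operator norm.

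\emph{Cross block.} This is the \(n\times d\) matrix with entries
\[
\frac1{n^{3/2}}\E[D_{ii,t}\<\bb,\bz'\>]\quad\text{(row } i\text{ only).}
\]
One integration by parts gives \(\frac{\sigma_t}{d}\E[g'(U)\,\<\bb,\rho_t\bx_{0,i}+\sigma_t\bz\>]\), which is dominated by \(\frac{\rho_t\sigma_t}{d}g'(u_i)\<\bb,\bx_{0,i}\>\), up to corrections of order \(O(d^{-1/2})\) smaller. The resulting operator is \(\frac{1}{n^{1/2}d}\,\mathrm{diag}(g'(u_i))\bX^\sT/\sqrt n\cdot\frac1n\)-type. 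Using \(\|\bX\|_{\op}\prec\sqrt d\) (from \eqref{eq:cov_concentration}), its norm is \(\prec \frac{1}{n d}\sqrt{d}\cdot\frac{\sqrt d}{\sqrt n}\asymp d^{-2}\).

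To finish, uniformity in \(t\) follows exactly as in Lemma~\ref{lem:reverse_operator_linearization_high_order}, using the time-independent majorants on the overlaps. The main obstacle is the bookkeeping of the \(\sqrt n\) normalization in \(F_{\ba,\bb}\) together with the cross block: one must check that the leading \(g'(u_i)\) term really aggregates into \(\bX^\sT\) rather than an \(\ell^1\)-type sum, so that the operator norm picks up only \(\|\bX\|_{\op}\) and not \(n\cdot\max_i\|\bx_{0,i}\|\). I would verify this first by writing \(\bG_t\) explicitly as \(\frac1{n}(\text{diag}+\frac{1}{d}\bX\text{-terms})\).
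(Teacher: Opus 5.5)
Your proof is correct and rests on the same mechanism as the paper's. In both, $g=h-\mathrm{id}$ is expanded about the sample-dependent center $u_i=\rho_t^2\|\bx_{0,i}\|_2^2/d$, which is exactly what $\widetilde\delta_{i,t}$ buys. The first-order term then produces only the structured pieces $\frac{\rho_t\sigma_t}{n^{3/2}d}\bH\bX^\sT$ (with $\bH=\diag(g'(u_i))_{i\le n}$) and $\frac{\sigma_t^2}{nd}\bigl(\frac1n\sum_i g'(u_i)\bigr)\bI_d$. Everything else carries a factor $w^2=O(d^{-1})$.

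The organization differs. The paper does not split $\widetilde\cV$ into blocks or use Gaussian integration by parts. It Taylor expands $D_{ii,t}$ once to second order and computes the linear term exactly from Gaussian moments against $\sqrt n a_i+\<\bb,\bz\>$. It then bounds the quadratic remainder by a single Cauchy--Schwarz, writing $F_i(\bz)=F(\bx_{0,i},\bz)$ and using $\E[F_i(\bz)^2F'_i(\bz')^2]=\E[F_i(\bz)^2]\,\E[F'_i(\bz')^2]$. That remainder bound, $\lesssim (nd)^{-1}$, holds for arbitrary $F,F'\in L^2(\widehat\pi_{d,n})$. So the projection onto $\widetilde\cV$ is needed only for the linear term, and no per-block correction terms ever arise.

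Your route makes you track such corrections, and in the cross block your written bookkeeping is off. The leading operator has norm $\lesssim\frac{\rho_t\sigma_t}{n^{3/2}d}\|\bX\|_{\op}\lesssim\frac{1}{nd}$. By contrast, your displayed product $\frac{1}{nd}\sqrt d\cdot\frac{\sqrt d}{\sqrt n}$ is of order $d^{-3/2}$, which would not suffice.

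The corrections lack the $\bX^\sT$ structure, so they must be aggregated in Frobenius norm. After integration by parts, row $i$ of the correction is $\frac{\sigma_t}{n^{3/2}d}\E[(g'(U_{ii,t})-g'(u_i))(\rho_t\bx_{0,i}+\sigma_t\bz)]$. This has Euclidean norm $\lesssim n^{-3/2}d^{-1}$, so the correction block has Frobenius norm $\lesssim \sqrt n\, n^{-3/2}d^{-1}=(nd)^{-1}$. The extra factor $d^{-1/2}$ you identified is exactly what pays for replacing $\|\bX\|_{\op}$ by a Frobenius bound.

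A minor slip: the double integration by parts gives the prefactor $\sigma_t^2/d^2$, not $\rho_t^2\sigma_t^2/d^2$, in the $g''$ term of the noise--noise block. This does not affect the bound.
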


\begin{proof}
    Consider \(F_{\ba,\bb},F_{\ba',\bb'}\in\widetilde\cV\) with \(\|F_{\ba,\bb}\|_{L^2(\widehat\pi_{d,n})}\vee \|F_{\ba',\bb'}\|_{L^2(\widehat\pi_{d,n})}\le 1\), and write
    \[
        \<F_{\ba,\bb},\Delta_{\mathrm{d},t}F_{\ba',\bb'}\>_{L^2(\widehat\pi_{d,n})} = \frac{1}{n^2}\sum_{i=1}^n \E_{\bz,\bz'}\left[D_{ii,t}(\bz,\bz')\left(\sqrt{n}a_i+\<\bb,\bz\>\right)\left(\sqrt{n}a_i'+\<\bb',\bz'\>\right)\right].
    \]
    Taylor expansion of \(h(u)-u\) at \(\rho_t^2\|\bx_{0,i}\|_2^2/d\) gives
    \[
        D_{ii,t} = \left(h'\left(\frac{\rho_t^2\|\bx_{0,i}\|_2^2}{d}\right)-1\right)
        \left(U_{ii,t}-\frac{\rho_t^2\|\bx_{0,i}\|_2^2}{d}\right)
         + \frac{1}{2}h''(\xi_{i,t})\left(U_{ii,t}-\frac{\rho_t^2\|\bx_{0,i}\|_2^2}{d}\right)^2,
    \]
    where \(\xi_{i,t}\) lies between \(U_{ii,t}\) and \(\rho_t^2\|\bx_{0,i}\|_2^2/d\). We decompose the bilinear form into the linear and quadratic Taylor terms:
    \begin{align*}
        \<F_{\ba,\bb},\Delta_{\mathrm{d},t}F_{\ba',\bb'}\>_{L^2(\widehat\pi_{d,n})} = E_{1} + E_{2},
    \end{align*}
    where, with \(\eta_{i,t}=\rho_t^2\|\bx_{0,i}\|_2^2/d\),
    \begin{align*}
        E_{1} &\deq  \frac{1}{n^2}\sum_{i=1}^n \E_{\bz,\bz'}\left[\left(h'(\eta_{i,t})-1\right)
        \left(U_{ii,t}-\eta_{i,t}\right)
        \left(\sqrt{n}a_i+\<\bb,\bz\>\right)\left(\sqrt{n}a_i'+\<\bb',\bz'\>\right)\right],\\
        E_{2} &\deq  \frac{1}{2n^2}\sum_{i=1}^n \E_{\bz,\bz'}\left[h''(\xi_{i,t}(\bz,\bz'))\left(U_{ii,t}-\eta_{i,t}\right)^2
        \left(\sqrt{n}a_i+\<\bb,\bz\>\right)\left(\sqrt{n}a_i'+\<\bb',\bz'\>\right)\right].
    \end{align*}

    The uniform overlap and exponential-moment bounds used in the proof of
    Lemma~\ref{lem:reverse_operator_linearization_high_order} imply
    \[
        \sup_{\substack{t\in[t_0,T]\\i\le n}}
        \E_{\bz,\bz'}\left[
        |h''(\xi_{i,t})|^2
        \left|
        U_{ii,t}-\rho_t^2\frac{\|\bx_{0,i}\|_2^2}{d}
        \right|^4
        \right]^{1/2}
        \prec d^{-1},
    \]
    and therefore
    \[
        |E_{2}|\leq \frac1{2n}
        \sup_{\substack{t\in[t_0,T]\\i\le n}}
        \E_{\bz,\bz'}\left[
        |h''(\xi_{i,t})|^2
        \left|
        U_{ii,t}-\rho_t^2\frac{\|\bx_{0,i}\|_2^2}{d}
        \right|^4
        \right]^{1/2}
        \prec d^{-2},
    \]
    where we have used \(n\asymp d\) from Assumption~\ref{ass:high_dim}.

    It remains to control the linear Taylor term. Using
    \[
        U_{ii,t}-\rho_t^2\frac{\|\bx_{0,i}\|_2^2}{d}
        =
        \frac{\rho_t\sigma_t}{d}\<\bx_{0,i},\bz\>
        +\frac{\rho_t\sigma_t}{d}\<\bx_{0,i},\bz'\>
        +\frac{\sigma_t^2}{d}\<\bz,\bz'\>
    \]
    and the independence and standard Gaussian moments of \((\bz,\bz')\), its contribution to the bilinear form is
    \begin{equation}\label{eq:reverse_diagonal_subspace_kernel_linear_term}
        E_1 = \frac{1}{n^2}\sum_{i=1}^n \left(h'\left(\eta_{i,t}\right)-1\right) \left(\frac{\rho_t\sigma_t\sqrt n}{d} a_i'\<\bb,\bx_{0,i}\> + \frac{\rho_t\sigma_t\sqrt n}{d} a_i\<\bx_{0,i},\bb'\> + \frac{\sigma_t^2}{d}\<\bb,\bb'\> \right).
    \end{equation}
    On an event of very high probability,
    \[
        \sup_{t\in[t_0,T]}\max_{i\le n}
        \left|
        h'\left(\rho_t^2\frac{\|\bx_{0,i}\|_2^2}{d}\right)-1
        \right|\lesssim1,
        \qquad
        \|\bX\|_{\op}\lesssim\sqrt n.
    \]
    Consequently, 
    \begin{align*}
        |E_1| & \lesssim  \frac{1}{n^2}\left(\frac{\rho_t\sigma_t \sqrt{n}}{d}\bb^\sT\bX\bH\ba' + \frac{\rho_t\sigma_t \sqrt{n}}{d}\bb'^\sT\bX\bH\ba + \frac{\sigma_t^2n}{d}\<\bb,\bb'\>\right) 
        \\ & \lesssim \frac{1}{d^2}(\|\bb\|_2\|\ba'\|_2+\|\bb'\|_2\|\ba\|_2+\|\bb\|_2\|\bb'\|_2)
    \end{align*}
    where \(\bH:=\diag(h'(\rho_t^2\|\bx_{0,i}\|_2^2/d)-1)_{i\le n}\).
    By Cauchy--Schwarz inequality applied to \(\R^{2(n+d)}\),
    \[
        \|\bb\|_2\|\ba'\|_2 + \|\ba\|_2\|\bb'\|_2 \leq \sqrt{\|\ba\|_2^2+\|\bb\|_2^2}\sqrt{\|\ba'\|_2^2+\|\bb'\|_2^2}.
    \]
    Bounding \(\|\bb\|_2\|\bb'\|_2 \leq \sqrt{\|\ba\|_2^2+\|\bb\|_2^2}\sqrt{\|\ba'\|_2^2+\|\bb'\|_2^2}\), we get that
    \[
        |E_1| \lesssim d^{-2}\sqrt{\|\ba\|_2^2+\|\bb\|_2^2}\sqrt{\|\ba'\|_2^2+\|\bb'\|_2^2}.
    \]
    Finally, \((\ba,\bb)\mapsto F_{\ba,\bb}\) is an isometry from
    \(\R^{n+d}\) onto \(\widetilde\cV\). Taking the supremum over unit
    \(F_{\ba,\bb}\) and \(F_{\ba',\bb'}\), and combining the linear and
    quadratic contributions concludes the proof.
\end{proof}

\begin{remark}[Role of the sample correction]
    The sample-dependent correction \(\widetilde{\delta}_{i,t}\) is essential to obtain the \(d^{-2}\) bound in Lemma~\ref{lem:reverse_diagonal_subspace_kernel}. Using the common correction \(\delta_{n,t}\) instead would yield an extra term:
    \[
        U_{ii,t}-\rho_t^2\tau_{\bSigma}=\rho_t^2\left(\frac{\|\bx_{0,i}\|^2}{d}-\tau_\bSigma\right)+ \frac{\rho_t\sigma_t}{d}\<\bx_{0,i},\bz\>+\frac{\rho_t\sigma_t}{d}\<\bx_{0,i},\bz'\>+\frac{\sigma_t^2}{d}\<\bz,\bz'\>.
    \]
    Repeating the linear-term computation~\eqref{eq:reverse_diagonal_subspace_kernel_linear_term} after Taylor expanding around \(\rho_t^2\tau_{\bSigma}\) produces the additional contribution
    \[
        \frac{\rho_t^2}{n}(h'(\rho_t^2\tau_\bSigma)-1)\sum_{i=1}^n\left(\frac{\|\bx_{0,i}\|_2^2}{d}-\tau_\bSigma\right)a_i a_i'.
    \]
    Taking the supremum over unit \(F_{\ba,\bb}\) and \(F_{\ba',\bb'}\), this term is at most
    \[
        \frac{\rho_t^2}{n}(h'(\rho_t^2\tau_\bSigma)-1)\max_{i\le n}\left|\frac{\|\bx_{0,i}\|_2^2}{d}-\tau_\bSigma\right| \prec d^{-3/2},
    \]
    uniformly over \(t\in[t_0,T]\). Thus, the common correction would only yield a \(d^{-3/2}\) bound, whereas the sample-dependent correction cancels this term and yields the stronger \(d^{-2}\) estimate.
\end{remark}

\subsection{Proof of Theorem~\ref{thm:kl_bound_empirical_linearized_reverse}}

We next establish a uniform-in-time bound on the discrepancy between the empirical score and its linearized approximation, evaluated along the linearized reverse process. This estimate will control the KL divergence between the path laws of the empirical reverse SDE and its linearized counterpart.

\begin{lemma}[Reverse-time score linearization]\label{lem:reverse_score_linearization}
    Suppose that Assumptions~\ref{ass:high_dim},~\ref{ass:input_dist}, and
    \ref{ass:kernel_reverse} hold. Then,
    \[
        \E\left[\sup_{t\in[t_0,T]}\|\widehat{\bS}_t(\widetilde\bx_t)-\widetilde{\bS}_t(\widetilde\bx_t)\|_2^2\mid\bX\right]
        \prec \sup_{t\in[t_0,T]}\sft_t^2d^{-5}\left(1+\sft_t^2+\sft_t^4d^{-2}\right).
    \]
\end{lemma}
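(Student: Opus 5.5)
We work coordinatewise. Fix a unit vector $\bv$ and write $\sfz_\bv$ for the target. By Proposition~\ref{prop:bayes_opt_denoiser_linearization}, at every noise level we may replace $\widehat f^\star_\bv$ by $\sfz_\bv$ at exponentially small cost. With $q_\sft$ the filter of~\eqref{eq:q_sft_def}, the two quantities to compare are
\[
    \sigma_t\widehat S_{t,\bv}(\bx)=\<\phi_{t,\bx},\,q_{\sft_t}(\widehat\cK_t)\sfz_\bv\>_{L^2(\widehat\pi_{d,n})},\qquad
    \sigma_t\widetilde S_{t,\bv}(\bx)=\<\phi_{\mathrm{eff},t,\bx},\,q_{\sft_t}(\widetilde\cK_t)\sfz_\bv\>_{L^2(\widehat\pi_{d,n})},
\]
with $\phi_{t,\bx}$ and $\phi_{\mathrm{eff},t,\bx}$ as in~\eqref{eq:phi_definitions}. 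We split the difference into two terms:
\[
    A_t(\bx)=\<\phi_{t,\bx}-\phi_{\mathrm{eff},t,\bx},\,q_{\sft_t}(\widehat\cK_t)\sfz_\bv\>,\qquad
    B_t(\bx)=\<\phi_{\mathrm{eff},t,\bx},\,(q_{\sft_t}(\widehat\cK_t)-q_{\sft_t}(\widetilde\cK_t))\sfz_\bv\>.
\]
Since $t\ge t_0$, the factor $\sigma_t^{-1}$ is bounded. Summing over an orthonormal basis produces a factor $d$, so each coordinate must be controlled at a level $d^{-1}$ below the target bound.

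\emph{Term $A$.} Cauchy--Schwarz gives $|A_t|\le\|\phi_{t,\bx}-\phi_{\mathrm{eff},t,\bx}\|\,\|q_{\sft_t}(\widehat\cK_t)\|_\op$, and $\|q_\sft\|_\op\le\sft$. Lemma~\ref{lem:reverse_feature_linearization} evaluated along $\widetilde\bx_t$ then gives $\E\sup_t|A_t|^2\prec\sup_t\sft_t^2 d^{-5}$ per coordinate. This is too crude by the factor $d$ coming from the basis sum. To recover it, we sum over coordinates before applying Cauchy--Schwarz:
\[
    \sum_\ell A_{t,\bv_\ell}^2=\bigl\|\cS^\ast_t\,q_{\sft_t}(\widehat\cK_t)\,\bigl(\text{vector target }\boldsymbol{\sfz}\bigr)\ \text{tested against}\ \phi-\phi_{\mathrm{eff}}\bigr\|^2 .
\]
We then use that the vector target $-\bz$ has Hilbert--Schmidt structure: $\sum_\ell\<g,\sfz_{\bv_\ell}\>^2\le\|g\|^2$. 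This yields $\sum_\ell A_{t,\bv_\ell}^2\le\|\phi-\phi_{\mathrm{eff}}\|^2\,\|q\|_\op^2\prec\sft_t^2d^{-5}$, which is the leading term $\sft_t^2 d^{-5}$ of the claim.

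\emph{Term $B$.} We use the variation-of-parameters identity $q_\sft(\widehat\cK)-q_\sft(\widetilde\cK)=\int_0^\sft\int_0^s e^{-(s-r)\widehat\cK}(\widetilde\cK-\widehat\cK)e^{-r\widetilde\cK}\,dr\,ds$ together with the decomposition~\eqref{eq:reverse_kernel_operator_decomposition} into $\Delta_{\mathrm{od},t}+\Delta_{\mathrm{d},t}$.
\begin{itemize}
    \item The off-diagonal part costs $\sft_t^2 d^{-5/2}$ in operator norm by Lemma~\ref{lem:reverse_operator_linearization_high_order}.
    \item The diagonal part acts on $e^{-r\widetilde\cK_t}\sfz_\bv$, which lies in $\widetilde\cV$ by Lemma~\ref{lem:reverse_linearized_check_score_representation}. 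We split $\Delta_{\mathrm d}=\proj_{\widetilde\cV}\Delta_{\mathrm d}\proj_{\widetilde\cV}+(\id-\proj_{\widetilde\cV})\Delta_{\mathrm d}\proj_{\widetilde\cV}$. The compressed piece costs $\sft^2d^{-2}$ by Lemma~\ref{lem:reverse_diagonal_subspace_kernel}. The leakage out of $\widetilde\cV$ costs $d^{-3/2}$ and must then pass through $e^{-(s-r)\widehat\cK}$.
\end{itemize}
The test functional $\phi_{\mathrm{eff},t,\bx}$ lies in $\widetilde\cV$, in fact in its linear part. Its norm is $\lesssim\|\bx\|/d+|\<\bx,\bx_{0,i}\>|/d$, which is $\prec d^{-1/2}$ along $\widetilde\bx_t$ by the delocalization in Lemma~\ref{lem:linearized_SDE_bounds}. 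For the leakage we take an adjoint: we move $\phi_{\mathrm{eff}}$ through $e^{-(s-r)\widehat\cK}$, compare it with $e^{-(s-r)\widetilde\cK}$, which preserves $\widetilde\cV$, and incur one further commutator. This gives the higher powers $\sft^3,\sft^4$. We then collect the powers of $\sft_t$ and $d$, again summing over coordinates via the Hilbert--Schmidt trick, with the $d^{-1/2}$ from $\phi_{\mathrm{eff}}$ supplying the remaining decay. The aim is the bound $\sft^2d^{-5}(\sft^2+\sft^4d^{-2})$.

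The supremum over $t$ is handled by the uniformity in $t$ of Lemmas~\ref{lem:reverse_operator_linearization_high_order}--\ref{lem:reverse_diagonal_subspace_kernel}, and by taking the path supremum inside the conditional expectations supplied by Lemma~\ref{lem:linearized_SDE_bounds}.

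We expect the main obstacle to be the diagonal leakage term. There the crude $d^{-3/2}$ operator bound is insufficient, and one must show that the second-order commutator returns to $\widetilde\cV$ while tracking exact powers of $\sft_t$. We would first attempt the twice-iterated Duhamel expansion described above and check that every term carries at least two small factors.
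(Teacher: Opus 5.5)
Your proposal is correct and follows the paper's proof. It uses the same split into a feature-linearization term, an operator-difference term, and an exponentially small $\widehat f^\star_{t,\bv}\to\sfz_\bv$ swap, and the same Bessel inequality over the orthonormal family $\{\sfz_{\bv_\ell}\}_{\ell\le d}\subset\widetilde\cV$ to absorb the basis sum. It also uses the same twice-iterated Duhamel expansion, giving $\|\proj_{\widetilde\cV}(q_{\sft_t}(\widehat\cK_t)-q_{\sft_t}(\widetilde\cK_t))\proj_{\widetilde\cV}\|_{\op}\prec\sft_t^2d^{-2}+\sft_t^3d^{-3}$, which against $\|\phi_{\mathrm{eff},t,\widetilde\bx_t}\|_{L^2(\widehat\pi_{d,n})}^2\lesssim d^{-1}$ yields $\sft_t^4d^{-5}+\sft_t^6d^{-7}$.

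Your compressed/leakage split is only a rearrangement of the paper's bookkeeping; the paper instead re-expands $e^{-r\widehat\cK_t}$ so that the first-order term sits between two $\widetilde\cK_t$-semigroups, for which $\widetilde\cV$ is reducing. The obstacle you anticipate does not arise: nothing needs to return to $\widetilde\cV$ at second order, since two factors of $\|\Delta_{\mathrm{od},t}+\Delta_{\mathrm d,t}\|_{\op}\prec d^{-3/2}$ and the triple time integral already give $\sft_t^3d^{-3}$ in operator norm.
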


\begin{proof}
    Write
    \[
        \widehat\cA_t\deq \int_0^{\sft_t}e^{-s\widehat\cK_t}\,\d s,
        \qquad
        \widetilde\cA_t\deq \int_0^{\sft_t}e^{-s\widetilde\cK_t}\,\d s,
    \]
    and fix an orthonormal basis \(\{\bv_\ell\}_{\ell=1}^d\) of
    \(\R^d\). The gradient-flow representations give, for every unit
    vector \(\bv\),
    \begin{align*}
        \sigma_t\bigl(\widehat S_{t,\bv}(\bx)-\widetilde S_{t,\bv}(\bx)\bigr)
        &=\left\<\phi_{t,\bx}-\phi_{\mathrm{eff},t,\bx},
        \widehat\cA_t\widehat f^\star_{t,\bv}\right\>_{L^2(\widehat\pi_{d,n})}
        +\left\<\phi_{\mathrm{eff},t,\bx},
        (\widehat\cA_t-\widetilde\cA_t)\sfz_\bv\right\>_{L^2(\widehat\pi_{d,n})}\\
        &\quad+\left\<\phi_{\mathrm{eff},t,\bx},
        \widehat\cA_t(\widehat f^\star_{t,\bv}-\sfz_\bv)\right\>_{L^2(\widehat\pi_{d,n})},
    \end{align*}
    where \(\phi_{t,\bx}\) and \(\phi_{\mathrm{eff},t,\bx}\) are defined in~\eqref{eq:phi_definitions}. 
    
    Both \(\widehat\cK_t\) and \(\widetilde\cK_t\) are positive definite, and hence \(\|\widehat\cA_t\|_{\op}\vee\|\widetilde\cA_t\|_{\op}
    \le \sft_t\). By definition,
    \[
        \|\phi_{\mathrm{eff},t,\widetilde\bx_t}\|_{L^2(\widehat\pi_{d,n})}^2 = \frac{1}{d^2}\widetilde{\bx}_t^\sT(\rho_t^2\widehat\bSigma+\sigma_t^2\bI_d)\widetilde{\bx}_t.
    \]
    Lemma~\ref{lem:reverse_feature_linearization}, with the
    bounds for \(\widetilde\bx_t\) from Lemma~\ref{lem:linearized_SDE_bounds}, along with the operator-norm bounds from~\eqref{eq:cov_op_bound},~\eqref{eq:cov_concentration}
    gives
    \begin{equation}\label{eq:phi_eff_bounds}
        \E\left[\sup_{t\in [t_0,T]}
        \|\phi_{t,\widetilde\bx_t}-\phi_{\mathrm{eff},t,\widetilde\bx_t}\|_{L^2(\widehat\pi_{d,n})}^2
        \mid\bX\right]\prec d^{-5},
        \qquad
        \E\left[\sup_{t\in [t_0,T]}
        \|\phi_{\mathrm{eff},t,\widetilde\bx_t}\|_{L^2(\widehat\pi_{d,n})}^2
        \mid\bX\right] \lesssim d^{-1}
    \end{equation}
    with very high probability.
    Moreover, Proposition~\ref{prop:bayes_opt_denoiser_linearization} holds
    uniformly over \(t\in[t_0,T]\) and unit \(\bv\).

    We upper bound the square of the Euclidean norm of the score difference by the sum of three terms:
    \begin{align*}
        \|\widehat{\bS}_t(\widetilde\bx_t)-\widetilde{\bS}_t(\widetilde\bx_t)\|_2^2 \leq \frac{3}{\sigma_t^2}(E_{1,t}+E_{2,t}+E_{3,t}),
    \end{align*}
    where
    \begin{gather*}
         E_{1,t} \deq  \sum_{\ell=1}^d\left|\left\<
        \phi_{t,\widetilde\bx_t}-\phi_{\mathrm{eff},t,\widetilde\bx_t},
        \widehat\cA_t\widehat f^\star_{t,\bv_\ell}
        \right\>_{L^2(\widehat\pi_{d,n})}\right|^2, \qquad
        E_{2,t} \deq  \sum_{\ell=1}^d\left|\left\<
        \phi_{\mathrm{eff},t,\widetilde\bx_t},
        (\widehat\cA_t-\widetilde\cA_t)\sfz_{\bv_\ell}
        \right\>_{L^2(\widehat\pi_{d,n})}\right|^2, \\
        E_{3,t} \deq  \sum_{\ell=1}^d\left|\left\<
        \phi_{\mathrm{eff},t,\widetilde\bx_t},
        \widehat\cA_t(\widehat f^\star_{t,\bv_\ell}-\sfz_{\bv_\ell})
        \right\>_{L^2(\widehat\pi_{d,n})}\right|^2.
    \end{gather*}
    Let \(\cF_t:\R^d\to L^2(\widehat\pi_{d,n})\) denote the linear map \(\bv\mapsto\widehat f^\star_{t,\bv}\). Jensen's inequality gives \(\|\cF_t\|_{\op}\leq 1\), and
    \begin{align*}
        E_{1,t} = \sum_{\ell=1}^d\left|\left\<
        \cF_t^\ast\widehat{\cA}_t(\phi_{t,\widetilde\bx_t}-\phi_{\mathrm{eff},t,\widetilde\bx_t}),
        \bv_\ell \right\>\right|^2
        &  = \|\cF_t^\ast\widehat{\cA}_t(\phi_{t,\widetilde\bx_t}-\phi_{\mathrm{eff},t,\widetilde\bx_t})\|_2^2 
          \lesssim \sft^2_t\|\phi_{t,\widetilde\bx_t}-\phi_{\mathrm{eff},t,\widetilde\bx_t}\|_{L^2(\widehat\pi_{d,n})}^2,
    \end{align*}
    where the last inequality follows from the sub-multiplicativity of the norm and \(\|\widehat{\cA}_t\|_\op\leq \sft_t\). Therefore,
    \[
        \E\left[\sup_{t\in [t_0,T]}E_{1,t}\mid \bX\right] \prec \sup_{t\in [t_0,T]}\sft^2_t d^{-5}.
    \]
    A similar argument gives
    \[
        \E\left[\sup_{t\in [t_0,T]}E_{3,t}\mid \bX\right] \prec \sup_{t\in [t_0,T]}\sft^2_t d^{-1}e^{-2d^c}.
    \]

    It remains to control \(E_{2,t}\). Note that \(\{\sfz_{\bv_\ell}\}_{\ell=1}^d\) is an orthonormal family in \(L^2(\widehat\pi_{d,n})\), and that \(\sfz_{\bv_\ell},\phi_{\mathrm{eff},t,\widetilde\bx_t}\in\widetilde\cV\) for every \(\ell\) and \(t\). Hence, by Bessel's inequality,
    \begin{align*}
        E_{2,t} & = \sum_{\ell=1}^d\left|\left\<
        \phi_{\mathrm{eff},t,\widetilde\bx_t},
        \proj_{\widetilde{\cV}}(\widehat\cA_t-\widetilde\cA_t)\proj_{\widetilde{\cV}}\sfz_{\bv_\ell}
        \right\>_{L^2(\widehat\pi_{d,n})}\right|^2
         \leq \|\phi_{\mathrm{eff},t,\widetilde\bx_t}\|_{L^2(\widehat\pi_{d,n})}^2\|\proj_{\widetilde{\cV}}(\widehat\cA_t-\widetilde\cA_t)\proj_{\widetilde{\cV}}\|_{\op}^2.
    \end{align*}
    The variation-of-parameters
    identity gives
    \begin{equation}\label{eq:variation_of_parameters}
        e^{-s\widehat{\cK}_t}-e^{-s\widetilde{\cK}_t}
        =-\int_0^s e^{-(s-r)\widetilde{\cK}_t}
        (\Delta_{\mathrm{od},t}+\Delta_{\mathrm d,t})
        e^{-r\widehat{\cK}_t}\d r.
    \end{equation}
    The integrals are Bochner integrals in the Banach space of bounded linear operators on \(L^2(\widehat\pi_{d,n})\); see \cite[Corollary 1.7]{engel2000one}. Applying the same identity to the last \(e^{-r\widehat{\cK}_t}\) gives the exact expansion
    \begin{align*}
        \widehat\cA_t-\widetilde\cA_t
        &=-\int_0^{\sft_t}\int_0^s
        e^{-(s-r)\widetilde\cK_t}
        (\Delta_{\mathrm{od},t}+\Delta_{\mathrm d,t})
        e^{-r\widetilde\cK_t}\d r\d s
        \\ &\quad+\int_0^{\sft_t}\int_0^s\int_0^r
        e^{-(s-r)\widetilde\cK_t}
        (\Delta_{\mathrm{od},t}+\Delta_{\mathrm d,t})
        e^{-(r-u)\widetilde\cK_t}
        (\Delta_{\mathrm{od},t}+\Delta_{\mathrm d,t})
        e^{-u\widehat\cK_t}\d u\d r\d s.
    \end{align*}
    By the contraction property and the operator-norm bounds from Lemma~\ref{lem:reverse_operator_linearization_high_order},
    \begin{gather*}
        \left\|\int_0^{\sft_t}\int_0^s\int_0^r
        e^{-(s-r)\widetilde\cK_t}
        (\Delta_{\mathrm{od},t}+\Delta_{\mathrm d,t})
        e^{-(r-u)\widetilde\cK_t}
        (\Delta_{\mathrm{od},t}+\Delta_{\mathrm d,t})
        e^{-u\widehat\cK_t}\d u\d r\d s\right\|_\op  \prec \sft^3_t d^{-3},
        \\ \left\|\int_0^{\sft_t}\int_0^s
        e^{-(s-r)\widetilde\cK_t}
        \Delta_{\mathrm{od},t}
        e^{-r\widetilde\cK_t}\d r\d s\right\|_\op  \prec \sft_t^2 d^{-5/2}.
    \end{gather*}
    Since \(\widetilde\cK_t\) is self-adjoint and \(\widetilde\cV\)-invariant, \(\widetilde\cV\) is reducing and hence
    \[
    \proj_{\widetilde\cV}e^{-u\widetilde\cK_t} = e^{-u\widetilde\cK_t}\proj_{\widetilde\cV}.
    \]
     for every \(u\ge 0\). Lemma~\ref{lem:reverse_diagonal_subspace_kernel} thus gives
    \begin{align*}
        \left\|\proj_{\widetilde{\cV}}\int_0^{\sft_t}\int_0^s
        e^{-(s-r)\widetilde\cK_t}
        \Delta_{\mathrm d,t}
        e^{-r\widetilde\cK_t}\d r\d s\proj_{\widetilde{\cV}}\right\|_\op
        \prec \sft^2_t d^{-2}.
    \end{align*}
    Therefore, \(\|\proj_{\widetilde{\cV}}(\widehat\cA_t-\widetilde\cA_t)\proj_{\widetilde{\cV}}\|_{\op}\prec \sft^2_t d^{-2}+\sft^3_t d^{-3}\). Using~\eqref{eq:phi_eff_bounds},
    \[
        \E\left[\sup_{t\in [t_0,T]}E_{2,t}\mid \bX\right] \prec \sup_{t\in [t_0,T]}(\sft^4_t d^{-5}+\sft^6_t d^{-7}).
    \]
    Since \(\inf_{t\in[t_0,T]}\sigma_t>0\) and \(e^{-2d^c}\le d^{-4}\) for all sufficiently large \(d\), combining the preceding three estimates with the score decomposition proves the claim.
\end{proof}

\begin{proof}[Proof of Theorem~\ref{thm:kl_bound_empirical_linearized_reverse}]
    Fix the training sample; the following arguments are conditional on \(\bX\).

    Note that the drift of the linearized reverse SDE~\eqref{eq:linearized_reverse_SDE} has a uniformly bounded time-dependent linear coefficient and is therefore globally Lipschitz with linear growth. Hence, the linearized reverse SDE has a unique global strong solution.

    For the empirical reverse SDE~\eqref{eq:emp_reverse_SDE}, note that by~\eqref{eq:gf_solution},
    \[
        \sup_{t\in [t_0,T], \bv\in \S^{d-1}}\|\widetilde{\bS}_{t,\bv}\|_\cH\lesssim \sup_{t\in [t_0,T], \bv\in \S^{d-1}}\sft_t\|\cS_t\|_\op <\infty.
    \]
    Since \(h\in C^5(\R)\) by Assumption~\ref{ass:kernel_reverse}, it follows from the derivative reproducing property (e.g., \cite[Theorem 1]{zhou2008derivative}) applied to the restriction of \(K\) to compact balls that for every \(R>0\), there exists \(C_R>0\) such that
    \[
        \sup_{\|\bx\|_2\leq R}\|\nabla f(\bx)\|_2\leq C_R\|f\|_\cH
    \]
    for all \(f\in \cH\). Hence, there exists a constant \(C>0\) such that for all \(\bx,\by\in\R^d\) with \(\|\bx\|_2\vee\|\by\|_2\leq R\),
    \[
        \sup_{t\in [t_0,T]}\|\widehat{\bS}_t(\bx) - \widehat{\bS}_t(\by)\|_2 \leq C\|\bx-\by\|_2.
    \]
    Standard local SDE theory gives a unique maximal strong solution up to an explosion time. Setting the process equal to \(\infty\) after explosion yields a continuous \(\R_\infty^d\)-valued process on the full interval.
    
    Let \(\widetilde\P\) and \(\widehat\P\) denote the forward-time path laws of \(\{\widetilde\bx_{T-s}\}_{s\in[0,T-t_0]}\) and \(\{\widehat\bx_{T-s}\}_{s\in[0,T-t_0]}\), respectively, on \(C([0,T-t_0];\R_\infty^d)\), and let \(\by\) denote the canonical coordinate process. Since time reversal is a bimeasurable bijection,
    \[
        \KL(\P_{\widetilde\bx}\mmid\P_{\widehat\bx})
        =
        \KL(\widetilde\P\mmid\widehat\P).
    \]


    For \(R>0\), define the canonical stopping time
    \[
        \tau_R \deq \inf\left\{s\in[0,T-t_0]:\int_0^s\left\|\widehat{\bS}_{T-u}(\by_u)-\widetilde{\bS}_{T-u}(\by_u)\right\|_2^2\d u\ge R\right\}\wedge(T-t_0),
    \]
    with the convention that the integrand equals \(\infty\) when \(\by_u=\infty\).

    \quad

    \quad

    Since the linearized drift has the form \(\bD_{T-s}\by_s\), with \(\sup_t\|\bD_t\|_{\op}<\infty\), the empirical process satisfies, for \(s\) smaller than the explosion time \(\zeta\),
    \[
        \by_s =\by_0+\int_0^s\bD_{T-u}\by_u\d u+2\int_0^s(\widehat{\bS}_{T-u}(\by_u)-\widetilde{\bS}_{T-u}(\by_u))\d u+\sqrt{2}\bW_s.
    \]
    If \(\int_0^\zeta\|\Delta_u\|_2^2\,\d u<\infty\), then it follows from Cauchy-Schwarz and Gronwall's inequality that \(\sup_{s<\zeta}\|\by_s\|_2<\infty\), contradicting the explosion alternative. Consequently, \(\tau_R<\zeta\) on every explosive path, for every \(R<\infty\).

    Let \(\widetilde\P^R\) and \(\widehat\P^R\) denote the laws of the stopped canonical process \(\{\by_{s\wedge\tau_R}\}_{s\in[0,T-t_0]}\) under \(\widetilde\P\) and \(\widehat\P\), respectively. Girsanov's theorem gives
    \begin{align*}
        \KL(\widetilde\P^R\mmid\widehat\P^R)
        &\le \E_{\widetilde\P}\left[\int_0^{\tau_R}\left\|\widehat{\bS}_{T-s}(\by_s)-\widetilde{\bS}_{T-s}(\by_s)\right\|_2^2\d s\mid\bX\right]
        \\ & \le \E_{\widetilde\P}\left[\int_0^{T-t_0}\left\|\widehat{\bS}_{T-s}(\by_s)-\widetilde{\bS}_{T-s}(\by_s)\right\|_2^2\d s\mid\bX\right].
    \end{align*}

    Since the linearized process is nonexplosive and the scores are locally bounded, \(\tau_R\uparrow T-t_0\) almost surely under \(\widetilde{\P}\). Under \(\widehat{\P}\), \(\tau_R\) increases to \(T-t_0\) on nonexplosive paths and to the explosion time on explosive paths. In the latter case, the stopped paths converge to the absorbed path in the topology of $C([0,T-t_0]; \R_\infty^d)$. Hence,
    \[
        \widetilde\P^R\Longrightarrow\widetilde\P, \qquad \widehat\P^R\Longrightarrow\widehat\P.
    \]
    By the joint lower semicontinuity of relative entropy, the preceding estimate and Lemma~\ref{lem:reverse_score_linearization},
    \begin{align*}
        \KL(\P_{\widetilde\bx}\mmid\P_{\widehat\bx})=\KL(\widetilde\P\mmid\widehat\P)&\leq \int_{t_0}^T\E\left[\left\|\widehat{\bS}_t(\widetilde\bx_t)-\widetilde{\bS}_t(\widetilde\bx_t)\right\|_2^2\mid \bX\right]\d t
        \\ & \prec (T-t_0) \sup_{t\in[t_0,T]}\sft_t^2d^{-5}\left(1+\sft_t^2+\sft_t^4d^{-2}\right). \qedhere
    \end{align*} 
\end{proof}

\subsection{Proof of Theorem~\ref{thm:effective_covariance_bound}}
\label{sec:proof-effective-covariance}

We introduce an auxiliary spectral linearized process, which is analogous to the linearized reverse process \(\widetilde\bx_t\) but is obtained by replacing the kernel \(\widetilde{\cK}_t\) by the time-dependent effective empirical operator \(\widehat\cK_{\mathrm{eff},t}\). More particularly, for every \(t\in[t_0,T]\), let \(\check{\bS}_t\) denote the linear score estimator obtained by replacing \(\widetilde\cK_t\) with \(\widehat\cK_{\mathrm{eff},t}\) in~\eqref{eq:linearized_score_estimator}, where \(\widehat\cK_{\mathrm{eff},t}\) and \(\delta_{n,t}\) are defined in~\eqref{eq:K_hat_op} for time \(t\). Equivalently, unwinding the definitions using the proof of Lemma~\ref{lem:gf_linearized_train_test_Q_representation} shows that $\check \bS_t$ is an explicit linear shrinkage field:
\begin{equation}\label{eq:spectral_linear_score_gf_multiplier}
    \check{\bS}_{t}(\bx)=\frac1{\sigma_t}\,\psi_t(\widehat\bSigma,\sft_t)\bx .
\end{equation}
Define the auxiliary spectral linearized process by
\begin{equation}
    \de \check{\bx}_t = -(\check{\bx}_t + 2\check{\bS}_t(\check{\bx}_t))\d t + \sqrt{2}\d\bar{\bB}_t, \qquad \check{\bx}_T \sim \cN(0, \bI_d),
\end{equation}
and write
\begin{equation}
    \check{\bSigma}_t \deq  \E[\check{\bx}_t\check{\bx}_t^\sT\mid\bX].
\end{equation}
By Itô's lemma, \(\check{\bSigma}_t\) satisfies the ODE
\begin{equation}\label{eq:spectral_linearized_covariance_ode}
    \frac{\d}{\d t}\check{\bSigma}_t
    = -2\check{\bSigma}_t - \frac{2}{\sigma_t}\psi_t(\widehat\bSigma,\sft_t)\check{\bSigma}_t - \frac{2}{\sigma_t}\check{\bSigma}_t\psi_t(\widehat\bSigma,\sft_t) - 2\bI_d,
    \qquad
    \check{\bSigma}_T = \bI_d.
\end{equation}
The next lemma shows that the covariance of the linearized reverse process \(\widetilde{\bSigma}_t\) is close to the covariance of the spectral linearized process \(\check{\bSigma}_t\).

\begin{lemma}
\label{lem:linearized_covariance_approximation}
    Suppose that Assumptions~\ref{ass:high_dim},~\ref{ass:input_dist}, and
    \ref{ass:kernel_reverse} hold. Then
    \[
        \sup_{t\in[t_0,T]}
        \|\widetilde{\bSigma}_t-\check{\bSigma}_t\|_{\op}
        \prec
        \left(d^{-3/2}\sup_{t\in[t_0,T]}\sft_t\right)\wedge 1.
    \]
\end{lemma}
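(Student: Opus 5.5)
Both covariances solve linear Lyapunov-type ODEs driven by the matrices $\widetilde\bPsi_t$ (from Lemma~\ref{lem:reverse_linearized_check_score_representation}) and $\check\bPsi_t\deq\psi_t(\widehat\bSigma,\sft_t)$, so the plan is to reduce the statement to a perturbation bound $\sup_t\|\widetilde\bPsi_t-\check\bPsi_t\|_\op\prec (d^{-3/2}\sft_t)\wedge 1$ and then propagate it through the ODE by Gronwall. Indeed, writing $\bD_t=\bI_d+\frac{2}{\sigma_t}\widetilde\bPsi_t$ and $\check\bD_t=\bI_d+\frac{2}{\sigma_t}\check\bPsi_t$, the difference $\bE_t\deq\widetilde\bSigma_t-\check\bSigma_t$ satisfies
\[
\tfrac{\d}{\d t}\bE_t=-\bD_t\bE_t-\bE_t\bD_t-\tfrac{2}{\sigma_t}\bigl[(\widetilde\bPsi_t-\check\bPsi_t)\check\bSigma_t+\check\bSigma_t(\widetilde\bPsi_t-\check\bPsi_t)\bigr],\qquad \bE_T=0 .
\]
Both $\widetilde\bPsi_t$ and $\check\bPsi_t$ lie between $-\sigma_t^{-1}\bI_d$ and $0$ (the former by Lemma~\ref{lem:reverse_linearized_check_score_representation}, the latter since $\psi_t(x,\sft)\in[-1/\sigma_t,0]$ on $x\ge0$, as $\bQ(x)\succeq0$). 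Hence $\|\bD_t\|_\op,\|\check\bD_t\|_\op\le 1+2\sigma_{t_0}^{-2}$, and by the argument of \eqref{eq:linearized_reverse_cov_bound} $\|\check\bSigma_t\|_\op\lesssim1$ uniformly on $[t_0,T]$. Gronwall on $[t,T]$ then gives $\sup_t\|\bE_t\|_\op\lesssim\sup_t\|\widetilde\bPsi_t-\check\bPsi_t\|_\op$. The trivial bound $\|\bE_t\|_\op\lesssim1$ supplies the ``$\wedge1$''.

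For the matrix comparison, I would view both as coming from flows on $L^2(\widehat\pi_{d,n})$. By Lemma~\ref{lem:reverse_linearized_check_score_representation} and the analogous computation for $\widehat\cK_{\mathrm{eff},t}$ (Lemma~\ref{lem:gf_linearized_train_test_Q_representation}), for unit $\bu,\bv$,
\[
\bu^\sT(\widetilde\bPsi_t-\check\bPsi_t)\bv=\bigl\langle \phi_{\mathrm{eff},t,\bu},\,(\widetilde\cA_t-\cA^{\mathrm{eff}}_t)\sfz_\bv\bigr\rangle,\qquad \cA^{\mathrm{eff}}_t=\int_0^{\sft_t}e^{-s\widehat\cK_{\mathrm{eff},t}}\d s,
\]
where $\widetilde\cA_t$ is as in the proof of Lemma~\ref{lem:reverse_score_linearization}. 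Here $\|\phi_{\mathrm{eff},t,\bu}\|_{L^2(\widehat\pi_{d,n})}\lesssim d^{-1/2}$ for unit $\bu$. The two operators differ only by the diagonal $\mathrm{diag}(\widetilde\delta_{i,t}-\delta_{n,t})$ acting through the noise-average $\cE$, whose norm is $\max_i|\widetilde\delta_{i,t}-\delta_{n,t}|\le n^{-1}\max_i|h(\rho_t^2\|\bx_{0,i}\|^2/d)-h(\rho_t^2\tau_\bSigma)|+\dots\prec d^{-3/2}$ by \eqref{eq:norm_concentration_overlaps}, uniformly in $t$. Variation of parameters \eqref{eq:variation_of_parameters} then bounds $\|\widetilde\cA_t-\cA^{\mathrm{eff}}_t\|_\op\le\sft_t^2\, d^{-3/2}$.

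\textbf{Main obstacle.} This crude bound gives $\sft_t^2d^{-2}$ after multiplying by $d^{-1/2}$, whereas the claim asks for $\sft_t d^{-3/2}$. To gain the missing factor I would not bound the semigroups by contraction; instead I would work in the finite-dimensional $(n+d)$-representation, where $\widetilde\bK_t-\bK^{\mathrm{eff}}_t=\mathrm{diag}(\widetilde\bD_t-\delta_{n,t}\bI_n,0)$. The quantity to control is
\[
\bE_2^\sT\bigl(e^{-\sft\widetilde\bK_t}-e^{-\sft\bK^{\mathrm{eff}}_t}\bigr)\bE_2 .
\]
Since $\bE_2^\sT$ is a contraction and the integrand of $\int_0^{\sft}e^{-(\sft-r)\widetilde\bK}\Delta e^{-r\bK^{\mathrm{eff}}}\d r$ is bounded in norm by $\|\Delta\|_\op$, this yields a bound of $\sft_t\max_i|\widetilde\delta_{i,t}-\delta_{n,t}|\prec\sft_t d^{-3/2}$ directly on $\widetilde\bPsi_t-\check\bPsi_t$, via the representation $\widetilde\bPsi_t=-\sigma_t^{-1}\bE_2^\sT(\bI-e^{-\sft_t\widetilde\bK_t})\bE_2$. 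The check that $\check\bPsi_t$ has the same form with $\widetilde\bK_t$ replaced by its effective counterpart (so $\bE_2^\sT e^{-\sft\cdot}\bE_2$ avoids the time integral) is the point I would verify carefully. Combining this with the Gronwall step concludes.
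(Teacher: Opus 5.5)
Your proposal is correct and, once you move to the $(n+d)$-dimensional representation, it is essentially the paper's proof. The paper writes both $\widetilde\bPsi_t$ and $\psi_t(\widehat\bSigma,\sft_t)$ as $-\sigma_t^{-1}\bE_2^\sT(\bI-e^{-\sft_t\bK})\bE_2$ with $\bK\in\{\widetilde\bK_t,\check\bK_t\}$; the two matrices have the same bottom row, which is exactly the check you flag. It bounds their difference by $\sigma_{t_0}^{-1}\sft_t\max_i|\widetilde\delta_{i,t}-\delta_{n,t}|\prec\sft_t d^{-3/2}$ via Duhamel with contractive semigroups, then propagates this through the covariance-difference ODE by backward Gronwall and caps with the trivial $O(1)$ bound, so your initial operator-level comparison on $L^2(\widehat\pi_{d,n})$ is a detour you rightly discard.
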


\begin{proof}
    Let
    \begin{equation}\label{eq:spectral_effective_kernel_matrix}
        \check{\bK}_t
        \deq 
        \widetilde{\bK}_t+
        \begin{bmatrix}
            \delta_{n,t}\bI_n-\widetilde{\bD}_t&0\\
            0 & 0
        \end{bmatrix}.
    \end{equation}
    Thus, \(\check{\bK}_t\) is the finite-dimensional representation of \(\widehat\cK_{\mathrm{eff},t}\) on the invariant subspace
    \(\widetilde\cV\). Both \(\widetilde{\bK}_t\) and \(\check{\bK}_t\) are positive semidefinite, since their diagonal corrections are nonnegative. Repeating the calculation in
    Lemma~\ref{lem:reverse_linearized_check_score_representation} with
    \(\widetilde{\bD}_t\) replaced by \(\delta_{n,t}\bI_n\) gives
    \begin{equation}\label{eq:spectral_linear_score_matrix_representation}
        \psi_t(\widehat\bSigma,\sft_t)
        =
        -\frac1{\sigma_t}
        \begin{bmatrix}0&\bI_d\end{bmatrix}
        \left(\bI_{n+d}-e^{-\sft_t\check{\bK}_t}\right)
        \begin{bmatrix}0\\\bI_d\end{bmatrix},\qquad -\sigma_t^{-1}\bI_d \preceq \psi_t(\widehat\bSigma,\sft_t)\preceq0.
    \end{equation}
    An argument analogous to the one used to
    obtain~\eqref{eq:linearized_reverse_cov_bound} shows that
    \begin{equation}\label{eq:linearized_covariances_uniform_bound}
        \sup_{t\in[t_0,T]}
        \left(
        \|\widetilde{\bSigma}_t\|_{\op}
        \vee\|\check{\bSigma}_t\|_{\op}
        \right)
        \lesssim1.
    \end{equation}

    On the very-high-probability event on which
    \(\max_{i\le n}\|\bx_{0,i}\|_2^2/d\) is bounded, the mean-value theorem and the definition of the two diagonal corrections yield
    \begin{align}\label{eq:effective_kernel_matrix_comparison}
        \sup_{t\in[t_0,T]}
        \|\widetilde{\bK}_t-\check{\bK}_t\|_{\op}
        &=
        \sup_{t\in[t_0,T]}\max_{i\le n}
        |\widetilde\delta_{i,t}-\delta_{n,t}|\lesssim
        \frac1n\max_{i\le n}
        \left|
        \frac{\|\bx_{0,i}\|_2^2}{d}-\tau_{\bSigma}
        \right|
        \prec d^{-3/2},
    \end{align}
    where the last estimate follows from \eqref{eq:norm_concentration_overlaps} and \(n\asymp d\). By~\eqref{eq:variation_of_parameters},
    \begin{equation}\label{eq:linearized_score_matrix_duhamel}
        e^{-\sft_t\widetilde{\bK}_t}-e^{-\sft_t\check{\bK}_t} = -\int_0^{\sft_t} e^{-(\sft_t-s)\widetilde{\bK}_t}(\widetilde{\bK}_t-\check{\bK}_t)e^{-s\check{\bK}_t}\d s.
    \end{equation}
    Hence, using~\eqref{eq:spectral_linear_score_matrix_representation} and
    \eqref{eq:effective_kernel_matrix_comparison}, together with the
    sub-multiplicativity of the operator norm and the fact that
    \(\|\exp(-s\widetilde{\bK}_t)\|_{\op}\vee
    \|\exp(-s\check{\bK}_t)\|_{\op}\le 1\) for every \(s\ge 0\),
    \begin{align}\label{eq:linearized_score_matrix_comparison}
        \sup_{t\in[t_0,T]}
        \|\widetilde\bPsi_t-
        \psi_t(\widehat\bSigma,\sft_t)\|_{\op}
        &\le
        \frac1{\sigma_{t_0}}
        \sup_{t\in[t_0,T]}
        \sft_t\|\widetilde{\bK}_t-\check{\bK}_t\|_{\op}\prec
        d^{-3/2}\sup_{t\in[t_0,T]}\sft_t.
    \end{align}
    
    Taking the difference between the covariance ODEs~\eqref{eq:covariance_ODE_linearized_reverse} and~\eqref{eq:spectral_linearized_covariance_ode}, using the linearity of the corresponding score functions, we have
    \begin{equation}
    \begin{aligned}\label{eq:linearized_covariance_difference_ode}
        \frac{\d}{\d t}
        (\widetilde\bSigma_t-\check{\bSigma}_t)
        &=-2(\widetilde\bSigma_t-\check{\bSigma}_t)-\frac2{\sigma_t}\big(
        \widetilde\bPsi_t(\widetilde\bSigma_t-\check{\bSigma}_t)
        +(\widetilde\bSigma_t-\check{\bSigma}_t)\widetilde\bPsi_t\big)
        \\ & \qquad -\frac2{\sigma_t}\big(
        (\widetilde\bPsi_t-\psi_t(\widehat\bSigma,\sft_t))
        \check{\bSigma}_t
        +\check{\bSigma}_t
        (\widetilde\bPsi_t-\psi_t(\widehat\bSigma,\sft_t))
        \big)
    \end{aligned}
    \end{equation}
    Integrating~\eqref{eq:linearized_covariance_difference_ode} from \(t\) to \(T\), using the common terminal condition, the bound on \(\|\widetilde{\bPsi}\|_\op\) from Lemma~\ref{lem:reverse_linearized_check_score_representation}, \eqref{eq:linearized_covariances_uniform_bound}, and taking the
    operator norm gives
    \begin{equation}\label{eq:linearized_covariance_difference_integral_bound}
        \|\widetilde{\bSigma}_t - \check{\bSigma}_t\|_\op
        \lesssim
        \int_{t}^T\left(
        \|\widetilde{\bSigma}_s - \check{\bSigma}_s\|_\op
        +\|\widetilde\bPsi_s-\psi_s(\widehat\bSigma,\sft_s)\|_\op
        \right)\d s.
    \end{equation}
    Applying backward Gronwall's inequality and~\eqref{eq:linearized_score_matrix_comparison}, yields
    \begin{equation}\label{eq:linearized_covariance_comparison_uncapped}
        \|\widetilde{\bSigma}_t - \check{\bSigma}_t\|_\op \lesssim\int_{t}^T \|\widetilde\bPsi_s-\psi_s(\widehat\bSigma,\sft_s)\|_\op\d s \prec d^{-3/2}\sup_{t\in[t_0,T]}\sft_t.
    \end{equation}
    Finally, \eqref{eq:linearized_covariances_uniform_bound} also gives
    \(\sup_t\|\widetilde\bSigma_t-\check{\bSigma}_t\|_{\op}\lesssim1\), which yields the minimum with \(1\).
\end{proof}

The covariance ODE associated with the spectral linearized process is diagonalized by the eigenvectors of \(\widehat\bSigma\). Hence, we may apply tools from random matrix theory to analyze the spectrum of \(\check{\bSigma}_t\).

\begin{lemma}[Deterministic equivalent of the spectral linearized covariance]
\label{lem:spectral_linearized_covariance_deterministic_equivalent}
    Suppose that Assumptions~\ref{ass:high_dim},~\ref{ass:input_dist}, and
    \ref{ass:kernel_reverse} hold. Then, for every deterministic
    \(0\preceq\bA\in\R^{d\times d}\) with \(\|\bA\|_*\le1\) and every constant \(\epsilon,D>0\), there exists a constant \(C>0\) such that
    \[
        \sup_{t\in[t_0,T]}
        \left|
        \Tr\!\left\{\bA(\check\bSigma_t-\bar\bSigma_t)\right\}
        \right|
        \leq d^{\epsilon-1/2}
    \]
    with probability at least \(1-Cd^{-D}\).
\end{lemma}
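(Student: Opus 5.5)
The plan is to write $\check\bSigma_t$ as a spectral function of $\widehat\bSigma$, bound that function uniformly on a complex contour, and then transfer the anisotropic local law (Proposition~\ref{prop:outside_local_law}) through the Cauchy integral formula, exactly as in the proof of Lemma~\ref{lem:gf_train_test_error_deterministic_equivalent}.

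First, the ODE~\eqref{eq:spectral_linearized_covariance_ode} has the terminal condition $\bI_d$, and all its coefficients are functions of $\widehat\bSigma$. Hence $\check\bSigma_t$ commutes with $\widehat\bSigma$, and uniqueness of the solution gives $\check\bSigma_t=\Lambda_t(\widehat\bSigma)$. Here $\Lambda_t(x)$ solves the scalar backward ODE $\frac{\d}{\d t}\Lambda_t(x)=-2(1+\frac{2}{\sigma_t}\psi_t(x,\sft_t))\Lambda_t(x)-2$ with $\Lambda_T(x)=1$, which is precisely the ODE~\eqref{eq:s_t_x_ode_def} for $\bar\Lambda_t$. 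Using~\eqref{eq:resolvent_fixed_point_measure}, we have $\bar\bSigma_t=\int\bar\Lambda_t\,\d\bOmega$. The claim therefore reduces to comparing $\Tr(\bA\,\bar\Lambda_t(\widehat\bSigma))$ with $\int\bar\Lambda_t(\lambda)\Tr(\bA\,\bOmega(\d\lambda))$, uniformly in $t$.

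Second, we extend $\bar\Lambda_t$ analytically to the rectangular contour $\Gamma$ used in Lemma~\ref{lem:gf_train_test_error_deterministic_equivalent}. That proof shows $\psi_t(z,\sft)$ is entire in $z$ and gives the bound~\eqref{eq:psi_bound}, $\sup_{\sft\ge0}\sup_{z\in\Gamma}|\psi(z,\sft)|\lesssim1$. The constants there depend only on $\rho_t,\sigma_t,q=d\delta_{n,t}$. We need them uniform over $t\in[t_0,T]$, which holds because $\sigma_t\ge\sigma_{t_0}>0$ and, by the equality case of Remark~\ref{rmk:delta_n_nonnegativity} and continuity in $t$, $q_t$ is either identically zero or bounded below on the compact interval.

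Because $\psi_t$ is bounded on $\Gamma$, the linear scalar ODE has the explicit variation-of-constants solution
\[
\bar\Lambda_t(z)=e^{\int_t^T a_s(z)\d s}+2\int_t^T e^{\int_t^u a_s(z)\d s}\,\d u,\qquad a_s(z)=2\Bigl(1+\tfrac{2}{\sigma_s}\psi_s(z,\sft_s)\Bigr).
\]
This is holomorphic in $z$ inside and near $\Gamma$, since differentiating under the integral is fine with a piecewise-continuous schedule. It is also bounded by $e^{CT}$ uniformly in $z\in\Gamma$ and $t\in[t_0,T]$, irrespective of the size of $\sft_t$. That uniformity in $\sft$ is why the final bound carries no $\sft$ factor.

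Third, on the very-high-probability event $\operatorname{spec}(\widehat\bSigma)\subset[0,\Lambda]$, Cauchy's formula gives
\[
\Tr(\bA(\check\bSigma_t-\bar\bSigma_t))=-\frac{1}{2\pi i}\oint_\Gamma\bar\Lambda_t(z)\Tr(\bA(\widehat\bR(z)-\bM(z)))\,\d z .
\]
Proposition~\ref{prop:outside_local_law} bounds the resolvent trace by $d^{\epsilon-1/2}$ uniformly on $\Gamma$ with probability $1-Cd^{-D}$. That event does not depend on $t$, so the supremum over $t$ comes for free.

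The main obstacle is the uniform-in-$t$ analytic bound on $\psi_t$ on $\Gamma$. I would handle it by rerunning the semigroup argument for $e^{-u\bB(z)}$ with $t$-dependent coefficients, checking that the constant $c_0$ (lower bound on the real parts of the roots) and the contour width $\eta$ can be chosen uniformly in $t$. A secondary point is that $\bA$ is only nuclear-norm bounded rather than trace-normalized, but the local law is stated for exactly this class, so this causes no difficulty.
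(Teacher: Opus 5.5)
Your proposal is correct and follows the paper's proof essentially step by step. The shared steps are the spectral representation $\check\bSigma_t=\bar\Lambda_t(\widehat\bSigma)$, a uniform-in-$t$ version of the contour bound \eqref{eq:psi_bound} (via the dichotomy of Remark~\ref{rmk:delta_n_nonnegativity} and a $t$-independent contour width), a uniform bound on $\bar\Lambda_t$ over $\Gamma$, and Cauchy's formula combined with Proposition~\ref{prop:outside_local_law}; the only difference is that you bound $\bar\Lambda_t$ through the explicit variation-of-constants formula, whereas the paper uses Picard iteration for analyticity and backward Gr\"onwall for the bound.
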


\begin{proof}
    Since all the coefficient matrices in
    \eqref{eq:spectral_linearized_covariance_ode} are spectral functions
    of \(\widehat\bSigma\) and \(\check\bSigma_T=\bI_d\),
    \begin{equation}\label{eq:spectral_covariance_functional_representation}
        \check\bSigma_t=\bar\Lambda_t(\widehat\bSigma),
    \end{equation}
    where \(\bar\Lambda_t\) is defined by~\eqref{eq:s_t_x_ode_def}.

    By~\eqref{eq:cov_concentration} and the compact support of \(\bOmega\),
    there is a deterministic \(\Lambda<\infty\) such that, with very high
    probability, both \(\operatorname{spec}(\widehat\bSigma)\) and
    \(\operatorname{supp}(\bOmega)\) are contained in \([0,\Lambda]\).
    Fix a positively oriented rectangular contour \(\Gamma\) enclosing this interval, with
    vertical edges \(\Re z=-\eta\) and \(\Re z=\Lambda+\eta\) and horizontal
    edges \(\Im z=\pm1\), where
    \(0<\eta<\min\{1,\sigma_{t_0}^2/4\}\). In particular,
    \begin{equation}\label{eq:reverse_contour_bound}
        \inf_{t\in[t_0,T]}\inf_{z\in\Gamma}
        \Re(\rho_t^2z+\sigma_t^2)>0.
    \end{equation}

    We show that the contour estimate~\eqref{eq:psi_bound} holds uniformly along the reverse-time schedule. The matrix-exponential representation \eqref{eq:linearized_test_symbol_matrix_representation}, applied with \((\rho,\sigma,\delta_n,\sft) = (\rho_t,\sigma_t,\delta_{n,t},\sft_t)\) shows that \(z\mapsto\psi_t(z,\sft_t)\) is entire for every \(t\). Also, since \(t\mapsto\sft_t\) is piecewise continuous and \(t\mapsto(\rho_t,\sigma_t,\delta_{n,t})\) is continuous, \(t\mapsto\psi_t(z,\sft_t)\) is piecewise continuous for every \(z\).

    It remains to verify that the estimates used in the proof of
    Lemma~\ref{lem:gf_train_test_error_deterministic_equivalent} are uniform over \(t\in[t_0,T]\). Assumption~\ref{ass:kernel_reverse} implies Assumption~\ref{ass:kernel_func_constant_kappa}, while \(0<\inf_{t\in[t_0,T]}\sigma_t\wedge \rho_t\leq\sup_{t\in[t_0,T]}\sigma_t\vee\rho_t<1\). Moreover, by Remark~\ref{rmk:delta_n_nonnegativity}, either \(h(u)=u\) for every \(u\), in which case \(d\delta_{n,t}=0\) for every \(t\), or \(d\delta_{n,t} \asymp 1\) uniformly over \(t\in[t_0,T]\). Indeed, in the latter case \(h(u)-u>0\) for every \(u>0\), while \(\rho_t^2\tau_{\bSigma}\) remains in a compact subset of \((0,\infty)\). Together with the contour bound~\eqref{eq:reverse_contour_bound}, these are precisely the uniform conditions used to derive \eqref{eq:psi_bound}. Consequently,
    \begin{equation}\label{eq:spectral_reverse_psi_contour_bound}
        \sup_{t\in[t_0,T]}\sup_{z\in\Gamma}|\psi_t(z,\sft_t)| \lesssim 1.
    \end{equation}

    Since \(t\mapsto\psi_t(z,\sft_t)\) is piecewise continuous and \(z\mapsto\psi_t(z,\sft_t)\) is entire, the integral form of \eqref{eq:s_t_x_ode_def} and Picard iteration show that \(z\mapsto\bar\Lambda_t(z)\) is entire for every \(t\). Moreover, \(\sigma_t\geq\sigma_{t_0}>0\) and \eqref{eq:spectral_reverse_psi_contour_bound} imply that the ODE coefficient is uniformly bounded on \([t_0,T]\times\Gamma\). Backward Gronwall's inequality therefore gives \begin{equation}\label{eq:spectral_reverse_covariance_contour_bound}
        \sup_{t\in[t_0,T]}\sup_{z\in\Gamma}|\bar\Lambda_t(z)|\lesssim 1.
    \end{equation}

    On the spectral-containment event above, Cauchy's formula and \eqref{eq:resolvent_fixed_point_measure} give
    \[
        \check\bSigma_t=-\frac{1}{2\pi i}\oint_\Gamma \bar\Lambda_t(z)\widehat\bR(z)\d z,
        \qquad
        \bar\bSigma_t=-\frac{1}{2\pi i}\oint_\Gamma\bar\Lambda_t(z)\bM(z)\d z.
    \]
    The contour \(\Gamma\) lies a fixed positive distance from
    \(\operatorname{supp}(\bOmega)\), satisfies
    \(\inf_{z\in\Gamma}|z|>0\), has bounded real part, and obeys
    \(|\Im z|\le1\), with horizontal edges \(\Im z=\pm1\). Hence
    Proposition~\ref{prop:outside_local_law} applies uniformly on \(\Gamma\). Together with~\eqref{eq:spectral_reverse_covariance_contour_bound}, this
    gives
    \begin{align*}
        \sup_{t\in[t_0,T]}
        \left|\Tr\!\left\{\bA(\check\bSigma_t-\bar\bSigma_t)\right\}\right|
        &\le \frac{|\Gamma|}{2\pi}
        \sup_{t\in[t_0,T]}\sup_{z\in\Gamma}|\bar\Lambda_t(z)|
        \sup_{z\in\Gamma}
        \left|\Tr\!\left\{\bA(\widehat\bR(z)-\bM(z))\right\}\right| \leq d^{\epsilon-1/2}
    \end{align*}
    with probability at least \(1-Cd^{-D}\) for some constant \(C>0\), where \(\epsilon,D>0\) are arbitrary.
\end{proof}

The proof of Theorem~\ref{thm:effective_covariance_bound} follows by combining Lemmas~\ref{lem:linearized_covariance_approximation} and~\ref{lem:spectral_linearized_covariance_deterministic_equivalent}.

\begin{proof}[Proof of Theorem~\ref{thm:effective_covariance_bound}]
    For every deterministic \(0\preceq\bA\in\R^{d\times d}\) with \(\|\bA\|_*\le1\),
    \begin{align*}
        \sup_{t\in[t_0,T]}
        \left|\Tr\!\left\{\bA(\widetilde\bSigma_t-\bar\bSigma_t)\right\}\right|
        &\le
        \sup_{t\in[t_0,T]}
        \left|\Tr\!\left\{\bA(\widetilde\bSigma_t-\check\bSigma_t)\right\}\right|
        +
        \sup_{t\in[t_0,T]}
        \left|\Tr\!\left\{\bA(\check\bSigma_t-\bar\bSigma_t)\right\}\right|\\
        &\le \sup_{t\in[t_0,T]}\|\widetilde\bSigma_t-\check\bSigma_t\|_{\op}
        +
        \sup_{t\in[t_0,T]}
        \left|\Tr\!\left\{\bA(\check\bSigma_t-\bar\bSigma_t)\right\}\right|.
    \end{align*}
    The first term is bounded by Lemma~\ref{lem:linearized_covariance_approximation} and the second term is bounded by Lemma~\ref{lem:spectral_linearized_covariance_deterministic_equivalent}.
\end{proof}

\end{document}